\providecommand{\algorithmname}{Algorithm}
\newcounter{rmq}[section]
\newcommand{\R}{\mathbb{R}}
\newcommand{\setY}{\mathsf{Y}}
\newcommand{\sigY}{\mathcal{Y}}
\newcommand{\csigma}{c_{\text{\tiny{\(\Sigma\)}}}}
\newcommand{\cess}{c_{\text{\tiny{\(\mathrm{ESS}\)}}}}
\renewcommand{\P}{\mathbb{P}}
\newcommand{\E}{\mathbb{E}}
\newcommand{\var}{\mathrm{Var}}
\newcommand{\F}{\mathcal{F}} 
\newcommand{\Unif}{\mathcal{U}} 
\newcommand{\ind}{\mathds{1}}
\newcommand{\dd}{\mathrm{d}}
\newcommand{\bigO}{\mathcal{O}} 
\newcommand{\smallo}{{\scriptscriptstyle\mathcal{O}}} 
\renewcommand{\emptyset}{\varnothing} 
\newtheorem{theorem}{Theorem}
\newtheorem{proposition}{Proposition}
\newtheorem{corollary}{Corollary}
\newtheorem{lemma}{Lemma}
\newtheorem{remark}{Remark}
\theoremstyle{plain}
\newtheorem{assumption}{Assumption}
\theoremstyle{plain}
\theoremstyle{plain}
\let\oldReturn\Return
\renewcommand{\Return}{\State\oldReturn}
\newcommand{\reqstart}{
    \begin{list}{\thereqcount}{\usecounter{reqcount}}
    \setcounter{reqcount}{\value{reqcountbackup}}
}
\newcommand{\reqend}{
    \setcounter{reqcountbackup}{\value{reqcount}}
    \end{list}
}
\DeclareMathOperator*{\argmin}{argmin}
\DeclareMathOperator*{\argmax}{argmax}
\newcommand*{\vneq}{%
  \mathrel{%
    \mathpalette\@vneq{=}%
  }%
}
\newcommand*{\@vneq}[2]{%
  \sbox0{\raisebox{\depth}{$#1\neq$}}%
  \sbox2{\raisebox{\depth}{$#1|\m@th$}}%
  \ifdim\ht2>\ht0 %
    \sbox2{\resizebox{\vneqxscale\width}{\vneqyscale\ht0}{\unhbox2}}%
  \fi
  \sbox2{$\m@th#1\vcenter{\copy2}$}%
  \ooalign{%
    \hfil\phantom{\copy2}\hfil\cr
    \hfil$#1#2\m@th$\hfil\cr
    \hfil\copy2\hfil\cr
  }%
}
\newcommand*{\vneqxscale}{1}
\newcommand*{\vneqyscale}{1}
\title{A Global Stochastic Optimization  Particle Filter Algorithm }
\author{Mathieu Gerber$^{(1)}$, Randal Douc$^{(2)}$
\\
\small{(1) School of Mathematics, University of Bristol, UK }
\\
\small{(2) SAMOVAR, Telecom SudParis, Institut Polytechnique de Paris, France}
}
\date{}
\begin{document}
\maketitle

\begin{abstract}
We introduce a new online algorithm for  expected log-likelihood maximization in situations where the objective function is multi-modal and/or has saddle points, that we term G-PFSO. The  key element underpinning G-PFSO  is  a   probability distribution   which (a)  is shown  to concentrate on the target parameter value as the sample size increases and (b)  can be efficiently estimated  by means of a standard particle filter  algorithm.  This distribution  depends  on a learning rate, where the faster  the  learning rate  the quicker it  concentrates on the desired element of the search space,  but the less  likely G-PFSO is to escape from a local optimum of the objective function. In order to achieve a fast convergence rate with a slow learning rate, G-PFSO exploits the acceleration property of averaging, well-known in the stochastic gradient  literature. Considering several challenging estimation problems, the numerical experiments show that,  with high probability,  G-PFSO    successfully  finds the highest mode of the objective function  and   converges to its global maximizer at the optimal rate.  While the focus of this work is  expected log-likelihood maximization, the proposed methodology    and its theory apply  more generally for optimizing a function defined through an expectation.
\end{abstract}

\section{Introduction\label{sec:intro}}

\subsection{Set-up and problem formulation}
Let $(Y_t)_{t\geq 1}$  be a sequence of i.i.d.\  random variables defined on the same probability space $(\Omega,\F,\P)$ and taking values in a measurable space $(\setY,\sigY)$. Below we will often refer to $(Y_t)_{t\geq 1}$ as the  observations and to $t$ as the time at which $Y_t$ is observed. We let $\{f_{\theta},\,\theta\in \Theta\subseteq\R^d\}$ be a collection of probability density functions  on $\setY$  with respect to  some $\sigma$-finite measure  $\eta(\dd y)$  and we assume that $\theta_\star=\argmax_{\theta\in\Theta}\E\{\log f_\theta(Y_1)\}$ is well-defined, with $\E$ the expectation operator associated to $(\Omega,\F,\P)$.  In this work we consider the problem of estimating $\theta_\star$ in
an online fashion, where by online we mean that the memory and computational requirement  to process  $Y_t$ is finite and bounded  uniformly in $t$. Developing  methods  with a computational complexity scaling at most linearly with the number of data points  is particularly useful for parameter inference in large datasets 
\citep[][Chap.\ 2]{national2013frontiers}. We stress that we do not assume   the model   $\{f_{\theta},\,\theta\in \Theta\}$ to be  well-specified, i.e.\ that $Y_1\sim f_{\theta_\star}(y)\eta(\dd y)$. Consequently, as explained in  Section \ref{sub:setup},   the optimization problem  addressed in this paper is relevant to performing various learning tasks  and not only  to drawing inference in parametric models from i.i.d.\ observations.

Stochastic gradient   algorithms are popular  tools to learn the parameter value $\theta_\star$ on the fly  and at the optimal $t^{-1/2}$ rate \citep{toulis2015scalable}. However, when the objective function   $\theta\mapsto\E\{\log f_\theta(Y_1)\}$ is multi-modal and/or has saddle points these algorithms are only guaranteed to converge to one of its stationary points \citep{tadic2015convergence}.   \citet{gelfand1991recursive} have shown that, for such objective functions, we can  ensure the convergence of a standard stochastic gradient algorithm towards   $\theta_\star$ by adding some  extra noise  at each iteration. Unfortunately, this approach results in an algorithm    converging towards $\theta_\star$ at an extremely slow rate $r_t\rightarrow 0$ such that $(\log t)^{-1/2}=\smallo(r_t)$ \citep{pelletier1998weak, yin1999rates}. By contrast, the algorithm  recently introduced in \citet{gerber2020online} is proven to converge to the global optimum of the mapping $\theta\mapsto\E\{\log f_\theta(Y_1)\}$  at a nearly optimal rate. However, as its computational cost increases exponentially fast with the dimension of the parameter space $\Theta$, this algorithm can only be used in practice to tackle small or moderate dimensional optimization problems.

\subsection{Contribution of the paper}

We introduce in this work  a new global stochastic optimization method  to learn    the parameter value $\theta_\star$ in an online manner, that we term {\em Global Particle Filter  Stochastic Optimization}  (G-PFSO). Unlike the algorithm proposed in \citet{gelfand1991recursive},  we observe  empirically that G-PFSO converges to $\theta_\star$ at the optimal $t^{-1/2}$ rate and,  unlike the method developed in \citet{gerber2020online}, G-PFSO is implementable for   any dimension  of  $\Theta$. It should however be clear that, in practice, we cannot expect G-PFSO to perform well on any  optimization problems since, depending on the landscape of the objective function and   on the dimension of the search space,  finding  a small neighbourhood of  $\theta_\star$ with a reasonable computational cost  may simply be an intractable task  \citep{loshchilov2016sgdr}.

In G-PFSO the parameter $\theta_\star$ is learnt through a sequence $(\tilde{\pi}_t)_{t\geq 1}$ of probability distributions  which  is proven to  converge   to the Dirac mass $\delta_{\{\theta_\star\}}$ as $t\rightarrow\infty$. Each distribution  $\tilde{\pi}_t$ depends only on $\{Y_s\}_{s=1}^t$  and the sequence $(\tilde{\pi}_t)_{t\geq 1}$   can be easily  estimated in an online fashion by means of a standard particle filter   algorithm   \citep[][Chap.~10]{chopin2020introduction}. For every $t\geq 1$ the resulting particle filter approximation  $\tilde{\pi}_t^N$ of $\tilde{\pi}_t$, which has a finite support of size $N\in\mathbb{N}=\{1,2,\dots\}$, is then   used to compute an estimate of $\theta_\star$. 

The sequence $(\tilde{\pi}_t)_{t\geq 1}$ depends on a learning rate $h_t\rightarrow 0$, with the slower $h_t\rightarrow 0$   the greater  the ability of   $\tilde{\pi}_t^N$  to escape from a local optimum  of the objective function, but the slower   the rate at which this distribution can concentrate  on $\theta_\star$. Polyak–Ruppert  averaging is  a well-known   method for accelerating  stochastic gradient methods \citep{polyak1992acceleration} and, following this idea,  the G-PFSO estimator of $\theta_\star$ is 
$\bar{\theta}^N_t=t^{-1}\sum_{s=1}^t\tilde{\theta}_s^N$,  with $\tilde{\theta}_s^N=\int_{\Theta}\theta\tilde{\pi}_s^N(\dd\theta)$. In all the experiments presented below we observe that, for a fixed $N$ and a slow learning rate $h_t=t^{-1/2}$, the estimator $\bar{\theta}^N_t$ converges at the optimal $t^{-1/2}$ rate.

The theoretical analysis of the convergence behaviour of $\bar{\theta}_t^N$ as $t\rightarrow\infty$,   for a fixed $N\in\mathbb{N}$, is a hard task. In this paper we focus on a key preliminary step towards making such an analysis possible,  namely the study of the sequence $(\tilde{\pi}_t)_{t\geq 1}$.  More precisely, we derive a consistency result  for this sequence that holds under weak conditions on the statistical model and on the rate at which $h_t\rightarrow 0$, dealing  carefully   with the case where the parameter space $\Theta$ is unbounded. 

An \texttt{R} package implementing  the proposed global stochastic optimization algorithm is available  on GitHub at \texttt{github.com/mathieugerber/PFoptim}.

\subsection{Scope of the theoretical analysis\label{sub:setup}}

Our theoretical analysis of $(\tilde{\pi}_t)_{t\geq 1}$ does not assume that the statistical model $\{f_{\theta},\,\theta\in \Theta\}$ is well-specified, making it valid when G-PFSO is used to compute the maximum likelihood estimator  for a large class of statistical models.  To clarify this point consider   $n$ observations $\{ (z_i, x_i)\}_{i=1}^n$   in $\R^{d_z}\times\R^{d_x}$    assumed to be such  that, for all $i$, the conditional distribution of $Z_i$ given $X_i=x$ belongs to the set $\{f_\theta(\cdot\mid x),\,\theta\in\Theta\}$, and   let $f_\theta(z, x)=f_\theta(z\mid x)f_X(x)$ for some arbitrary  probability density function $f_X$  on $\R^{d_x}$. Then, letting $Y_1=(Z^{(n)}_{1},X^{(n)}_{1})$  be a random variable distributed according to the empirical distribution  of the observations  $\{ (z_i, x_i)\}_{i=1}^n$, we have
$\theta_\star= \argmax_{\theta\in\Theta}\E\big\{\log f_\theta(Y_1)\big\}=\argmax_{\theta\in \Theta}\sum_{i=1}^n \log f_{\theta}(z_i\mid x_i)=\hat{\theta}_{\mathrm{mle},n}$. Notice that this optimization problem does not depend  on $f_X$ and, as one may expect, using G-PFSO to compute $\hat{\theta}_{\mathrm{mle},n}$ does not require one to    specify this probability density function; see Algorithm \ref{algo:pi_tilde_online}.

A second important consequence of not assuming   the model $\{f_{\theta},\,\theta\in \Theta\}$ to be well-specified is that our main result is applicable to  estimation problems that are not limited to parameter inference in  parametric models. In particular, in order to compute $\theta_\star=\argmin_{\theta\in\Theta}\E\{\varphi(\theta,Y_1)\}$ for a  measurable function $\varphi:\Theta\times\setY\rightarrow [0,\infty)$  such that   $\int_{\setY}\exp\{-\varphi(\theta,y)\}\eta(\dd y)=1$ for every $\theta\in\Theta$  and for some $\sigma$-finite measure $\eta(\dd y)$,  G-PFSO can be used, and its theoretical guarantees apply, with the probability density function   $f_\theta(y)=\exp\{-\varphi(\theta,y)\}$. For instance, a classical machine learning task is to train  a function $\gamma_\theta$ to predict a response variable $Z$ from a vector $x$ of features, which amounts to computing $\theta_\star=\argmin_{\theta\in\Theta}\E[ L\{\gamma_{\theta}(X),Z\}]$ for some loss function $L(\hat{z},z)$. For this problem, and with $y=(z,x)$, the condition $\int_{\setY}\exp\{-\varphi(\theta,y)\}\eta(\dd y)=1$ holds  for $\varphi(\theta,y)=L\{\gamma_{\theta}(x),z\}$ when    $L$ is    the quadratic loss,  i.e.\ $L(\hat{z},z)=\|\hat{z}-z\|^2$, the absolute error loss, i.e.\ $L(\hat{z},z)=|\hat{z}-z|$  assuming $z\in\R$, or the cross-entropy loss, i.e.\ $L(\hat{z},z)=- z\log(\hat{z})-(1-z)\log(1-\hat{z})$  assuming $z\in\{0,1\}$ and $\hat{z}\in(0,1)$.

We stress that neither the definition of $\tilde{\pi}_t$, see Section \ref{sub:pi_tilde}, nor the   proof of our convergence result for the sequence $(\tilde{\pi}_t)_{t\geq 1}$ requires  that  $\int_\setY f_\theta(y)\eta(\dd y)=1$. Consequently,  our main theorem provides conditions on a measurable function $\varphi:\Theta\times\setY\rightarrow [0,\infty)$  which ensure that, for $f_\theta(y)=\exp\{-\varphi(\theta,y)\}$, the distribution $\tilde{\pi}_t$  concentrates  on $\theta_\star=\argmin_{\theta\in\Theta}\E\{\varphi(\theta,Y_1)\}$ as $t\rightarrow\infty$.  In particular, our theoretical analysis   applies when $\varphi(\theta,y)=\tilde{\varphi}(\theta)$ for some function $\tilde{\varphi}:\Theta\rightarrow\R$, that is, when G-PFSO is used to address a standard optimization task  where the objective function can be  easily  evaluated pointwise. 
We however stress that, our proof technique  coming from the literature on Bayesian asymptotics, our assumptions on $\{f_\theta,\,\theta\in\Theta\}$ are particularly standard  when this set of functions is a collection of probability density functions.

\subsection{Additional notation and outline of the paper\label{sub:set-up}}
 We let $\|\cdot\|_\infty$ be the maximum norm on $\R^d$,  $B_\epsilon(x)$  be the open ball of size $\epsilon>0$ around $x\in\R^d$ w.r.t.\ $\|\cdot\|$, the Euclidean norm on $\R^d$, $V_\epsilon=\Theta\setminus B_\epsilon(\theta_\star)$, $t_{d,\nu}(m,\Sigma)$ be the $d$-dimensional Student't-distribution  with $\nu>0$ degrees of freedom, location vector $m$ and scale matrix $\Sigma$, and  $\mathcal{N}_d(m,\Sigma)$ be the $d$-dimensional Gaussian distribution with mean $m$ and covariance matrix $\Sigma $.  If $A$ is a Borelian set of $\R^d$  we denote by $\mathcal{B}(A)$ the Borel $\sigma$-algebra on $A$, by  $\mathcal{P}(A)$   the set of probability measures on  $(A,\mathcal{B}(A))$ and by  $\mathcal{P}_L(A)$   the set of probability measures on  $(A,\mathcal{B}(A))$ that are absolutely continuous  w.r.t.\ $\dd\theta$, the Lebesgue measure on $\R^d$. Moreover, for a sequence $(\mu_t)_{t \geq 1}$ of probability measures on $\R^d$, the notation $\mu_t \Rightarrow \mu$ means that the sequence $(\mu_t)_{t \geq 1}$ converges weakly to the probability measure $\mu\in \mathcal{P}(\R^d)$. We assume throughout the paper that $\Theta\in\mathcal{B}(\R^d)$ and, for any probability measure $\mu$ on $\Theta$ and any sequence $(\mu_t)_{t \geq 1}$ of probability measures on  $\Theta$, implicitly indexed by random variables, we say that   $\mu_t \Rightarrow \mu$ in $\P$-probability  if the sequence of random variables $(\mathcal{D}_{\mathrm{pro}}(\mu_t,\mu))_{t \geq 1}$ converges in $\P$-probability to $0$, where  $\mathcal{D}_{\mathrm{pro}}$ denotes the Prohorov distance between probability measures on $\R^d$.  
 
The rest of the paper is organized as follows. The G-PFSO  approach is introduced   in Section \ref{sec:approach}, our convergence result  for the sequence  $(\tilde{\pi}_t)_{t\geq 1}$ is given in Section \ref{sec:main} and Section \ref{sec:num} proposes   some numerical experiments. All the proofs are gathered in Appendix \ref{ap-proofs}.

\section{Global particle filter stochastic optimization\label{sec:approach}}

\subsection{The sequence \texorpdfstring{$(\tilde{\pi}_t)_{t\geq 1}$}{Lg}}\label{sub:pi_tilde}

Let $\nu\in(0,\infty)$,  $\Sigma$ be a $d\times d$ covariance matrix, $h_t\rightarrow 0$ be a learning rate, that is $(h_t)_{t\geq 1}$ is a sequence in $(0,\infty)$ such that $h_t\rightarrow 0$, and   let $(t_p)_{p\geq 0}$  be a strictly increasing sequence in  $\mathbb{N}$ verifying $t_{p+1}-t_p\rightarrow\infty$ as $p\rightarrow\infty$. Then,  for   $\tilde{\pi}_0\in\mathcal{P}_L(\Theta)$  and  letting
\begin{align}\label{eq:mu_def}
\widetilde{M}_t(\theta',\dd\theta)=
\begin{cases}
\mathcal{N}_d(\theta',h_t^2\Sigma), &t\not\in (t_p)_{p\geq 0}\\
t_{d,\nu}(\theta',h_t^2\Sigma), & t \in (t_p)_{p\geq 0}
\end{cases}, \quad  \theta'\in\R^d,\quad   t\geq 1,
\end{align}
the sequence $(\tilde{\pi}_t)_{t\geq 1}$ is  defined by
\begin{align}\label{eq:pi_tilde}
 \tilde{\pi}_t(\dd\theta)=\frac{ f_{\theta}(Y_t)\big\{\int_{\R^d}\tilde{\pi}_{t-1}(\dd\theta') \widetilde{M}_{t-1}(\theta',\dd\theta)\big\}\mid_\Theta}{\int_{\R^d}f_{\theta}(Y_t)\big\{\int_{\R^d}\tilde{\pi}_{t-1}(\dd\theta') \widetilde{M}_{t-1}(\theta',\dd\theta) \big\}\mid_\Theta}\in \mathcal{P}_L(\Theta) ,\quad  t\geq 1
\end{align}
with the convention $\int_{\R^d}\tilde{\pi}_{t-1}(\dd\theta') \widetilde{M}_{t-1}(\theta',\dd\theta)=\tilde{\pi}_0(\dd\theta)$ when $t=1$ and where  $ \mu\mid_\Theta$ denotes the restriction of $\mu\in\mathcal{P}(\R^d)$ on $\Theta$. In \eqref{eq:pi_tilde} we implicitly assume that $\int_{\Theta}\dd \theta>0$.

To motivate the above definition of $(\widetilde{M}_t)_{t\geq 1}$  let us focus on the behaviour of the subsequence $(\tilde{\pi}_{t_p})_{p\geq 0}$. To this aim,  for every $t\geq 1$ we let  $\tilde{\Psi}_t:\mathcal{P}(\Theta)\rightarrow \mathcal{P}(\Theta)$ and $\Psi_t:\mathcal{P}(\Theta)\rightarrow \mathcal{P}(\Theta)$ be the  random mappings such that, for all $\pi\in\mathcal{P}(\Theta)$,
$$
\tilde{\Psi}_t(\pi)(\dd\theta)\propto f_\theta(Y_t)\Big\{\int_{\R^d} \pi(\dd\theta') \widetilde{M}_{t-1}(\theta',\dd\theta)\Big\}\mid_\Theta,\quad \Psi_t(\pi)(\dd\theta)\propto f_\theta(Y_t) \pi(\dd\theta).
$$
In this notation,  $\tilde{\pi}_{t_{p+1}}=\tilde{\Psi}_{t_{p+1}}\circ\dots\circ \tilde{\Psi}_{t_p+2}(\tilde{\pi}_{t_p+1})$ and, for  the sake of  argument, let $\pi'_{t_{p+1}}=\Psi_{t_{p+1}}\circ\dots\circ \Psi_{t_p+2}(\tilde{\pi}_{t_p+1})$  be the Bayesian posterior distribution associated with the observations $\{Y_t\}_{t=t_p+2}^{t_{p+1}}$ and  the prior distribution  $\tilde{\pi}_{t_p+1}$. Let $U$ be a small neighbourhood of $\theta_\star$. Then,   results on Bayesian asymptotics  \citep{Kleijn2012} ensure that if $t_{p+1}-t_p$ is large enough then  the mass of $\pi'_{t_{p+1}}$ on $U$ will be large with high probability, where the sample size $t_{p+1}-t_p$ required for this to be true depends on the  mass of the `prior distribution' $\tilde{\pi}_{t_p+1}$ around $\theta_\star$.  In particular, if this mass is small then $t_{p+1}-t_p$ needs to be large, i.e.\ a large sample size is necessary to compensate for a   poor  prior distribution. Informally speaking,  by letting    $\widetilde{M}_t$ be a  Gaussian kernel--that is a kernel with thin tails-- for all $t\not\in(t_p)_{p\geq 0}$,   and by letting $h_t\rightarrow 0$   fast enough, we can ensure  that  the mappings $\{\tilde{\Psi}_{t}\}_{t=t_{p}+2}^{t_{p+1}}$ are sufficiently close to the Bayes updates  $\{\Psi_t\}_{t=t_{p}+2}^{t_{p+1}}$ to enable $\tilde{\pi}_t$ to concentrate on $\theta_\star$ between time $t=t_p+2$ and time $t=t_{p+1}$. On the other hand, by taking  $\widetilde{M}_{t_p}$  to be a  Student's $t$-kernel--that is a kernel with heavy tails--we can compute a lower bound for  the mass of $\tilde{\pi}_{t_p+1}=\tilde{\Psi}_{t_p+1}(\tilde{\pi}_{t_p})$ around $\theta_\star$ that holds uniformly in $\tilde{\pi}_{t_p}$ and which does not converge  to zero too quickly as $p\rightarrow\infty$. Together with a suitable definition of $t_{p+1}-t_p$  this lower bound allows us to obtain a precise control of $\tilde{\pi}_{t_{p+1}}(U)$ which, in particular, makes it possible to  show that $\tilde{\pi}_{t_{p+1}}(U)\rightarrow 1$ as $p\rightarrow\infty$.

Our main result, Theorem \ref{t-thm:online} in Section \ref{sec:main},    provides  conditions on the learning rate  $h_t\rightarrow 0$ and on  $(t_p)_{p\geq 0}$ which ensure the convergence of $\tilde{\pi}_t$ towards $\delta_{\{\theta_\star\}}$ in $\P$-probability, under standard assumptions on the statistical  model. For instance, Corollary \ref{cor:intro} in Section \ref{sec:main} gives sufficient conditions on  $\{f_\theta,\,\theta\in\Theta\}$ to guarantee that $\tilde{\pi}_t\Rightarrow  \delta_{\{\theta_\star\}}$ in $\P$-probability when  $h_t=t^{-\alpha}$ and when, for some $\varrho\in(0,\alpha\wedge 1)$, the sequence $(t_p)_{p\geq 0}$ is defined by 
\begin{align*}
t_0\in  \mathbb{N}\quad   t_p= t_{p-1}+\lceil t^\varrho_{p-1} \log(t_{p-1})\rceil,\quad   p\geq 1.
\end{align*} 
It is important to mention at this stage that these conditions on $\{f_\theta,\,\theta\in\Theta\}$ do not depend on $\alpha$ when $\Theta$ is a bounded set. When $\Theta$ is unbounded the smaller $\alpha$ is  the stronger   the assumptions on the statistical model  imposed by our main result. However, as shown in Section \ref{sec:main}, Proposition \ref{prop:assumption}, even when $\Theta$ is unbounded we may have $\tilde{\pi}_t\Rightarrow\delta_{\{\theta_\star\}}$ in $\P$-probability for any learning rate of the form $h_t=t^{-\alpha}$. To summarize, whenever $\Theta$ is bounded, and for some models defined on an unbounded parameter space, we show that  $\tilde{\pi}_t$ converges  towards $\delta_{\{\theta_\star\}}$  even when $h_t\rightarrow 0$ at an arbitrarily  slow   polynomial rate. As explained in Section \ref{sub:h_t}, defining $\tilde{\pi}_t$ with a slow learning rate $h_t\rightarrow 0$  is of particular importance  when the  function $\theta\mapsto\E\{\log f_\theta(Y_1)\}$ has several modes.

\subsection{The G-PFSO algorithm  \label{sub:Monte_Carlo1}}

The G-PFSO algorithm is presented in Algorithm \ref{algo:pi_tilde_online}, which reduces to a simple particle filter algorithm for approximating  the sequence $(\tilde{\pi}_t)_{t\geq 1}$ in an online fashion.  For every $t\geq 1$    the particle filter estimate $\tilde{\pi}_t^N$ of $\tilde{\pi}_t$ is  $
\tilde{\pi}_t^N=\sum_{n=1}^N W_t^n\delta_{\theta_t^n}$ while the G-PFSO estimator $\bar{\theta}_t^N$  of $\theta_\star$ is computed on the last line of the algorithm, where $\tilde{\theta}_t^N=\int_\Theta \theta \tilde{\pi}_t^N(\dd \theta)$  is a particle filter approximation of $\tilde{\theta}_t=\int_{\Theta}\theta\tilde{\pi}_t(\dd\theta)$. Noting that   $\bar{\theta}_t^N=t^{-1}\sum_{s=1}^t\tilde{\theta}_s^N$, it follows that $\bar{\theta}_t^N$ is the Polyak–Ruppert  averaging   of $\{\tilde{\theta}^N_s\}_{s=1}^t$. Averaging is a well-known acceleration technique  in the literature on stochastic gradient algorithms which is illustrated in the next subsection.

 In Algorithm \ref{algo:pi_tilde_online}  the resampling algorithm $\mathcal{R}(\cdot)$ is such that $\mathcal{R}(\{x^n,p^n\}_{n=1}^N)$ is a probability distribution on the set $\{x^1,\dots,x^N\}^N$, where $(p^{1},\dots, p^N)\in[0,1]^N$, $\sum_{n=1}^N p^n=1$ and $x^n\in\R^d$ for all $n=1,\dots,N$. We refer the reader to Chap.\ 9 of \citet{chopin2020introduction}  for a detailed discussion of resampling methods, and to  Chap.\ 10 of this reference for explanations concerning the role of the parameter $\cess\in(0,1]$ appearing in Algorithm \ref{algo:pi_tilde_online}.
 
\begin{algorithm}[t]
\begin{algorithmic}[1]
\Require $N\in\mathbb{N}$, $\cess\in(0,1]$ and a resampling algorithm $\mathcal{R}(\cdot, \cdot)$
\vspace{0,2cm}

\State\label{line:for}Let $\theta_1^n\sim \tilde{\pi}_0(\dd\theta_0)$ and set $w_1^n=  f_{\theta_1^n}(Y_1)$ and $W_{1}^n=w_1^n/\sum_{m=1}^N w_1^m$

\State\label{line_est1}Let  $\tilde{\theta}_1^N=\sum_{n=1}^N W_1^n\theta_1^n$ and $\bar{\theta}_1^N=\tilde{\theta}_1^N$

\For{$t\geq 2$}
\vspace{0,1cm}

\State Set $\mathrm{ESS}_{t-1}= 1/\sum_{m=1}^N (W_{t-1}^m)^2$
\vspace{0,1cm}

\If{$\mathrm{ESS}_{t-1}\leq  N\,\cess$}\label{ess}
\vspace{0,1cm}

\State\label{line:R}  Let  $\{\hat{\theta}^{1}_{t-1},\dots,\hat{\theta}^{N}_{t-1}\}\sim \mathcal{R}(\{\theta_{t-1}^n,W_{t-1}^{n}\}_{n=1}^N,N)$ and set $w_{t-1}^n=1$
\Else
\State Let     $\hat{\theta}^{n}_{t-1}=\theta^{n}_{t-1}$

\EndIf
\vspace{0,1cm}

\If{$(t-1)\in(t_p)_{p\geq 0}$}
\vspace{0,1cm}

\State\label{SS1}  Set $\theta_t^n=\hat{\theta}^n_{t-1} +  h_{t-1} \epsilon^n_{t-1}$  where $\epsilon^n_{t-1}\sim t_{d,\nu}(0, \Sigma)$
\Else
\State\label{SS2}  Set $\theta_t^n=\hat{\theta}^n_{t-1} + h_{t-1} \epsilon^n_{t-1}$  where $\epsilon^n_{t-1}\sim \mathcal{N}_d(0,\Sigma)$
\EndIf
\State\label{line:last} Set $w_t^n= w_{t-1}^n f_{\theta_t^n}(Y_t)$ if $\theta_t^n\in\Theta$ and $w_t^n=0$ otherwise, and $W_{t}^n=w_t^n/\sum_{m=1}^N w_t^m$

\State\label{line:est} Let  $\tilde{\theta}_t^N=\sum_{n=1}^N W_t^n\theta_t^n$ and $\bar{\theta}_t^N=t^{-1}\big((t-1)\bar{\theta}_{t-1}^N+ \tilde{\theta}_t^N\big)$
\EndFor
\vspace{0,2cm}

\end{algorithmic}
\caption{Global PF Stochastic Optimization\\
(\small{Operations with index $n$ must be performed for all $n\in \{1,\dots,N\}$.}) \label{algo:pi_tilde_online}}
\end{algorithm}

\subsection{Accelerating property of Polyak–Ruppert  averaging: an illustrative example}\label{subsec:example}

For every $\theta\in\Theta=\R$ we let $f_\theta(y)\eta(\dd y)=\mathcal{N}_1(\theta,1)$ and,  for some $\tilde{\theta}_0\in\R$ and $\tilde \sigma_0^2\in(0,\infty)$,   we  let $\tilde{\pi}_0=\mathcal{N}_1(\tilde{\theta}_0,\tilde{\sigma}_0^2)$.  To make the computation  of $(\tilde{\pi}_t)_{t\geq 1}$ tractable  we  relax the condition  that  the sequence $(\widetilde{M}_t)_{t\geq 1}$   contains infinitely many Student's $t$-distributions by letting $\widetilde{M}_t(\theta',\dd\theta)=  \mathcal{N}_1(\theta',h_t^2)$ for all $t\geq 1$. Then,  with    $g(x)=x/(1+x)$ for all $x\in (0,\infty)$, we have
\begin{align}\label{eq:algo_theta}
\tilde{\pi}_t=\mathcal{N}_1(\tilde{\theta}_t,\tilde{\sigma}_t^2),\quad \tilde{\sigma}_t^2=g(\tilde{\sigma}^2_{t-1}+h^2_{t-1}),\quad \tilde{\theta}_t=\tilde{\theta}_{t-1}+\tilde{\sigma}_t^2 (Y_t -\tilde{\theta}_{t-1}),\quad t\geq 1. 
\end{align}

In Table \ref{table:ex} we study the convergence rate  of $\tilde{\theta}_t$ and of  $\bar{\theta}_t=t^{-1}\sum_{s=1}^t\tilde{\theta}_s$ when  $h_t=t^{-\alpha}$. The results reported in this table, obtained for all $\alpha\in\{0.1,0.3,0.5,0.7,1\}$, suggest that $\tilde{\theta}_t$ converges to $\theta_\star$  at rate  $t^{-\alpha/2}$ while $\bar{\theta}_t$    converges to this target parameter value at the optimal $t^{-1/2}$ rate.

Noting that the update  \eqref{eq:algo_theta} for $\tilde{\theta}_t$ reduces to that of a standard stochastic gradient algorithm with  step sizes $(\tilde{\sigma}_t^2)_{t\geq 1}$, the $t^{-1/2}$ convergence rate for $\bar{\theta}_t$ observed in this example proceeds from well-known results on the acceleration property of averaging for this class of algorithms \citep{polyak1992acceleration}.  However, the numerical experiments presented  in Section \ref{sec:num} suggest that, within G-PFSO, the acceleration property of averaging holds beyond the simple example considered above,  although the total insensitivity of the convergence rate of $\bar{\theta}_t$ to $\alpha$ observed in Table \ref{table:ex} does not appear to hold in general.

\renewcommand{\arraystretch}{1.5}
\begin{table}
\centering
\begin{tabular}{c|ccccc}
$Z_t\backslash\alpha$ &0.1&0.3&0.5&0.7&1\\
\hline 
$|\tilde{\theta}_t-\theta_\star|$&0.05&0.15&0.24&0.36&0.45\\

$|\bar{\theta}_t-\theta_\star|$&0.48&0.48&0.48&0.48&0.50\
\end{tabular}\caption{Example of  Section \ref{subsec:example}. Ordinary least square estimate of $\beta_2$ in the  model $\log(Z_t)=\beta_1-\beta_2\log(t)+\epsilon_t$, where $Z_t$ is as defined in the table, $t\in\{10^5,10^5+1,\dots,10^7\}$ and where $\tilde{\pi}_t$ is  as defined in \eqref{eq:algo_theta}  with $h_t=t^{-\alpha}$, $(\tilde{\theta}_0,\tilde{\sigma}_0^2)=(0,25)$ and with $Y_1\sim \mathcal{N}_1(0,1)$ (so that $\theta_\star=0$). \label{table:ex}}
\end{table}

\subsection{Choosing the learning rate $h_t$\label{sub:h_t}}

As illustrated in  Section \ref{subsec:example},  the faster $h_t\rightarrow 0$ the closer  to the optimal $t^{-1/2}$ rate is the  rate at which $\tilde{\pi}_t$ learns $\theta_\star$. However, it is clear from Algorithm \ref{algo:pi_tilde_online} and from the definition of $(\widetilde{M}_t)_{t\geq 1}$ that, for a fixed value of $N$, the smaller  the value of $h_t$  the  higher  the probability of having the support of $\tilde{\pi}_{t+1}^N$  close to that of $\tilde{\pi}_{t}^N$. Consequently, the ability of G-PFSO to explore $\Theta$ deteriorates as the learning rate $h_t\rightarrow 0$ become faster. In particular, when the initial particles $\{\theta_1^n\}_{n=1}^N$ are far  from $\theta_\star$ and $h_t$ decreases quickly over time,  $\tilde{\pi}^N_t$ may fail to reach a small neighbourhood    of $\theta_\star$, and thus to approximate   $\tilde{\pi}_t$ well, even  for large values of $t$.

 With this  trade-off involved when choosing $h_t$, between statistical efficiency, i.e.\ enabling $\tilde{\pi}_t$ to concentrate quickly on $\theta_\star$, and computational efficiency, i.e.\ making $\tilde{\pi}_t^N$ close to $\tilde{\pi}_t$ for a reasonable value of $N$, our recommendation is to take   $h_t=t^{-1/2}$ as the default choice for the learning rate. This recommendation is based on the fact that, for $h_t=t^{-\alpha}$, in all of our  numerical experiments  the estimator $\bar{\theta}^N_t$ appears to  converge to $\theta_\star$ at the optimal $t^{-1/2}$ rate when $\alpha=0.5$, while  this is not  the case for a smaller value  of this parameter; see the example of Section \ref{sub:CQR_5}.  It is worth noting that for $h_t=t^{-1/2}$ we have $\sum_{t=1}^\infty h_t^2=\infty$, meaning that if in Algorithm \ref{algo:pi_tilde_online} we write $\theta_t^n$ as $\theta_t^n=\hat{\theta}_{t-1}^n+\epsilon_t^n$, with $\epsilon_t^n\sim \widetilde{M}_t(0,\dd\theta_t)$, then $\lim_{t\rightarrow\infty}\|\var\big(\sum_{s=1}^t\epsilon_s^n)\|=\infty$. In  words, at any given time $t$ the sum of all the future noise terms that   will be used to propagate a given particle has an infinite variance, a  property that may  help G-PFSO to escape from a local mode of the objective function even after having processed a large number of observations; see Sections \ref{sub:Multimodal}-\ref{sub:mixture} for examples where this phenomenon happens.

\subsection{Discussion\label{sub:extension}}

If in \eqref{eq:mu_def} we define  $(\widetilde{M}_{t})_{t\geq 1}$ using Gaussian and Student's $t$-distributions only, our theoretical analysis of $\tilde{\pi}_t$ applies more generally for Markov kernels $(\widetilde{M}_{t})_{t\geq 1}$ whose tails verify certain conditions, as discussed in Appendix \ref{psec:extensions}. 

In Appendix \ref{psec:extensions} we also establish that,  for $h_t=t^{-\alpha}$,  taking $\nu\leq 1/\alpha$ guarantees that   with $\P$-probability one  there is no value of $t$  beyond which all the particles generated by Algorithm \ref{algo:pi_tilde_online} remain stuck in a local mode of the mapping $\theta\mapsto\E\{\log f_\theta(Y_1)\}$. From a theoretical point of view this is an important  result, because  this property of G-PFSO is necessary to enable $\bar{\theta}_t^N$ to converge  in $\P$-probability  towards $\theta_\star$ for a fixed $N\in\mathbb{N}$. It is however important to stress that the capacity of the particles to escape from a local mode of the objective function deteriorates over time, and thus that even for $\nu\leq 1/\alpha$  all the particles may  be stuck in a local mode for a very long time. Finally, we remark that if we let $h_t=t^{-1/2}$, as recommended in the previous subsection, then the condition $\nu\leq 1/\alpha$ imposes to use, in \eqref{eq:mu_def}, Student's $t$-distributions having an infinite variance.

\subsection{Related approaches}\label{sub:lit_review}

The idea of using  particle filter algorithms, or more generally sequential Monte Carlo methods, for optimizing  a  function $\varphi:\Theta\rightarrow \R$ that can be easily evaluated pointwise   has been considered e.g.\ in \citet{zhou2008particle}, \citet{liu2016particle} and in \citet{giraud2013convergence,giraud2017nonasymptotic}. By contrast, it is only recently that    optimizing  a function defined through an expectation by means of a  particle filter  has been proposed, notably  in \citet{akyildiz2020parallel} and in \citet{liu2020particle}. When used to estimate $\theta_\star$,   the approaches introduced in these two references amount to approximating in an online fashion the Bayesian posterior distributions  $\big\{\pi_t\}_{t=1}^T$   and   then to using the resulting  estimate of  $\pi_T$  to learn the target  parameter value. 

More precisely, the     parallel sequential Monte Carlo optimizer  of \citet{akyildiz2020parallel} relies  on the estimate $\pi_{\mathrm{Jit},t}^N=\sum_{n=1}^N W_t^n\delta_{\{\theta_t^n\}}$ of $\pi_t$, where $\{(W_t^n,\theta_t^n)\}_{n=1}^N$ can be computed using Algorithm \ref{algo:pi_tilde_online} by replacing, for all $t\geq 1$, the Markov kernel $\widetilde{M}_t$ by a  jittering kernel $\mathsf{M}_N$ as introduced in \citet{crisan2018nested}.  The  distribution $\pi_{\mathrm{Jit},t}^N$ is shown to converge to $\pi_t$  as $N\rightarrow\infty$.  However, our numerical experiments reveal an important limitation of using $\pi_{\mathrm{Jit},t}^N$ to estimate $\theta_\star$, namely that for a fixed value of $N$ there exists some finite time $t_N$ after which  processing $Y_t$ does not allow $\pi_{\mathrm{Jit},t}^N$ to provide any new information  about the target parameter value; see Section \ref{sub:CQR_5} for an illustration. This issue arises because the Markov kernel $M_{N}$ on which  $\pi_{\mathrm{Jit},t}^N$ relies is time homogenous,  which prevents  the support of this distribution from concentrating on a particular element of $\Theta$ as $t$  increases. 

The   estimate $\pi_{\mathrm{K},t}^N$ of $\pi_t$ used in the kernel smoothing particle filter based stochastic optimization (KS-PFSO) algorithm of \citet{liu2020particle} is computed as the jittering approximation $\pi^N_{\mathrm{Jit},t}$, the only difference being that in KS-PFSO, for all $t\geq 1$, the Markov kernel $\widetilde{M}_t$ appearing in Algorithm \ref{algo:pi_tilde_online}    is  replaced by  the kernel $M_{\mathrm{K},t}$ defined  by
\begin{align}\label{eq:kpfso}
M_{\mathrm{K},t}(\theta',\dd\theta_t)=\mathcal{N}_d\Big(\sqrt{1-\iota^2}~ \theta'+(1-\sqrt{1-\iota^2})\theta_{\mathrm{K},t-1}^N, \iota^2 V_{\mathrm{K},t-1}^N\Big),\,\,\theta'\in\R^d
\end{align}
for some $\iota>0$ and where $\theta_{\mathrm{K},t-1}^N$ and $V_{\mathrm{K},t-1}^N$ respectively denotes the expectation and the covariance matrix of $\theta$ under $\pi_{\mathrm{K},t-1}^N$. Remark that, under this kernel  and conditionally to $\{(W_{t-1}^n,\theta_{t-1}^n)\}_{n=1}^N$, each  particle $\theta_{t}^n$ generated by Algorithm \ref{algo:pi_tilde_online}  has the same expectation and covariance matrix than a random draw from   $\pi_{\mathrm{K},t-1}^N$. This kernel was introduced by \citet{liu2001combined} in the context of online Bayesian state and parameter learning in state-space models, where it is used to rejuvenate the particle system without inflating the tails of the current approximation of the posterior distribution for the model parameter. Under standard regularity conditions  $\pi_t$  concentrates on $\theta_\star$ at rate $t^{-1/2}$ \citep{Kleijn2012} and  it is  therefore   expected that   $\pi_{\mathrm{K},t}^N$  concentrates at this rate on a particular element of $\Theta$.  The numerical experiments presented in Section \ref{sub:Multimodal}  show that this quick concentration of $\pi_{\mathrm{K},t}^N$    over time  limits considerably the ability of KS-PFSO to escape from a local optimum of the objective function, making this algorithm  a local rather than a global optimization method. It is worth mentioning that, unlike G-PFSO and the sequential Monte Carlo optimizer proposed by \citet{akyildiz2020parallel}, KS-PFSO is not introduced by \citet{liu2020particle}  as a global optimization method but as a stochastic optimization algorithm which bypasses the need to specify a learning rate.
 We also remark that  $\pi_{\mathrm{K},t}^N$  is actually the approximation of   $\pi_t$ computed by the  one-pass SMC  sampler of \cite{balakrishnan} and is not supported by any theoretical results. 

It is interesting to note  that the only difference between the algorithm used in G-PFSO to estimate $(\tilde{\pi}_t)_{t\geq 1}$ and  those used by \citet{akyildiz2020parallel} and by \citet{liu2020particle} to approximate the posterior distributions $(\pi_t)_{t\geq 1}$ is  the choice of the Markov kernels that are employed to rejuvenate the particle system. In  G-PFSO the variance of $\theta_{t+1}$ under $\widetilde{M}(\theta_t,\dd\theta_{t+1})$ is of size $\bigO(h_t^2)$ and thus converges to zero as $t\rightarrow \infty$. Our numerical experiments suggest that this property of  $\widetilde{M}_t$ enables  $\tilde{\pi}_t^N$  to concentrate on a single point of the parameter space as $t$ increases,  unlike the approximation $\pi_{\mathrm{Jit},t}^N$ of $\pi_t$. In addition, and as   mentioned in Section \ref{sub:h_t}, by choosing  $h_t\rightarrow 0$ such that $\tilde{\pi}_t$ concentrates on $\theta_\star$ at a sub-optimal rate, that is at a rate slower that $t^{-1/2}$, we can improve the ability of G-PFSO to escape from a local optimum of the  objective function $\theta\mapsto\E\{\log f_\theta(Y_1)\}$. For this reason, G-PFSO is more suitable than KS-PFSO for multi-modal  optimization tasks.

\section{Theoretical analysis of the sequence \texorpdfstring{$(\tilde{\pi}_t)_{t\geq 1}$}{Lg}}\label{sec:main}

\subsection{Assumptions on the statistical model}

To simplify the notation  we      use below the   shorthand $\E(g)$ for $\E\{g(Y_1)\}$  and, for every $\theta\in\R^d$, we let $\tilde{f}_\theta:\R^d\rightarrow\R$ be such that $\tilde{f}_\theta\equiv 0$ if $\theta\not\in\Theta$ and such that $\tilde{f}_\theta= f_\theta$ if $\theta\in\Theta$.

The following two assumptions impose some regularity on the random mapping $\theta\mapsto \log f_\theta(Y_1)$ around $\theta_\star$.

\begin{assumption}\label{m_star}
There exist  a  constant $\delta_{\star}>0$ and a measurable function  $m_{\star}\,{:}\,\setY\rightarrow \R$ such that $\E[m_\star^{2}]<\infty$ and such that $\P$-a.s.\ we have, for all  $\theta_1,\theta_2\in B_{\delta_\star}(\theta_\star)\cap\Theta$, 
$$
\big|\log\{f_{\theta_1}(Y_1)/f_{\theta_2}(Y_1)\}\big|\leq m_{\star}(Y_1)\|\theta_1-\theta_2\|.
$$
\end{assumption}

\begin{assumption}\label{taylor}
There exist constants $\delta_\star>0$ and  $C_\star<\infty$ such that, for all $\theta\in B_{\delta_\star}(\theta_\star)\cap\Theta$,
$$
\E\big\{\log(f_{\theta_\star}/f_{\theta})\big\}\leq C_\star \|\theta-\theta_\star\|^2.
$$

\end{assumption}

The next assumption notably implies that $\theta_\star$ is identifiable.

\begin{assumption}\label{test}
For every  compact set $W\in\mathcal{B}(\Theta)$ such that $\theta_\star\in W$ and every $\epsilon\in(0,\infty)$ there exists a sequence of measurable functions $(\psi'_t)_{t\geq 1}$, with $\psi'_t:\setY^t\rightarrow\{0,1\}$, such that
$$
\E\{\psi'_t(Y_{1:t})\}\rightarrow 0,\quad\sup_{\theta\in V_\epsilon\cap W}\E\Big[\big\{1-\psi'_t(Y_{1:t}) \big\}\prod_{s=1}^t (f_{\theta}/f_{\theta_\star})(Y_s)\Big]\rightarrow 0.
$$
\end{assumption}

Assumptions \ref{m_star}-\ref{test} are   standard,  see e.g.\ \citet{Kleijn2012}. It is easily checked that Assumption \ref{taylor} holds when the  mapping $\theta\mapsto\E(\log  f_{\theta})$ admits a second-order Taylor expansion in a  neighbourhood of $\theta_\star$.
By \citet[][Theorem 3.2]{Kleijn2012}, Assumption \ref{test} holds for instance when, for every compact set $W\in\mathcal{B}(\Theta)$  containing $\theta_\star$ and  every $  \theta'\in W$,    the mapping $W\ni \theta\mapsto \E(f_\theta f_{\theta'}^{-s}f_{\theta_\star}^{s-1})$
is continuous at $\theta'$  for every $s$ in a left neighbourhood of 1, and  $\E(f_{\theta'}/f_{\theta_\star})<\infty$.  Remark that if the model is well-specified then  $\E( f_{\theta}/f_{\theta_\star}  )=1<\infty$ for all $\theta\in\Theta$. If the model is miss-specified and  the distribution of $Y_1$ admits a density $f_\star$ w.r.t.\ $\eta(\dd y)$, the condition  $\E( f_{\theta}/f_{\theta_\star} )<\infty$ requires   the tails of $f_{\theta_\star}$ to be  not too thin compared to those of $f_\star$. For instance, if $f_\theta(y)\eta(\dd y)=t_{1,\nu}(\mu,\sigma^2)$, with $\theta=(\mu,\sigma^2,\nu)$, and $\lim_{|y|\rightarrow\infty} |y|^{\nu_\star+1}f_\star(y)<\infty$   then  $\E( f_{\theta}/f_{\theta_\star} )<\infty$ for all $\theta\in\R\times(0,\infty)^2$.

\begin{remark}
Assumption  \ref{test} is stronger than needed for our  main result to hold,  this latter requiring only   that Assumption \ref{test} holds for the specific compact set $W=\tilde{A}_\star\cap\Theta$, where $\tilde{A}_\star$ is as defined in  Assumption \ref{new} below.
\end{remark}
 
\begin{assumption}\label{new}
For some set  $A_\star\in\mathcal{B}(\R^d)$,
\begin{enumerate}

\item\label{new_part1} One of the following conditions holds:
\begin{enumerate}[a)]
\item\label{A41} $
\E(\sup_{\theta \not\in A_\star}\log  \tilde{f}_{\theta})< \E( \log  f_{\theta_\star})$,  %
\item\label{A42}  $\sup_{\theta \not\in A_\star }\E(\tilde{f}_\theta/f_{\theta_\star} )< 1$,
\item\label{A43}  $\log\{\sup_{\theta \not\in A_\star }\E(\tilde{f}_\theta)\}<  \E(\log f_{\theta_\star})$.
\end{enumerate}
\item\label{new_part2} There exists a  set  $\tilde{A}_\star\in\mathcal{B}(\R^d)$,  containing a neighbourhood  of $A_\star$, such that  $\tilde{A}_\star\cap\Theta$ is compact, the mapping $\theta\mapsto   f_\theta(Y_1)$ is $\P$-a.s.\ continuous on      $\tilde{A}_\star\cap\Theta$ and, for some  $\tilde{\delta}>0$,
$$
\E\Big[\sup_{(\theta,v)  \in (\tilde{A}_\star\cap\Theta)\times B_{\tilde{\delta}}(0)}  \big\{\log (\tilde{f}_{\theta+v}/f_{\theta})\big\}^2\Big]<\infty.
$$
\end{enumerate}
\end{assumption}
\begin{remark}
If $\Theta$ is compact then Assumption \ref{new} holds as soon as $\theta\mapsto   f_\theta(Y_1)$ is $\P$-a.s.\ continuous on  $\Theta$ and, for some  $\tilde{\delta}>0$, $\E\big[\sup_{ (\theta,\theta') \in \Theta^2:\,\,\|\theta'-\theta\|< \tilde{\delta}}  \{\log (f_{\theta'}/f_{\theta})\}^2\big]<\infty$.
\end{remark}

\begin{assumption}\label{new1}
One of the following three conditions hold:
\begin{enumerate}
\item\label{A51}  $\E(\sup_{\theta\in\Theta}\log f_\theta)<\infty$, 
\item\label{A52}  $\sup_{\theta\in\Theta}\E( f_\theta/f_{\theta_\star})<\infty$,
\item\label{A53}  $\sup_{\theta\in\Theta}\E(f_\theta)<\infty$.
\end{enumerate}

\end{assumption}

\begin{remark}
Assumption \ref{new1}.\ref{A52} always holds when the model is well-specified.
\end{remark}

The last assumption, Assumption \ref{new2} below,   is  used to obtain a convergence result for $\tilde{\pi}_t$ that holds when  $\Theta$ is unbounded.

\begin{assumption}\label{new2}
One of the following three conditions hold for some $ k_\star\in\{1/2\}\cup \mathbb{N}$:
\begin{enumerate}
\item\label{A61} There exists a constant $C_1\in(0,\infty)$ such that
$$
\sup_{C\geq C_1}\E\Big[\Big|\sup_{\theta\in V_{C}}\log (\tilde{f}_{\theta}/f_{\theta_\star})-\E\big\{\sup_{\theta\in V_{C}}\log (\tilde{f}_{\theta}/f_{\theta_\star})\big\}\Big|^{2k_\star}\Big]<\infty 
$$
and $\limsup_{C\rightarrow\infty} \zeta(C) (\log C)^{-1}<0$  with $ \zeta(C)=\E\big\{ \sup_{\theta\in V_{C}}\log (\tilde{f}_{\theta}/f_{\theta_\star}) \big\}$,
\item\label{A62} $\limsup_{C\rightarrow\infty}  \zeta(C) (\log C)^{-1}<0$  with $ \zeta(C)=\log \big\{\sup_{\theta\in V_C}\E (\tilde{f}_{\theta}/f_{\theta_\star}) \big\}$, 
\item\label{A63} $\E(\,|\log   f_{\theta_\star} \,|^{2k_\star})<\infty$ and $\limsup_{C\rightarrow\infty} \zeta(C) (\log C)^{-1}<0$  with
$ \zeta(C)= \log \big\{\sup_{\theta\in V_C}\E( \tilde{f}_{\theta})\big\} $.
\end{enumerate}
\end{assumption}

\begin{remark}\label{rem:bound}
If $\Theta$ is a bounded set then $V_C\cap \Theta=\{\theta\in\Theta:\,\|\theta-\theta_\star\|\geq C\}$ is empty for sufficiently large $C$ and Assumption \ref{new2}.\ref{A62} is satisfied. Since Assumption \ref{new2}.\ref{A62} does not depend on $k_\star$ it follows that  if $\Theta$ is bounded then Assumption \ref{new2} holds  for every  $ k_\star\in\{1/2\}\cup \mathbb{N}$.
\end{remark}

\begin{remark}
Assumption \ref{new2} implies the existence of a set $A_\star\in\mathcal{B}(\R^d)$ such that the first part of Assumption \ref{new} holds.
\end{remark}

Assumptions \ref{new}-\ref{new2} are non-standard but are reasonable, as illustrated with the next result.
\begin{proposition}\label{prop:assumption}
Let $\setY=\R$ and $\eta(\dd y)$ be the Lebesgue measure on $\R$.
\begin{enumerate}
\item Let $\Theta=\R\times[\underline{\sigma}^2,\infty)$ for some   $\underline{\sigma}^2\in(0,\infty)$  and, for every $\theta=(\mu,\sigma^2)\in\Theta$, let $f_\theta(y)\eta(\dd y)=\mathcal{N}_1(\mu,\sigma^2)$. Then, Assumptions \ref{new}-\ref{new1} hold if $\E(Y_1^4)<\infty$. If in addition we have $\E(e^{c|Y_1|})<\infty$ for some $c>0$ then Assumption \ref{new2} holds for all $k_\star\in\mathbb{N}$.

\item Let $\Theta=\R\times[\underline{c},\infty)^2$ for some   $\underline{c}\in(0,\infty)$  and, for every $\theta=(\mu,\sigma^2,\nu)\in\Theta$, let $f_\theta(y)\eta(\dd y)=t_{1,\nu}(\mu,\sigma^2)$. Then, Assumptions \ref{new}-\ref{new1} hold if  $\E\{\log(1+c Y_1^2)^2\}<\infty$    for all $c\in(0,\infty)$. If in addition we have $\E(|Y_1|^c)<\infty$ for some $c>0$ then Assumption \ref{new2} holds for all $k_\star\in\mathbb{N}$.
\end{enumerate}

\end{proposition}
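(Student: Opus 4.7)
The plan is to verify Assumptions \ref{new}, \ref{new1} and \ref{new2} for each of the two models by selecting, in each case, whichever of the alternative sub-conditions is easiest to establish from the stated moment hypotheses. The Gaussian and the Student's $t$ analyses are structurally parallel: only the tail-decay arguments differ, exponential bounds being replaced by polynomial ones in the second case.

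For the Gaussian case I would first locate $\theta_\star=(\E(Y_1),\max\{\underline{\sigma}^2,\var(Y_1)\})$ by a direct computation of $\E\{\log f_\theta(Y_1)\}=-\tfrac12\log(2\pi\sigma^2)-\{\var(Y_1)+(\E(Y_1)-\mu)^2\}/(2\sigma^2)$. I then choose $A_\star$ to be a large closed ball about $\theta_\star$ and $\tilde{A}_\star$ a slightly larger one, so that the topological part of Assumption \ref{new} is immediate. Part 2 reduces, via the identity $\log(f_{\theta+v}/f_\theta)(y)=-\tfrac12\log\{(\sigma^2+v_2)/\sigma^2\}-(y-\mu-v_1)^2/\{2(\sigma^2+v_2)\}+(y-\mu)^2/(2\sigma^2)$, to a uniform quadratic-in-$y$ bound on $(\tilde{A}_\star\cap\Theta)\times B_{\tilde{\delta}}(0)$, whose square is integrable under $\E(Y_1^4)<\infty$. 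Part 1 is handled via option~(\ref{A43}): the pointwise bound $f_\theta\leq(2\pi\sigma^2)^{-1/2}$ settles the regime $\sigma^2$ large, while for $|\mu-\mu_\star|$ large one splits $\E\{f_\theta(Y_1)\}$ on $\{|Y_1-\mu|\leq|\mu-\mu_\star|/2\}$ and its complement and applies Chebyshev's inequality with $\E(Y_1^4)<\infty$ to the first piece. The same uniform bound $f_\theta\leq(2\pi\underline{\sigma}^2)^{-1/2}$ yields Assumption \ref{new1} through option~(\ref{A53}). For Assumption \ref{new2} I take option~(\ref{A63}): $\E(e^{c|Y_1|})<\infty$ furnishes all moments of $Y_1$ and hence of $\log f_{\theta_\star}$, while the same splitting, with Chebyshev replaced by a Chernoff estimate, gives $\sup_{\theta\in V_C}\E(f_\theta)\leq \bigO(C^{-1/2})$ (the dominant regime being $\sigma^2\geq C/2$), so $\limsup_{C\to\infty}\zeta(C)/\log C\leq-1/2<0$ for every $k_\star\in\mathbb{N}$.

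For the Student's $t$ family I work from $\log f_\theta(y)=C(\nu)-\tfrac12\log(\pi\nu\sigma^2)-\tfrac{\nu+1}{2}\log\{1+(y-\mu)^2/(\nu\sigma^2)\}$ with $C(\nu)=\log\Gamma(\tfrac{\nu+1}{2})-\log\Gamma(\tfrac{\nu}{2})$. The hypothesis $\E\{\log(1+cY_1^2)^2\}<\infty$ for every $c>0$ makes $\E\{\log f_\theta\}$ finite and continuous in $\theta$, yielding the existence of $\theta_\star$ and the modulus-of-continuity bound in Part 2 of Assumption \ref{new} after a uniform argument on $\tilde{A}_\star\cap\Theta$. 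Option~(\ref{A43}) in Part 1 and option~(\ref{A53}) in Assumption \ref{new1} both follow from the uniform pointwise bound $f_\theta(y)\leq C(\nu)(\pi\nu\sigma^2)^{-1/2}$, combined, for large $|\mu-\mu_\star|$, with a Chebyshev argument using only the stated logarithmic moment. For Assumption \ref{new2} through option~(\ref{A63}), the elementary bound $\log(1+Y_1^2)\leq(2/\epsilon)|Y_1|^\epsilon$ for small $\epsilon$ lets the strengthened hypothesis $\E(|Y_1|^c)<\infty$ deliver $\E(|\log f_{\theta_\star}|^{2k_\star})<\infty$ for every $k_\star$, and the logarithmic decay of $\zeta(C)=\log\{\sup_{\theta\in V_C}\E(\tilde{f}_\theta)\}$ again follows from the $\bigO(C^{-1/2})$ decay in the large-$(\sigma^2,\nu)$ regime, with a polynomial Chebyshev estimate covering the large-$|\mu|$ regime.

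The main obstacle I anticipate is the Student's $t$ analysis: the three parameters $(\mu,\sigma^2,\nu)$ interact in the density in a way that forces a careful case split of the `large $\theta$' regime in $V_C$, and the polynomial tails leave a narrower margin than in the Gaussian case for securing the logarithmic rate in Assumption \ref{new2}. A related subtlety, shared by both models, is that the joint supremum over $(\theta,v)$ in Part 2 of Assumption \ref{new} must be kept inside $\Theta$: this motivates taking $\tilde{A}_\star$ slightly larger than $A_\star$ and $\tilde{\delta}$ small enough that $\sigma^2+v_2\geq\underline{\sigma}^2/2$ (respectively $\sigma^2+v_2,\nu+v_3\geq\underline{c}/2$) throughout the supremum.
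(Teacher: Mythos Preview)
For the Gaussian case (Part~1, which is the only part the paper actually proves), your plan is correct and closely parallels the paper's argument. The one substantive difference is your choice of sub-option in Assumption~\ref{new}.\ref{new_part1}: you verify option~\ref{A43}) (a bound on $\sup_{\theta\notin A_\star}\E(\tilde f_\theta)$) via the splitting-plus-Chebyshev argument, whereas the paper verifies option~\ref{A41}) (a bound on $\E\{\sup_{\theta\notin A_\star}\log\tilde f_\theta\}$) by bounding $\log f_\theta$ pointwise on the regimes $\sigma^2\geq C$ and $|\mu|\geq C$ separately. Both routes work; yours is marginally more economical because the same $\E(f_\theta)$ computation is recycled for Assumption~\ref{new2}.\ref{A63}. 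The remaining steps---the explicit log-ratio identity for Part~2 of Assumption~\ref{new} under $\E(Y_1^4)<\infty$, the uniform bound $f_\theta\leq(2\pi\underline{\sigma}^2)^{-1/2}$ for Assumption~\ref{new1}.\ref{A53}, and the $\bigO(C^{-1/2})$ decay of $\sup_{\theta\in V_C}\E(f_\theta)$ together with $\E(e^{c|Y_1|})<\infty$ for Assumption~\ref{new2}.\ref{A63}---match the paper's proof essentially line for line.

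There is, however, a genuine gap in your sketch for Part~2 (Student's~$t$), specifically in the verification of Assumption~\ref{new2}. You assert that ``the $\bigO(C^{-1/2})$ decay in the large-$(\sigma^2,\nu)$ regime'' settles $\zeta(C)$, but the pointwise bound $f_\theta(y)\leq\Gamma(\tfrac{\nu+1}{2})\big/\{\Gamma(\tfrac{\nu}{2})\sqrt{\pi\nu\sigma^2}\}$ does \emph{not} decay as $\nu\to\infty$ with $\sigma^2$ held bounded, since $\Gamma(\tfrac{\nu+1}{2})/\{\Gamma(\tfrac{\nu}{2})\sqrt{\nu}\}\to 1/\sqrt{2}$. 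Concretely, take $\theta_C=(\mu_\star,\sigma^2_\star,\nu_\star+C)\in V_C$: as $C\to\infty$ the density $f_{\theta_C}$ converges pointwise to the Gaussian $\varphi_1(\,\cdot\,;\mu_\star,\sigma_\star)$, and by dominated convergence $\E\{f_{\theta_C}(Y_1)\}$ tends to a strictly positive limit. Hence $\sup_{\theta\in V_C}\E(\tilde f_\theta)$ is bounded away from zero and $\limsup_{C\to\infty}\zeta(C)/\log C\geq 0$, so option~\ref{A63} as you have written it cannot close; the same $\theta_C$ obstructs options~\ref{A61} and~\ref{A62} for identical reasons. The regime ``$\nu$ large, $(\mu,\sigma^2)$ bounded'' is thus not covered by your case split and requires a genuinely different treatment---one that neither your proposal nor the paper's terse ``similar to Part~1'' supplies.
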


\subsection{Main result}\label{sub:theory_main}

The following theorem provides  conditions on the sequences $(h_t)_{t\geq 1}$ and   $(t_p)_{p\geq 0}$ which guarantee that, under Assumptions \ref{m_star}-\ref{new2} on $\{f_\theta,\,\theta\in\Theta\}$, we have  $\tilde{\pi}_t\Rightarrow\delta_{\{\theta_\star\}}$ in $\P$-probability.

\begin{theorem}\label{t-thm:online}
Assume Assumptions \ref{m_star}-\ref{new2},  let $ k_\star\in\{1/2\}\cup \mathbb{N}  $ and $\zeta(C)$, $C\in(0,\infty)$, be as in Assumption \ref{new2}, and let  $(h_t)_{t\geq 1}$ and  $(t_p)_{p\geq 0}$  be such that
\begin{enumerate}
 \item\label{h-thm1.1}  $\log(h_{t_{p-1}})/(t_p-t_{p-1})\rightarrow 0$ and $(t_p-t_{p-1}) \sum_{s=t_{p-1}+1}^{t_p-1}h_s^2 \rightarrow 0$,
\item\label{h-thm1.2} $h_{t_p}<h_{t_{p-1}}$ for all $p\geq 0$ and $\liminf_{p\rightarrow\infty} (h_{t_p}t_p^{\alpha}+t_p^{-\alpha}/h_{t_p})>0$ for some $\alpha\in(0,\infty)$,
\item\label{h-thm1.3}  $\limsup_{p\rightarrow+\infty}(t_{p+1}-t_{p})/(t_{p}-t_{p-1})<\infty$,
\item\label{h-thm1.4}    $
 \big|\zeta(h_{t_{p}}^{-\beta_\star})\big|^{-2k_\star}\sum_{i=1}^p (t_i-t_{i-1})^{-k_\star\ind_{\mathbb{N}}(k_\star)}\rightarrow 0$ for some constant   $\beta_\star\in(0,\infty)$.
\end{enumerate}
Then,  $\tilde{\pi}_{t}\Rightarrow\delta_{\{\theta_\star\}}$ in $\P$-probability.
\end{theorem}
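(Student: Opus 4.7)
The plan is to exploit the two-scale structure explained before the theorem: the Student's $t$ steps at times $(t_p)_{p\geq 0}$ act as ``rejuvenation'' events that guarantee a (slowly decaying) lower bound on the mass around $\theta_\star$, while the Gaussian steps in each block $[t_p+2,t_{p+1}]$ make $\tilde{\Psi}_t$ sufficiently close to the exact Bayes update $\Psi_t$ for classical posterior concentration arguments \textit{à la} \citet{Kleijn2012} to apply.

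First, I would reduce to a compact parameter region. Using Assumption \ref{new2} and a union bound, I would show that $\tilde{\pi}_t(V_{C_t}) \to 0$ in $\P$-probability for $C_t = h_{t_{p(t)}}^{-\beta_\star}$, where $p(t) = \max\{p : t_p \le t\}$. The point is that along the block containing $t$, the likelihood ratio $\prod_{s} (\tilde{f}_\theta/f_{\theta_\star})(Y_s)$ is pushed down uniformly on $V_C$ at geometric rate $\exp\{(t-t_{p(t)-1})\zeta(C)\}$, and condition \ref{h-thm1.4} is exactly what is needed so that the cumulative deviation of the empirical log-likelihood from its mean (controlled by the moment bound in A6) remains dominated by $|\zeta(C_t)|(t_p-t_{p-1})$. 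This reduces the problem to analyzing $\tilde{\pi}_t$ restricted to the compact set $\tilde{A}_\star\cap\Theta$.

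Next, I would work along the subsequence $(\tilde{\pi}_{t_p+1})_{p\geq 0}$. Writing $\tilde{\pi}_{t_p+1}=\tilde{\Psi}_{t_p+1}(\tilde{\pi}_{t_p})$ with the Student's $t$ kernel $\widetilde{M}_{t_p}$, I would establish a ``prior-mass'' lemma: for any $r>0$ and any small ball $B=B_r(\theta_\star)$,
\[
\tilde{\pi}_{t_p+1}(B)\geq c_p\, f_{\tilde\theta}(Y_{t_p+1})/\sup_{\theta\in\Theta}f_\theta(Y_{t_p+1})
\]
(heuristically) for some $\tilde\theta\in B$, with $c_p\to 0$ only polynomially in $h_{t_p}$, because the Student's $t$-density puts polynomial mass $\gtrsim h_{t_p}^{d+\nu}$ on $B$ uniformly in the location. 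Combined with condition \ref{h-thm1.2}, this gives a usable lower bound of the form $\tilde{\pi}_{t_p+1}(B)\geq t_p^{-\gamma}$ with high probability. Then for the Gaussian block from $t_p+2$ to $t_{p+1}$ I would compare $\tilde{\Psi}_{t_{p+1}}\circ\cdots\circ\tilde{\Psi}_{t_p+2}(\tilde{\pi}_{t_p+1})$ with the true Bayesian update $\pi'_{t_{p+1}}=\Psi_{t_{p+1}}\circ\cdots\circ\Psi_{t_p+2}(\tilde{\pi}_{t_p+1})$. Using Assumption \ref{m_star} (Lipschitzness of $\log f_\theta$ near $\theta_\star$), the perturbation introduced by each Gaussian convolution is controlled by $h_s^2$, and condition \ref{h-thm1.1} ($(t_p-t_{p-1})\sum h_s^2 \to 0$ together with $\log h_{t_{p-1}}/(t_p-t_{p-1})\to 0$) is precisely what is needed to make the ratio of normalising constants of $\tilde{\Psi}$ versus $\Psi$ tend to 1 on a shrinking neighbourhood of $\theta_\star$. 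Applying the Kleijn--van der Vaart posterior concentration theorem (which uses Assumptions \ref{taylor}, \ref{test}, and the compactness from Assumption \ref{new}) to $\pi'_{t_{p+1}}$ with ``prior'' $\tilde{\pi}_{t_p+1}$ and ``sample size'' $t_{p+1}-t_p-1$, I would conclude $\tilde{\pi}_{t_{p+1}}(U)\to 1$ in $\P$-probability for every neighbourhood $U$ of $\theta_\star$.

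Finally, to pass from the subsequence $(\tilde{\pi}_{t_p})$ to the full sequence $(\tilde{\pi}_t)$, I would use condition \ref{h-thm1.3} (bounded ratios of block lengths) and a second application of the block-perturbation estimate to show that $\tilde{\pi}_t$ cannot move far from $\tilde{\pi}_{t_{p(t)}}$ on neighbourhoods of $\theta_\star$, and that the remaining block of Gaussian updates only improves concentration. The main obstacle, in my view, is Step~2 above: producing quantitative, uniform-in-$\tilde{\pi}_{t_p}$ comparisons between the perturbed updates $\tilde{\Psi}_t$ and the Bayes updates $\Psi_t$ over a whole block, while simultaneously carrying the unbounded-$\Theta$ tail control. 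The conditions \ref{h-thm1.1}--\ref{h-thm1.4} are precisely calibrated to make these two estimates compatible, and the bookkeeping linking $\beta_\star$, $\alpha$, and $k_\star$ through $\zeta$ will be the most delicate part of the argument.
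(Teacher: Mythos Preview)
Your architecture matches the paper's: tail control via Assumption~\ref{new2}, a Student's~$t$ prior-mass lower bound, Kleijn--van~der~Vaart machinery on Gaussian blocks, and propagation to the full sequence. But two of the mechanisms you propose would not work as stated, and the paper executes them differently.

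First, your Step~1 cannot be separated from the prior-mass bound you postpone to Step~2. Bounding the likelihood ratio on $V_{C_t}$ within the block containing $t$ is not enough to control $\tilde{\pi}_{t_p}(V_{C_t})$: you must also lower-bound the denominator, and that requires $\tilde{\pi}_{t_{p-1}}$ to already have non-negligible mass inside $B_{C_t}(\theta_\star)$. The paper therefore proves the tail bound by an \emph{iterative conditioning} (Lemma~\ref{lem:lower_bound_1}): with $A_{i,p}=\{\tilde{\pi}_{t_i}(V_{2D_p})\geq\epsilon\}$ one writes $\P(A_{p,p})\le\sum_{i>p_0}\P(A_{i,p}\mid A_{i-1,p}^c)+\P(A_{p_0,p})$, and the conditioning event $A_{i-1,p}^c$ is precisely what supplies the Student's~$t$ lower bound $\gtrsim \delta_p^d h_{t_{i-1}}^{\nu}h_{t_{p-1}}^{\beta(\nu+d)}$ for the denominator in block $i$. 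The sum $\sum_{i=1}^p(t_i-t_{i-1})^{-k_\star}$ in condition~\ref{h-thm1.4} arises from summing these conditional probabilities over \emph{all} past blocks, not from a deviation bound within one block.

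Second, in Step~3 you plan to compare $\tilde{\Psi}_t$ with the exact Bayes update $\Psi_t$ using Assumption~\ref{m_star}. But A\ref{m_star} is local to $\theta_\star$ and gives you nothing on $V_\epsilon$, which is where the numerator lives. The paper does not compare the two operators at all; it rewrites $\tilde{\pi}_t$ via a noise-shift representation (Lemma~\ref{lemma:pi}) and, on the event that the cumulative noise $[U_{(s_t+1):t}]<\gamma_t\to 0$, bounds numerator and denominator separately. The denominator near $\theta_\star$ uses A\ref{m_star}--A\ref{taylor} (Lemma~\ref{lemma:denom}), but the numerator on $V_\epsilon$ requires the test functions of A\ref{test} \emph{combined with} the second-moment bound of A\ref{new}.\ref{new_part2} on the whole compact set $\tilde{A}_\star\cap\Theta$ (Lemma~\ref{lemma:new}), yielding $\sup_{\theta\in A'_\star\cap\Theta}\prod_s(\tilde{f}_{\theta+v_s}/f_\theta)(Y_s)\le e^{(t-s_t)\delta_t}$ with $\delta_t\to 0$. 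Condition~\ref{h-thm1.1} is then used not to make normalising-constant ratios tend to $1$, but to make the complementary large-noise probability $\P\big([U_{(t_{p-1}+1):t_p}]\ge\gamma_{t_p}\big)$ decay fast enough (via sub-Gaussian tails) to beat the exponential factors coming from the denominator bound. A final subtlety you underestimate in Step~5: for $t$ just after $t_{p-1}$ the relevant window straddles the Student's~$t$ jump, whose heavy tail breaks the Gaussian deviation estimate; the paper handles this by splitting each block at an intermediate time $v_p$ (Lemmas~\ref{lemma:online2}--\ref{lemma:online3}) and choosing a window length $\asymp\log(h_{t_p}^{-\nu})^{1/2}$ so that $e^{C(t-s_t)}\P(\|U_{t_{p}}\|\ge\gamma)\to 0$ via the polynomial Student tail.
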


\begin{remark}
With a similar reasoning as in Remark \ref{rem:bound}, if $\Theta$ is a bounded set  then for sufficiently large $C$ the set $V_C\cap \Theta$ is empty and we have $|\zeta(C)|=\infty$. Hence, Condition \ref{h-thm1.4} of Theorem \ref{t-thm:online} always holds when the set $\Theta$ is   bounded.
\end{remark}

\subsection{Application of Theorem \ref{t-thm:online}}\label{sub_sec}

The following proposition can be used to explicitly define a sequence $(t_p)_{p\geq 0}$ that verifies the conditions of Theorem \ref{t-thm:online} when $h_t=t^{-\alpha}$ for some $\alpha>0$.

\begin{proposition}\label{t-prop:h_t}

For some constants $C\in[1,\infty)$, $\alpha\in(0,\infty)$,  $\varrho_\alpha\in(0,\alpha\wedge 1)$, $c\in(0,1)$ and $t_0\in\mathbb{N}$  let $h_t=t^{-\alpha}$ for all $t\geq 1$ and let $(t_p)_{p\geq 0}$ be such that, for  all $p\geq 1$,
\begin{align}\label{t-eq:tp_formula}
 t_p= t_{p-1}+\lceil C_{p-1} \log(t_{p-1})\vee C\rceil,\quad    p\geq 1
\end{align}
for some $C_{p-1}\in [c t^{\varrho_\alpha}_{p-1},  t_{p-1}^{\varrho_\alpha}/c]$. Then, the sequences $(h_t)_{t\geq 1}$ and $(t_p)_{p\geq 0}$ verify Conditions \ref{h-thm1.1}-\ref{h-thm1.3} of Theorem \ref{t-thm:online}. Moreover,   these two sequences  also verify  Condition \ref{h-thm1.4} of  Theorem    \ref{t-thm:online} if   Assumption \ref{new2} holds  for a $k_\star>(1+\varrho_\alpha)/\varrho_\alpha$.
\end{proposition}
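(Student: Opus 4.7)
The plan is to verify the four conditions of Theorem \ref{t-thm:online} one by one, with Condition \ref{h-thm1.4} being the only delicate case. The starting point for everything is the asymptotic equivalence
\[
t_p - t_{p-1} \asymp t_{p-1}^{\varrho_\alpha}\log t_{p-1}
\]
for all large $p$, which follows from $C_{p-1}\in[c t_{p-1}^{\varrho_\alpha}, t_{p-1}^{\varrho_\alpha}/c]$ once one observes that the ceiling and the ``$\vee C$'' in \eqref{t-eq:tp_formula} are eventually dominated by $c t_{p-1}^{\varrho_\alpha}\log t_{p-1}\to\infty$. Since $\varrho_\alpha<1$, this in turn yields $t_p-t_{p-1}=o(t_{p-1})$, so that $t_p/t_{p-1}\to 1$ and $\log t_p/\log t_{p-1}\to 1$.

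With this in hand, Conditions \ref{h-thm1.1}--\ref{h-thm1.3} are elementary. For Condition \ref{h-thm1.1}, $|\log h_{t_{p-1}}|/(t_p-t_{p-1}) = \alpha\log t_{p-1}/(t_p-t_{p-1})\leq \alpha/(c t_{p-1}^{\varrho_\alpha})\to 0$, and the monotonicity of $s\mapsto s^{-2\alpha}$ gives $\sum_{s=t_{p-1}+1}^{t_p-1}h_s^2\leq (t_p-t_{p-1})t_{p-1}^{-2\alpha}$, whence $(t_p-t_{p-1})^2 t_{p-1}^{-2\alpha}\lesssim t_{p-1}^{2(\varrho_\alpha-\alpha)}(\log t_{p-1})^2\to 0$ since $\varrho_\alpha<\alpha$. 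Condition \ref{h-thm1.2} is trivial, as $h_{t_p}t_p^\alpha=1$. For Condition \ref{h-thm1.3}, the equivalence yields $(t_{p+1}-t_p)/(t_p-t_{p-1})\leq c^{-2}(t_p/t_{p-1})^{\varrho_\alpha}(\log t_p/\log t_{p-1})(1+o(1))\to c^{-2}$.

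The main obstacle is Condition \ref{h-thm1.4}. I first note that $(1+\varrho_\alpha)/\varrho_\alpha>1$, so $k_\star > (1+\varrho_\alpha)/\varrho_\alpha$ together with $k_\star\in\{1/2\}\cup\mathbb{N}$ forces $k_\star\in\mathbb{N}$ and hence $\mathds{1}_{\mathbb{N}}(k_\star)=1$. Since Assumption \ref{new2} gives $\limsup\zeta(C)/\log C<0$, there is $\delta>0$ with $|\zeta(h_{t_p}^{-\beta_\star})|=|\zeta(t_p^{\alpha\beta_\star})|\geq \delta\alpha\beta_\star\log t_p$ for large $p$ and any fixed $\beta_\star>0$. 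A straightforward induction using $t_p-t_{p-1}\geq c t_{p-1}^{\varrho_\alpha}$, the discrete analogue of the ODE $dt/dp = c t^{\varrho_\alpha}$, then yields $t_p\geq c_1 p^{1/(1-\varrho_\alpha)}$ for some $c_1>0$ and all large $p$. Combined with $(t_i-t_{i-1})^{-k_\star}\leq (ct_{i-1}^{\varrho_\alpha}\log t_{i-1})^{-k_\star}$, this gives $(t_i-t_{i-1})^{-k_\star}\lesssim i^{-k_\star\varrho_\alpha/(1-\varrho_\alpha)}(\log i)^{-k_\star}$. The hypothesis $k_\star>(1+\varrho_\alpha)/\varrho_\alpha$ implies $k_\star\varrho_\alpha/(1-\varrho_\alpha)>(1+\varrho_\alpha)/(1-\varrho_\alpha)>1$, so $\sum_{i\geq 1}(t_i-t_{i-1})^{-k_\star}<\infty$, and multiplying by $|\zeta(h_{t_p}^{-\beta_\star})|^{-2k_\star}\lesssim (\log t_p)^{-2k_\star}\to 0$ yields Condition \ref{h-thm1.4}. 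The only subtle step is the inductive lower bound $t_p\geq c_1 p^{1/(1-\varrho_\alpha)}$, but this reduces to a standard ODE-comparison argument.
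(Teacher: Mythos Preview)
Your proof is correct and, for Conditions \ref{h-thm1.1}--\ref{h-thm1.3}, essentially identical to the paper's argument. For Condition \ref{h-thm1.4} you take a slightly more elaborate route than necessary: you invoke a discrete ODE comparison to obtain the sharp growth rate $t_p\gtrsim p^{1/(1-\varrho_\alpha)}$, whereas the paper simply uses the crude bound $t_i\geq t_0+i$ (which follows immediately from $t_i-t_{i-1}\geq 1$) to conclude that $\sum_{i\geq 1} t_{i-1}^{-k_\star\varrho_\alpha}\leq \sum_{i\geq 1} t_{i-1}^{-(1+\varrho_\alpha)}<\infty$, and then lets $|\zeta(h_{t_p}^{-\beta_\star})|^{-2k_\star}\to 0$ finish the job. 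Your sharper bound on $t_p$ is valid but buys nothing here, since the weaker linear growth already makes the series summable under the hypothesis $k_\star\varrho_\alpha>1+\varrho_\alpha$.
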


The second part of the proposition notably implies that  if Assumption \ref{new2} holds for all $k_\star\in\mathbb{N}$, as is the case in the examples of Propositions \ref{prop:assumption} or when $\Theta$ is a bounded set, see Remark \ref{rem:bound}, then for every $\alpha\in(0,\infty)$ the learning rate $h_t=t^{-\alpha}$ and the sequence $(t_p)_{p\geq 0}$ defined in \eqref{t-eq:tp_formula} verify  Conditions \ref{h-thm1.1}-\ref{h-thm1.4} of Theorem \ref{t-thm:online}. 

The conclusions of Theorem \ref{t-thm:online} and of Proposition  \ref{t-prop:h_t}  are summarized in the following corollary.

\begin{corollary}\label{cor:intro}
Let  $\alpha\in(0,\infty)$,  $h_t=t^{-\alpha}$ for all $t\geq 1$ and $(t_p)_{p\geq 0}$ be as in Proposition \ref{t-prop:h_t} for some $\varrho_\alpha\in(0,\alpha\wedge 1)$. Then, under Assumptions  \ref{m_star}--\ref{new1}, and provided that either $\Theta$ is a bounded set or that Assumption \ref{new2}  holds for some  $k_\star>(1+\varrho_\alpha)/\varrho_\alpha$, we have  $\tilde{\pi}_{t}\Rightarrow\delta_{\{\theta_\star\}}$ in $\P$-probability. 
\end{corollary}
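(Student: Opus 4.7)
The plan is to derive the corollary as a straightforward combination of Theorem \ref{t-thm:online} and Proposition \ref{t-prop:h_t}. Since Assumptions \ref{m_star}--\ref{new1} are directly assumed in the corollary, what remains is to verify Assumption \ref{new2} and Conditions \ref{h-thm1.1}--\ref{h-thm1.4} of Theorem \ref{t-thm:online} in both the bounded and unbounded cases for $\Theta$, and then invoke Theorem \ref{t-thm:online}.

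First I would apply the first part of Proposition \ref{t-prop:h_t}, which asserts, without any restriction on $\alpha$ or on $\{f_\theta,\,\theta\in\Theta\}$, that the sequences $(h_t)_{t\geq 1}$ and $(t_p)_{p\geq 0}$ given in the corollary verify Conditions \ref{h-thm1.1}--\ref{h-thm1.3} of Theorem \ref{t-thm:online}. So these three conditions require no additional work.

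Next I would handle Assumption \ref{new2} and Condition \ref{h-thm1.4} by splitting into the two cases allowed by the corollary. If $\Theta$ is bounded, then by Remark \ref{rem:bound} Assumption \ref{new2} holds for \emph{every} $k_\star\in\{1/2\}\cup\mathbb{N}$; moreover, by the remark immediately following Theorem \ref{t-thm:online}, Condition \ref{h-thm1.4} is automatically satisfied because $V_C\cap\Theta$ is eventually empty in $C$, making $|\zeta(C)|=\infty$ for large enough $C$. If instead $\Theta$ is unbounded, then by the hypothesis of the corollary Assumption \ref{new2} holds for some $k_\star>(1+\varrho_\alpha)/\varrho_\alpha$, and the second part of Proposition \ref{t-prop:h_t} directly gives Condition \ref{h-thm1.4}.

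Having verified all four conditions on $(h_t)_{t\geq 1}$ and $(t_p)_{p\geq 0}$, together with Assumptions \ref{m_star}--\ref{new2} on the statistical model, I would apply Theorem \ref{t-thm:online} to conclude that $\tilde{\pi}_t\Rightarrow\delta_{\{\theta_\star\}}$ in $\P$-probability. There is no genuine obstacle here: all the difficult analysis sits in Theorem \ref{t-thm:online} and Proposition \ref{t-prop:h_t}, and the corollary is essentially a packaging statement that makes the usable hypotheses transparent to the reader.
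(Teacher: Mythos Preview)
Your proposal is correct and matches the paper's approach exactly: the corollary is stated immediately after Proposition \ref{t-prop:h_t} as a summary of the conclusions of Theorem \ref{t-thm:online} and Proposition \ref{t-prop:h_t}, with no separate proof given. Your case split into bounded and unbounded $\Theta$, invoking Remark \ref{rem:bound} and the remark after Theorem \ref{t-thm:online} in the former case and the second part of Proposition \ref{t-prop:h_t} in the latter, is precisely the intended reasoning.
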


\section{Numerical experiments\label{sec:num}}

\subsection{Implementation of G-PFSO and reference measure $\eta(\dd y)$}

Throughout this section we  let $h_t=t^{-\alpha}$ for some $\alpha\in\{0.3,0.5,0.8\}$  and, following the result of Corollary \ref{cor:intro}, we let  $(t_p)_{p\geq 0}$ be defined by
\begin{align}\label{eq:tp}
t_p=t_{p-1}+\lceil A t_{t-1}^{\varrho} \log(t_{p-1})\vee B\rceil,\quad p\geq 1
\end{align}
 with $A=B=1$, $t_0=5$ and $\varrho=0.1$. 

Since our convergence result for $\tilde{\pi}_t$ imposes a strong constraint on how often Student's $t$-distributions can be present in the sequence $(\widetilde{M}_t)_{t\geq 1}$  it seems  judicious to assess  the ability of G-PFSO to  reach a small neighbourhood of $\theta_\star$ without relying on these  fat tail distributions. For this reason, unless otherwise mentioned,  we let $\nu=50$  so that  each $t_{d,\nu}(\theta',h_t^2\Sigma)$ distribution appearing in \eqref{eq:mu_def} is very close to the $\mathcal{N}_d(\theta',h_t^2\Sigma)$ distribution in the sense of the Kullback-Leibler divergence  \citep{villa2018objective}. However, as illustrated in Section \ref{sub:Multimodal}, increasing the tails and the frequency of the Student's $t$-distributions can   improve the performance of G-PFSO.

All the results presented in this section are obtained with  $\Sigma=\csigma I_d$  for some  constant $\csigma\in(0,\infty)$  and by letting, in Algorithm \ref{algo:pi_tilde_online}, $\mathcal{R}(\cdot)$ be the SSP resampling algorithm   \citep{gerber2019negative} and, somewhat arbitrarily,   $\cess=0.7$.  Finally, the examples considered below are all such that  $\eta(\dd y)$ is the Lebesgue measure on $\R^k$ for some $k\in\mathbb{N}$.

\subsection{A censored quantile regression model\label{sub:CQR_5}}

The main objective of this  example is to study the convergence rate, as $t\rightarrow\infty$, of the G-PFSO estimator $\bar{\theta}_t^N$ in a non-trivial statistical model. To this aim  we consider a censored quantile regression  model with only $d=5$ parameters  that  will be learnt by processing sequentially a set of  $T=10^7$ i.i.d.\ observations. The model and the simulation set-up are precisely described in Appendix \ref{sec:additional_info}, and  we let $\theta_\star^{(\tau)}$ be the target parameter value when the censored quantile regression model is used to estimate the conditional quantile of order $\tau\in(0,1)$ of the response variable. Below, results are presented for $\tau=0.5$ and for $\tau=0.99$. 

In Figure \ref{fig:cqr_1} we summarize the estimation error obtained when  $\theta_\star^{(\tau)}$  is estimated using Adagrad, an adaptive stochastic gradient algorithm introduced by \citet{duchi2011adaptive},   randomly initialized far  from the target parameter value. The results presented in this figure suggest that  for $\tau\in\{0.5,0.99\}$   the corresponding objective function  is uni-modal, at least in a large neighbourhood of $\theta_\star^{(\tau)}$. Therefore, we can study the convergence behaviour of $\bar{\theta}_t^N$ without the concern of being  trapped in a local optimum.

\begin{figure}
   \centering
   \begin{subfigure}[b]{0.31\textwidth}
        \centering
     \includegraphics[scale=0.23]{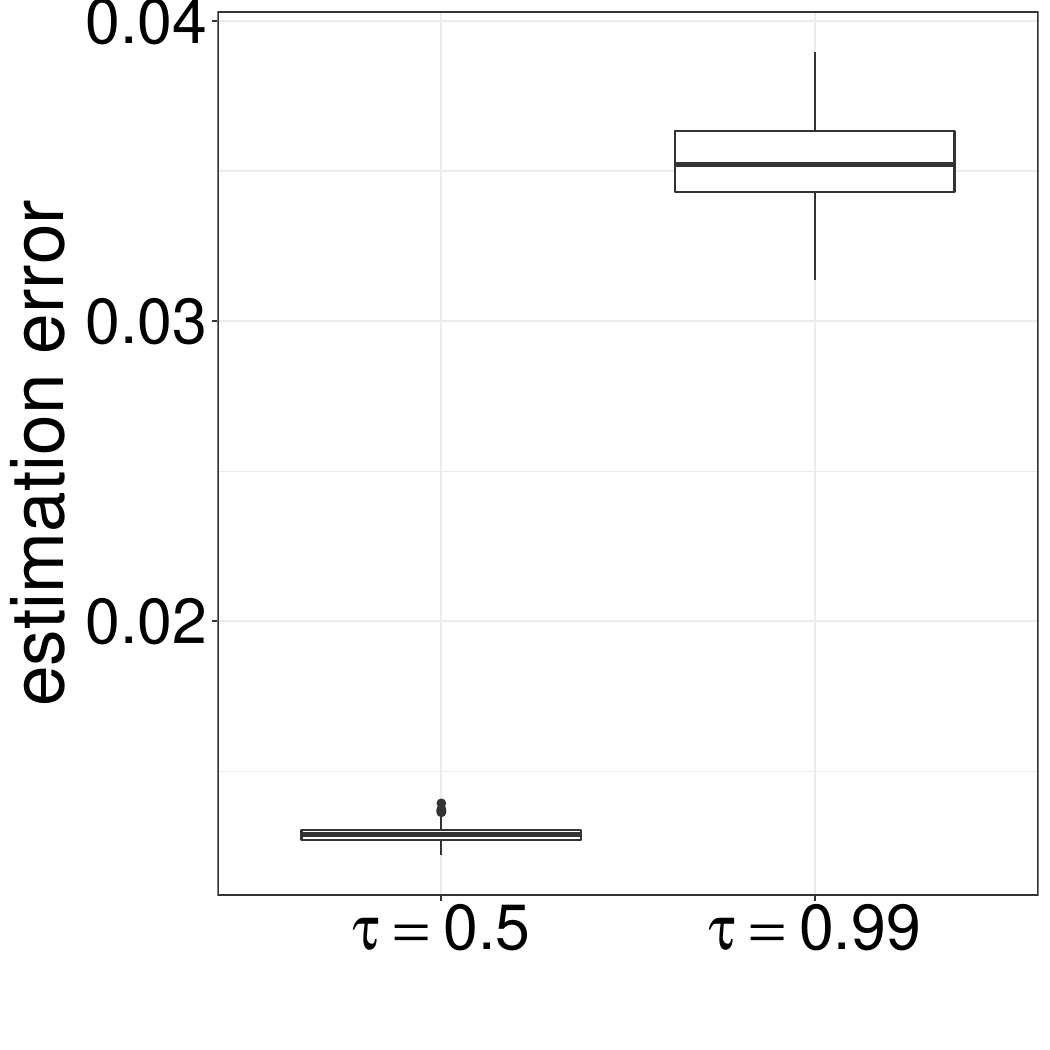}   
     \caption{\label{fig:cqr_1}}
     \end{subfigure}
     ~
     \begin{subfigure}[b]{0.31\textwidth}
        \centering
     \includegraphics[scale=0.23]{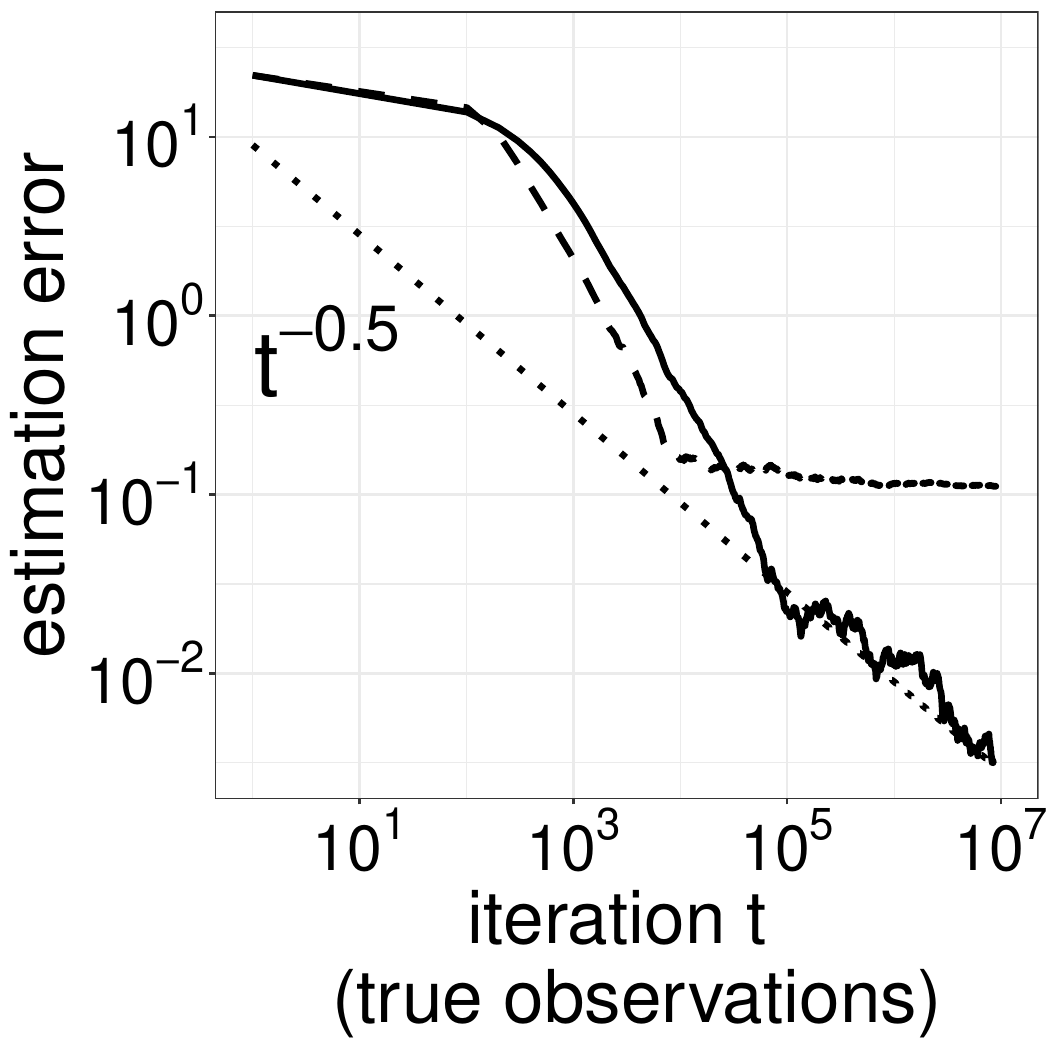} 
     \caption{\label{fig:cqr_2}}   
     \end{subfigure}
     \begin{subfigure}[b]{0.31\textwidth}
        \centering
     \includegraphics[scale=0.23]{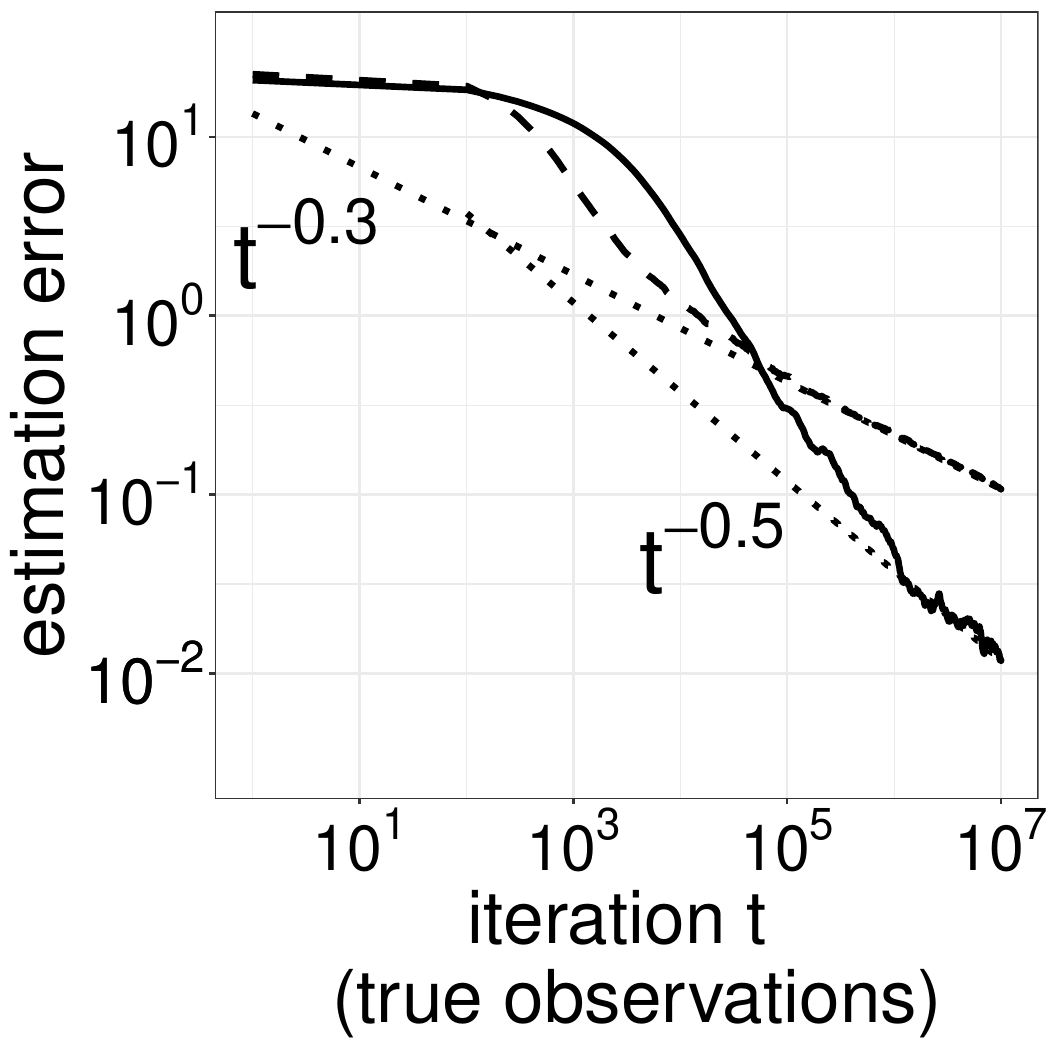}   
     \caption{\label{fig:cqr_3}}
     \end{subfigure}
  \caption{Example of Section \ref{sub:CQR_5}. Plot (\subref{fig:cqr_1}) summarizes the values of $\|\theta_{\mathrm{ada},T}-\theta_\star^{(\tau)}\|_\infty$ obtained with Adagrad  for $10^3$ starting values randomly sampled from the $\mathcal{N}_d(\theta_\star^{(0.5)}+10, 2I_d)$  distribution. In Plot  (\subref{fig:cqr_2})  the solid   line is for  $\bar{\theta}_t^N$   and  the dashed line for $\bar{\theta}^N_{\mathrm{Jit},t}$. In Plot (\subref{fig:cqr_3}) the solid line is for $\alpha=0.5$ and the dashed line for $\alpha=0.3$. In Plots (\subref{fig:cqr_2})-(\subref{fig:cqr_3}) the dotted lines show the function $f(t)=c t^{-\gamma}$ for some $c>0$ and with $\gamma$ as given in the plots.\label{fig:CQR_5}}
\end{figure}

For this example  G-PFSO is implemented with  $N=1\,000$,  $\tilde{\pi}_0(\dd\theta)=\mathcal{N}_d(\theta^{(0.5)}_{\star}+10,2 I_d)$  and $\csigma=1$.  In Figures \ref{fig:cqr_2}-\ref{fig:cqr_3} we report the evolution as $t$ increases of the average value of
 $\|\bar{\theta}_t^N-\theta_\star^{(\tau)}\|$ obtained in 20  runs of Algorithm \ref{algo:pi_tilde_online}, where $(\tau,\alpha)=(0.5,0.5)$ in Figure \ref{fig:cqr_2} and where $(\tau,\alpha)\in\{(0.99,0.3), (0.99,0.5)\}$ in Figure \ref{fig:cqr_3}. The results in these two figures suggest that  for $h_t=t^{-1/2}$   the estimator $\bar{\theta}_t^N$ converges at the optimal $t^{-1/2}$ rate to the target parameter value both when $\tau=0.5$ and when $\tau=0.99$. However, for the very slow learning rate $h_t=t^{-0.3}$,  and for $\tau=0.99$, the G-PFSO estimator appears to converge  towards $\theta_\star^{(\tau)}$ at the slow $t^{-0.3}$ rate. Consequently, these results  indicate that if $h_t\rightarrow 0$ too slowly then the convergence rate of  $\bar{\theta}_t^N$ may be sub-optimal.   We finally  use   the jittering estimate $\pi_{\mathrm{Jit},t}^N$ of $\pi_t$ to estimate $\theta_\star^{(\tau)}$ when $\tau=0.5$, using the jittering kernel $
\mathsf{M}_N(\theta',\dd\theta)=(1-N^{-1/2})\delta_{\{\theta'\}}+N^{-1/2}\mathcal{N}_d(\theta',   I_d)$.  Since this kernel   is homogenous  it is clear that for a fixed number  of particles, $N$, the estimator $\theta^N_{\mathrm{Jit},t}=\int_{\Theta}\theta \pi_{\mathrm{Jitt},t}^N(\dd\theta)$ cannot converge to $\theta_\star^{(\tau)}$ as $t\rightarrow\infty$. To study if  Polyak–Ruppert averaging  can resolve this issue we let  $\bar{\theta}^N_{\mathrm{Jit},t}=t^{-1}\sum_{s=1}^t \theta^N_{\mathrm{Jit},s}$ and report in Figure \ref{fig:cqr_2} the evolution as $t$ increases of $\|\bar{\theta}^N_{\mathrm{Jit},t}-\theta^{(\tau)}_\star\|$, averaged over 20   runs of the algorithm. We remark    that after $T_N\approx 10^5$ observations the  average  estimation error $\|\bar{\theta}^N_{\mathrm{Jit},t}-\theta^{(\tau)}_\star\|$ stabilizes around 0.10. As discussed above,  for this example the objective function  has apparently no local maxima,   and therefore these simulation  results suggest that Polyak–Ruppert averaging does not prevent   the inference based on  $\pi_{\mathrm{Jit},t}^N$   to stop improving after some finite time.

\subsection{A toy  multi-modal example\label{sub:Multimodal}}

In this second example we consider a sequence $\big(Y_t=(Z_t,X_t)\big)_{t\geq 1}$  of i.i.d.\ random variables   taking value  in $\R\times \R^d$, with $d=20$. Then, inspired by  an example in \cite{Hunter2000}, we let $\Theta\subset\R^d$ and $\mu(\theta,x)=\sum_{i=1}^d\big(e^{-x_{i}\theta^2_i}+x_{i}\theta_{d-i+1}\big)$ for all $(\theta,x)\in\Theta\times \R^d$, and our goal is to estimate $\theta_\star=\argmin_{\theta\in\Theta}\E[|Z_1-\mu(\theta, X_1)|]$. To cast this estimation problem into the set-up of this paper, for every $(\theta,x)\in\Theta\times\R^d$ we let $f_\theta(\cdot\mid x)$ be the density  of the Laplace distribution  with  scale parameter $b>0$ and location parameter $\mu(\theta,x)$, so that   $\theta_\star= \argmax_{\theta\in\Theta}\E[\log f_\theta(Z_1\mid X_1)]$. We let $b=0.5$ and simulate $T=10^6$ i.i.d.\ observations    using $Z_1|X_1\sim\mathcal{N}_1\big(\mu(\theta_\star,X_1),4)$,  $X_1\sim \mathrm{Unif}([-1,1]^d)$ and $\theta_\star=(-1,\dots,-1)$. As in the previous subsection the observations are processed sequentially while, to  avoid numerical problems, we let  $\Theta$ be the open ball of size 20 around $\theta_\star$ w.r.t.\ the maximum norm. For this example we let $N=2\,000$,  $\tilde{\pi}_0(\dd\theta)=\mathrm{Unif}(\Theta)$ and $\alpha=0.5$.

We first use  the KS-PFSO estimator $\theta^N_{\mathrm{K},t}=\int_{\Theta}\theta\pi^N_{\mathrm{K},t}(\dd\theta)$ to estimate $\theta_\star$ where, using  equation  (10) in \citet{balakrishnan},  we let $\iota\approx 0.68$ in \eqref{eq:kpfso}. In Figure \ref{fig:multi_1} we summarize  the values of  $\|\theta^N_{\mathrm{K},T'}-\theta_\star\|_\infty$ obtained in 100  runs of the algorithm, with $T'=10^5$.  We observe  that the estimation error obtained with KS-PFSO is always larger than $1.66$, suggesting the existence of  some local optima to which this algorithm converges. The existence of local optima can also be observed in Figure \ref{fig:multi_2}  which shows the  evolution   as $t$ increases   of the 14-th component of $\tilde{\theta}_t^N$   obtained in a single run of G-PFSO with $\csigma=10$. Notice that the results presented in Figure \ref{fig:multi_1}  for KS-PFSO illustrate the fact that this algorithm is a local optimization method, for reasons explained in Section \ref{sub:lit_review}.

\begin{figure}
\centering
 \centering
   \begin{subfigure}[b]{0.31\textwidth}
        \centering
     \includegraphics[scale=0.23]{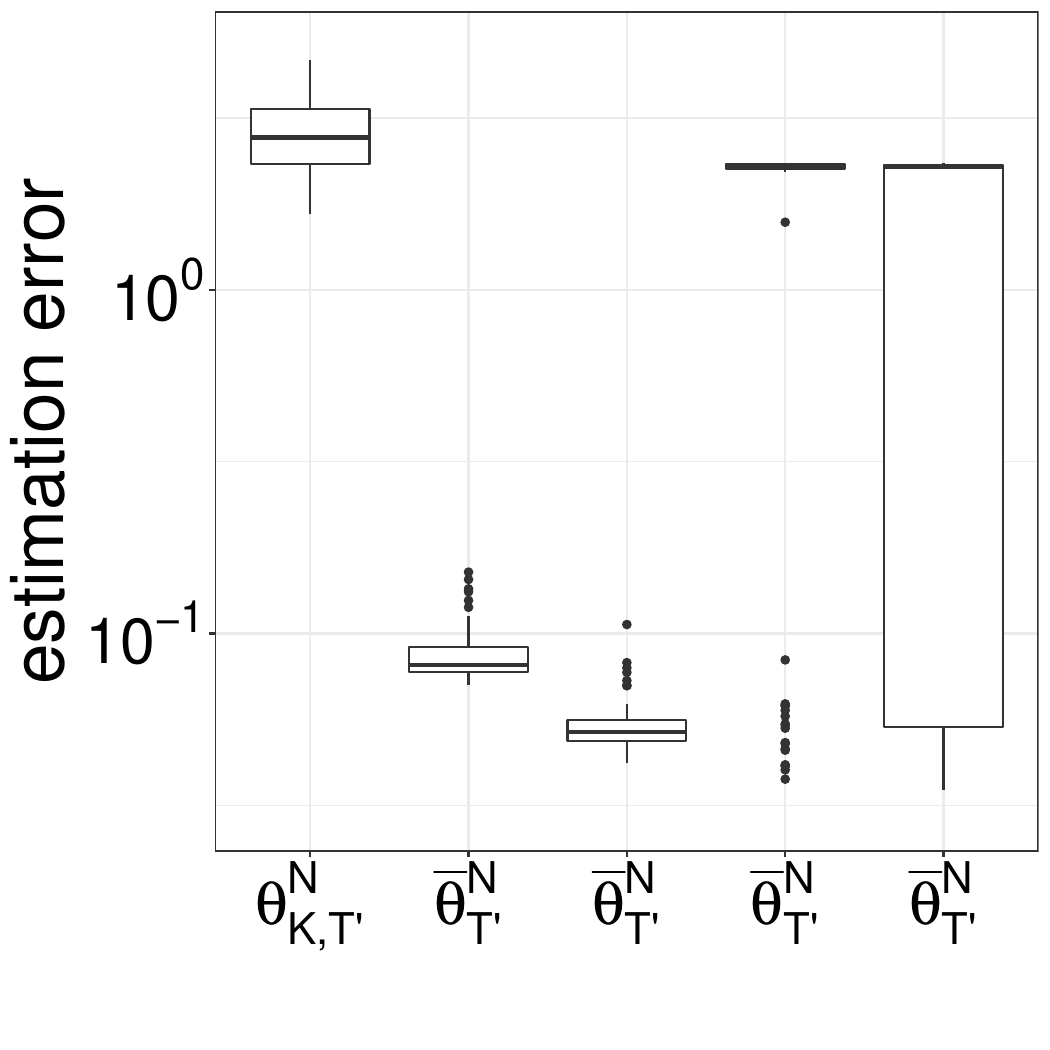} 
     \caption{\label{fig:multi_1}}
     \end{subfigure}
     ~
      \centering
   \begin{subfigure}[b]{0.31\textwidth}
        \centering
     \includegraphics[scale=0.23]{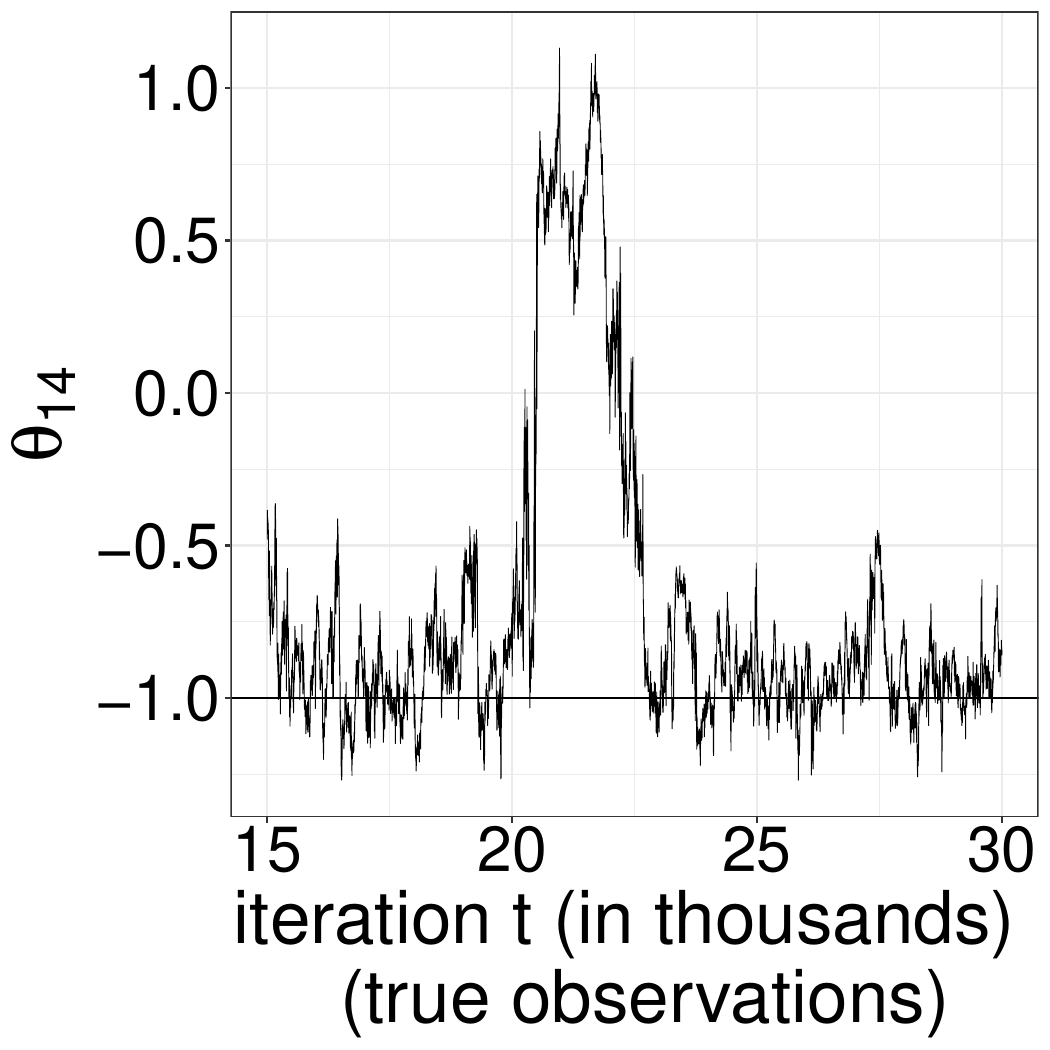}  
     \caption{\label{fig:multi_2}}
     \end{subfigure}
     ~
      \centering
   \begin{subfigure}[b]{0.31\textwidth}
        \centering
      \includegraphics[scale=0.23]{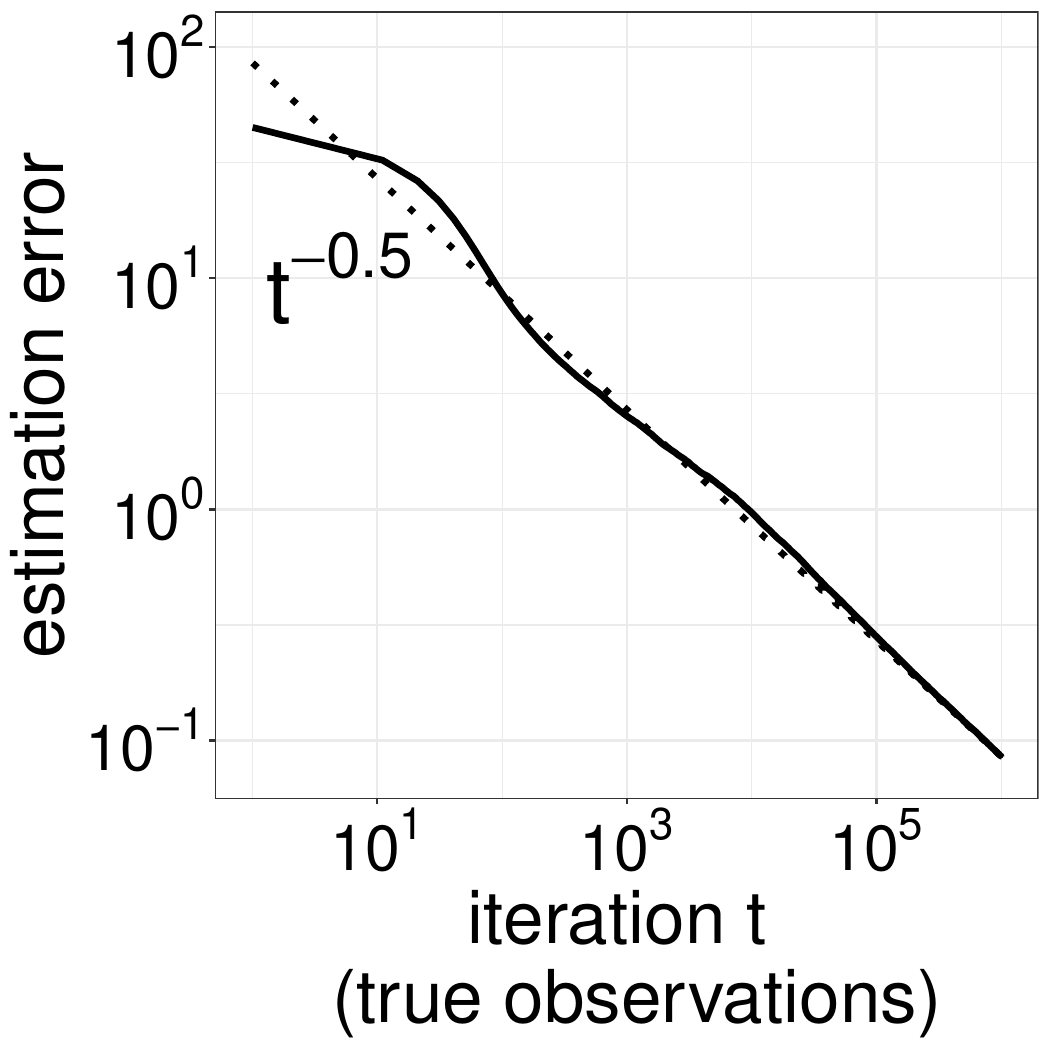}  
     \caption{\label{fig:multi_3}}
     \end{subfigure}
\caption{Example of Section \ref{sub:Multimodal}. In Plot (\subref{fig:multi_1}), for the estimator   $\bar{\theta}_{T'}^N$  the first boxplot is for $(\nu,\csigma)=(50,10)$, the second boxplot  is for $(\nu,\csigma)=(50,3)$, the third boxplot for $(\nu,\csigma)=(50,1)$ and the last  boxplot is  for $(\nu,\csigma)=(1.5,1)$.
In Plot (\subref{fig:multi_2}) the horizontal line represents the true parameter value and in Plot (\subref{fig:multi_3}) the dotted line is as in Figure \ref{fig:CQR_5}.\label{figs:multi}}
\end{figure}

The second boxplot in Figure \ref{fig:multi_1} summarizes the values of $\|\bar{\theta}_{T'}^N-\theta_\star\|_\infty$ obtained in 100  runs of  G-PFSO with $\csigma=10$ and where, as for KS-PFSO, $T'=10^5$.  For each run of the algorithm   the estimation error is smaller than $0.16$, showing that $\bar{\theta}_{T'}^N$  successfully finds the global optimum of the  function $\theta\mapsto\E[\log f_\theta(Z_1\mid X_1)]$ with very high probability. To assess the sensitivity of $\bar{\theta}_{T'}^N$ to the parameter $\csigma$   in the 3rd and 4th  boxplot  of Figure \ref{fig:multi_1} we repeat the  experiment with $\csigma=3$ and with $\csigma=1$. We observe that decreasing $\csigma$ from 10 to 3 improves the performance of the estimator $\bar{\theta}_{T'}^N$, for the following reason. On the one hand, decreasing the value of $\csigma$ reduces for all $t\geq 1$  the variance of the distributions $\{\widetilde{M}_t(\hat{\theta}^n_{t-1},\dd\theta)\}_{n=1}^N$ used  in Algorithm \ref{algo:pi_tilde_online} to generate $\{\theta_t^n\}_{n=1}^N$, which    enables $\tilde{\pi}_t^N$ to be   more concentrated around $\theta_\star$ and consequently  to reduce the estimation error. On the other hand, in this example  for $\csigma=3$ the variance of these Markov kernels is large enough to ensure that a small neighbourhood of $\theta_\star$ is quickly reached by G-PFSO. We however remark in Figure \ref{fig:multi_1} that this is no longer the case when $\csigma=1$   since,  for this value of $\csigma$, the estimation error of  $\bar{\theta}_{T'}^N$ is frequently large and similar to that obtained with  KS-PFSO. For a given choice of $\csigma$ the exploration of $\Theta$ can be improved by reducing $\nu$, the number of degrees of freedom of the Student's $t$-distributions used in \eqref{eq:mu_def}. This point is illustrated with the last boxplot in Figure \ref{fig:multi_1}, which shows that for  $\csigma=1$ decreasing $\nu$ from 50 to 1.5 increases the probability of $\bar{\theta}_{T'}^N$ having a small  estimation error. Notably, in this figure reducing $\nu$ from 50 to 1.5 doubles, from   18 to 36, the number of runs of G-PFSO for which  $\|\bar{\theta}_{T'}^N-\theta_\star\|_\infty<0.2$.

Finally, in Figure \ref{fig:multi_3} we show  the evolution of $\|\bar{\theta}_{t}^N-\theta_\star\|$ while processing the available $T=10^6$  data points, averaged over 10   runs of Algorithm \ref{algo:pi_tilde_online} with $\csigma=10$.  The results reported in this plot suggest that, as in the previous example, for $h_t=t^{-1/2}$  the estimator  $\bar{\theta}^N_t$  converges to $\theta_\star$ at the optimal $t^{-1/2}$ rate.

\subsection{A smooth adaptive Gaussian mixture model \label{sub:mixture}}

Let  $\big(Y_t=(Z_t,X_t)\big)_{t\geq 1}$ be a sequence of random variables taking values in $\R\times \R^{d_x}$  for some $d_x\geq 1$. Then, the   smooth adaptive Gaussian mixture  model   \citep{villani2009regression} with $K\geq 2$ components  assumes that, for every $t\geq 1$ and   with $d=d_x(3K-1)$, the conditional distribution of $Z_t$ given  $X_t=x$ belongs to the set $\{f_{\theta}(\cdot\mid x),\,\theta\in\Theta\subseteq \R^d\}$  where
\begin{equation}\label{eq:mixture}
\begin{split}
f_{\theta}(z\mid x)& =\sum_{k=1}^{K }w_k(x,\beta^\mathrm{w})\varphi_1\Big\{z;  x^\top\beta^\mu_{(k)},\exp(-x^\top\beta^{\sigma}_{(k)})\Big\},\quad  (\theta,z)\in\Theta\times\R
\end{split}
\end{equation}
with $\varphi_1(\cdot;\mu,\sigma)$ the probability density function of the $\mathcal{N}_1(\mu,\sigma^2)$ distribution, $w_K(x,\beta^\mathrm{w})=1-\sum_{k=1}^{K-1}w_k(x,\beta^\mathrm{w})$ and 
\begin{align}\label{eq:mixture2}
w_k(x,\beta^\mathrm{w})=\frac{\exp(-x^\top\beta_{(k)}^\mathrm{w})}{1+\sum_{k'=1}^{K-1}\exp(-x^\top\beta_{(k')}^\mathrm{w})},\quad k=1,\dots,K-1.
\end{align}
In \eqref{eq:mixture}-\eqref{eq:mixture2} we have  $\beta^{i}_{(k)}\in\R^{d_x}$ for all $k$ and all $i\in\{\mu,\sigma, \mathrm{w}\}$, while $\beta^\mathrm{w}=(\beta^\mathrm{w}_{(1)},\dots,\beta^\mathrm{w}_{(K-1)})$ and, letting $\beta^i=(\beta^{i}_{(1)},\dots,\beta^{i}_{(K)})$ for every $i\in\{\mu,\sigma\}$,   $\theta=(\beta^\mathrm{w},\beta^\mu,\beta^\sigma)$.

For this example we let  $K=2$ and $d_x=4$, resulting in a model with $d=20$ parameters that  will be learnt by processing sequentially a set of  $T=2\times 10^6$ i.i.d.\ observations. The simulation set-up is described in Appendix \ref{sec:additional_info}. Without loss of generality we let $\Theta=\{ \theta \in\R^{d}:\, \theta_1\geq 0\}$ and, since our chosen value for $\theta_\star$ is such that $\theta_{\star,1}\neq 0$, it follows that  $\theta_\star$ is the unique global maximizer of the  function $\Theta\ni\theta\mapsto \E[\log f_{\theta}(Z_1\mid X_1)]$. Finally, we  let $N=5\,000$, $\csigma=1$, $\alpha=0.5$ and  $\tilde{\pi}_0(\dd\theta)=\mathrm{Exp}(1)\otimes\mathcal{N}_{d-1}(0,I_{d-1})$.

\begin{figure}
\centering
\begin{subfigure}[b]{0.31\textwidth}
        \centering
       \includegraphics[scale=0.23]{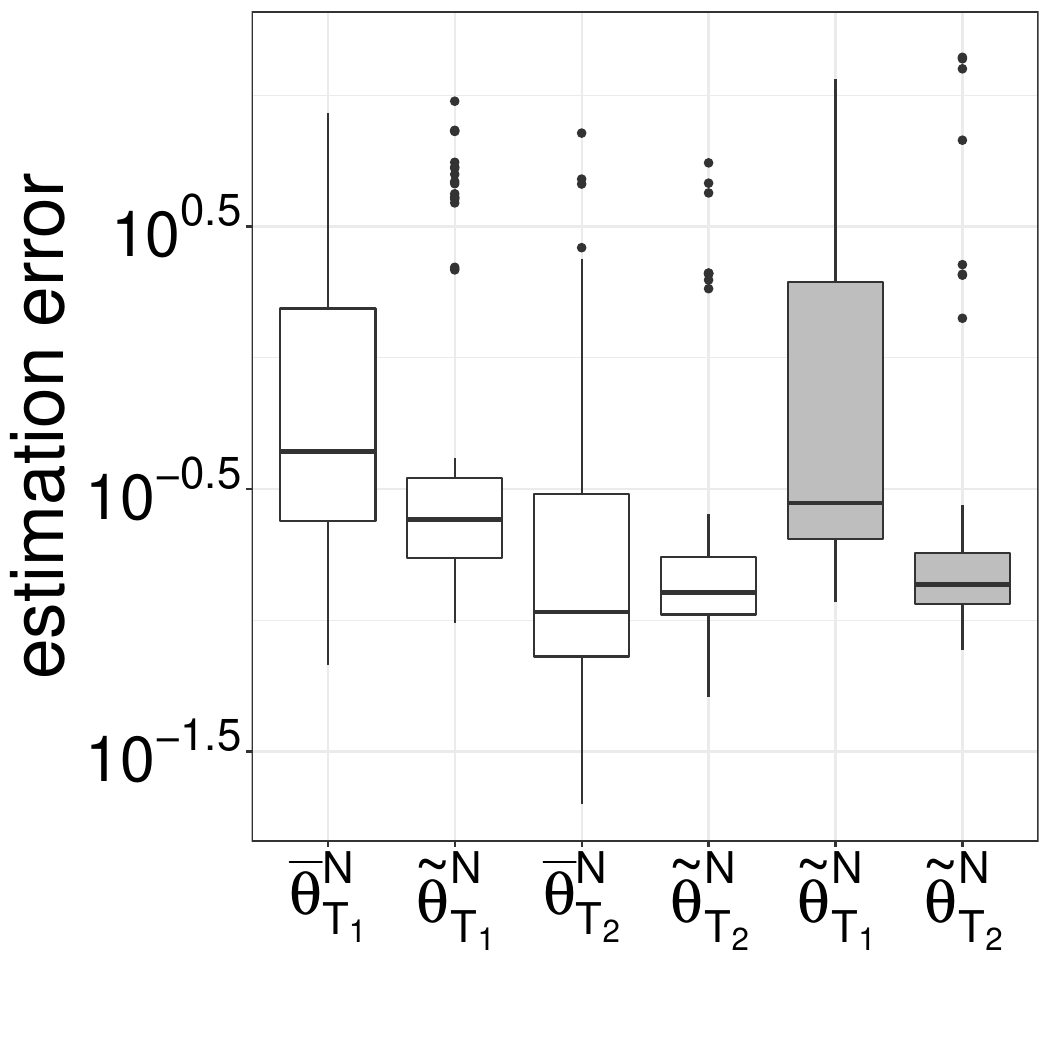}
       \caption{\label{fig:mix_1}}
    \end{subfigure}
     ~
     \begin{subfigure}[b]{0.31\textwidth}
        \centering
    \includegraphics[scale=0.23]{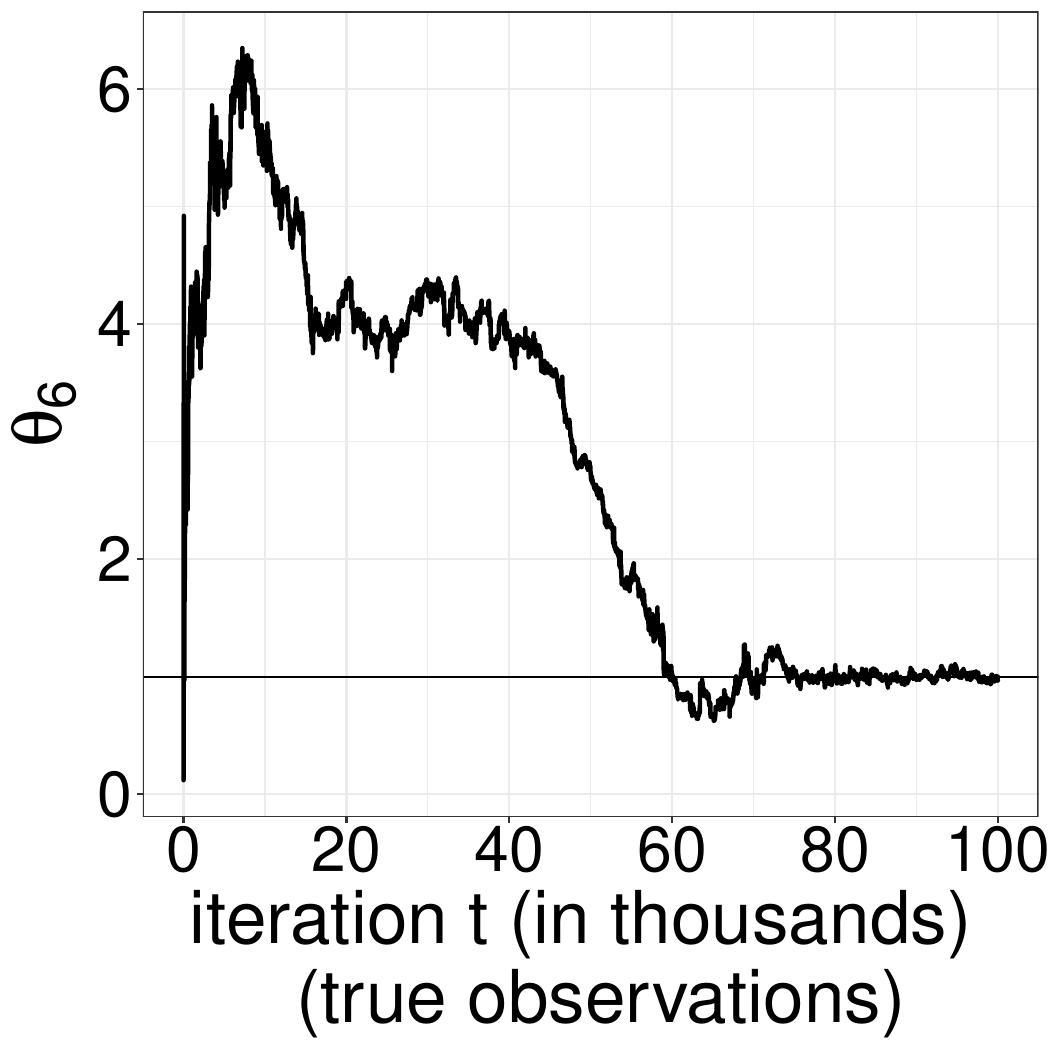}
     \caption{\label{fig:mix_2}}
     \end{subfigure}
     ~
     \begin{subfigure}[b]{0.31\textwidth}
        \centering
    \includegraphics[scale=0.23]{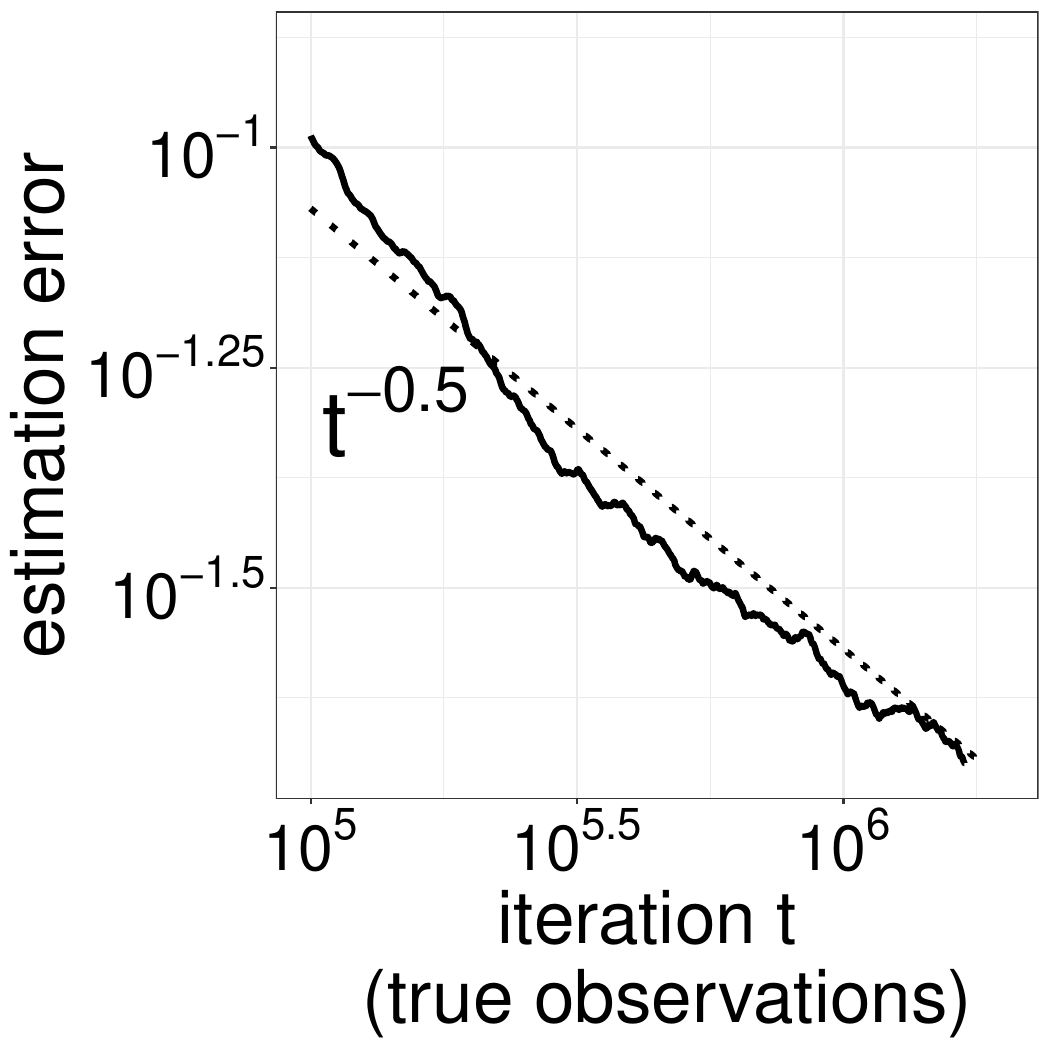}
      \caption{\label{fig:mix_3}}
    \end{subfigure}
\caption{Example of Section \ref{sub:mixture}. In Plot (\subref{fig:mix_1}), $T_1=20\,000$ and $T_2=10^5$,  the white boxplots are for $\nu=50$ and the grey boxplots for $\nu=2$.  In Plot (\subref{fig:mix_2}) the horizontal line represents the true parameter value and in  Plot (\subref{fig:mix_3})  the dotted line  is as in Figures \ref{fig:CQR_5}-\ref{figs:multi}. \label{fig:mixture_boxplot_K2}}
\end{figure}

We  report in Figure \ref{fig:mix_1} a summary of the values obtained for $\|\tilde{\theta}_{T'}^N-\theta_\star\|_\infty$ and for  $\|\bar{\theta}_{T'}^N-\theta_\star\|_\infty$ in  100   runs of G-PFSO, with $T'\in\{20\,000,10^5\}$.  Although the initial distribution $\tilde{\pi}_0$ has most of its mass on a ball of size 1 around $\theta_\star$, we remark that  the estimation error can in some cases be large for the two G-PFSO estimators.  We indeed observe in our experiments that,  for this   example, the value of $\|\tilde{\theta}_{t}^N-\theta_\star\|_\infty$ often  increases  sharply in the first few iterations of Algorithm \ref{algo:pi_tilde_online}  and may remain large for a very long time period. This phenomenon is illustrated in Figure \ref{fig:mix_2} where, for one of the 100 runs of G-PFSO, we  present  the evolution as $t$ increases of     $\tilde{\theta}_{t,6}^N$, the   6-th component of $\tilde{\theta}_t^N$. A close look at the results presented in Figure \ref{fig:mix_1} reveals that  the estimation error  $\|\tilde{\theta}_{10^5}^N-\theta_\star\|_\infty$ is smaller than 0.26 for 91 runs of Algorithm \ref{algo:pi_tilde_online} and larger than 1.8 for the 9 remaining runs, suggesting the existence of some local optima in which G-PFSO remains trapped after $10^5$ observations. The existence of some local optima seems confirmed by the evolution of $\tilde{\theta}_{t,6}^N$ reported in Figure \ref{fig:mix_2}, a figure which  also illustrates the ability of G-PFSO to escape from a local optimum   after a large number of iterations.   This ability of G-PFSO to escape  from a local optimum explains why  in the  18 runs of Algorithm \ref{algo:pi_tilde_online} for which we have  $\|\tilde{\theta}_{T'}^N-\theta_\star\|_\infty>2$ after having processed $T'=20\,000$ observations--the estimation error being smaller than 0.42 in the 82 other runs of G-PFSO--increasing the number of data points to $T'=10^5$ allows to have $\|\tilde{\theta}_{T'}^N-\theta_\star\|_\infty<0.23$ for 9 of them.
Next, and importantly, the results in Figure \ref{fig:mix_1}  show    that even for a large sample size $t $ the estimator $\tilde{\theta}_{t}^N$ can  outperform  $\bar{\theta}_{t}^N$ when many iterations are needed by G-PFSO to reach a small neighbourhood of $\theta_\star$, as it is often the case in this example. Indeed,  recalling that  $\bar{\theta}_t^N=t^{-1}\{(t-1)\bar{\theta}_{t-1}^N+ \tilde{\theta}_t^N\}$, it follows that if  $\|\tilde{\theta}_{t}^N-\theta_\star\|_\infty$  starts decreasing only after some time $\underline{t}\gg 1$ then, because  $\tilde{\theta}_t^N$ is multiplied by a factor $1/t$ in the definition of $\bar{\theta}_t^N$,  the number of iterations needed for $\|\bar{\theta}_{t}^N-\theta_\star\|_\infty$ to reach a  value close to zero is much larger than  for $\|\tilde{\theta}_{t}^N-\theta_\star\|_\infty$. In practice  this problem can be avoided by defining $\bar{\theta}_t^N$ as $\bar{\theta}_t^N=(t-\underline{t})^{-1}\sum_{s=\underline{t}+1}^t \tilde{\theta}_t^N$ for $t>\underline{t}$, where $\underline{t}\in\mathbb{N}$ is the time needed for the sequence $(\tilde{\theta}_t^N)_{t\geq 1}$ to stabilize around a given point in the parameter space.  In the last two boxplots of  Figure \ref{fig:mix_1} we repeat the same experiment with $\nu=2$. Unlike what we observed in Section \ref{sub:Multimodal}, for this example it is not desirable to use, in \eqref{eq:mu_def}, Student's $t$-distributions  with an infinite variance.   In particular, the results in Figure \ref{fig:mix_1} suggest that reducing $\nu$ from 50 to 2  decreases the probability of reaching a small neighbourhood of $\theta_\star$ with   $T'=20\,000$  iterations and increases the probability of having  a  large estimation error with $T'\in\{20\,000,10^5\}$ iterations.  By improving the ability of the particle system to move across the parameter space, reducing $\nu$  enables G-PFSO to find  the global mode of the objective function earlier but also increases the probability that  the estimator $\tilde{\theta}_t^N$ is pushed far away  from $\theta_\star$ at time $t\in(t_p)_{p\geq 0}$. Consequently, depending on the estimation problem at hand,   decreasing $\nu$  may  improve--as  in Section \ref{sub:Multimodal}-- or deteriorate--as in this example--the performance of G-PFSO.

Finally,  we study the convergence behaviour of $ \|\bar{\theta}_t^N-\theta_\star\|$ as $t$ increases.  The results in Figure \ref{fig:mix_3}, which show the time    evolution    of $\|\bar{\theta}_t^N-\theta_\star\|$  averaged over 5   runs of Algorithm \ref{algo:pi_tilde_online}, suggest again that  $\bar{\theta}_t^N$ converges to $\theta_\star$ at the optimal $t^{-1/2}$ rate when $h_t=t^{-1/2}$.

\subsection{The bivariate  g-and-k distribution\label{sub:gandk}}
 
In this last example we use a real dataset to illustrate the use of G-PFSO for approximating the maximum likelihood estimator.  To this aim  we consider the problem of parameter inference  in the bivariate extension of the  g-and-k distribution  proposed by \cite{drovandi2011likelihood}. For  $\theta=(a_1, a_2,b_1, b_2,g_1, g_2,k_1, k_2,\rho)\in\Theta\subset \R^9$  we let $f_\theta$ be the probability density function of the  bivariate g-and-k distribution  with location parameters $(a_1,a_2)$, scale parameters  $(b_1,b_2)$, skewness parameters  $(g_1,g_2)$, kurtosis parameters $(k_1,k_2)$ and correlation parameter $\rho$. Following the results in \citet{prangle2017gk}, to ensure that $f_\theta$ is a well-defined probability density function  we let $\Theta=\R^2\times(0,\infty)^2\times \Theta_{\mathrm{gk}}\times(-1,1)$ where $$
\Theta_{\mathrm{gk}}=\big\{(g_1,g_2,k_1,k_2)\in \R^4:\, |g_i|<5.5,\, k_i>-0.045-0.01 g_i^2,\,\,i=1,2\big\}.
$$

As in \citet{drovandi2011likelihood}, we let  $\{\tilde{y}_{i,1}\}_{i=1}^n$ and $\{\tilde{y}_{i,2}\}_{i=1}^n$ be respectively the exchange rate daily log returns from GBP to AUD and  from GBP to EURO, multiplied by 100. We consider data from 4 January 2000 to 1 January 2021 inclusive which, after having removed the dates for which only one of the two exchange rates is available, results in a sample of size  $n=7\,254$.   Letting  $\tilde{y}_i=(\tilde{y}_{i,1},\tilde{y}_{i,2})$ for all $i$, the corresponding log-likelihood function is defined by  $l_n(\theta)=\sum_{i=1}^n\log f_\theta(\tilde{y}_i)$ and, following the discussion in Section \ref{sub:set-up}, below we compute the maximum likelihood estimator  $\hat{\theta}_{\mathrm{mle},n}=\argmax_{\theta\in\Theta}l_n(\theta)$ 
by running Algorithm \ref{algo:pi_tilde_online} on a set of $T$ pseudo-observations $\{y_t\}_{t=1}^{T}$  sampled i.i.d.\ from   the empirical distribution of the observations $\{\tilde{y}_i\}_{i=1}^n$. For each run of Algorithm \ref{algo:pi_tilde_online}   a new set of pseudo-observations $\{y_t\}_{t=1}^{T}$ is sampled and, for this example, we let $N=500$ and $\tilde{\pi}_0\in\mathcal{P}(\Theta)$ be such that if $\theta\sim \tilde{\pi}_0(\dd\theta_0)$ then, for $i=1,2$, $a_i\sim \mathcal{N}_1(0,1)$, $b_i\sim \mathrm{Exp}(1)$, $g_i\sim \mathrm{Unif}(-5.5,5.5)$ and  $k_i+0.045+0.01 g_i^2\sim  \mathrm{Exp}(1)$, where all the random variables are  independent of each other, with the exception of $k_1$ that depends on $g_1$ and of $k_2$ that depends on $g_2$.

We  start by considering  100   runs of Algorithm \ref{algo:pi_tilde_online} with   $T=50\,000$, $\csigma=10$ and with a fast learning rate $h_t=t^{-0.8}$, so that   $\alpha=0.8$. In a first step  we optimize $l_n(\theta)$ using a quasi-Newton algorithm  initialized at the  G-PFSO estimate of $\hat{\theta}_{\mathrm{mle},n}$ that gives the largest log-likelihood value, and the resulting parameter value   is treated in what follows as the true value of $\hat{\theta}_{\mathrm{mle},n}$. From Figure \ref{fig:gandk_1}  we observe that the g-and-k distribution  $f_{\hat{\theta}_{\mathrm{mle},n}}(y)\eta(\dd y)$ fits  the data well, at least as far as the marginal distributions are concerned.
\begin{figure}
\centering
\begin{subfigure}[b]{0.31\textwidth}
        \centering
       \includegraphics[scale=0.24,trim=0cm -1.5cm 0cm 0cm]{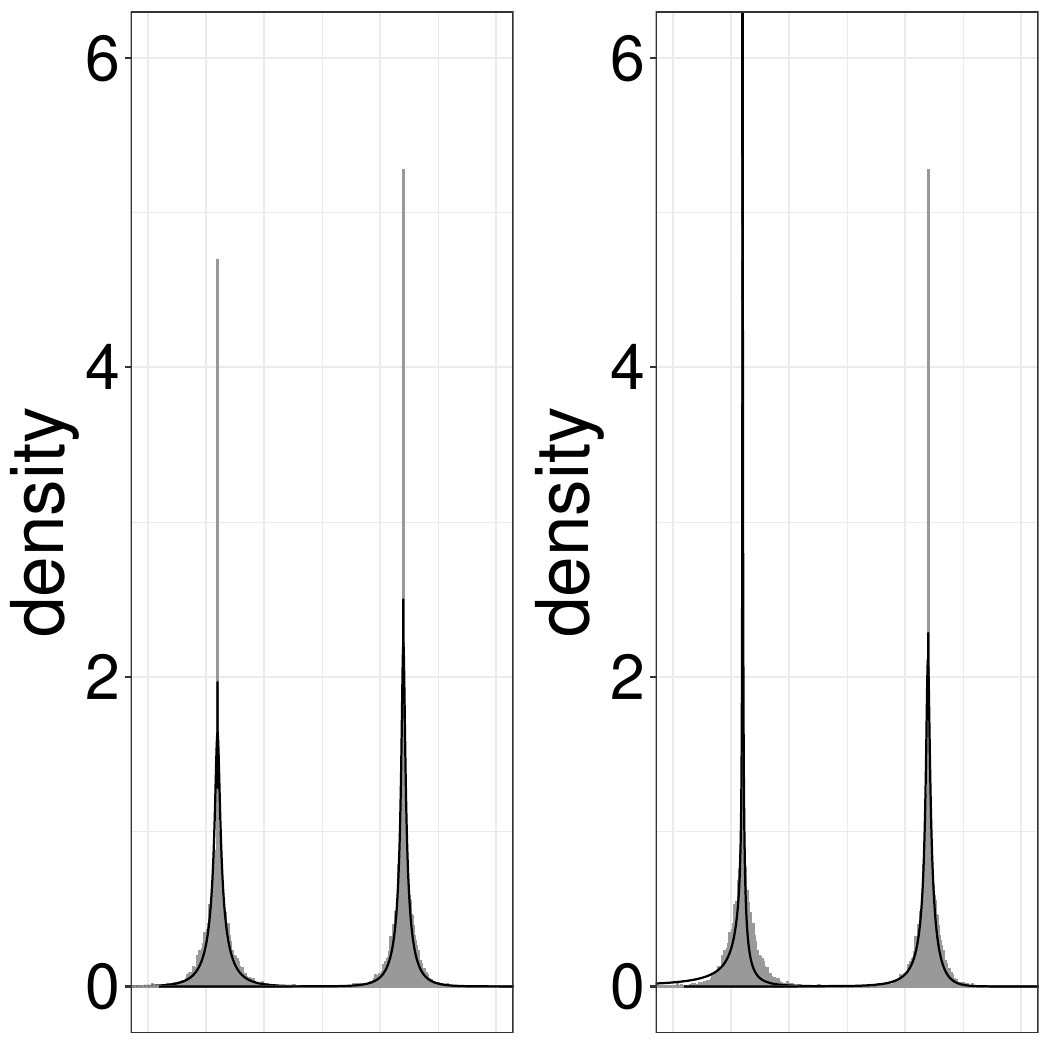}
       \caption{\label{fig:gandk_1}}
    \end{subfigure}
    ~
    \begin{subfigure}[b]{0.31\textwidth}
        \centering
       \includegraphics[scale=0.26]{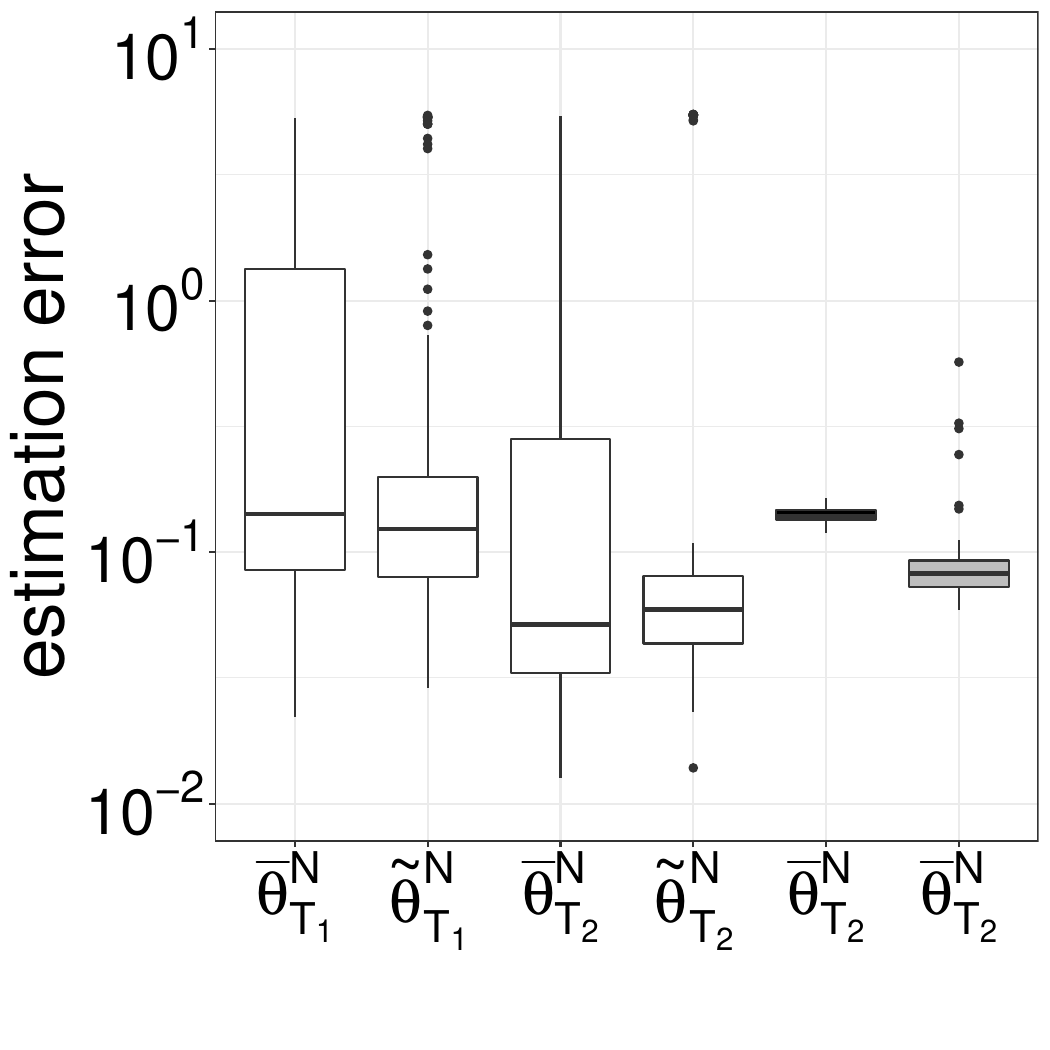}
       \caption{\label{fig:gandk_2}}
    \end{subfigure}
    ~
    \begin{subfigure}[b]{0.31\textwidth}
        \centering
       \includegraphics[scale=0.26]{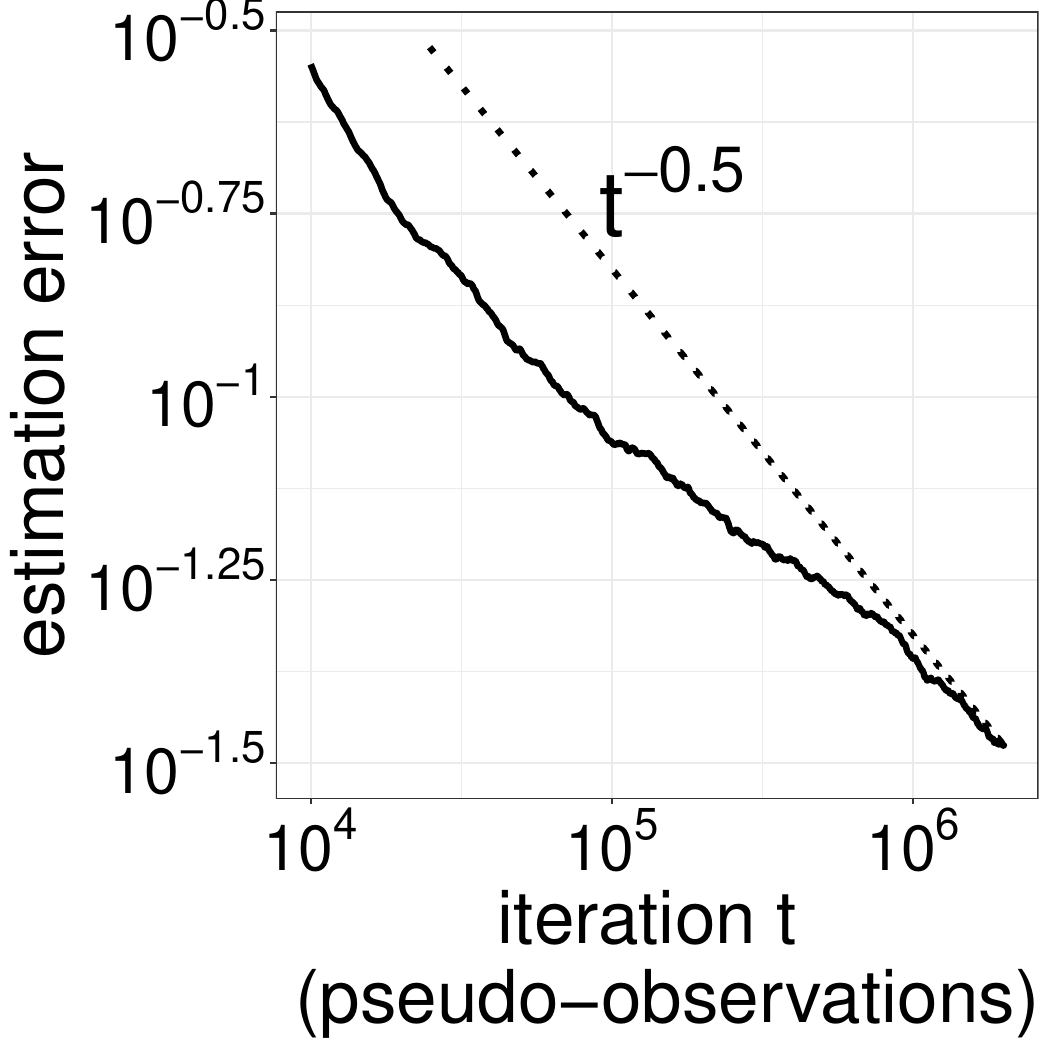}
       \caption{\label{fig:gandk_3}}
    \end{subfigure}
    
\caption{Example of Section \ref{sub:gandk}. Plot (\subref{fig:gandk_1}) compares the marginal distributions of $f_\theta(y)\eta(\dd y)$ and the empirical marginal distributions for $\theta=\hat{\theta}_{\mathrm{mle},n}$ (left) and for $\theta= \theta^{(-5.5,0)}$ (right). In Plot  (\subref{fig:gandk_2}) $T_1=10\,000$, $T_2=50\,000$ while $(\alpha,\csigma)=(0.8,10)$ (white boxplots), $(\alpha,\csigma)=(0.5,10)$ (fifth boxplot) and $(\alpha,\csigma)=(0.5,1)$  (sixth  boxplot).  In Plot (\subref{fig:gandk_3})  the dotted line  is as in Figures \ref{fig:CQR_5}-\ref{fig:mixture_boxplot_K2}.\label{fig:gank}}
\end{figure}

In Figure \ref{fig:gandk_2} we summarize the  100 values obtained for $\|\bar{\theta}^N_{T'}-\hat{\theta}_{\mathrm{mle},{n}}\|_{\infty}$ and for $\|\tilde{\theta}^N_{T'}-\hat{\theta}_{\mathrm{mle},{n}}\|_{\infty}$, with $T'\in\{10^4,T\}$. We remark that after only $T'=10^4$ iterations  the median  estimation error  is only approximatively 0.13 for the two estimators,  and that increasing the number of iterations to $T=50\,000$ approximately divides this value by 2.75 for $\bar{\theta}_t^N$ and by 2.1 for $\tilde{\theta}_t^N$. In this figure we also remark that for 15 runs of G-PFSO  the estimation error of $\tilde{\theta}_{T}^N$  is between 5.10 and 5.51. A close look at the values obtained for $\tilde{\theta}_{T}^N$ reveals that these large estimation errors arise because G-PFSO occasionally converges to an element of the set
$$
\Theta_{\mathrm{loc},\star}=\big\{\theta^{(v_1,v_2)},\,\,(v_1,v_2)\in\{5.5,-5.5,0\}^2\setminus\{(0,0)\}\big\},
$$ 
where for $(v_1,v_2)\in\R^2$ the notation $\theta^{(v_1,v_2)}$ is used for a $\theta=(a_1, a_2,b_1, b_2,g_1, g_2,k_1, k_2,\rho)\in\Theta$ such that $(g_1,g_2)= (v_1,v_2)$. By contrast, under  $\hat{\theta}_{\mathrm{mle},n}$ both $g_1$ and $g_2$ are close to zero. As illustrated in the right panel of Figure \ref{fig:gandk_1}, when $v_j\approx\pm 5.5$ the probability density function $f_{\theta^{(v_1,v_2)}}$ tries to capture the large spike at zero that is present in the  empirical distribution of the observations $\{\tilde{y}_{i,j}\}_{i=1}^n$, and thus  $\Theta_{\mathrm{loc},\star}$ is a natural set of    local maxima of the log-likelihood function. Due to the large distance $\mathrm{d}(\hat{\theta}_{\mathrm{mle},{n}},\Theta_{\mathrm{loc},\star})=\min_{\theta\in\Theta_{\mathrm{loc},\star}}\|\theta-\hat{\theta}_{\mathrm{mle},{n}}\|_\infty\approx 5.5$ between  the maximum likelihood estimator and this set, with only $N=500$ particles and a fast learning rate $h_t=t^{-0.8}$ it is difficult for G-PFSO to escape from one  element of $\Theta_{\mathrm{loc},\star}$ and to reach  a small neighbourhood of $\hat{\theta}_{\mathrm{mle},{n}}$. This  explains why, in 15 of the 100 runs of Algorithm \ref{algo:pi_tilde_online}, the two estimators  are still stuck around an element of $\Theta_{\mathrm{loc},\star}$ after   50\,000 iterations. Unreported results obtained by maximising $l_n(\theta)$ with a quasi-Newton algorithm suggest that the log-likelihood function has  stationary points that are not in $\Theta_{\mathrm{loc},\star}$, and thus that $l_n(\theta)$ may have   local maxima other than those belonging to this set.

We now repeat the above experiment with $\alpha=0.5$. The 100 resulting values of $\|\bar{\theta}_{T}^N-\hat{\theta}_{\mathrm{mle},{n}}\|_\infty$, summarized in Figure \ref{fig:gandk_2}, fifth boxplot, are all smaller than 0.17,  showing that reducing $\alpha$  improves the ability of G-PFSO to quickly find the highest mode of the objective function, as discussed  in Section \ref{sub:h_t}. On the other hand, when $\alpha=0.8$ the estimate $\tilde{\pi}_{T}^N$ of $\tilde{\pi}_{T}$ computed by Algorithm \ref{algo:pi_tilde_online} can be more concentrated around $\hat{\theta}_{\mathrm{mle},n}$  than when $\alpha=0.5$ and, for this reason, we see in  Figure \ref{fig:gandk_2} that $\|\bar{\theta}_{T}^N-\hat{\theta}_{\mathrm{mle},{n}}\|_\infty$ can be much smaller for the former value of $\alpha$ than for the latter. In Section \ref{sub:Multimodal} we saw that  for a given choice of $\alpha$  reducing $\csigma$ can   reduce the estimation error of G-PFSO. This point is illustrated  further in the sixth boxplot in Figure \ref{fig:gandk_2}, which summarizes the values of $\|\bar{\theta}_{T}^N-\hat{\theta}_{\mathrm{mle},{n}}\|_\infty$  obtained in 100 runs of Algorithm \ref{algo:pi_tilde_online} with $(\alpha,\csigma)=(0.5,1)$. We however observe that, for reasons explained in Section \ref{sub:Multimodal}, reducing $\csigma$ from 10 to 1 increases the probability of having  a `large' estimation error, since for $\csigma=1$ we have   $\|\bar{\theta}_{T}^N-\hat{\theta}_{\mathrm{mle},{n}}\|_\infty>0.24$ in 4 runs of the algorithm while, as   mentioned above, for the same value  of $\alpha$ this event never happens in Figure \ref{fig:gandk_2}  when $\csigma=10$.

Finally, in Figure \ref{fig:gandk_3} we report the evolution of $\|\bar{\theta}^N_{t}-\hat{\theta}_{\mathrm{mle},{n}}\|$ as the number of iterations $t$ increases, averaged over 5  runs of G-PFSO with $(\alpha,\csigma)=(0.5,1)$. As in the previous three examples,  we observe that  for  $h_t=t^{-1/2}$ the estimator $\bar{\theta}^N_{t}$ converges to the target parameter value at the optimal $t^{-1/2}$ rate.

\section{Future work and practical recommendations\label{sec:conclusion}}

The full theoretical justification for G-PFSO  is still in progress  but the already obtained result provides an important preliminary step towards a precise analysis of this algorithm.  Notably,  future work should aim at validating, or not, the $t^{-1/2}$ convergence rate observed  for the estimator $\bar{\theta}_t^N$, with $N\in\mathbb{N}$ fixed.  In particular, we conjecture that  for $h_t=t^{-1/2}$ and $N$ large enough, $\bar{\theta}_t^N$ has this convergence behaviour  with probability at least $p_N>0$, where   $p_N\rightarrow 1$ as $N\rightarrow\infty$. In addition,  following the discussion in  Section \ref{sub:extension}  we conjecture that  under the additional constraint that $\nu\leq 2$  we have $\|\bar{\theta}_t^N-\theta_\star\|=\bigO_\P(t^{-1/2})$, if $N$ is sufficiently large.

Iterative optimization methods are usually run until a given stopping criterion,   guaranteeing some control on the estimation error, is fulfilled. Ideally, a similar approach should be followed when G-PFSO is applied to observations $(Y_t)_{t\geq 1}$ that are sampled by the user, as in Section \ref{sub:gandk}. Unfortunately, defining a good stopping criterion for global stochastic optimization is known to be a hard task and ``most stochastic global optimization users just let their algorithm run until some time limit is exhausted'' \citep{Schoen2009}. Future research should aim at designing a   better strategy for stopping G-PFSO.

G-PFSO requires the user to specify a few ingredients, and we end this paper by proposing some default choices for them. For reasons given in Section \ref{sub:h_t}  our recommended default choice for the learning rate is $h_t=t^{-1/2}$. Depending on the estimation problem at hand, when the number of degrees of freedom  $\nu$ of the  Student's $t$-distributions appearing in \eqref{eq:mu_def}  is small, and notably when $\nu\leq 2$,   that is when the Student's $t$-distributions have an infinite variance,   G-PFSO may, as  in Section \ref{sub:Multimodal}, or may not, as in Section \ref{sub:mixture}, work better than when $\nu$ is large. Since in all the challenging problems of   Section \ref{sec:num}  we observe that G-PFSO performs well  when using Student's $t$-kernels  with thin tails,  we  recommend    by default to choose a  value for $\nu$ which is not too small, e.g.\ to let $\nu\geq 5$.  Finally, the covariance matrix $\Sigma$ of the Markov kernels  used to generate the new particles is another important ingredient of G-PFSO. Unfortunately, the `optimal' choice for this parameter is problem-dependent, as it should depend on the landscape of the  function  to be optimized. Focussing on the case where $\Sigma=\csigma I_d$, to escape more easily from a local mode $\csigma$ should be large if the objective function has a lot of local optima, or if its modes are far apart, and small otherwise to improve the concentration of the particle system around $\theta_\star$, and thus to reduce the estimation error. Since  G-PFSO is designed to address difficult optimization tasks, where finding the global optimum is the main challenge, it is  sensible to choose by default a value for $\csigma$ which is not too small, e.g.\ to let $\csigma\geq 1$.

\bibliographystyle{apalike}
\bibliography{complete}

\begin{thebibliography}{}

\bibitem[Akyildiz et~al., 2020]{akyildiz2020parallel}
Akyildiz, {\"O}.~D., Crisan, D., and M{\'\i}guez, J. (2020).
\newblock Parallel sequential {M}onte {C}arlo for stochastic gradient-free
  nonconvex optimization.
\newblock {\em Statistics and Computing}, 30(6):1645--1663.

\bibitem[Balakrishnan and Madigan, 2006]{balakrishnan}
Balakrishnan, S. and Madigan, D. (2006).
\newblock A one-pass sequential {M}onte {C}arlo method for {B}ayesian analysis
  of massive datasets.
\newblock {\em Bayesian Analysis}, 1(2):345--361.

\bibitem[Chopin and Papaspiliopoulos, 2020]{chopin2020introduction}
Chopin, N. and Papaspiliopoulos, O. (2020).
\newblock {\em An introduction to sequential Monte Carlo}.
\newblock Springer Series in Statistics.

\bibitem[Crisan and Miguez, 2018]{crisan2018nested}
Crisan, D. and Miguez, J. (2018).
\newblock Nested particle filters for online parameter estimation in
  discrete-time state-space {M}arkov models.
\newblock {\em Bernoulli}, 24(4A):3039--3086.

\bibitem[Drovandi and Pettitt, 2011]{drovandi2011likelihood}
Drovandi, C.~C. and Pettitt, A.~N. (2011).
\newblock Likelihood-free {B}ayesian estimation of multivariate quantile
  distributions.
\newblock {\em Computational Statistics \& Data Analysis}, 55(9):2541--2556.

\bibitem[Duchi et~al., 2011]{duchi2011adaptive}
Duchi, J., Hazan, E., and Singer, Y. (2011).
\newblock Adaptive subgradient methods for online learning and stochastic
  optimization.
\newblock {\em Journal of machine learning research}, 12(7).

\bibitem[Ferger, 2014]{ferger2014}
Ferger, D. (2014).
\newblock Moment equalities for sums of random variables via integer partitions
  and fa{\`a} di bruno's formula.
\newblock {\em Turkish Journal of Mathematics}, 38(3):558--575.

\bibitem[Gelfand and Mitter, 1991]{gelfand1991recursive}
Gelfand, S.~B. and Mitter, S.~K. (1991).
\newblock Recursive stochastic algorithms for global optimization in
  {$\mathbb{R}^d$}.
\newblock {\em SIAM Journal on Control and Optimization}, 29(5):999--1018.

\bibitem[Gerber et~al., 2019]{gerber2019negative}
Gerber, M., Chopin, N., and Whiteley, N. (2019).
\newblock Negative association, ordering and convergence of resampling methods.
\newblock {\em The Annals of Statistics}, 47(4):2236--2260.

\bibitem[Gerber and Heine, 2021]{gerber2020online}
Gerber, M. and Heine, K. (2021).
\newblock {Online inference with multi-modal likelihood functions}.
\newblock {\em The Annals of Statistics}, 49(6):3103 -- 3126.

\bibitem[Ghosal et~al., 2000]{MR1790007}
Ghosal, S., Ghosh, J.~K., and van~der Vaart, A.~W. (2000).
\newblock Convergence rates of posterior distributions.
\newblock {\em The Annals of Statistics}, 28(2):500--531.

\bibitem[Ghosal and Van~der Vaart, 2017]{Ghosal2017}
Ghosal, S. and Van~der Vaart, A. (2017).
\newblock {\em Fundamentals of nonparametric Bayesian inference}, volume~44.
\newblock Cambridge University Press.

\bibitem[Giraud and Del~Moral, 2013]{giraud2013convergence}
Giraud, F. and Del~Moral, P. (2013).
\newblock On the convergence of quantum and sequential {M}onte {C}arlo methods.
\newblock In {\em Monte Carlo and Quasi-Monte Carlo Methods 2012}, pages
  385--398. Springer.

\bibitem[Giraud and Del~Moral, 2017]{giraud2017nonasymptotic}
Giraud, F. and Del~Moral, P. (2017).
\newblock Nonasymptotic analysis of adaptive and annealed {F}eynman--{K}ac
  particle models.
\newblock {\em Bernoulli}, 23(1):670--709.

\bibitem[Hunter and Lange, 2000]{Hunter2000}
Hunter, D.~R. and Lange, K. (2000).
\newblock Quantile regression via an {M}{M} algorithm.
\newblock {\em Journal of Computational and Graphical Statistics}, 9(1):60--77.

\bibitem[Kleijn and van~der Vaart, 2012]{Kleijn2012}
Kleijn, B. and van~der Vaart, A.~W. (2012).
\newblock The {B}ernstein-von-{M}ises theorem under misspecification.
\newblock {\em Electronic Journal of Statistics}, 6:354--381.

\bibitem[Liu, 2020]{liu2020particle}
Liu, B. (2020).
\newblock Particle filtering methods for stochastic optimization with
  application to large-scale empirical risk minimization.
\newblock {\em Knowledge-Based Systems}, 193:105486.

\bibitem[Liu et~al., 2016]{liu2016particle}
Liu, B., Cheng, S., and Shi, Y. (2016).
\newblock Particle filter optimization: A brief introduction.
\newblock In {\em International Conference on Swarm Intelligence}, pages
  95--104. Springer.

\bibitem[Liu and West, 2001]{liu2001combined}
Liu, J. and West, M. (2001).
\newblock Combined parameter and state estimation in simulation-based
  filtering.
\newblock In {\em Sequential Monte Carlo methods in practice}, pages 197--223.
  Springer.

\bibitem[Loshchilov and Hutter, 2016]{loshchilov2016sgdr}
Loshchilov, I. and Hutter, F. (2016).
\newblock Sgdr: Stochastic gradient descent with warm restarts.
\newblock {\em arXiv preprint arXiv:1608.03983}.

\bibitem[{National Research Council}, 2013]{national2013frontiers}
{National Research Council} (2013).
\newblock {\em Frontiers in massive data analysis}.
\newblock National Academies Press.

\bibitem[Pelletier, 1998]{pelletier1998weak}
Pelletier, M. (1998).
\newblock Weak convergence rates for stochastic approximation with application
  to multiple targets and simulated annealing.
\newblock {\em Annals of Applied Probability}, 8(1):10--44.

\bibitem[Polyak and Juditsky, 1992]{polyak1992acceleration}
Polyak, B.~T. and Juditsky, A.~B. (1992).
\newblock Acceleration of stochastic approximation by averaging.
\newblock {\em SIAM Journal on Control and Optimization}, 30(4):838--855.

\bibitem[Prangle, 2017]{prangle2017gk}
Prangle, D. (2017).
\newblock gk: An {R} package for the g-and-k and generalised g-and-h
  distributions.
\newblock {\em arXiv preprint arXiv:1706.06889}.

\bibitem[Schoen, 2009]{Schoen2009}
Schoen, F. (2009).
\newblock Stochastic global optimization: stopping rules.
\newblock pages 3743--3746. Springer US, Boston, MA.

\bibitem[Tadi{\'c}, 2015]{tadic2015convergence}
Tadi{\'c}, V.~B. (2015).
\newblock Convergence and convergence rate of stochastic gradient search in the
  case of multiple and non-isolated extrema.
\newblock {\em Stochastic Processes and their Applications}, 125(5):1715--1755.

\bibitem[Toulis and Airoldi, 2015]{toulis2015scalable}
Toulis, P. and Airoldi, E.~M. (2015).
\newblock Scalable estimation strategies based on stochastic approximations:
  classical results and new insights.
\newblock {\em Statistics and {C}omputing}, 25(4):781--795.

\bibitem[Villa and Rubio, 2018]{villa2018objective}
Villa, C. and Rubio, F.~J. (2018).
\newblock Objective priors for the number of degrees of freedom of a
  multivariate t distribution and the t-copula.
\newblock {\em Computational Statistics \& Data Analysis}, 124:197--219.

\bibitem[Villani et~al., 2009]{villani2009regression}
Villani, M., Kohn, R., and Giordani, P. (2009).
\newblock Regression density estimation using smooth adaptive {G}aussian
  mixtures.
\newblock {\em Journal of Econometrics}, 153(2):155--173.

\bibitem[Yin, 1999]{yin1999rates}
Yin, G. (1999).
\newblock Rates of convergence for a class of global stochastic optimization
  algorithms.
\newblock {\em SIAM Journal on Optimization}, 10(1):99--120.

\bibitem[Yu and Moyeed, 2001]{yu2001bayesian}
Yu, K. and Moyeed, R.~A. (2001).
\newblock Bayesian quantile regression.
\newblock {\em Statistics \& Probability Letters}, 54(4):437--447.

\bibitem[Zhou et~al., 2008]{zhou2008particle}
Zhou, E., Fu, M.~C., and Marcus, S.~I. (2008).
\newblock A particle filtering framework for randomized optimization
  algorithms.
\newblock In {\em 2008 Winter Simulation Conference}, pages 647--654. IEEE.

\end{thebibliography}

\appendix

\section{Appendix: Proofs}\label{ap-proofs}
\subsection{Roadmap}

In Section \ref{S-sec:pi} we introduce a more general definition of the sequence $(\tilde{\pi}_t)_{t\geq 1}$, which notably does not assume that $h_t>0$ is positive for all $t$. Then, Theorem  \ref{t-thm:online} follows from Theorem \ref{thm:online}  and from Lemma \ref{lem:lower_bound_1}, which are derived for this  more general definition of  $(\tilde{\pi}_t)_{t\geq 1}$ and proven in Section \ref{p-thm:online} and in Section \ref{p-lem:lower_bound_1}, respectively. Proposition \ref{t-prop:h_t} is a direct consequence of the slightly more general Proposition \ref{prop:h_t}, proven in Section \ref{p-prop:h_t}, while Proposition \ref{prop:assumption} is proven in Section \ref{p-prop:assumption}. A complement to Section \ref{sub:extension} of the paper is given in Section \ref{psec:extensions} and  additional information for the numerical experiments is provided in Section \ref{sec:additional_info}.

\subsection{A more general definition of the sequence  \texorpdfstring{$(\tilde{\pi}_t)_{t\geq 1}$}{Lg}}\label{S-sec:pi}

The definition of the  sequence $(\tilde{\pi}_t)_{t\geq 1}$  requires to specify  $\nu\in(0,\infty)$ and three sequences, namely a sequence $(h_t)_{t\geq 0}$ in $[0,\infty)$,  a  strictly increasing sequence  $(t_p)_{p\geq 0}$   in  $\mathbb{N}_0:=\{0\}\cup\mathbb{N}$ and a sequence $(\Sigma_t)_{t\geq 0}$  of $d\times d$ covariance matrices verifying
\begin{align}\label{S-eq:Sigma_t}
\sup_{t\geq 0}\big(\|\Sigma_t\|\vee  \|\Sigma^{-1}_t\|\big)<\infty.
 \end{align}
Next,  given a  distribution $\tilde{\pi}_0\in\mathcal{P}_L(\Theta)$, and defining
\begin{align}\label{S-eq:mu_def}
\mu_t(\dd\theta)=
\begin{cases}
\delta_{\{0\}}, &h_t=0\\
\mathcal{N}_d(0,h_t^2\Sigma_t), & h_t>0\text{ and } t\not\in (t_p)_{p\geq 0}\\
t_{d,\nu}(0,h_t^2\Sigma_t), & h_t>0\text{ and } t \in (t_p)_{p\geq 0}
\end{cases},\quad \forall t\geq 0,
\end{align}
we let
\begin{align}\label{S-eq:pi_tilde}
 \tilde{\pi}_t(\dd\theta)=\frac{ f_\theta(Y_t)(\mu_{t-1}*\tilde{\pi}_{t-1})\vert_\Theta(\dd\theta)}{\int_{\R^d} f_\theta(Y_t)(\mu_{t-1}*\tilde{\pi}_{t-1})\vert_\Theta(\dd\theta)}\in \mathcal{P}_L(\Theta) ,\quad  t\geq 1.
\end{align}

\subsection{Theoretical results for \texorpdfstring{$(\tilde{\pi}_t)_{t\geq 1}$}{Lg} as defined in Section \ref{S-sec:pi}}\label{S-sec:results}
 
\begin{theorem}\label{thm:online}
Let $(h_t)_{t\geq 0}$ and $(t_p)_{p\geq 0}$  be such that
\begin{enumerate}
\item\label{thm1.1}  $h_{t_p}>0$ for all $p\geq 0$,
\item\label{thm1.2} $(t_{p+1}-t_p) \rightarrow\infty$ and  $(t_p-t_{p-1}) \sum_{s=t_{p-1}+1}^{t_p-1}h_s^2 \rightarrow 0$,
\item\label{thm1.3}  $\limsup_{p\rightarrow\infty}\frac{t_{p+1}-t_{p}}{ t_{p}-t_{p-1}}<\infty$ and $\log(h_{t_{p-1}})/(t_p-t_{p-1})\rightarrow 0$.
\end{enumerate}
Moreover, assume  that
\begin{align}\label{eq:compact}
\exists (\beta, c)\in (1,\infty)^2\quad\text{such that}\quad \P\big(\tilde{\pi}_{t_p} (V_{c/h_{t_{p}}^{\beta}})\geq \beta^{-1}\big)\rightarrow 0.
\end{align}
Then, under Assumptions \ref{m_star}-\ref{new1}, $\tilde{\pi}_{t}\Rightarrow\delta_{\{\theta_\star\}}$ in $\P$-probability.
\end{theorem}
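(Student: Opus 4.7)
\medskip

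\noindent\textbf{Proof plan.} The plan is to establish concentration of $\tilde{\pi}_{t_{p+1}}$ around $\theta_\star$ for large $p$ by treating the block $[t_p+2,t_{p+1}]$ as a (data-dependent) Bayesian experiment whose ``prior'' is the jittered distribution $\tilde{\pi}_{t_p+1}=\tilde{\Psi}_{t_p+1}(\tilde{\pi}_{t_p})$, and then extending the conclusion from the subsequence $(t_p)$ to all indices $t$. The argument splits into a concentration step for the Gaussian phase, a tail-preservation step for the Student's $t$ phase, and an extrapolation step.

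\medskip

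\noindent First I would use the tightness condition \eqref{eq:compact} together with Assumption~\ref{new} to restrict analysis to the compact set $K_\star=\tilde{A}_\star\cap\Theta$. More precisely, one can show that Assumption~\ref{new}.\ref{new_part1} together with Assumptions~\ref{new1}--\ref{test} implies that if the mass of $\tilde{\pi}_{t_p}$ outside a growing ball around $\theta_\star$ is upper bounded (which is exactly what \eqref{eq:compact} provides, up to probability), then after a single further Bayesian-like update most of the mass sits in a small neighborhood of $A_\star$, and hence eventually in $K_\star$. This reduces the problem to the behavior of the sequence on a compact parameter region, where standard Bayesian posterior consistency tools apply.

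\medskip

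\noindent The core step is a comparison between $\tilde{\pi}_{t_{p+1}}$ and the auxiliary ``pure Bayes'' measure
\begin{equation*}
\pi'_{t_{p+1}}=\Psi_{t_{p+1}}\circ\cdots\circ \Psi_{t_p+2}(\tilde{\pi}_{t_p+1}),
\end{equation*}
obtained by replacing $\tilde{\Psi}_t$ by $\Psi_t$ for $t\in[t_p+2,t_{p+1}]$. On the one hand, by Condition~\ref{thm1.2}, $\sum_{s=t_{p-1}+1}^{t_p-1}h_s^2\to 0$ fast enough that a Taylor-type control of $f_{\theta+v}/f_\theta$ (justified by Assumptions~\ref{m_star} and~\ref{new}.\ref{new_part2}, in particular the $L^2$ envelope on $\log(\tilde f_{\theta+v}/f_\theta)$) yields that $\tilde{\pi}_{t_{p+1}}$ and $\pi'_{t_{p+1}}$ are asymptotically indistinguishable on $K_\star$. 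On the other hand, classical posterior consistency (in the Kleijn--van der Vaart spirit, invoking the testing Assumption~\ref{test}, the quadratic behaviour in Assumption~\ref{taylor}, and the Lipschitz control in Assumption~\ref{m_star}) shows that, provided the prior $\tilde{\pi}_{t_p+1}$ has a lower bound of the form $\tilde{\pi}_{t_p+1}(B_\epsilon(\theta_\star))\geq e^{-c_p(t_p-t_{p-1})}$ with $c_p(t_{p+1}-t_p)^{-1}\to 0$, the Bayesian posterior $\pi'_{t_{p+1}}$ concentrates on an arbitrary neighbourhood of $\theta_\star$ with probability tending to $1$.

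\medskip

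\noindent The required lower bound on $\tilde{\pi}_{t_p+1}(B_\epsilon(\theta_\star))$ is exactly what the Student's $t$ jittering at time $t_p$ is designed to produce: because $\widetilde{M}_{t_p}$ has polynomial tails proportional to $(1+\|(\theta-\theta')/h_{t_p}\|^2/\nu)^{-(d+\nu)/2}$, one can integrate the density $\tilde{\pi}_{t_p}*\mu_{t_p}$ over $B_\epsilon(\theta_\star)$ and obtain, uniformly in the prior, a lower bound of order a polynomial in $h_{t_p}$ restricted to the event $\{\tilde{\pi}_{t_p}(V_{c/h_{t_p}^\beta})<\beta^{-1}\}$ supplied by \eqref{eq:compact}. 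Condition~\ref{thm1.3} then ensures that $\log(h_{t_{p-1}})/(t_p-t_{p-1})\to 0$, so this lower bound is compatible with the posterior consistency requirement. Combining the two above items yields $\tilde{\pi}_{t_{p+1}}(B_\epsilon(\theta_\star))\to 1$ in $\P$-probability for every $\epsilon>0$.

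\medskip

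\noindent Finally, I would extrapolate from the subsequence to all $t$. For $t\in(t_p,t_{p+1}]$, the ratio $(t-t_p)/(t_{p+1}-t_p)$ is bounded and $\sum_{s=t_p+1}^{t-1}h_s^2\to 0$, so a repetition of the comparison argument (this time for partial blocks) shows that $\tilde{\pi}_t$ is close to the corresponding Bayesian update of $\tilde{\pi}_{t_p+1}$, and hence inherits the concentration. The $\limsup$ bound in Condition~\ref{thm1.3} is used here to prevent any block from being so long that the approximation error accumulates. The main obstacle I anticipate is the precise quantitative control of the Gaussian-kernel perturbation against the evidence accumulated in a block: one must show that the errors from replacing $\tilde{\Psi}_s$ by $\Psi_s$ do not destroy the exponential tightness of $\pi'_t$ around $\theta_\star$, which is what links Condition~\ref{thm1.2} (the $L^1$-type smallness of $\sum h_s^2$) to the envelope integrability from Assumption~\ref{new}.\ref{new_part2}. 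Once this quantitative comparison is in place, the rest of the argument is a concatenation of standard posterior-consistency and tightness estimates.
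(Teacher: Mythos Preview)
Your high-level strategy matches the paper's: lower bound the ``prior mass'' near $\theta_\star$ using the Student's~$t$ tails at $t_p$ together with the tightness condition~\eqref{eq:compact}, then invoke Kleijn--van der Vaart-type posterior consistency on blocks, and finally extend from the subsequence $(t_p)$ to all $t$. However, two aspects of your plan differ from (and are less robust than) what the paper actually does.

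\textbf{(i) The comparison step.} You propose to compare $\tilde{\pi}_{t_{p+1}}$ with the \emph{pure} Bayes posterior $\pi'_{t_{p+1}}=\Psi_{t_{p+1}}\circ\cdots\circ\Psi_{t_p+2}(\tilde{\pi}_{t_p+1})$ and then apply classical consistency to $\pi'_{t_{p+1}}$. The paper avoids this comparison entirely: it works directly with a \emph{perturbed} object $\pi'_{s_t,t}$ in which the likelihood factors are $\tilde{f}_{\theta-\sum_{i=s}^{t-1}U_i}(Y_s)$, and shows (Lemma~\ref{lemma:test}, via Lemma~\ref{lemma:new} and Assumption~\ref{new}.\ref{new_part2}) that the Kleijn--van der Vaart test functions carry over to these perturbed products on the event $\{[U_{(s_t+1):t}]<\gamma_t\}$. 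Your proposed direct comparison of two ratios of integrals is more delicate than you indicate, because even if each factor $f_{\theta+v}/f_\theta$ is close to $1$, the numerator and denominator of $\tilde{\pi}_{t_{p+1}}$ can be exponentially small and need not remain close after taking a quotient.

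\textbf{(ii) The large-perturbation event.} Your plan only controls the situation when the Gaussian increments are small (``$\sum h_s^2\to 0$ so the comparison holds''), but does not say what happens on the complementary event $\{[U_{(t_{p-1}+1):t_p}]\geq\gamma\}$. This is where Assumption~\ref{new1} is essential: the paper bounds the numerator crudely by $\sup_\theta f_\theta$ (or $\sup_\theta\E[f_\theta/f_{\theta_\star}]$, or $\sup_\theta\E[f_\theta]$), uses the Gaussian tail bound $\P([U]\geq\gamma)\leq C\exp\{-\gamma^2/(C\sum h_s^2)\}$ via Doob's inequality, and shows that Condition~\ref{thm1.2} makes this product vanish even after multiplying by the inverse denominator bound $h_{t_{p-1}}^{-\nu_1}e^{(t_p-t_{p-1})C}$. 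Without this piece, the argument is incomplete: a single large $U_s$ could in principle push the posterior mass anywhere, and the Taylor comparison gives nothing there.

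Finally, your extrapolation step is too coarse. The paper cannot simply repeat the block argument for partial blocks $t\in(t_p,t_{p+1}]$, because immediately after $t_p$ the Student's~$t$ kernel has been applied, and the tail bound $\P(\|U_{t_p}\|\geq\gamma)\leq C(h_{t_p}/\gamma)^\nu$ is only polynomial. The paper therefore splits each block into an early piece of length $\approx\log(h_{t_p}^{-\nu\xi_p})$ (handled in Lemma~\ref{lemma:online3}, where the Student's jump is isolated and controlled separately) and a late piece (Lemma~\ref{lemma:online2}, which recycles the argument of Lemma~\ref{lemma:online1} with $\tilde{\pi}_{t_p}$ as the new ``prior'' known to concentrate). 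Condition~\ref{thm1.3} is used in both sub-arguments, not just to ``prevent blocks from being too long''.
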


Condition \eqref{eq:compact} of Theorem \ref{thm:online} holds when the parameter space $ \Theta$ is bounded, since in this case we have $\tilde{\pi}_t(\Theta^c)=0$ for all $t\geq 0$, $\P$-a.s. Lemma \ref{lem:lower_bound_1} below provides sufficient conditions for \eqref{eq:compact} to hold when $\Theta$ is unbounded.  We recall that for two strictly  positive sequences $(a_t)_{t\geq 1}$ and $(b_t)_{t\geq 1}$ the notation $a_t=\Theta(b_t)$ means that $\liminf_{t\rightarrow\infty} (a_t/b_t+b_t/a_t)>0$.

\begin{lemma}\label{lem:lower_bound_1}
Assume Assumptions \ref{m_star}, \ref{taylor}, \ref{new1} and  \ref{new2}. For every   $C\in(0,\infty)$ let
$$
\zeta(C)=
\begin{cases}
\E\big[ \sup_{\theta\in V_{C}}\log (\tilde{f}_{\theta}/f_{\theta_\star}) \big],&\text{if Assumption \ref{new2}.\ref{A61} holds}\\
 \log \big(\sup_{\theta\in V_C}\E[(\tilde{f}_{\theta}/f_{\theta_\star})]\big),&\text{if Assumption \ref{new2}.\ref{A62} holds}\\
 \log \big(\sup_{\theta\in V_C}\E[ \tilde{f}_{\theta} ]\big), &\text{if Assumption \ref{new2}.\ref{A63} holds}
\end{cases}
$$
and let $ k_\star\in\{1/2\}\cup \mathbb{N}  $ be as in Assumption \ref{new2}. Let $(h_t)_{t\geq 0}$ and $(t_p)_{p\geq 0}$ be such that Conditions \ref{thm1.1}-\ref{thm1.2} of Theorem \ref{thm:online} hold and such that
\begin{enumerate}
\item\label{thm2.1} $h_{t_p}<h_{t_{p-1}}$ for all $p\geq 0$,
\item\label{thm2.2} $h_{t_p}=\Theta(t_p^{-\alpha})$ for some $\alpha>0$,
\item\label{thm2.3} There exists a  constant      $\beta_\star\in(0,\infty)$ such that
\begin{align}\label{eq:thm2}
 \big|\zeta(h_{t_{p}}^{-\beta_\star})\big|^{-2k_\star}\sum_{i=1}^p (t_i-t_{i-1})^{-k_\star\ind_{\mathbb{N}}(k_\star)}\rightarrow 0.
\end{align}
\end{enumerate}

Then, Condition \eqref{eq:compact} of Theorem \ref{thm:online} holds.
\end{lemma}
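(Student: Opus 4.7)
My plan is to show that for $\beta := \beta_\star$ (from Condition~\ref{thm2.3}) and a sufficiently large constant $c>1$, $\P(\tilde{\pi}_{t_p}(V_{C_p}) \geq \beta^{-1}) \to 0$ as $p\to\infty$ with $C_p := c/h_{t_p}^{\beta_\star}$. Writing $\tilde{\pi}_{t_p}(V_{C_p}) = N_p/(N_p+D_p)$, where $N_p = \int_{V_{C_p}} f_\theta(Y_{t_p})(\mu_{t_p-1}*\tilde{\pi}_{t_p-1})\vert_\Theta(\dd\theta)$ and $D_p$ is the analogous integral over $\Theta\setminus V_{C_p}$, it suffices to prove $N_p/D_p \to 0$ in $\P$-probability. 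Both integrals may be unrolled back to time $t_{p-1}$, reducing the problem to a comparison of integrals against $\tilde{\pi}_{t_{p-1}}$ that involve the product $\prod_{s=t_{p-1}+1}^{t_p}(f_\theta/f_{\theta_\star})(Y_s)$ together with the intermediate kernels $\mu_{t_{p-1}},\dots,\mu_{t_p-1}$.

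To upper bound $N_p$, I would control $\Lambda_p := \sum_{s=t_{p-1}+1}^{t_p}\sup_{\theta\in V_{C_p}}\log(\tilde{f}_\theta/f_{\theta_\star})(Y_s)$. Under Assumption~\ref{new2}, $\E(\Lambda_p) = (t_p-t_{p-1})\zeta(C_p)$, and since $\zeta(C) \leq -\delta_0 \log C$ for some $\delta_0>0$ and large $C$, this expectation is of order $-(t_p-t_{p-1})\beta_\star\log(1/h_{t_p})$. In case~\ref{new2}.\ref{A61} the centred $2k_\star$-moment of each summand is uniformly bounded, so Markov's inequality controls $\P(\Lambda_p \geq \tfrac{1}{2}\E(\Lambda_p))$; Condition~\ref{thm2.3} of the lemma is tailored precisely so that these exceedance probabilities vanish. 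In cases~\ref{new2}.\ref{A62} and~\ref{new2}.\ref{A63}, the same estimate is obtained by commuting the supremum with the expectation using Jensen's inequality and the $\log$-concavity of $\zeta$.

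For the lower bound on $D_p$, the heavy-tailed kernel $\mu_{t_{p-1}}$ is essential. Its $t_{d,\nu}(0,h_{t_{p-1}}^2\Sigma_{t_{p-1}})$ density admits a polynomial lower bound of order $h_{t_{p-1}}^{\nu}$ on any fixed bounded set, uniformly in the location parameter. Consequently, for any bounded reference set $A\subset\R^d$, the density of $\mu_{t_{p-1}}*(\tilde{\pi}_{t_{p-1}}\vert_A)$ at points in a neighbourhood of $\theta_\star$ is at least $\tilde{\pi}_{t_{p-1}}(A)$ times a constant multiple of $h_{t_{p-1}}^{\nu}$. Since $\sum_{s=t_{p-1}+1}^{t_p-1}h_s^2 \to 0$ by Condition~\ref{thm1.2}, the subsequent Gaussian propagations do not spread mass appreciably. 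Combining this with a small-ball argument on $B_{h_{t_{p-1}}}(\theta_\star)$ using Assumption~\ref{m_star} (Lipschitz bound $m_\star(Y_s)\|\theta-\theta_\star\|$) and Assumption~\ref{taylor} (KL bound $C_\star\|\theta-\theta_\star\|^2$) yields $D_p \gtrsim \tilde{\pi}_{t_{p-1}}(A) \cdot h_{t_{p-1}}^{c_1} \cdot \prod_{s=t_{p-1}+1}^{t_p}f_{\theta_\star}(Y_s)$ up to a stochastic factor tending to~$1$.

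After cancelling $\prod_s f_{\theta_\star}(Y_s)$, the ratio $N_p/D_p$ is dominated, on the Markov event above, by $h_{t_{p-1}}^{-c_1}\exp\{\tfrac{1}{2}(t_p-t_{p-1})\zeta(C_p)\}/\tilde{\pi}_{t_{p-1}}(A)$. The exponential has exponent at most $-\tfrac{1}{2}\delta_0\beta_\star(t_p-t_{p-1})\log(1/h_{t_p})$, which, thanks to $h_{t_{p-1}} = \Theta(t_{p-1}^{-\alpha})$ from Condition~\ref{thm2.2}, dominates the polynomial $h_{t_{p-1}}^{-c_1}$ for $c$ chosen large enough. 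The main obstacle is controlling the random prefactor $1/\tilde{\pi}_{t_{p-1}}(A)$: I would resolve this by an inductive/bootstrap argument showing that, for some bounded $A$, $\tilde{\pi}_{t_p}(A)\geq\delta>0$ with high probability for all sufficiently large $p$, exploiting Assumption~\ref{new1} (which bounds $f_\theta$ over all of $\Theta$ in each of its three sub-cases) together with the absolute continuity of~$\tilde{\pi}_0$. Assembling the bounds then yields Condition~\eqref{eq:compact}.
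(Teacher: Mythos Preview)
Your numerator/denominator skeleton is the same as the paper's, but the step you yourself call ``the main obstacle'' is where the argument actually lives, and your proposed resolution does not work.

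You want to bootstrap $\tilde{\pi}_{t_{p-1}}(A)\geq\delta$ for some \emph{fixed bounded} set $A$. That is a tightness statement for $(\tilde{\pi}_{t_p})_{p\geq 0}$ which is at least as strong as the conclusion of the lemma itself, and would in fact require Assumptions~\ref{test}--\ref{new}, which are \emph{not} among the hypotheses here (only \ref{m_star}, \ref{taylor}, \ref{new1}, \ref{new2} are available). The paper sidesteps this circularity with two ideas you are missing. First, it does not use a fixed $A$ but the \emph{growing} ball $B_{2D_p}(\theta_\star)$ with $D_p=h_{t_p}^{-\beta}$; crucially $\beta$ is taken strictly larger than both $\beta_\star$ and $1/\alpha$ (your choice $\beta=\beta_\star$ is too small). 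Second, the induction runs over \emph{all} blocks $i=p_0+1,\dots,p$ simultaneously, not just the last one: with $A_{i,p}=\{\tilde{\pi}_{t_i}(V_{2D_p})\geq\epsilon\}$ intersected with a good event, one telescopes
\[
\P(A_{p,p})\leq \sum_{i=p_0+1}^p \P(A_{i,p}\mid A_{i-1,p}^c)+\P(A_{p_0,p}).
\]
Conditioning on $A_{i-1,p}^c$ gives $\tilde{\pi}_{t_{i-1}}\big(B_{2D_p}(\theta_\star)\big)\geq 1-\epsilon$, and the Student-$t$ tail then yields a lower bound of order $h_{t_{i-1}}^{\nu}\,h_{t_p}^{\beta(\nu+d)}$ on $(\mu_{t_{i-1}}*\tilde{\pi}_{t_{i-1}})(B_{\delta_p}(\theta_\star))$, \emph{uniformly} in the unknown shape of $\tilde{\pi}_{t_{i-1}}$. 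The base term $\P(A_{p_0,p})$ vanishes automatically because $D_p\to\infty$ while $\tilde{\pi}_{t_{p_0}}$ is a fixed probability measure. The exponent $\beta$ is then pushed large enough so that $e^{(t_i-t_{i-1})\zeta(D_p)/2}$, whose magnitude scales like $(t_i-t_{i-1})\beta\log(1/h_{t_p})$, beats the accumulated polynomial losses $h_{t_{i-1}}^{-\nu}h_{t_p}^{-\beta(\nu+d)}$ summed over $i$; Condition~\ref{thm2.3} with $\beta_\star$ is what makes this summation finite.

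Two smaller points. Your $\Lambda_p$ bound on the numerator implicitly assumes the propagation noise $[U_{(t_{p-1}+1):t_p}]$ stays below $D_p$ so that the perturbed arguments remain in $V_{D_p}$; the paper makes this explicit by splitting the numerator into $X_{i,p}^{(1)}$ (small noise) and $X_{i,p}^{(2)}$ (large noise), the latter handled by the Gaussian tail of $[U]$ combined with Assumption~\ref{new1} --- this, not the bootstrap, is where \ref{new1} enters. And for cases~\ref{new2}.\ref{A62}--\ref{new2}.\ref{A63} there is no ``$\log$-concavity of $\zeta$'' argument: one simply takes the conditional expectation of $X_{i,p}^{(1)}$ first and bounds $\prod_s\E[\tilde f_\theta/f_{\theta_\star}]$ (resp.\ $\prod_s\E[\tilde f_\theta]$ after controlling $\prod_s f_{\theta_\star}(Y_s)$ from below via the moment condition in~\ref{new2}.\ref{A63}).
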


\begin{remark}
We note that the conclusions of  Theorem \ref{thm:online} and of Lemma \ref{lem:lower_bound_1} remain valid when  $(\Sigma_t)_{t\geq 0}$ is a   random  sequence of covariance matrices, provided that for all $t>t_2$,    $\Sigma_t$ is independent of $(Y_s)_{s>t_{p_t-2}}$,  with $p_t=\sup\{p\in\mathbb{N}_0:\, t_p<t\}$.
\end{remark}

The following proposition provides sufficient conditions on $(h_t)_{t\geq 0}$ and on $(t_p)_{p\geq 0}$ to ensure that Conditions \ref{thm1.1}-\ref{thm1.3} of Theorem \ref{thm:online} and Conditions \ref{thm2.1}-\ref{thm2.3} of Lemma \ref{lem:lower_bound_1} hold.

\begin{proposition}\label{prop:h_t}

Let $C\in[1,\infty)$, $\alpha\in(0,\infty)$,  $\varrho\in(0,\alpha\wedge 1)$ and $c\in(0,1)$ be some constants. Let $(t_p)_{p\geq 0}$ be defined by
\begin{align}\label{eq:tp_formula}
t_0\in\mathbb{N}_0\quad C_{p-1}\in [c t^\varrho_{p-1},  t_{p-1}^\varrho/c],\quad t_p= t_{p-1}+\lceil C_{p-1} \log(t_{p-1})\vee C\rceil,\quad   p\geq 1
\end{align}
and let $(h_t)_{t\geq 0}$ be such that  $h_{t_p}=\Theta(t_p^{-\alpha})$, such that $h_{t_p}<h_{t_{p-1}}$ for all $p\geq 1$  and such that, for all $t\geq 1$, either $h_{t-1}>h_t$ or $h_{t-1}=0$. Then, the sequences $(h_t)_{t\geq 0}$ and $(t_p)_{p\geq 0}$ verify Conditions \ref{thm1.1}-\ref{thm1.3} of Theorem \ref{thm:online} and Conditions \ref{thm2.1}-\ref{thm2.2} of Lemma \ref{lem:lower_bound_1}. In addition,   the sequences $(h_t)_{t\geq 0}$ and $(t_p)_{p\geq 0}$ also verify  Condition \ref{thm2.3} of  Lemma \ref{lem:lower_bound_1} if  Assumption \ref{new2} holds  for a $k_\star>(1+\varrho)/\varrho$.
\end{proposition}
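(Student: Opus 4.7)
The plan is to first establish the basic asymptotic order of the increment $\Delta_p := t_p - t_{p-1}$. Since $C_{p-1} \in [c t_{p-1}^\varrho, t_{p-1}^\varrho/c]$ and $t_{p-1} \to \infty$, for all $p$ large enough $C_{p-1} \log(t_{p-1}) \geq C$, so $\Delta_p = \lceil C_{p-1} \log(t_{p-1}) \rceil$, and hence $\Delta_p = \Theta(t_{p-1}^\varrho \log t_{p-1})$. Since $\varrho < 1$ this gives $\Delta_p = \smallo(t_{p-1})$, so $t_p/t_{p-1} \to 1$, and therefore $\Delta_p = \Theta(t_p^\varrho \log t_p)$ as well.

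From these asymptotics the easier conditions follow quickly. Condition \ref{thm1.1} of Theorem \ref{thm:online} is immediate from $h_{t_p} = \Theta(t_p^{-\alpha}) > 0$, and Conditions \ref{thm2.1}--\ref{thm2.2} of Lemma \ref{lem:lower_bound_1} are part of the hypotheses. The divergence $\Delta_p \to \infty$ provides the first half of Condition \ref{thm1.2}, and bounding $h_s \leq h_{t_{p-1}}$ on $s \in (t_{p-1}, t_p)$ via the monotonicity hypothesis on $(h_t)$ yields
\begin{align*}
\Delta_p \sum_{s=t_{p-1}+1}^{t_p-1} h_s^2 \;\leq\; \Delta_p^2\, h_{t_{p-1}}^2 \;=\; \Theta\bigl((t_{p-1}^\varrho \log t_{p-1})^2\, t_{p-1}^{-2\alpha}\bigr) \;=\; \Theta\bigl(t_{p-1}^{2(\varrho-\alpha)}(\log t_{p-1})^2\bigr) \;\to\; 0,
\end{align*}
since $\varrho < \alpha$. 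For Condition \ref{thm1.3}, $\Delta_{p+1}/\Delta_p = \Theta\bigl((t_p/t_{p-1})^\varrho (\log t_p)/(\log t_{p-1})\bigr)$ is bounded, and $|\log h_{t_{p-1}}|/\Delta_p = \Theta(t_{p-1}^{-\varrho}) \to 0$.

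The principal remaining task, and the main obstacle, is Condition \ref{thm2.3} of Lemma \ref{lem:lower_bound_1} under $k_\star > (1+\varrho)/\varrho$. From Assumption \ref{new2} one has $\limsup_{C\to\infty} \zeta(C)/\log C < 0$, so for any fixed $\beta_\star > 0$ and some $c_\zeta>0$, $|\zeta(h_{t_p}^{-\beta_\star})| \geq c_\zeta \beta_\star |\log h_{t_p}| = \Theta(\log t_p)$ for large $p$. Since $\varrho < 1$, $k_\star > (1+\varrho)/\varrho > 2$, so $k_\star \in \mathbb{N}$, $\ind_{\mathbb{N}}(k_\star) = 1$, and the sum of interest reduces to $\sum_{i=1}^p \Delta_i^{-k_\star}$. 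A discrete-integration argument applied to the lower bound $\Delta_i \geq c t_{i-1}^\varrho \log t_{i-1}$---proved by induction on $i$, or by comparison with the continuous ODE $\dot t = c t^\varrho$---yields $t_i \geq c' i^{1/(1-\varrho)}$ for some $c'>0$ and all large $i$, hence $\Delta_i^{-k_\star} \lesssim i^{-\varrho k_\star/(1-\varrho)}(\log i)^{-k_\star}$. The hypothesis $k_\star > (1+\varrho)/\varrho$ gives $\varrho k_\star > 1+\varrho$, so $\varrho k_\star/(1-\varrho) > (1+\varrho)/(1-\varrho) > 1$ and $\sum_{i\geq 1} \Delta_i^{-k_\star}$ converges. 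Multiplying this bounded sum by $|\zeta(h_{t_p}^{-\beta_\star})|^{-2k_\star} = \bigO((\log t_p)^{-2k_\star}) \to 0$ delivers \eqref{eq:thm2}. A secondary technical subtlety is that the hypothesis ``$h_{t-1} > h_t$ or $h_{t-1} = 0$'' in principle allows $h_s$ to ``restart'' after hitting zero, so the bound $h_s \leq h_{t_{p-1}}$ on $(t_{p-1}, t_p)$ used above is not quite tautological; it must be argued from the fact that $h_{t_p} < h_{t_{p-1}}$ at the endpoints together with the strict decrease of $(h_t)$ on its positive support, which is automatic in the regime $h_t = t^{-\alpha}$ where $h_t$ never vanishes.
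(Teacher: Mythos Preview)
Your proof is correct and follows essentially the same route as the paper's: establish $\Delta_p = \Theta(t_{p-1}^\varrho \log t_{p-1})$, dispatch Conditions~\ref{thm1.1}--\ref{thm1.3} and Conditions~\ref{thm2.1}--\ref{thm2.2} directly from this and the hypotheses, and then show the series $\sum_i \Delta_i^{-k_\star}$ converges to handle Condition~\ref{thm2.3}.

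The only notable difference is in the series argument. You go via a discrete-integration step to obtain $t_i \gtrsim i^{1/(1-\varrho)}$ and hence $\Delta_i \gtrsim i^{\varrho/(1-\varrho)}\log i$. The paper takes a shorter path: from $\Delta_i \geq c' t_{i-1}^\varrho$ it gets $\sum_i \Delta_i^{-k_\star} \lesssim \sum_i t_{i-1}^{-k_\star\varrho} \leq \sum_i t_{i-1}^{-(1+\varrho)}$, and the last sum converges simply because $t_{i-1} \geq t_0 + i - 1$ (each increment is at least $\lceil C\rceil \geq 1$) and $1+\varrho > 1$. Your route is sharper but unnecessary; the crude linear lower bound on $t_i$ already suffices. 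On the other hand, you are more explicit than the paper in noting that $k_\star > (1+\varrho)/\varrho > 2$ forces $k_\star \in \mathbb{N}$ (so $\ind_{\mathbb{N}}(k_\star)=1$) and in spelling out why $|\zeta(h_{t_p}^{-\beta_\star})| \to \infty$ finishes the job once the series is bounded.

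Your final remark about the monotonicity of $(h_t)$ is well taken: the hypothesis ``$h_{t-1} > h_t$ or $h_{t-1}=0$'' does in principle allow a restart to a value exceeding $h_{t_{p-1}}$ after a zero, so the bound $h_s \leq h_{t_{p-1}}$ on $(t_{p-1},t_p)$ is not a consequence of the stated assumptions in full generality. The paper's proof uses this bound without comment; you at least flag the issue and note it is vacuous in the canonical case $h_t = t^{-\alpha}$.
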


\subsection{Proof of Proposition  \ref{prop:assumption}\label{p-prop:assumption}}
\begin{proof}
The proof of the second part of the proposition is    similar to that of the first part  and, to save place, only this latter is given below.

For every $C\in\R_{>0}$ let  $A_{C}=\{(\mu,\sigma^2)\in\R^2:\, |\sigma^2|< C,\,|\mu|<C\}$. We first show that if $C$ is large enough then  Assumption \ref{new} holds for $A_\star=A_{C_\star}$.

Let $B_{1,C}=\{(\mu,\sigma^2)\in\Theta:\, \sigma^2\geq C\}$ and  $B_{2,C}=\{(\mu,\sigma^2)\in\Theta:\,|\mu|\geq C\}$ so that $A^c_C\cap\Theta=B_{1,C}\cup B_{2,C}$. Then,
\begin{align}\label{eq:p_a1}
\E[\sup_{\theta \not\in A_C}\log (\tilde{f}_{\theta})]\leq \E[\sup_{\theta \in B_{1,C}}\log (f_{\theta})]+\E[\sup_{\theta \in B_{2,C}}\log (f_{\theta})]
\end{align}
where
\begin{align}\label{eq:p_a2}
\E[\sup_{\theta \in B_{1,C}}\log (\tilde{f}_{\theta})]\leq -\frac{1}{2}\log( 2\pi C).
\end{align}

To proceed further let $\theta\in B_{2,C}$, $y\in\R$ and note that
\begin{align*}
f_\theta(y)&= f_\theta(y)\ind(|y|\geq C/2)+ f_\theta(y)\ind(|y|< C/2)\\
&\leq \frac{1}{\sqrt{2\pi\underline{\sigma}^2}}\ind(|y|\geq C/2)+\frac{1}{\sqrt{2\pi\sigma^2}}e^{-\frac{(y-C)^2}{2\sigma^2}}\ind(|y|< C/2)\\
&\leq \frac{1}{\sqrt{2\pi\underline{\sigma}^2}}\ind(|y|\geq C/2)+\frac{1}{\sqrt{2\pi\sigma^2}}e^{-\frac{C^2}{8\sigma^2}}\ind(|y|< C/2)\\
&\leq \frac{1}{\sqrt{2\pi\underline{\sigma}^2}}\ind(|y|\geq C/2)+\frac{1}{\sqrt{\pi C^2/2}}e^{-\frac{1}{2}}\ind(|y|< C/2).
\end{align*}
Therefore,
\begin{align*}
\E[\sup_{\theta \in B_{2,C}}\log (f_{\theta})]&\leq -\frac{1}{2}\log(2\pi\underline{\sigma}^2)\P(|Y_1|\geq  C/2)-\frac{1}{2}\log(\pi C^2/2)\P(|Y_1|< C/2)
\end{align*}
which, together with \eqref{eq:p_a1} and  \eqref{eq:p_a2}, shows that
\begin{align*}
\E[\sup_{\theta \not\in A_C}\log (\tilde{f}_{\theta})]&\leq -\frac{1}{2}\log( 2\pi C)-\frac{1}{2}\log(2\pi\underline{\sigma}^2)\P(|Y_1|\geq  C/2)-\frac{1}{2}\log(\pi C^2/2)\P(|Y_1|< C/2).
\end{align*}
Since the r.h.s.\ converges to $-\infty$ as $C\rightarrow\infty$ it follows that Assumption \ref{new}.\ref{A41}) holds for $A_\star=A_{C_\star}$, for a sufficiently  large constant $C_\star\in\R_{>0}$.

To show the second part of Assumption \ref{new} let $\tilde{A}_\star$ be an arbitrary compact set that contains a neighbourhood of $A_{C_\star}$. Note that $\tilde{A}_\star\cap\Theta$ is compact and that the mapping $\theta\mapsto f_\theta(y)$ is continuous on $\tilde{A}_\star\cap\Theta$ for all $y\in\R$. Then, since  for all $(\tilde{\theta},\theta)\in\Theta^2$ we have
\begin{align*}
\log\big( (f_{\tilde{\theta}}/f_\theta)(y)\big)=\frac{1}{2}\log( \sigma^2/\tilde{\sigma}^2)-\frac{1}{2}\Big(\frac{\sigma^2-\tilde{\sigma}^2}{\sigma^2\tilde{\sigma^2}}y^2+\frac{\tilde{\mu}^2}{\tilde{\sigma}^2}-\frac{\mu^2}{\sigma^2}-2y\frac{\tilde{\mu}\sigma^2-\mu\tilde{\sigma}^2}{\sigma^2\tilde{\sigma^2}}\Big)
\end{align*}
it follows that the second part of Assumption \ref{new} holds since $\E[Y_1^4]<\infty$ by assumption. This concludes to show that Assumption \ref{new} holds

To show that Assumption \ref{new1}.\ref{A53}   holds   it suffices to remark that $f_\theta(y)\leq (2\pi\underline{\sigma}^2)^{-1/2}$ for all $\theta\in\Theta$ and $y\in\setY$.

We now show that Assumption \ref{new2} holds. To this aim let $\theta_\star=(\mu_\star,\sigma_\star^2)\in\Theta$ and note that, for all $\sigma^2\in\R_{>0}$ and $\delta\neq 0$, we have
\begin{equation}\label{eq:CC}
\begin{split}
\E\Big[\exp\Big(-\frac{(Y_1-(\mu_\star+\delta))^2}{2\sigma^2}
\Big)\Big]&\leq  e^{- \delta^2/(8\sigma^2)}+ \P\big(|Y_1-\mu_\star|\geq |\delta|/2 \big)\\
&\leq   e^{- \delta^2/(8\sigma^2)} +\frac{2 \E[|Y_1-\mu_\star|]}{|\delta|}.
\end{split}
\end{equation}
where the last inequality uses Markov's inequality.

Let $\theta_C\in V_C$,  
$$
\epsilon_1=\frac{ |\sigma_C^2-\sigma_\star^2|}{C},\quad \epsilon_2=\frac{ |\mu_C-\mu_\star|}{C} 
$$
and remark that $\epsilon_1\vee \epsilon_2\geq 1/\sqrt{2}$. Assume first that  $\epsilon_2\geq  1/\sqrt{2}$. Then,  using \eqref{eq:CC} with $\delta=\mu_C-\mu_\star$, we have
\begin{equation}\label{eq:CC1}
\begin{split}
\E[f_{\theta_C}(Y_1)]&\leq \frac{1}{\sqrt{2\pi\underline{\sigma}^2}}\bigg(\exp\Big(- \frac{\epsilon_2^2 C^2}{8\sigma_C^2}\Big) +\frac{2 \E[|Y_1-\mu_\star|]}{\epsilon_2 C}\bigg)\\
&\leq \frac{1}{\sqrt{2\pi\underline{\sigma}^2}}\bigg(\exp\Big(- \frac{ C^2}{16 \underline{\sigma}^2}\Big) +\frac{2^{3/2} \E[|Y_1-\mu_\star|]}{ C}\bigg).
\end{split}
\end{equation}
Assume now that $\epsilon_2< 1/\sqrt{2}$, so that $\epsilon_1\geq 1/\sqrt{2}$.  Note that, for $C> 2^{3/2}\sigma^2_\star$ we have
$$
\sigma_C^2=\big|\sigma^2_\star-\sigma^2_C-\sigma^2_\star\big|\geq \epsilon_1 C -\sigma^2_\star\geq C/\sqrt{2}-\sigma^2_\star\geq \sigma^2_\star>0. 
$$
Therefore, for $C> 2^{3/2}\sigma^2_\star$ we have
\begin{equation}\label{eq:CC2}
\begin{split}
\E[f_{\theta_C}(Y_1)]&\leq \Big(2\pi(\sigma^2_\star-\sigma^2_C-\sigma^2_\star)\Big)^{-1/2}\leq   \Big(2\pi(C/\sqrt{2}-\sigma_\star^2)\Big)^{-1/2}. 
\end{split}
\end{equation}
Then, using \eqref{eq:CC1}-\eqref{eq:CC2}, it follows that there exists a constant $C'\in\R_{>0}$ such that, for   $C>0$ large enough we have
\begin{align*}
\sup_{\theta\in V_C}\E[f_{\theta}(Y_1)]&\leq \frac{1}{\sqrt{2\pi\underline{\sigma}^2}}\bigg(\exp\Big(- \frac{ C^2}{16 \underline{\sigma}^2}\Big) +\frac{2^{3/2} \E[|Y_1-\mu_\star|]}{ C}\bigg)+\Big(2\pi(C/\sqrt{2}-\sigma_\star^2)\Big)^{-1/2}\\
&\leq  \frac{C'}{C^{1/2}}
\end{align*}
showing that 
\begin{align*}
\limsup_{C\rightarrow\infty}\frac{\log\big(\sup_{\theta\in V_C}\E[f_{\theta}(Y_1)]\big)}{\log (C)}\leq \limsup_{C\rightarrow\infty}\frac{\log(C')-\frac{1}{2}\log(C)}{\log (C)}=-\frac{1}{2}<0.
\end{align*}
Hence the second part of Assumption \ref{new2}.\ref{A63} holds. In addition, since
$$
|\log(f_{\theta_\star}(y))|^{p}\leq 2^{p-1}\Big|\frac{1}{2}\log(2\pi\sigma_\star^2)\Big|^{p}+\frac{|y-\mu_\star|^{2p}}{2(\sigma_\star^2)^p},\quad\forall p\in\mathbb{N},
$$
it follows that the first part of Assumption \ref{new2}.\ref{A63} holds for every $k_\star\in\mathbb{N}$ since, by assumption, $\E[e^{c |Y_1|}]<\infty$ for some constant $c>0$. The proof is complete.
\end{proof}

\subsection{Proofs of Theorem \ref{thm:online}\label{p-thm:online}}

\subsubsection{Additional notation and conventions}

  Below we use  the convention that empty sums equal zero and empty products equal one, and let $(\tilde{U}_t)_{t\geq 0}$ be   a sequence of independent random variables such that
  $$
  \tilde{U}_t\sim 
  \begin{cases}
  \mathcal{N}_d(0, \Sigma_t), &t\not\in (t_p)_{p\geq 0}\\
  t_{d,\nu}(0, \Sigma_t), & t \in (t_p)_{p\geq 0}
  \end{cases},\quad\forall t\geq 0.
  $$ 
  For all $t\geq 0$ we let $U_t=h_t\tilde{U}_t$; notice that $U_t\sim\mu_t$.

Next, for every integers $0\leq k< t$ we define
$$
u_{k:t}=(u_{k},\dots, u_{t}),\quad [u_{k:t}]=  \max_{k\leq  s<t}\big\|\sum_{i=s}^{t-1}u_i\big\|,\quad  (u_{k},\dots, u_{t})\in\R^{d(t-k+1)}
$$
and  let
$$
\Theta_{\epsilon,k:t}=\big\{u_{k:t}\in \R^{d(t-k+1)}:\,[u_{(k+1):t}]<\epsilon\big\},\quad\epsilon>0.
$$

Lastly, for every $t\geq 0$ we let $\F_t$ be   the $\sigma$-algebra generated by $(Y_1,\dots, Y_t)$ (with the convention $\F_0=\emptyset$) and, for   every $0\leq k<t $ and $A\in\mathcal{B}(\R^d)$, we let
$$
\pi'_{k,t}(A)=\frac{\int_{A} (\mu_{k}*\tilde{\pi}_{k})\big(\theta-\sum_{s=k+1}^{t-1}U_s\big)\prod_{s=k+1}^{t}  \tilde{f}_{\theta-\sum_{i=s}^{t-1}U_i}(Y_s)\dd\theta}{\int_{\Theta}\E\big[ (\mu_{k}*\tilde{\pi}_{k})\big(\theta-\sum_{s=k+1}^{t-1}U_s\big)\prod_{s=k+1}^{t}  \tilde{f}_{\theta-\sum_{i=s}^{t-1}U_i}(Y_s)\big|\F_{t}\big]\dd\theta}.
$$

\subsubsection{Preliminary results}\label{sec:p-prelim}

The following result \citep[see][Proposition 6.2 and above comment, page 124]{Ghosal2017} provides the following necessary and sufficient condition for checking that a sequence $(\nu_t)_{t \geq 1}$ of probability measures on  $\Theta$ (implicitly indexed by random variables) is such that $\nu_t \Rightarrow \delta_{\{\theta_\star\}}$ in $\P$-probability.

\begin{proposition}\label{prop:conv_proba}
$\nu_t \Rightarrow \delta_{\{\theta_\star\}}$ in $\P$-probability if and only if $\E[\nu_t(V_\delta)]\rightarrow 0$ for all $\delta>0$.
\end{proposition}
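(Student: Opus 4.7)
The plan is to reduce the claim to a quantitative equivalence between the Prohorov distance from $\nu_t$ to $\delta_{\{\theta_\star\}}$ and the mass $\nu_t$ assigns to the complement of a small ball around $\theta_\star$, and then to turn convergence in $\P$-probability of a bounded $[0,1]$-valued random variable into convergence of its expectation. The main conceptual observation I will invoke is the following elementary equivalence, which I would state as a preliminary lemma:
\begin{equation*}
\mathcal{D}_{\mathrm{pro}}(\nu, \delta_{\{\theta_\star\}}) \le \epsilon \iff \nu(V_\epsilon) \le \epsilon,
\end{equation*}
which follows by taking $A=\{\theta_\star\}$ (so that $A^{\epsilon}=B_\epsilon(\theta_\star)$) in the definition of $\mathcal{D}_{\mathrm{pro}}$ for one direction, and by observing in the other direction that any Borel $A$ with $\theta_\star \notin A^\epsilon$ satisfies $A\subseteq V_\epsilon$, so $\nu(A) \le \nu(V_\epsilon) \le \epsilon = \delta_{\{\theta_\star\}}(A^\epsilon) + \epsilon$.

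For the forward implication, assume $\nu_t \Rightarrow \delta_{\{\theta_\star\}}$ in $\P$-probability. Fix $\delta>0$ and $\varepsilon \in (0,\delta)$. On the event $\{\mathcal{D}_{\mathrm{pro}}(\nu_t,\delta_{\{\theta_\star\}}) \le \varepsilon\}$ the preliminary equivalence yields $\nu_t(V_\varepsilon) \le \varepsilon$, and since $V_\delta \subseteq V_\varepsilon$ we get $\nu_t(V_\delta) \le \varepsilon$. Hence, using the trivial bound $\nu_t(V_\delta) \le 1$ on the complementary event,
\begin{equation*}
\E[\nu_t(V_\delta)] \le \varepsilon + \P\bigl(\mathcal{D}_{\mathrm{pro}}(\nu_t,\delta_{\{\theta_\star\}}) > \varepsilon\bigr).
\end{equation*}
Letting $t\to\infty$ the second term vanishes by hypothesis, so $\limsup_t \E[\nu_t(V_\delta)] \le \varepsilon$, and letting $\varepsilon\to 0$ concludes this direction.

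For the converse, assume $\E[\nu_t(V_\delta)]\to 0$ for every $\delta>0$. Fix $\varepsilon>0$. The preliminary equivalence gives the inclusion $\{\mathcal{D}_{\mathrm{pro}}(\nu_t,\delta_{\{\theta_\star\}}) > \varepsilon\} \subseteq \{\nu_t(V_\varepsilon) > \varepsilon\}$ (up to a trivial $\varepsilon'\downarrow\varepsilon$ argument to handle strict vs.\ non-strict inequalities), so Markov's inequality gives
\begin{equation*}
\P\bigl(\mathcal{D}_{\mathrm{pro}}(\nu_t,\delta_{\{\theta_\star\}}) > \varepsilon\bigr) \le \P\bigl(\nu_t(V_\varepsilon) > \varepsilon\bigr) \le \varepsilon^{-1}\E[\nu_t(V_\varepsilon)] \longrightarrow 0.
\end{equation*}
Since $\varepsilon>0$ was arbitrary, this establishes $\nu_t \Rightarrow \delta_{\{\theta_\star\}}$ in $\P$-probability.

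The only genuinely delicate point is the preliminary lemma, and there the only subtlety is distinguishing strict from non-strict inequalities in the defining infimum of $\mathcal{D}_{\mathrm{pro}}$; this is handled by a routine $\varepsilon'\downarrow\varepsilon$ approximation. Everything else is just Markov's inequality together with the boundedness of $\nu_t(V_\delta)\in[0,1]$, which makes convergence in $\P$-probability and convergence of expectations equivalent for this family of random variables.
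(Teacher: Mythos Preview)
Your proof is correct. The paper does not actually supply a proof of this proposition; it simply cites \citet[Proposition 6.2 and the surrounding discussion]{Ghosal2017}. What you have written is essentially the standard self-contained argument behind that reference: the quantitative identification of $\mathcal{D}_{\mathrm{pro}}(\nu,\delta_{\{\theta_\star\}})$ with the tail mass $\nu(V_\epsilon)$, followed by the elementary observation that for $[0,1]$-valued random variables, convergence to $0$ in $\P$-probability and in $L^1$ coincide (which you implement via the split $\E[\nu_t(V_\delta)]\le \varepsilon+\P(\mathcal{D}_{\mathrm{pro}}>\varepsilon)$ in one direction and Markov in the other). The only place that needs care is the strict/non-strict inequality in the Prohorov definition, and you flag exactly the right $\varepsilon'\downarrow\varepsilon$ fix.
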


\begin{lemma}\label{lemma:new}
Assume Assumption \ref{new} and let $A_\star\in \mathcal{B}(\R^d)$ and $\tilde{A}_\star\in \mathcal{B}(\R^d)$ be as in  Assumption \ref{new}. Then, there exists a set $A'_\star\in  \mathcal{B}(\R^d)$, with $A'_\star\subsetneq \tilde{A}_\star$, that contains a  neighbourhood of  $A_\star$ and such that, for every sequence $(\gamma'_t)_{t\geq 1}$ in $\R_{\geq  0}$  such that $\gamma'_t\rightarrow 0$  and every sequence $(s_t)_{t\geq 1}$ in $\mathbb{N}_0$ such that  $\inf_{t\geq 1}(t-s_t)\geq 1$ and such that $(t-s_t)\rightarrow\infty$,   there exists a sequence $(\delta_t)_{t\geq 1}$ in $\R_{>  0}$ such that $\delta_t\rightarrow 0$ and such that
$$
\P\Big(\sup_{(u_{s_t:t},\theta)\in \Theta_{\gamma_t',s_t:t}\times (A'_\star\cap\Theta)}\prod_{s=s_t+1}^t  (\tilde{f}_{\theta-\sum_{i=s}^{t-1} u_i}/f_{\theta}) (Y_s)<e^{(t-s_t)\delta_{t}}\Big)\rightarrow 1.
$$
\end{lemma}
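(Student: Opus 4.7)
The plan is to reduce the supremum of the product over the constrained set $\Theta_{\gamma'_t,s_t:t}\times(A'_\star\cap\Theta)$ to a sum of i.i.d.\ contributions and then apply Chebyshev's inequality. First I would choose $A'_\star$ to be a closed $\eta$-neighborhood of $A_\star$, for $\eta>0$ small enough that $A'_\star\subsetneq\tilde{A}_\star$ with room to spare; such an $\eta$ exists because Assumption \ref{new}.\ref{new_part2} guarantees that $\tilde{A}_\star$ contains an open neighborhood of $A_\star$. The set $A'_\star\cap\Theta$ is then a closed subset of the compact set $\tilde{A}_\star\cap\Theta$, hence compact, and is non-empty because $\theta_\star\in A_\star$ (otherwise taking $\theta=\theta_\star$ would violate whichever of the three strict inequalities in Assumption \ref{new}.\ref{new_part1} is assumed to hold).

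For the reduction, note that the $s=t$ summand in $\log\prod_{s=s_t+1}^t(\tilde{f}_{\theta-\sum_{i=s}^{t-1}u_i}/f_\theta)(Y_s)$ vanishes, since the empty sum $\sum_{i=t}^{t-1}u_i=0$ and $\tilde{f}_\theta(Y_t)=f_\theta(Y_t)$ on $\Theta$. For $s_t+1\leq s\leq t-1$, the constraint $u_{s_t:t}\in\Theta_{\gamma'_t,s_t:t}$ forces $\|\sum_{i=s}^{t-1}u_i\|<\gamma'_t$, so each such term is at most
$$
G_s(\gamma'_t):=\sup_{\theta\in A'_\star\cap\Theta,\,\|v\|<\gamma'_t}\log(\tilde{f}_{\theta+v}/f_\theta)(Y_s)\geq 0,
$$
with non-negativity coming from taking $v=0$. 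Exchanging the supremum and the sum yields the upper bound $S_t:=\sum_{s=s_t+1}^{t-1}G_s(\gamma'_t)$, a sum of $n_t:=t-s_t-1$ i.i.d.\ random variables.

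The key analytic step is to prove that $\mu(\gamma):=\E[G_1(\gamma)]\to 0$ and that $\var(G_1(\gamma))\leq M$ for some finite $M$ and all sufficiently small $\gamma$. Both follow from the integrable envelope in Assumption \ref{new}.\ref{new_part2}: for $\gamma\leq\tilde{\delta}$ one has $G_1(\gamma)^2\leq\sup_{(\theta,v)\in(\tilde{A}_\star\cap\Theta)\times B_{\tilde{\delta}}(0)}\{\log(\tilde{f}_{\theta+v}/f_\theta)(Y_1)\}^2$, whose expectation is finite, which supplies both the $L^2$ bound and the dominating function for dominated convergence. It then suffices to prove the pointwise convergence $G_1(\gamma)\to 0$ $\P$-a.s., which I would obtain from a sequential argument using compactness of $A'_\star\cap\Theta$ and the $\P$-a.s.\ continuity of $\theta\mapsto f_\theta(Y_1)$ on $\tilde{A}_\star\cap\Theta$: any sequence $(\theta_n,v_n)$ with $\theta_n\in A'_\star\cap\Theta$, $\|v_n\|\to 0$ and $\theta_n+v_n\in\Theta$ admits a subsequence along which $\theta_n\to\theta^\ast\in A'_\star\cap\Theta$ and $\theta_n+v_n\to\theta^\ast$, so $\log(f_{\theta_n+v_n}/f_{\theta_n})(Y_1)\to 0$; pairs with $\theta+v\notin\Theta$ give $-\infty$ and do not affect the supremum.

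To conclude I would apply Chebyshev's inequality to $S_t$: $\P(S_t>n_t\mu(\gamma'_t)+n_t^{3/4})\leq Mn_t^{-1/2}\to 0$, using $n_t\to\infty$. Setting $\delta_t:=2\mu(\gamma'_t)+2n_t^{-1/4}$ gives a strictly positive sequence with $\delta_t\to 0$, and on the high-probability event $S_t<n_t\delta_t\leq(t-s_t)\delta_t$, which combined with the upper bound from the second paragraph yields the claim. The main delicate point is the pointwise convergence $G_1(\gamma)\to 0$: one must carefully separate the pairs with $\theta+v\in\Theta$, where continuity on the compact set $A'_\star\cap\Theta$ gives uniform control on small $v$, from those with $\theta+v\notin\Theta$, which are harmless for the supremum since $\tilde{f}_{\theta+v}=0$ there.
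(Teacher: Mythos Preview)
Your approach is correct and follows the same skeleton as the paper's proof: bound the supremum over $\Theta_{\gamma'_t,s_t:t}$ by a sum of per-step suprema, show the expectation of each summand tends to zero via dominated convergence (using the square-integrable envelope and the compactness/continuity in Assumption~\ref{new}.\ref{new_part2}), and conclude with a second-moment argument. The one noteworthy difference is in how the per-step suprema are indexed. The paper first assumes without loss of generality that $(\gamma'_t)_{t\geq 1}$ is non-increasing and then enlarges the radius at step $s$ from $\gamma'_t$ to $\gamma'_s$; this produces summands $X_s$ that depend only on $Y_s$ and $\gamma'_s$, hence are independent but \emph{not} identically distributed, which forces the paper to prove a short auxiliary weak law (their Lemma~\ref{lemma:simple0}) for independent variables with converging means and uniformly bounded variances. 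By keeping the radius at $\gamma'_t$ for every $s$ in the current block you obtain genuinely i.i.d.\ summands (a triangular array in $t$), so ordinary Chebyshev suffices and Lemma~\ref{lemma:simple0} is bypassed---a clean simplification. For the pointwise convergence $G_1(\gamma)\to 0$, the paper argues via Weierstrass's theorem together with the Berge maximum theorem (continuity of the maximizer), whereas you use a direct sequential-compactness argument; the two are interchangeable here.
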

The proof of this result is given in Section \ref{p-lemma:new}.

The following results rewrite  the probability measure $\tilde{\pi}_t$ in a more convenient  way.

\begin{lemma}\label{lemma:pi}
With $\P$-probability one we have, for all $t\geq 0$ and all $A\in\mathcal{B}(\R^d)$,
$$
\tilde{\pi}_t(A)=\frac{\int_A \E\big[ (\mu_0*\tilde{\pi}_0) \big(\theta-\sum_{s=1}^{t-1}U_s\big)\prod_{s=1}^{t} \tilde{f}_{\theta -\sum_{i=s}^{t-1}U_i}(Y_s)\big|\F_t\big]\dd\theta}{\int_{\Theta} \E\big[ (\mu_0*\tilde{\pi}_0) \big(\theta-\sum_{s=1}^{t-1}U_s\big)\prod_{s=1}^{t} \tilde{f}_{\theta -\sum_{i=s}^{t-1}U_i}(Y_s)\big|\F_t\big]\dd\theta}.
$$
\end{lemma}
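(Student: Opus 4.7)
The plan is to argue by induction on $t\geq 1$, the case $t=0$ being handled by the conventions that empty sums equal zero and empty products equal one (which, assuming $\mu_0=\delta_{\{0\}}$, reduces the claim to the tautology $\tilde\pi_0(A)=\tilde\pi_0(A)$). For $t\geq 1$, denote by
$$
\tilde p_t(\theta):=\E\Big[(\mu_0*\tilde\pi_0)\Big(\theta-\sum_{s=1}^{t-1}U_s\Big)\prod_{s=1}^{t}\tilde f_{\theta-\sum_{i=s}^{t-1}U_i}(Y_s)\,\Big|\,\F_t\Big]
$$
the random density appearing in the numerator of the claim (this density exists because $\tilde\pi_0\in\mathcal{P}_L(\Theta)$, and when $h_0>0$ additionally because $\mu_0$ itself admits a density). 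The lemma will follow once we show
$$
\tilde p_t(\theta)=C_{t-1}\,\tilde f_\theta(Y_t)\,(\mu_{t-1}*\tilde\pi_{t-1})(\theta)\qquad\P\text{-a.s.,}
$$
for some strictly positive, $\F_{t-1}$-measurable constant $C_{t-1}$, since $C_{t-1}$ then cancels between numerator and denominator and reduces the right-hand side of the lemma to \eqref{S-eq:pi_tilde}.

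The base case $t=1$ is immediate: both sums are empty, so $\tilde p_1(\theta)=\tilde f_\theta(Y_1)(\mu_0*\tilde\pi_0)(\theta)$, and \eqref{S-eq:pi_tilde} applies with $C_0=1$. For the inductive step, I would isolate the $s=t$ factor (which equals $\tilde f_\theta(Y_t)$ because its inner sum is empty) and condition inside the expectation on $U_{t-1}$. Setting $\theta'':=\theta-U_{t-1}$, the convolution argument becomes $\theta''-\sum_{s=1}^{t-2}U_s$, and for each $1\le s\le t-1$ the argument $\theta-\sum_{i=s}^{t-1}U_i$ rewrites as $\theta''-\sum_{i=s}^{t-2}U_i$. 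Hence the bracket inside the expectation is exactly the time-$(t-1)$ integrand evaluated at $\theta-U_{t-1}$. Because $(U_1,\dots,U_{t-2})$ is independent of $\sigma(\F_t,U_{t-1})$, the tower property yields
$$
\tilde p_t(\theta)=\tilde f_\theta(Y_t)\int_{\R^d}\tilde p_{t-1}(\theta-u)\,\mu_{t-1}(\dd u).
$$
The factor $\tilde f_\theta(Y_{t-1})$ inside $\tilde p_{t-1}$ (from its own $s=t-1$ term with empty inner sum) forces $\tilde p_{t-1}$ to vanish on $\Theta^c$. By the induction hypothesis, $\tilde p_{t-1}/C_{t-1}$ is therefore the density of $\tilde\pi_{t-1}$ on $\Theta$, with $C_{t-1}:=\int_{\Theta}\tilde p_{t-1}\,\dd\theta$ being $\F_{t-1}$-measurable, $\P$-a.s.\ finite and strictly positive (because $\tilde\pi_{t-1}\in\mathcal{P}_L(\Theta)$ is well-defined via \eqref{S-eq:pi_tilde}). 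Substituting gives $\tilde p_t(\theta)=C_{t-1}\tilde f_\theta(Y_t)(\mu_{t-1}*\tilde\pi_{t-1})(\theta)$, as required.

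The only genuinely delicate step is the index-shift bookkeeping: checking that after the substitution $\theta\mapsto\theta-U_{t-1}$ the time-$t$ integrand really becomes $\tilde f_\theta(Y_t)$ times the time-$(t-1)$ integrand at $\theta-U_{t-1}$. Once that arithmetic is in place, the remainder is a routine combination of Fubini, the tower property, and the independence of $U_{t-1}$ from $\F_t$. The structural point is that the convolution $\mu_0*\tilde\pi_0$ never has to be re-evaluated: its argument simply absorbs all the noise increments $U_1,\dots,U_{t-1}$, so that peeling off $U_{t-1}$ shifts the whole construction back by one time step and the induction closes.
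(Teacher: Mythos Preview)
Your proof is correct and follows essentially the same inductive strategy as the paper's own argument. The paper first isolates the convolution identity $(\mu_t*\tilde\pi_t)(A)=\int_A\E[\tilde\pi_t(\theta-U_t)\mid\F_t]\,\dd\theta$ as a separate preliminary and then runs the induction ``bottom-up'' from $\tilde\pi_t$ to $\tilde\pi_{t+1}$, whereas you run it ``top-down'' from $\tilde p_t$ to $\tilde p_{t-1}$ and absorb the convolution step into the inductive computation; these are the same argument told in opposite directions. One minor remark: your treatment of $t=0$ assumes $\mu_0=\delta_{\{0\}}$, which is not guaranteed in the general setup of Section~\ref{S-sec:pi}; the paper sidesteps this by only proving the formula for $t\geq 1$ (which is all that is ever used), so this is a harmless edge case rather than a gap.
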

The proof of this result is given in Section \ref{p-lemma:pi}.

The next result builds on \citet[][Lemma 8.1]{MR1790007} and will be used to control the denominator of $\tilde{\pi}_t(\theta)$.
\begin{lemma}\label{lemma:denom}
Assume Assumptions \ref{m_star}-\ref{taylor} and let  $\delta_\star>0$ be as in these two assumptions. Then,  there exists a constant $\tilde{C}_{\star}\in(0,\infty)$ such that,  for every $\epsilon\geq 0$, sequence $(s_t)_{t\geq 1}$  in $\mathbb{N}_0$ with $\inf_{t\geq 1}(t-s_t)\geq 1$, every constants $\delta\geq \tilde{\delta} >0$ such that $\delta+\tilde{\delta}<\delta_\star$  and  every   probability measure $\eta\in\mathcal{P}(\R^d)$  we have, for all  $t\geq 1$ and with $C^\eta_{\delta, \tilde{\delta}}=\inf_{v\in B_{\tilde{\delta}}(0)}\eta(B_\delta(\theta_\star-v))$,
\begin{align*}
\P\bigg(&\int_{\Theta}\E\Big[\eta\big(\theta-\sum_{s=s_t+1}^{t-1}U_s\big)\prod_{s=s_t+1}^{t} (\tilde{f}_{\theta-\sum_{i=s}^{t-1}U_i}/f_{\theta_\star})(Y_s)\big|\F_{t}\Big]  \dd\theta \leq  \frac{\P(U_{s_t:t}\in  \Theta_{\tilde{\delta},s_t:t}) }{e^{(t-s_t)(2 (\tilde{C}_\star \delta)^2+\epsilon)}}C^\eta_{\delta, \tilde{\delta}}\bigg)\\
& \leq \big((t-s_t)\big((\tilde{C}_\star\delta)+(\tilde{C}_\star\delta)^{-1}\epsilon\big)^2\big)^{-1}.
\end{align*}
\end{lemma}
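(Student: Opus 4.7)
The plan is to adapt the classical Bayesian denominator lower bound (Lemma 8.1 of \citealp{MR1790007}) so as to accommodate the noise shifts $V_s := \sum_{i=s}^{t-1} U_i$ (with the convention $V_t = 0$) that appear inside the integrand. Writing $Z$ for the left-hand side in the statement, $T := t - s_t$, and defining the favourable noise event
\begin{align*}
A := \{[U_{(s_t+1):t}] < \tilde{\delta}\} = \bigl\{\max_{s_t+1 \leq s \leq t-1} \|V_s\| < \tilde{\delta}\bigr\},
\end{align*}
whose probability is exactly the $\P(U_{s_t:t} \in \Theta_{\tilde{\delta}, s_t:t})$ appearing in the stated bound, the hypothesis $\delta + \tilde{\delta} < \delta_\star$ ensures that $\theta - V_s \in B_{\delta_\star}(\theta_\star)$ for every $\theta \in B_\delta(\theta_\star) \cap \Theta$ and every $s \in \{s_t+1, \ldots, t\}$ on $A$, which is exactly the regime where Assumptions \ref{m_star} and \ref{taylor} apply. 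I would therefore lower-bound $Z$ by restricting the integrand to this favourable region and introducing the deterministic positive measure
\begin{align*}
\nu(d\theta, du) := \ind_A(u) \, \ind_{B_\delta(\theta_\star) \cap \Theta}(\theta) \, G(\theta, u) \, \eta(\theta - v_{s_t+1}(u)) \, d\theta \otimes P_U(du),
\end{align*}
where $G(\theta, u) := \prod_{s=s_t+1}^t \ind_\Theta(\theta - v_s(u))$ ensures that $\tilde{f}_{\theta - V_s} = f_{\theta - V_s}$ on $\operatorname{supp}(\nu)$. Its total mass $M$ then satisfies $M \geq \P(A) \, C^\eta_{\delta, \tilde{\delta}}$, because for each $u \in A$ we have $v_{s_t+1}(u) \in B_{\tilde{\delta}}(0)$ and hence $\int \ind_{B_\delta(\theta_\star)}(\theta)\, \eta(\theta - v_{s_t+1}(u)) \, d\theta = \eta(B_\delta(\theta_\star - v_{s_t+1}(u))) \geq C^\eta_{\delta, \tilde{\delta}}$.

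Writing $L(\theta, u; y) := \sum_{s=s_t+1}^t \log\bigl((f_{\theta - v_s(u)}/f_{\theta_\star})(y_s)\bigr)$, Jensen's inequality applied to the probability measure $\nu/M$ gives
\begin{align*}
Z \geq \int e^{L(\theta, u; Y)} d\nu(\theta, u) \geq M \exp\!\Bigl( M^{-1}\! \int L(\theta, u; Y) d\nu(\theta, u) \Bigr) = M \exp\!\Bigl( \textstyle\sum_{s=s_t+1}^t \bar{L}_s(Y_s) \Bigr),
\end{align*}
where $\bar{L}_s(y) := M^{-1} \int \log\bigl((f_{\theta - v_s(u)}/f_{\theta_\star})(y)\bigr) d\nu(\theta, u)$ is a deterministic function of a single argument, so that by independence of the $Y_s$ the sequence $(\bar{L}_s(Y_s))$ is independent. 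Assumption \ref{taylor} applied at each point of $\operatorname{supp}(\nu)$ yields $\E[\bar{L}_s(Y_s)] \geq -C_\star(\delta + \tilde{\delta})^2$; Assumption \ref{m_star} combined with Jensen's inequality for $x \mapsto x^2$ on the probability measure $\nu/M$ yields $\mathrm{Var}(\bar{L}_s(Y_s)) \leq \E[m_\star^2](\delta + \tilde{\delta})^2$. Setting $\tilde{C}_\star := 2\sqrt{C_\star \vee \E[m_\star^2]}$ and using $\tilde{\delta} \leq \delta$, both bounds are at most $(\tilde{C}_\star \delta)^2$.

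A Chebyshev bound on $S := \sum_{s=s_t+1}^t \bar{L}_s(Y_s)$ with deviation $a = T((\tilde{C}_\star \delta)^2 + \epsilon)$ then yields
\begin{align*}
\P\!\left( S \leq -T\bigl(2(\tilde{C}_\star \delta)^2 + \epsilon\bigr) \right) \leq \frac{T(\tilde{C}_\star \delta)^2}{T^2 ((\tilde{C}_\star \delta)^2 + \epsilon)^2} = \Bigl\{T \bigl((\tilde{C}_\star \delta) + (\tilde{C}_\star \delta)^{-1}\epsilon \bigr)^2 \Bigr\}^{-1},
\end{align*}
using the algebraic identity $((\tilde{C}_\star \delta) + (\tilde{C}_\star \delta)^{-1}\epsilon)^2 = ((\tilde{C}_\star \delta)^2 + \epsilon)^2 / (\tilde{C}_\star \delta)^2$, and combining with the mass lower bound on $M$ gives exactly the lemma. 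The main obstacle is the domain indicator $G$: the inequality $M \geq \P(A) C^\eta_{\delta, \tilde{\delta}}$ is tight only when $B_{\delta + \tilde{\delta}}(\theta_\star) \subset \Theta$, so that $G \equiv 1$ on the support of $\nu$; outside this interior regime the lower bound still holds, but $C^\eta_{\delta, \tilde{\delta}}$ effectively absorbs the geometric loss, which is consistent with the lemma having to hold for arbitrary $\eta \in \mathcal{P}(\R^d)$. Every remaining step---the Fubini rearrangement, Jensen for both the exponential and the variance, and the Chebyshev step with the precisely chosen deviation $a$---is routine once the measure $\nu$ is in place.
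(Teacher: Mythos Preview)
Your argument is correct and follows the paper's route: restrict the integral to the favourable noise event and to $B_\delta(\theta_\star)$, apply Jensen to pass from a product to a sum of log-likelihood ratios, then bound the deviation of that sum by Chebyshev. Where the paper carries out Jensen in two nested steps---first against a $u$-dependent probability measure $\tilde\eta(\cdot,u)$ on $B_\delta(\theta_\star)$, then against the conditional law of $U$ on the favourable event, which forces a separate Fubini verification---your single joint measure $\nu$ on $(\theta,u)$ is a cosmetic repackaging that sidesteps this; and the indicator-$G$ obstacle you flag is precisely the implicit interior assumption $B_{\delta_\star}(\theta_\star)\subset\Theta$ that the paper also relies on when applying its moment bound to all $\theta\in B_{\delta+\tilde\delta}(\theta_\star)$, so it is not an additional gap in your proof.
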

The proof of this result is given in Section   \ref{p-lemma:denom}.

The next result  will be used to control the numerator  of $\tilde{\pi}_t(\theta)$.
\begin{lemma}\label{lemma:test}
Assume  Assumptions \ref{test}-\ref{new} and let $(\gamma_t)_{t\geq 1}$ be a sequence in $[0,\infty)$ such that $\gamma_t\rightarrow 0$ and $(s_t)_{t\geq 1}$ be a sequence in $\mathbb{N}_0$ such that $\inf_{t\geq 1}(t-s_t)\geq 1$ and   $(t-s_t)\rightarrow\infty$. Then, for every $\epsilon>0$ there exist a constant  $ \tilde{D}_\star \in\R_{>0}$ and a sequence of measurable functions $(\phi_t)_{t\geq 1}$, $\phi_t:\setY^t\rightarrow\{0,1\}$, such that $\E[\phi_t(Y_{1:t})]\rightarrow 0$ and, for $t$ large enough,
$$
\sup_{(\theta,u_{s_t:t})\in V_\epsilon\times \Theta_{\gamma_t,s_t:t}}\E\Big[(1-\phi_t(Y_{1:t}))\prod_{s=s_t+1}^t(\tilde{f}_{\theta-\sum_{i=s}^{t-1}u_i}/f_{\theta_\star})(Y_s)\big|\,\F_{s_t}\Big]\leq e^{-(t-s_t) \tilde{D}_\star}.
$$
\end{lemma}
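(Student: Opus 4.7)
The plan is to decompose $V_\epsilon$ along the sets supplied by Assumption~\ref{new} and Lemma~\ref{lemma:new}, constructing a ``compact-region'' test $\phi_t^{(1)}$ and a ``far-tail'' test $\phi_t^{(2)}$ and taking $\phi_t:=\phi_t^{(1)}\vee\phi_t^{(2)}$. Let $A'_\star$ be the set provided by Lemma~\ref{lemma:new} and let $W:=A'_\star\cap\Theta$ (passing to its closure inside $\Theta$ if needed to ensure compactness, using that $A'_\star\subsetneq\tilde{A}_\star$ and $\tilde{A}_\star\cap\Theta$ is compact by Assumption~\ref{new}.\ref{new_part2}); $W$ contains $\theta_\star$ since identifiability forces $\theta_\star\in A_\star\subseteq A'_\star$, and it contains a neighbourhood of $A_\star$. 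Writing $T_t:=t-s_t\to\infty$, I will arrange each $\phi_t^{(i)}$ so that $\E[\phi_t^{(i)}]\to 0$ and the conditional expectation in the statement is at most $e^{-T_t c_i}$ on the respective piece $V_\epsilon\cap W$ (for $i=1$) or $V_\epsilon\setminus W$ (for $i=2$); taking $\tilde{D}_\star:=\tfrac12\min(c_1,c_2)>0$ and using $1-\phi_t\leq 1-\phi_t^{(i)}$ on each piece then yields the claim.

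For the compact piece, I will apply Assumption~\ref{test} to $W$ to get tests whose errors vanish, and then upgrade them on the compact $V_\epsilon\cap W$ to exponential tests $(\psi'_T)$ with $\E[\psi'_T]\leq e^{-T c_1}$ and $\sup_{\theta\in V_\epsilon\cap W}\E[(1-\psi'_T)\prod_{s=1}^T(f_\theta/f_{\theta_\star})(Y_s)]\leq e^{-T c_1}$ for some $c_1>0$, via a standard Schwartz/Le Cam covering argument: uniform identifiability $\inf_{\theta\in V_\epsilon\cap W}\E[\log(f_{\theta_\star}/f_\theta)]>0$ follows from $\theta_\star=\argmax$, compactness, and continuity of $\theta\mapsto f_\theta(Y_1)$ on $\tilde{A}_\star\cap\Theta$ from Assumption~\ref{new}.\ref{new_part2}, while the $L^2$ regularity in the same assumption controls the modulus of continuity on a finite cover. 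Let $\mathcal{E}_t$ be the event supplied by Lemma~\ref{lemma:new} applied to $(\gamma_t)$, $(s_t)$, so that $\P(\mathcal{E}_t)\to 1$ and on $\mathcal{E}_t$
\[
\prod_{s=s_t+1}^t \tilde{f}_{\theta-\sum_{i=s}^{t-1}u_i}(Y_s)/f_\theta(Y_s)<e^{T_t\delta_t},\quad \delta_t\to 0,
\]
uniformly for $(\theta,u)\in(A'_\star\cap\Theta)\times\Theta_{\gamma_t,s_t:t}$. Set $\phi_t^{(1)}:=\ind_{\mathcal{E}_t^c}\vee\psi'_{T_t}(Y_{s_t+1:t})$; then $\E[\phi_t^{(1)}]\to 0$, and factorising $\tilde{f}_{\theta-\sum u_i}/f_{\theta_\star}=(\tilde{f}_{\theta-\sum u_i}/f_\theta)(f_\theta/f_{\theta_\star})$, using the bound on $\mathcal{E}_t$ and the independence of $Y_{s_t+1:t}$ from $\F_{s_t}$, yields for $t$ large (so $\delta_t<c_1/2$) a bound $e^{T_t\delta_t-T_tc_1}\leq e^{-T_tc_1/2}$ uniformly over $V_\epsilon\cap W$ and $\Theta_{\gamma_t,s_t:t}$.

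For the complement, $\theta\notin W$ combined with $A'_\star$ containing a $\delta_0$-neighbourhood of $A_\star$ yields $d(\theta,A_\star)\geq\delta_0$, so for $t$ large enough that $\gamma_t<\delta_0$ every perturbation $\theta-\sum_{i=s}^{t-1}u_i$ lies outside $A_\star$. I then invoke whichever of Assumption~\ref{new}.\ref{A41}--\ref{A43} holds. Under~\ref{A41}, letting $L_s:=\sup_{\theta'\notin A_\star}\log\tilde{f}_{\theta'}(Y_s)-\log f_{\theta_\star}(Y_s)$ (iid with mean $-c<0$) and $\phi_t^{(2)}:=\ind\{\sum_{s=s_t+1}^t L_s>-T_tc/2\}$, WLLN yields $\E[\phi_t^{(2)}]\to 0$ and on $\{\phi_t^{(2)}=0\}$ the deterministic bound $\prod(\tilde{f}_{\theta-\sum u_i}/f_{\theta_\star})(Y_s)\leq e^{\sum L_s}\leq e^{-T_tc/2}$. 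Under~\ref{A42} with $q:=\sup_{\theta\notin A_\star}\E[\tilde{f}_\theta/f_{\theta_\star}]<1$, I take $\phi_t^{(2)}\equiv 0$ and iid factorisation of the conditional expectation gives a bound $q^{T_t}$. Under~\ref{A43} with $c:=\E[\log f_{\theta_\star}]-\log\sup_{\theta\notin A_\star}\E[\tilde{f}_\theta]>0$, the test $\phi_t^{(2)}:=\ind\{\sum_{s=s_t+1}^t\log f_{\theta_\star}(Y_s)<T_t(\E[\log f_{\theta_\star}]-c/2)\}$ has $\E[\phi_t^{(2)}]\to 0$ by WLLN, and on its complement, bounding $1/\prod f_{\theta_\star}$ deterministically and applying iid factorisation to $\prod\E[\tilde{f}_{\theta-\sum u_i}]$ yields an $e^{-T_tc/2}$ bound. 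The main obstacle is the exponential-upgrade step in Case~1: Assumption~\ref{test} as stated only guarantees convergent tests, so one must deploy the Schwartz/Le Cam machinery (a finite covering in Hellinger or KL distance with a blockwise majority-vote or bracketing combination), and the delicate point is verifying that the local $L^2$ regularity of $\log f_\theta$ on $\tilde{A}_\star\cap\Theta$ afforded by Assumptions~\ref{m_star} and \ref{new}.\ref{new_part2} is strong enough to control the covering numbers and exponential moments needed to push pointwise exponential tests to a uniform one on the compact $V_\epsilon\cap W$.
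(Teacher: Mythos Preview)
Your proposal is correct and follows essentially the same architecture as the paper's proof: the same compact/far-tail decomposition along $A'_\star$ from Lemma~\ref{lemma:new}, the same factorisation $(\tilde f_{\theta-\sum u_i}/f_\theta)(f_\theta/f_{\theta_\star})$ on the compact piece, and the same three-case treatment of Assumption~\ref{new}.\ref{A41}--\ref{A43} on the complement. The one point where you diverge is the ``exponential-upgrade'' step you flag as the main obstacle: the paper does not build these tests from scratch via a Schwartz/Le Cam covering argument but simply invokes Lemma~3.3 of \citet{Kleijn2012} (restated in the paper as Lemma~\ref{lemma:test_K}), which shows that Assumption~\ref{test} alone already yields, on any compact $W\ni\theta_\star$, tests $(\psi_t)$ with $\E[\psi_t]\to 0$ and $\sup_{\theta\in V_\epsilon\cap W}\E[(1-\psi_t)\prod_s(f_\theta/f_{\theta_\star})(Y_s)]\leq e^{-tD_\star}$, so no additional regularity beyond Assumption~\ref{test} is needed for that step.
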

The proof of this result is given in Section    \ref{p-lemma:test}.

The next lemma builds  on  \citet[][Theorem 3.1]{Kleijn2012}

\begin{lemma}\label{lemma:part_1}
Assume Assumptions \ref{m_star}-\ref{new}. Let $\epsilon>0$, $(\gamma_t)_{t\geq 1}$ be a sequence in $[0,\infty)$ such that $\gamma_t\rightarrow 0$ and $(s_t)_{t\geq 1}$ be a sequence in $\mathbb{N}_0$ such that $\inf_{t\geq 1}(t-s_t)\geq 1$ and   $(t-s_t)\rightarrow\infty$. Then, there exist   constants $(C_1,C_2)\in\R_{>0}^2$ such that, for every constants $\delta\geq \tilde{\delta}>0$ such that $\delta+\tilde{\delta}<\delta_\star$ (with $\delta_\star>0$ as in Lemma \ref{lemma:denom}), there exists  a  sequence of measurable functions $(\phi'_t)_{t\geq 1}$, $\phi'_t:\setY^t\rightarrow\{0,1\}$, such that $\E[\phi'_t(Y_{1:t})]\rightarrow 0$ and such that, for $t$ large enough,
\begin{align*}
 \E\big[(1-\phi'_t(Y_{1:t}))\ind_{ \Theta_{\gamma_t,s_t:t}}&(U_{s_t:t}) \pi'_{s_t,t}(V_{\epsilon})|\F_{s_t}\big]\\
&\leq  \frac{   e^{-(t-s_t)( C_1^{-1}-C_2\delta^2)}}{\P(U_{s_t:t}\in \Theta_{\tilde{\delta},s_t:t})\, \inf_{v\in B_{\tilde{\delta}}(0)}(\mu_{s_t}*\tilde{\pi}_{s_t})(B_{\delta}(\theta_\star-v))},\quad\P-a.s.
\end{align*}
\end{lemma}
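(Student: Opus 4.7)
The plan is to bound $\pi'_{s_t,t}(V_\epsilon)$ by writing it as a ratio $N_t/D_t$ where, after cancelling the common factor $\prod_{s=s_t+1}^t f_{\theta_\star}(Y_s)$ from both halves, the numerator is
\begin{align*}
N_t = \int_{V_\epsilon} (\mu_{s_t}*\tilde{\pi}_{s_t})\bigl(\theta - \textstyle\sum_{s=s_t+1}^{t-1} U_s\bigr) \prod_{s=s_t+1}^{t} (\tilde{f}_{\theta-\sum_{i=s}^{t-1}U_i}/f_{\theta_\star})(Y_s)\,\dd\theta
\end{align*}
and $D_t$ is the analogous integral over $\Theta$ with $\E[\,\cdot\,|\,\F_t]$ inserted inside. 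I would bound $N_t$ from above using the tests supplied by Lemma~\ref{lemma:test} and bound $D_t$ from below using Lemma~\ref{lemma:denom}, in both cases choosing $\eta = \mu_{s_t}*\tilde{\pi}_{s_t}$. Since this $\eta$ is $\F_{s_t}$-measurable and the random objects appearing past time $s_t$ are independent of $\F_{s_t}$, both lemmas can be invoked conditionally on $\F_{s_t}$.

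For the denominator, fix an auxiliary $\epsilon_0>0$ and write $\kappa_t := \inf_{v\in B_{\tilde{\delta}}(0)}(\mu_{s_t}*\tilde{\pi}_{s_t})(B_\delta(\theta_\star-v))$. The conditional form of Lemma~\ref{lemma:denom} yields $\P(D_t < L_t \,|\, \F_{s_t}) \leq [(t-s_t)(\tilde{C}_\star\delta+(\tilde{C}_\star\delta)^{-1}\epsilon_0)^2]^{-1}$ with $L_t := \P(U_{s_t:t}\in\Theta_{\tilde{\delta},s_t:t})\,\kappa_t\,e^{-(t-s_t)(2(\tilde{C}_\star\delta)^2+\epsilon_0)}$; set $B_t := \{D_t\geq L_t\}\in\F_t$, so that $\P(B_t^c)\to 0$. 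For the numerator, let $\phi_t$ be the tests from Lemma~\ref{lemma:test} associated with $\epsilon$, $(\gamma_t)$ and $(s_t)$. By nonnegativity I would apply Fubini together with independence of $(Y_s)_{s>s_t}$ and $(U_s)_{s\geq s_t}$ from $\F_{s_t}$ to obtain
\begin{align*}
\E\bigl[(1-\phi_t)\ind_{\Theta_{\gamma_t,s_t:t}}(U_{s_t:t})\,N_t\,\bigm|\,\F_{s_t}\bigr]
= \int \ind_{\Theta_{\gamma_t,s_t:t}}(u)\,G_t(u)\,\mu_{s_t}(\dd u_{s_t})\cdots\mu_t(\dd u_t)
\end{align*}
where $G_t(u) = \int_{V_\epsilon}(\mu_{s_t}*\tilde{\pi}_{s_t})(\theta-\textstyle\sum u_s)\E[(1-\phi_t)\prod_s(\tilde{f}_{\theta-\sum u_i}/f_{\theta_\star})(Y_s)\,|\,\F_{s_t}]\,\dd\theta$. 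Lemma~\ref{lemma:test} bounds the inner conditional expectation by $e^{-(t-s_t)\tilde{D}_\star}$ uniformly for $(\theta,u)\in V_\epsilon\times\Theta_{\gamma_t,s_t:t}$ once $t$ is large, and the two outer integrals are each at most $1$ because $\mu_{s_t}*\tilde{\pi}_{s_t}$ and $\prod_s\mu_s$ are probability measures; hence $\E[(1-\phi_t)\ind_{\Theta_{\gamma_t,s_t:t}}(U_{s_t:t})\,N_t\,|\,\F_{s_t}]\leq e^{-(t-s_t)\tilde{D}_\star}$.

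Finally I would set $\phi'_t := \phi_t\vee\ind_{B_t^c}$, a $\{0,1\}$-valued $\F_t$-measurable—hence $Y_{1:t}$-measurable—function with $\E[\phi'_t]\leq\E[\phi_t]+\P(B_t^c)\to 0$. On the event $\{\phi'_t=0\}$ the bound $D_t\geq L_t$ holds, so $\pi'_{s_t,t}(V_\epsilon)=N_t/D_t\leq N_t/L_t$; using that $L_t$ is $\F_{s_t}$-measurable and combining with the numerator estimate yields
\begin{align*}
\E\bigl[(1-\phi'_t)\ind_{\Theta_{\gamma_t,s_t:t}}(U_{s_t:t})\,\pi'_{s_t,t}(V_\epsilon)\,\bigm|\,\F_{s_t}\bigr]
\leq \frac{e^{-(t-s_t)(\tilde{D}_\star-2(\tilde{C}_\star\delta)^2-\epsilon_0)}}{\P(U_{s_t:t}\in\Theta_{\tilde{\delta},s_t:t})\,\kappa_t}.
\end{align*}
Fixing once and for all $\epsilon_0 := \tilde{D}_\star/2$ and setting $C_1^{-1} := \tilde{D}_\star/2$ and $C_2 := 2\tilde{C}_\star^2$—both independent of $\delta,\tilde{\delta}$—gives the stated bound. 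The main delicate point is that Lemma~\ref{lemma:denom} must be used conditionally on $\F_{s_t}$ with a random (but $\F_{s_t}$-measurable) choice of $\eta$; this is legitimate because the deterministic statement of that lemma can be applied pointwise for each realization of $\F_{s_t}$ and then integrated, but it needs to be spelled out. A secondary subtlety is verifying joint measurability of $G_t$ so that the Fubini step is valid, which is routine given the uniform bound supplied by Lemma~\ref{lemma:test}.
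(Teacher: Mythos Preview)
Your proof is correct and follows essentially the same approach as the paper's: define $\phi'_t$ as the maximum of the test $\phi_t$ from Lemma~\ref{lemma:test} and the indicator of the bad denominator event from Lemma~\ref{lemma:denom}, then combine the numerator bound $e^{-(t-s_t)\tilde D_\star}$ with the denominator lower bound. The only cosmetic difference is that the paper invokes Lemma~\ref{lemma:denom} with $\epsilon=0$ (yielding $C_1^{-1}=\tilde D_\star$), whereas you take $\epsilon_0=\tilde D_\star/2>0$ (yielding $C_1^{-1}=\tilde D_\star/2$); both choices are valid since $\delta$ is fixed and $(t-s_t)\to\infty$.
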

The proof of this result is given in Section \ref{p-lemma:part_1}.

\subsubsection{Proof of the theorem}

Theorem \ref{thm:online} is a direct consequence of Proposition \ref{prop:conv_proba} and of the following three lemmas.

\begin{lemma}\label{lemma:online1}
Consider the set-up of Theorem \ref{thm:online}. Then, $\E[\tilde{\pi}_{t_p}(V_{\epsilon })]\rightarrow 0$ for all $\epsilon>0$.
\end{lemma}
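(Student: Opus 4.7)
The plan is to relate $\tilde\pi_{t_p}$ to the random measure $\pi'_{t_{p-1},t_p}$ and to apply Lemma \ref{lemma:part_1} with $s_t=t_{p-1}$ and $t=t_p$, exploiting the fact that $\mu_{t_{p-1}}$ is a Student's $t$-kernel to lower-bound its convolution with the (possibly diffuse) measure $\tilde\pi_{t_{p-1}}$.

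A minor extension of Lemma \ref{lemma:pi} based on the recursive structure of \eqref{S-eq:pi_tilde} yields $\tilde\pi_{t_p}(A)=\E[\pi'_{t_{p-1},t_p}(A)\mid\F_{t_p}]$ for all $A\in\mathcal{B}(\R^d)$, so that $\E[\tilde\pi_{t_p}(V_\epsilon)]=\E[\pi'_{t_{p-1},t_p}(V_\epsilon)]$. I would pick a deterministic sequence $\gamma_p\downarrow 0$, for instance $\gamma_p=(\sum_{s=t_{p-1}+1}^{t_p-1}h_s^2)^{1/4}$, and let $\phi'_{t_p}$ be the test function given by Lemma \ref{lemma:part_1} with $s_t=t_{p-1}$ for the choices of $\delta,\tilde\delta$ made below. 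Then
\begin{align*}
\E[\pi'_{t_{p-1},t_p}(V_\epsilon)]\leq \E[\phi'_{t_p}(Y_{1:t_p})]+\P\big([U_{(t_{p-1}+1):t_p}]\geq \gamma_p\big)+\E\big[(1-\phi'_{t_p})\ind_{\Theta_{\gamma_p,t_{p-1}:t_p}}(U)\,\pi'_{t_{p-1},t_p}(V_\epsilon)\big].
\end{align*}
The first term vanishes by Lemma \ref{lemma:part_1}. For the second, Condition \ref{thm1.2} of Theorem \ref{thm:online} implies that for large $p$ none of the indices $t_{p-1}+1,\dots,t_p-1$ lies in $(t_q)_{q\geq 0}$, so each $U_i$ in this range is a centred Gaussian with covariance $h_i^2\Sigma_i$; Doob's maximal inequality together with \eqref{S-eq:Sigma_t} then yields $\P([U_{(t_{p-1}+1):t_p}]\geq \gamma_p)\lesssim \gamma_p^{-2}\sum_s h_s^2\to 0$ by the choice of $\gamma_p$ and Condition \ref{thm1.2}.

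For the third term I would fix $\delta\geq\tilde\delta>0$ once and for all, small enough that $\delta+\tilde\delta<\delta_\star$ and $C_1^{-1}-C_2\delta^2>0$, where $C_1,C_2$ are the constants of Lemma \ref{lemma:part_1}, and condition on the event $E_p=\{\tilde\pi_{t_{p-1}}(V_{c/h_{t_{p-1}}^\beta})<\beta^{-1}\}$ supplied by Condition \eqref{eq:compact}. On $E_p$, the measure $\tilde\pi_{t_{p-1}}$ carries mass at least $1-\beta^{-1}$ on $B_{R_p}(\theta_\star)$ with $R_p=c/h_{t_{p-1}}^\beta$; since $\mu_{t_{p-1}}$ is a $t_{d,\nu}(0,h_{t_{p-1}}^2\Sigma_{t_{p-1}})$ distribution, its density at points of norm at most $R$ is bounded below by a constant multiple of $h_{t_{p-1}}^{-d}(1+(R/h_{t_{p-1}})^2)^{-(\nu+d)/2}$, and integrating this lower bound against $\tilde\pi_{t_{p-1}}\vert_{B_{R_p}(\theta_\star)}$ gives
\begin{align*}
\inf_{v\in B_{\tilde\delta}(0)}(\mu_{t_{p-1}}*\tilde\pi_{t_{p-1}})(B_\delta(\theta_\star-v))\geq c'\,h_{t_{p-1}}^{\kappa}\quad\text{on }E_p
\end{align*}
for some $\kappa=\kappa(\nu,d,\beta)>0$. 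A separate Chebyshev bound gives $\P(U_{t_{p-1}:t_p}\in\Theta_{\tilde\delta,t_{p-1}:t_p})\geq 1/2$ for $p$ large. Plugging these two estimates into Lemma \ref{lemma:part_1} and taking expectations yields
\begin{align*}
\E\big[(1-\phi'_{t_p})\ind_{\Theta_{\gamma_p,t_{p-1}:t_p}}(U)\,\pi'_{t_{p-1},t_p}(V_\epsilon)\big]\leq \P(E_p^c)+\tfrac{2}{c'}\,h_{t_{p-1}}^{-\kappa}\,e^{-(t_p-t_{p-1})(C_1^{-1}-C_2\delta^2)}.
\end{align*}
Condition \eqref{eq:compact} forces $\P(E_p^c)\to 0$, and the rate condition $\log(h_{t_{p-1}})/(t_p-t_{p-1})\to 0$ in Condition \ref{thm1.3} ensures that the exponential factor dominates the polynomial blow-up $h_{t_{p-1}}^{-\kappa}$. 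Collecting the three contributions gives $\E[\tilde\pi_{t_p}(V_\epsilon)]\to 0$.

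The main obstacle is the lower bound on $(\mu_{t_{p-1}}*\tilde\pi_{t_{p-1}})(B_\delta(\theta_\star-v))$: it must be strong enough to survive the exponential contraction coming from Lemma \ref{lemma:part_1}, uniform in $v\in B_{\tilde\delta}(0)$, and valid under only the very weak a priori information on $\tilde\pi_{t_{p-1}}$ furnished by Condition \eqref{eq:compact}. It is precisely to make this step work that heavy-tailed Student's $t$-kernels are hard-coded into \eqref{S-eq:mu_def} at the times $t_p$, and that the rate condition $\log(h_{t_{p-1}})/(t_p-t_{p-1})\to 0$ appears among the hypotheses of Theorem \ref{thm:online}.
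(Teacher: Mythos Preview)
Your overall strategy matches the paper's: pass from $\tilde\pi_{t_p}$ to $\pi'_{t_{p-1},t_p}$, invoke Lemma~\ref{lemma:part_1} on the ``good $U$'' event, and exploit the Student's $t$-kernel at time $t_{p-1}$ to lower-bound $(\mu_{t_{p-1}}*\tilde\pi_{t_{p-1}})(B_\delta(\theta_\star-v))$ polynomially in $h_{t_{p-1}}$ using Condition~\eqref{eq:compact}. That part of your argument is correct and is essentially the paper's \eqref{eq:lower_C}--\eqref{eq:inquality2}.

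There is, however, a genuine gap in your three-term decomposition. You write
\[
\E[\pi'_{t_{p-1},t_p}(V_\epsilon)]\leq \E[\phi'_{t_p}]+\P\big([U_{(t_{p-1}+1):t_p}]\geq \gamma_p\big)+\E\big[(1-\phi'_{t_p})\ind_{\Theta_{\gamma_p,t_{p-1}:t_p}}(U)\,\pi'_{t_{p-1},t_p}(V_\epsilon)\big],
\]
implicitly bounding $(1-\phi'_{t_p})\ind_{\Theta^c_{\gamma_p}}(U)\,\pi'_{t_{p-1},t_p}(V_\epsilon)$ by $\ind_{\Theta^c_{\gamma_p}}(U)$. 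But $\pi'_{t_{p-1},t_p}$ is \emph{not} a probability measure: by its definition the numerator depends on the realised $U$'s while the denominator is an expectation over $U$'s given $\F_{t_p}$, so $\pi'_{t_{p-1},t_p}(V_\epsilon)$ is unbounded on the event $\{U\notin\Theta_{\gamma_p}\}$ (this is precisely the event on which the $U$-trajectory may be very atypical). Only $\E[\pi'_{t_{p-1},t_p}(V_\epsilon)\mid\F_{t_p}]=\tilde\pi_{t_p}(V_\epsilon)\leq 1$, which is why your bound on the $\phi'_{t_p}$-term works but the analogous bound on the ``bad $U$'' term fails.

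In the paper this term is \eqref{eq:inquality3}, and controlling it is where Assumption~\ref{new1} is actually used: one first lower-bounds the denominator of $\pi'$ via \eqref{eq:psi1} (incurring a factor $e^{(t_p-t_{p-1})C}h_{t_{p-1}}^{-\nu_1}$) and then upper-bounds the numerator on $\{U\notin\Theta_{\gamma_p}\}$ using one of the three variants of Assumption~\ref{new1}. The resulting bound is of the form $h_{t_{p-1}}^{-\nu_1}e^{(t_p-t_{p-1})C}\,\P(U\notin\Theta_{\gamma_p})$, which only tends to zero because the $U_s$ for $s\notin(t_q)_{q\geq 0}$ are Gaussian and hence $\P(U\notin\Theta_{\gamma_p})$ decays like $\exp\{-\gamma_p^2/(C\sum_s h_s^2)\}$; a Chebyshev-type bound would not suffice here. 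Moreover, $\gamma_p$ must be chosen to decay slowly enough (not e.g.\ as $(\sum_s h_s^2)^{1/4}$) so that this Gaussian tail beats the factor $e^{(t_p-t_{p-1})C}h_{t_{p-1}}^{-\nu_1}$; see the proof of \eqref{eq:lim2}. Your proposal uses neither Assumption~\ref{new1} nor the Gaussian tail bound, so the ``bad $U$'' contribution is not under control.
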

See Section \ref{p-lemma:online1} for the proof.

\begin{lemma}\label{lemma:online2}

Consider the set-up of Theorem \ref{thm:online} and assume that the conclusion of Lemma \ref{lemma:online1} holds. Let $(v_p)_{p\geq 1}$ be a sequence in $\mathbb{N}$ such that  $t_{p-1}\leq v_p<t_p$ for all $p\geq 1$ and such that $(v_p-t_{p-1})\rightarrow\infty$, and  let $(\tau_k)_{k\geq 1}$ be a strictly increasing sequence in $\mathbb{N}$ such that $(\tau_k)_{k\geq 1}=\{t\in\mathbb{N}:\,\exists p\geq 1,\, v_p\leq t<  t_p\}$. Then, $ \E[\tilde{\pi}_{\tau_k}(V_{\epsilon})]\rightarrow 0$ for all $\epsilon>0$.
\end{lemma}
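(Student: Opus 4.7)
The plan is to apply Lemma \ref{lemma:part_1} with initial time $s_t = t_{p-1}$ and terminal time $\tau_k$, where $p = p(k)$ is the unique index with $v_p \leq \tau_k < t_p$, and to use Lemma \ref{lemma:online1} to supply the concentration of the `prior' $\tilde{\pi}_{t_{p-1}}$ around $\theta_\star$ needed to control the denominator of the Lemma \ref{lemma:part_1} bound.

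Unfolding the recursion \eqref{S-eq:pi_tilde} from time $s_t$ rather than $0$ and comparing with the definition of $\pi'_{s_t,t}$ yields the identity $\tilde{\pi}_t(A) = \E\{\pi'_{s_t,t}(A) \mid \F_t\}$ for all $0 \leq s_t < t$ and $A \in \mathcal{B}(\R^d)$, which reduces the goal to showing $\E\{\pi'_{t_{p-1},\tau_k}(V_\epsilon)\} \to 0$. I would next pick a sequence $\gamma_t \to 0$ vanishing slowly enough that $\P(U_{t_{p-1}:\tau_k} \notin \Theta_{\gamma_{\tau_k}, t_{p-1}:\tau_k}) \to 0$ uniformly in $\tau_k \in [v_p, t_p)$, which is possible because Condition \ref{thm1.2} of Theorem \ref{thm:online} forces $\sum_{s=t_{p-1}+1}^{t_p - 1} h_s^2 \to 0$. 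Fixing $\delta > 0$ small enough that $\tilde{D} := C_1^{-1} - C_2 \delta^2 > 0$ with $C_1, C_2$ as in Lemma \ref{lemma:part_1} and letting $\phi'_{\tau_k}$ denote the corresponding test functions, I would decompose
\begin{align*}
\E\{\pi'_{t_{p-1},\tau_k}(V_\epsilon)\}
&\leq \E\{\phi'_{\tau_k}(Y_{1:\tau_k})\} + \P(U_{t_{p-1}:\tau_k} \notin \Theta_{\gamma_{\tau_k}, t_{p-1}:\tau_k}) \\
&\quad + \E\big\{(1 - \phi'_{\tau_k}(Y_{1:\tau_k}))\ind_{\Theta_{\gamma_{\tau_k}, t_{p-1}:\tau_k}}(U_{t_{p-1}:\tau_k})\, \pi'_{t_{p-1},\tau_k}(V_\epsilon)\big\}.
\end{align*}
The first two summands vanish by construction, while Lemma \ref{lemma:part_1} bounds the third by $\E\{e^{-(\tau_k - t_{p-1})\tilde{D}}/D_{p,k}\}$, where $D_{p,k} = \P(U_{t_{p-1}:\tau_k} \in \Theta_{\tilde{\delta}, t_{p-1}:\tau_k})\, \inf_{v \in B_{\tilde{\delta}}(0)}(\mu_{t_{p-1}} * \tilde{\pi}_{t_{p-1}})(B_\delta(\theta_\star - v))$.

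To conclude, I would apply Lemma \ref{lemma:online1} with $\epsilon = \delta/2$ together with Markov's inequality to show that $\P(\tilde{\pi}_{t_{p-1}}(B_{\delta/2}(\theta_\star)) \geq 1/2) \to 1$; since $\mu_{t_{p-1}}$ is centred at $0$ with scale $h_{t_{p-1}} \to 0$, on this event and for $\tilde{\delta}$ small enough one forces $(\mu_{t_{p-1}} * \tilde{\pi}_{t_{p-1}})(B_\delta(\theta_\star - v)) \geq 1/4$ uniformly over $v \in B_{\tilde{\delta}}(0)$ once $k$ is large. Combined with $\P(U_{t_{p-1}:\tau_k} \in \Theta_{\tilde{\delta}, t_{p-1}:\tau_k}) \to 1$ and the exponential decay driven by $\tau_k - t_{p-1} \geq v_p - t_{p-1} \to \infty$ (the defining property of the sequence $(v_p)$), the term $\E\{e^{-(\tau_k - t_{p-1})\tilde{D}}/D_{p,k}\}$ then tends to zero. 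The main obstacle is precisely the extraction of a deterministic, uniform-in-$v$ lower bound on the random denominator $D_{p,k}$ from the merely in-expectation concentration of Lemma \ref{lemma:online1}: this step requires a good-event argument together with a careful check that convolution with the narrow kernel $\mu_{t_{p-1}}$ preserves the concentration of $\tilde{\pi}_{t_{p-1}}$ uniformly over small shifts.
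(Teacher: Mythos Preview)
Your overall plan mirrors the paper's: anchor at $s_{\tau_k}=t_{p(k)-1}$, invoke Lemma~\ref{lemma:part_1} for the main term, and use Lemma~\ref{lemma:online1} to lower-bound the denominator via concentration of $\tilde{\pi}_{t_{p-1}}$ near $\theta_\star$. The denominator argument you sketch is essentially the paper's \eqref{eq:Ct}, and the exponential decay driven by $\tau_k-t_{p-1}\geq v_p-t_{p-1}\to\infty$ is used exactly as you suggest.

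There is, however, a genuine gap in your three-term decomposition. The inequality
\[
\E\big\{(1-\phi'_{\tau_k})\,\ind_{\Theta^c_{\gamma_{\tau_k}}}(U_{t_{p-1}:\tau_k})\,\pi'_{t_{p-1},\tau_k}(V_\epsilon)\big\}\;\leq\;\P\big(U_{t_{p-1}:\tau_k}\notin\Theta_{\gamma_{\tau_k},t_{p-1}:\tau_k}\big)
\]
would require $\pi'_{t_{p-1},\tau_k}(V_\epsilon)\leq 1$ almost surely, but by the paper's definition of $\pi'_{k,t}$ the numerator depends on the particular realization of $U_{k:t}$ whereas the denominator is its $\F_t$-conditional expectation; hence $\pi'_{k,t}(\Theta)$ is not bounded by $1$ in general. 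Your first term $\E\{\phi'_{\tau_k}\}$ is fine because $\phi'_{\tau_k}$ is $\F_{\tau_k}$-measurable, so conditioning recovers $\tilde{\pi}_{\tau_k}(V_\epsilon)\leq 1$; this trick is unavailable for the $U$-dependent indicator.

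The paper handles this bad-$U$ piece differently: it first restricts to a good event for $\tilde{\pi}_{t_{p-1}}$ (the indicator $1-\tilde{\phi}_{s_{\tau_k}}$), which together with Lemma~\ref{lemma:denom} gives a deterministic lower bound on the denominator of $\pi'$, and then bounds the numerator on $\{U\in\Theta^c_{\gamma}\}$ via Assumption~\ref{new1}. The resulting estimate is of the form $C\,e^{(\tau_k-s_{\tau_k})C}\,\P(U\notin\Theta_{\gamma_{\tau_k}})$ (see \eqref{eq:inquality33}), not merely $\P(U\notin\Theta_{\gamma_{\tau_k}})$; the exponential blow-up is then absorbed by the sub-Gaussian tail of $[U_{(s_{\tau_k}+1):\tau_k}]$ as in \eqref{eq:lim22}, which relies on Condition~\ref{thm1.2} of Theorem~\ref{thm:online}. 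Your proposal omits both Assumption~\ref{new1} and this balancing step, so as written the argument does not close.
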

See Section \ref{p-lemma:online2} for the proof.


\begin{lemma}\label{lemma:online3}

Consider the set-up of Theorem \ref{thm:online}  and assume that the conclusion  of Lemma \ref{lemma:online1}  holds.  Then, there exists a sequence $(v_p)_{p\geq 1}$ verifying the conditions of Lemma \ref{lemma:online2}     such that,  with $(\tau'_q)_{q\geq 1}$ the strictly increasing sequence in $\mathbb{N}$ verifying
$$
(\tau'_q)_{q\geq 1}=\{t\in\mathbb{N}:\,\exists p\geq 1,\, t_{p-1}<t\leq v_p\},
$$
we have $\E[\tilde{\pi}_{\tau'_q}(V_{\epsilon})]\rightarrow 0$ for all $\epsilon>0$.
\end{lemma}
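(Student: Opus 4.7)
The strategy is to adapt the argument behind Lemma \ref{lemma:online2} but, because $\tau'_q - t_{p-1}$ can be as small as $1$, to shift the conditioning back one step and condition on $\F_{t_{p-2}}$. The Student's $t$-jump at $t_{p-2}$ then produces the polynomial lower bound on the denominator appearing in Lemma \ref{lemma:part_1}, while the gap $\tau'_q - t_{p-2} \geq t_{p-1} - t_{p-2}$ is automatically large enough for the exponential factor to dominate. Concretely, I would fix $v_p = t_{p-1} + \lceil (t_p - t_{p-1})/2 \rceil$; Condition \ref{thm1.2} of Theorem \ref{thm:online} yields $v_p < t_p$ and $v_p - t_{p-1} \to \infty$, so this choice meets the hypotheses of Lemma \ref{lemma:online2}.

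Fix $\epsilon > 0$ and, for each $\tau'_q \in (t_{p-1}, v_p]$ with $p$ large, set $s = t_{p-2}$ and pick $\delta \geq \tilde\delta > 0$ with $\delta + \tilde\delta < \delta_\star$ and $C_1^{-1} - C_2 \delta^2 > 0$. Pick also a sequence $\gamma_t \to 0$ decaying slowly enough that $\P(U_{s:\tau'_q} \in \Theta_{\gamma_{\tau'_q},s:\tau'_q}) \to 1$ (feasible because $\sum_{i=t_{p-2}+1}^{t_p-1} h_i^2 \to 0$ by Condition \ref{thm1.2} and the single Student's $t$-jump in $(t_{p-2}, \tau'_q]$, occurring at $t_{p-1}$, has scale $h_{t_{p-1}} \to 0$). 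A slight extension of Lemma \ref{lemma:pi} starting from $\tilde{\pi}_s$ in place of $\tilde{\pi}_0$ gives $\E[\pi'_{s,\tau'_q}(A) \mid \F_{\tau'_q}] = \tilde{\pi}_{\tau'_q}(A)$ and hence $\E[\tilde{\pi}_{\tau'_q}(V_\epsilon)] = \E[\pi'_{s,\tau'_q}(V_\epsilon)]$, which I decompose as
\[
\E[\pi'_{s,\tau'_q}(V_\epsilon)] \leq \E[\phi'_{\tau'_q}(Y_{1:\tau'_q})] + \P\big(U_{s:\tau'_q} \notin \Theta_{\gamma_{\tau'_q},s:\tau'_q}\big) + \E\Big[(1 - \phi'_{\tau'_q})\ind_{\Theta_{\gamma_{\tau'_q},s:\tau'_q}}(U_{s:\tau'_q})\, \pi'_{s,\tau'_q}(V_\epsilon)\Big].
\]
The first two terms vanish by construction; the third is where the shift to $s = t_{p-2}$ pays off.

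By Lemma \ref{lemma:online1}, $\E[\tilde{\pi}_{t_{p-2}}(B_{\epsilon_0}(\theta_\star))] \to 1$ for any fixed $\epsilon_0 > 0$, so the event $E_p = \{\tilde{\pi}_{t_{p-2}}(B_{\epsilon_0}(\theta_\star)) \geq 1/2\}$ has probability tending to $1$. On $E_p$, the polynomial lower tail of the Student's $t$-density of scale $h_{t_{p-2}}$ yields
\[
\inf_{v \in B_{\tilde\delta}(0)}(\mu_{t_{p-2}} * \tilde{\pi}_{t_{p-2}})\big(B_\delta(\theta_\star - v)\big) \gtrsim h_{t_{p-2}}^\nu.
\]
Inserting this lower bound and $\P(U_{s:\tau'_q} \in \Theta_{\tilde\delta,s:\tau'_q}) \to 1$ into Lemma \ref{lemma:part_1} bounds the third term above by $\P(E_p^c) + O\big(h_{t_{p-2}}^{-\nu} \, e^{-(\tau'_q - t_{p-2})(C_1^{-1} - C_2\delta^2)}\big)$, which tends to zero since $\tau'_q - t_{p-2} \geq t_{p-1} - t_{p-2}$ and Condition \ref{thm1.3} of Theorem \ref{thm:online} gives $|\log h_{t_{p-2}}|/(t_{p-1} - t_{p-2}) \to 0$. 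The main obstacle is precisely this race between the exponential in $\tau'_q - t_{p-2}$ and the polynomial-in-$t_{p-2}$ loss coming from the fat-tail lower bound on the denominator of Lemma \ref{lemma:part_1}: Condition \ref{thm1.3} is tailored exactly to ensure the exponential wins, and the choice of $s = t_{p-2}$ (rather than $s = t_{p-1}$) is what allows us to exploit it even when $\tau'_q$ is arbitrarily close to $t_{p-1}$.
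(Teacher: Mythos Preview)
Your decomposition
\[
\E[\pi'_{s,\tau'_q}(V_\epsilon)] \leq \E[\phi'_{\tau'_q}] + \P\big(U_{s:\tau'_q} \notin \Theta_{\gamma_{\tau'_q},s:\tau'_q}\big) + \E\big[(1 - \phi'_{\tau'_q})\ind_{\Theta_{\gamma_{\tau'_q},s:\tau'_q}}(U_{s:\tau'_q})\,\pi'_{s,\tau'_q}(V_\epsilon)\big]
\]
is invalid: $\pi'_{s,\tau'_q}$ is \emph{not} a probability measure. Its denominator is a conditional expectation over $U$ given $\F_{\tau'_q}$, so $\pi'_{s,\tau'_q}(V_\epsilon)$ can exceed $1$, and the middle term does not bound $\E[(1-\phi'_{\tau'_q})\ind_{\Theta^c_{\gamma}}(U)\,\pi'_{s,\tau'_q}(V_\epsilon)]$. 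In the paper this ``bad-event'' piece is controlled via Assumption~\ref{new1} (see the argument around \eqref{eq:inquality33}), which produces a factor $e^{(\tau'_q-s)C}$ multiplying $\P(U_{s:\tau'_q}\notin\Theta_{\gamma})$. The interval $(s,\tau'_q)=(t_{p-2},\tau'_q)$ contains the Student's $t$-jump at $t_{p-1}$, whose tail contributes only the polynomial bound $(h_{t_{p-1}}/\gamma)^{\nu}$. With your choice $v_p=t_{p-1}+\lceil(t_p-t_{p-1})/2\rceil$ the gap $\tau'_q-t_{p-2}$ is of order $t_{p-1}-t_{p-2}$, while Condition~\ref{thm1.3} gives $|\log h_{t_{p-1}}|=o(t_p-t_{p-1})$; hence $e^{C(\tau'_q-t_{p-2})}\,h_{t_{p-1}}^{\nu}\to\infty$ and the bad-event term blows up rather than vanishing.

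The paper avoids this by a very different construction: it takes $v_p-t_{p-1}$ on a \emph{logarithmic} scale (roughly $\log(h_{t_{p-1}}^{-\nu\xi_{p-1}})$ with $\xi_p\to0$), and for each $\tau'_q$ conditions not at $t_{p-2}$ but at a point $s_{\tau'_q}\in[v_{p_q},t_{p_q}]$, which lies in the set of times where Lemmas~\ref{lemma:online1}--\ref{lemma:online2} already give concentration of $\tilde\pi$. This keeps $\tau'_q-s_{\tau'_q}\leq\log(h_{t_{p_q}}^{-\nu\xi_{p_q}})$, small enough that $e^{C(\tau'_q-s_{\tau'_q})}\,h_{t_{p_q}}^{\nu}\to 0$ and the bad-event contribution does vanish. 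The key missing idea in your sketch is precisely this interplay: the exponential loss from Assumption~\ref{new1} on the bad event forces the conditioning gap to be at most logarithmic whenever a Student's $t$-kernel lies inside it.
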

See Section \ref{p-lemma:online3} for the proof.

\subsubsection{Proof of Lemma \ref{lemma:online1}\label{p-lemma:online1}}
\begin{proof}

Below $C\in(0,\infty)$ is a constant whose value can change from one expression to another.

We first remark that, by Doob's martingale inequality and under the assumptions on $(\mu_t)_{t\geq 0}$,
$$
\limsup_{p\rightarrow\infty}\P\big([U_{(t_{p-1}+1):t_p}]\geq \gamma\big)\leq C \gamma^{-2}\limsup_{p\rightarrow\infty} \sum_{s=t_{p-1}+1}^{t_{p}-1}h_s^2=0,\quad\forall \gamma>0
$$
showing that there exists a sequence $(\gamma_t)_{t\geq 1}$ in $\R_{>0}$ such that
\begin{align}\label{eq:gamma_t}
\gamma_t\rightarrow 0, \quad \P\big([U_{(t_{p-1}+1):t_p}]\geq \gamma_{t_p}\big)\rightarrow 0.
\end{align}

Let $(\gamma_t)_{t\geq 1}$ be as in \eqref{eq:gamma_t} and $(s_t)_{t\geq 1}$ be a sequence in $\mathbb{N}_0$ such that $\inf_{t\geq 1}(t-s_t)\geq 1$,  $(t-s_{t})\rightarrow\infty$ and    $s_{t_p}=t_{p-1}$ for every $p\geq 1$. Remark that such a sequence $(s_t)_{t\geq 1}$ exists under the assumptions of the lemma.

To proceed further let $(C_1,C_2)\in\R_{>0}^2$ be  as in Lemma \ref{lemma:part_1}. Without loss of generality we assume below that $2\sqrt{1/(2C_1 C_2)}<\delta_\star$, with $\delta_\star>0$ as in Lemma \ref{lemma:denom}. Let $\tilde{\delta}=\delta=\sqrt{1/(2C_1 C_2)} $,    $(\phi'_t)_{t\geq 1}$  be   as in    Lemma \ref{lemma:part_1} and, for every $t\geq 1$, let $\tilde{\phi}_{t}(Y_{1:t})$ be such that $\tilde{\phi}_t(Y_{1:t})=1$ whenever $\tilde{\pi}_{t} (V_{c/h^\beta_{t}})\geq \beta^{-1}$  and such that $\tilde{\phi}_t(Y_{1:t})=0$ otherwise,  with $(c,\beta)\in (1,\infty)$ as in \eqref{eq:compact}. Notice that $\E[\tilde{\phi}_{t_p}(Y_{1:t_p})]\rightarrow 0$ by \eqref{eq:compact} while  $\E[\phi'_{t_p}(Y_{1:t_p})]\rightarrow 0$ by Lemma \ref{lemma:part_1}.

Therefore, using  Lemma \ref{lemma:pi}, Tonellli's theorem and the shorthand $ \Theta_{t_p}= \Theta_{\gamma_{t_p},t_{p-1}:t_p}$,
\begin{equation}\label{eq:inquality1}
\begin{split}
 &\limsup_{p\rightarrow\infty}  \E[\tilde{\pi}_{t_p}(V_{\epsilon})]\\
& \leq \limsup_{p\rightarrow\infty}\E[(1-\phi'_{t_p}(Y_{1:t_p}))(1-\tilde{\phi}_{t_{p-1}}(Y_{1:t_{p-1}}))\tilde{\pi}_{t_p}(V_{\epsilon})]\\
&\leq \limsup_{p\rightarrow\infty}\E\big[(1-\phi'_{t_p}(Y_{1:t_p}))(1-\tilde{\phi}_{t_{p-1}}(Y_{1:t_{p-1}}))\ind_{ \Theta_{t_p}}(U_{t_{p-1}:t_p}) \pi'_{t_{p-1},t_p}(V_{\epsilon})\big]\\
&+\limsup_{p\rightarrow\infty}\E\Big[(1-\phi'_{t_p}(Y_{1:t_p}))(1-\tilde{\phi}_{t_{p-1}}(Y_{1:t_{p-1}}))\ind_{ \Theta^c_{t_p}}(U_{t_{p-1}:t_p}) \pi'_{t_{p-1},t_p}(V_{\epsilon_{t_p}}) \big]
\end{split}
\end{equation}
where, by Lemma \ref{lemma:part_1} and for $p$ large enough we have,  $\P$-a.s.,
\begin{equation}\label{eq:inquality_int1}
\begin{split}
\E\big[(1-\phi'_{t_p}&(Y_{1:t_p}))  \ind_{ \Theta_{t_p}}(U_{t_{p-1}:t_p}) \pi'_{t_{p-1},t_p}(V_{\epsilon_{t_p}})|\F_{t_{p-1}}\big]\\
&\leq   \frac{e^{-(t_p-t_{p-1}) C^{-1}}}{\P(U_{t_{p-1}:t_p}\in\Theta_{\delta,t_{p-1}:t_p})\, \inf_{v\in B_{\delta}(0)}(\mu_{t_{p-1}}*\tilde{\pi}_{t_{p-1}})(B_{\delta}(\theta_\star-v))}.
\end{split}
\end{equation}

To proceed further let  $v\in B_{\delta}(0)$, $f_{t_{p-1}}$ be the density of $\mu_{t_{p-1}}$ and remark that for all $p\geq 1$ we have (using Tonelli's theorem for the second  equality)
\begin{equation}\label{eq:lower_t}
\begin{split}
(\mu_{t_{p-1}}*\tilde{\pi}_{t_{p-1}})&(B_{\delta}(\theta_\star-v))\\
&=\int_{B_{\delta}(\theta_\star-v)} \int_{\Theta} f_{t_{p-1}}(\theta-u)\tilde{\pi}_{t_{p-1}}(\dd u) \dd\theta\\
&=\int_{\Theta} \int_{B_{\delta}(\theta_\star-v)}  f_{t_{p-1}}(\theta-u)\dd \theta \,\tilde{\pi}_{t_{p-1}}(\dd u)\\
&\geq \tilde{\pi}_{t_{p-1}}\big(B_{c h_{t_{p-1}}^{-\beta}}(\theta_\star)\big)\inf_{u\in B_{c h_{t_{p-1}}^{-\beta}}(\theta_\star)}\mu_{t_{p-1}}\big(B_{\delta}(\theta_\star-v+u)\big).
\end{split}
\end{equation}
Recall that $\mu_{t_{p-1}}(\dd u)$ is the $t_{d,\nu}(0, h_{t_{p-1}}^2\Sigma_{t_{p-1}})$. Then, under the assumptions on $(\Sigma_t)_{t\geq 0}$, and  using \eqref{eq:lower_t}, it is easily checked  that, with the shorthand  $\nu_1=\beta(\nu+d)+\nu$,
\begin{equation}\label{eq:lower_C}
 \P\Big( \inf_{v\in B_{\delta}(0)}(\mu_{t_{p-1}}*\tilde{\pi}_{t_{p-1}})(B_{\delta}(\theta_\star-v))\geq C^{-1}   h_{t_{p-1}}^{\nu_1} \big|\tilde{\phi}_{t_{p-1}}(Y_{1:t_{p-1}})=0 \Big)=1,\quad\forall p\geq 1.
\end{equation}
Consequently, using \eqref{eq:inquality_int1} and for $p$ large enough,  we have
\begin{equation}\label{eq:inquality2}
\begin{split}
\E\big[(1-\phi'_{t_p}(Y_{1:t_p}))(1- \tilde{\phi}_{t_{p-1}} (Y_{1:t_{p-1}}))\ind_{ \Theta_{t_p}}& (U_{t_{p-1}:t_p}) \pi'_{t_{p-1},t_p}(V_{\epsilon})\big]\\
&\leq C h_{t_{p-1}}^{-\nu_1}\frac{e^{-(t_p-t_{p-1}) C^{-1}}}{\P(U_{t_{p-1}:t_p}\in\Theta_{\delta,t_{p-1}:t_p})}.
\end{split}
\end{equation}

To proceed further remark that, by Lemma \ref{lemma:denom} and using \eqref{eq:lower_C},  we can without loss of generality assume that  $(\phi'_t)_{t\geq 1}$ is such that, for all $p\geq 1$,
\begin{equation}\label{eq:psi1}
\begin{split}
&\P\bigg(\int_{\Theta}\E\Big[ (\mu_{t_{p-1}}*\tilde{\pi}_{t_{p-1}}) \Big(\theta-\sum_{s=t_{p-1}+1}^{t_p-1}U_s\Big)\prod_{s=t_{p-1}+1}^{t_p} (\tilde{f}_{\theta-\sum_{i=s}^{t_{p}-1}U_i}/f_{\theta_\star})(Y_s)\big|\F_{t_p}\Big]  \dd\theta \\
 &>  C^{-1}   h_{t_{p-1}}^{\nu_1}\P(U_{t_{p-1}:t_p}\in\Theta_{\delta,t_{p-1}:t_p})\,e^{-(t_p-t_{p-1}) C \delta^2}\big|\phi'_{t_{p}}(Y_{1:t_{p}})\vee \tilde{\phi}_{t_{p-1}}(Y_{1:t_{p-1}})=0\bigg)\\
 &=1
\end{split}
\end{equation}
while, by  the law of large numbers, we can also without loss of generality assume that  $(\phi'_t)_{t\geq 1}$ is such that
\begin{equation}\label{eq:psi2}
\P\Big(-\frac{1}{t-s_t}\sum_{s=s_t+1}^t\log( f_{\theta_\star}(Y_s))\leq 1-\E[\log (f_{\theta_\star})]\,\,\big| \phi'_t(Y_{1:t})=0\Big)=1,\quad\forall t\geq 1.
\end{equation}

We now show that, under Assumption \ref{new1}  we have, for $p$ large enough,
\begin{equation}\label{eq:inquality3}
\begin{split}
 \E\big[  (1-\phi'_{t_p} (Y_{1:t_p}))(1-\tilde{\phi}_{t_{p-1}}&(Y_{1:t_{p-1}})) \ind_{ \Theta^c_{t_p}} (U_{t_{p-1}:t_p}) \pi'_{t_{p-1},t_p}(V_{\epsilon} )\big]\\
 &\leq  \frac{Ce^{(t_p-t_{p-1})C}}{\P(U_{t_{p-1}:t_p}\in\Theta_{\delta,t_{p-1}:t_p})}h_{t_{p-1}}^{-\nu_1}\P\big(U_{t_{p-1}:t_p}\not\in  \Theta_{t_p}\big).
\end{split}
\end{equation}

Assume first that  Assumption \ref{new1}.\ref{A51} holds. In this case,  by the law of large numbers, we can without loss of generality assume that  $(\phi'_t)_{t\geq 1}$ is such that
\begin{equation}\label{eq:psi3}
\P\Big(\frac{1}{t-s_t}\sum_{s=s_t+1}^t\sup_{\theta\in\Theta}\log (f_{\theta}(Y_s))\leq 1+\E[\sup_{\theta\in\Theta}\log (f_\theta)] \,\big| \phi'_t(Y_{1:t})=0\Big)=1,\quad\forall t\geq 1
\end{equation}
in which case \eqref{eq:inquality3} directly follows from \eqref{eq:psi1}, \eqref{eq:psi2} and \eqref{eq:psi3}.

Assume now that Assumption  \ref{new1}.\ref{A52}  holds.   Then,  \eqref{eq:inquality3} holds since, for all $p\geq 1$,
\begin{equation*}
\begin{split}
 \E\big[ &(1-\phi'_{t_p} (Y_{1:t_p}))(1-\tilde{\phi}_{t_{p-1}}(Y_{1:t_{p-1}})) \ind_{ \Theta^c_{t_p}} (U_{t_{p-1}:t_p}) \pi'_{t_{p-1},t_p}(V_{\epsilon} )\big]\\
 &\leq  \frac{Ce^{(t_p-t_{p-1})C}}{\P(U_{t_{p-1}:t_p}\in\Theta_{\delta,t_{p-1}:t_p})}h_{t_{p-1}}^{-\nu_1} \E\Big[\ind_{ \Theta^c_{t_p}} (U_{t_{p-1}:t_p})\\
 &\qquad \times \int_{V_{\epsilon}} (\mu_{t_{p-1}}*\tilde{\pi}_{t_{p-1}}) \Big(\theta-\sum_{s=t_{p-1}+1}^{t_p-1}U_s\Big)\prod_{s=t_{p-1}+1}^{t_p}\E\big[  \tilde{f}_{\theta-\sum_{i=s}^{t_{p}-1}U_i}/f_{\theta_\star})(Y_s)\big| U_{t_{p-1}:t_p}\big] \dd\theta \Big]\\
 &\leq  \frac{Ce^{(t_p-t_{p-1})C}}{\P(U_{t_{p-1}:t_p}\in\Theta_{\delta,t_{p-1}:t_p})}h_{t_{p-1}}^{-\nu_1} \big(\sup_{\theta\in\Theta}\E\big[f_\theta/f_{\theta_\star} \big]\big)^{t_p-t_{p-1}}\P\big(U_{t_{p-1}:t_p}\not\in  \Theta_{t_p}\big)\\
 &\leq  \frac{C e^{(t_p-t_{p-1})C}}{\P(U_{t_{p-1}:t_p}\in\Theta_{\delta, t_{p-1}:t_p})}h_{t_{p-1}}^{-\nu_1}\P\big(U_{t_{p-1}:t_p}\not\in  \Theta_{t_p}\big)
\end{split}
\end{equation*}
where the last inequality holds since  $\sup_{\theta\in\Theta}\E[f_\theta/f_{\theta_\star}]<\infty$ under Assumption \ref{new1}.\ref{A52}.

Lastly, assume that  Assumption \ref{new1}.\ref{A53} holds. Then,  \eqref{eq:inquality3} holds since, for all $p\geq 1$,
\begin{equation*}
\begin{split}
 \E\big[ &(1-\phi'_{t_p} (Y_{1:t_p}))(1-\tilde{\phi}_{t_{p-1}}(Y_{1:t_{p-1}})) \ind_{ \Theta^c_{t_p}} (U_{t_{p-1}:t_p}) \pi'_{t_{p-1},t_p}(V_{\epsilon} )\big]\\
 &\leq  \frac{Ce^{(t_p-t_{p-1})C}}{\P(U_{t_{p-1}:t_p}\in\Theta_{\delta, t_{p-1}:t_p})}h_{t_{p-1}}^{-\nu_1} \E\Big[\ind_{ \Theta^c_{t_p}} (U_{t_{p-1}:t_p})\\
 &\times \int_{V_{\epsilon}} (\mu_{t_{p-1}}*\tilde{\pi}_{t_{p-1}}) \Big(\theta-\sum_{s=t_{p-1}+1}^{t_p-1}U_s\Big)\prod_{s=t_{p-1}+1}^{t_p} \E\big[ \tilde{f}_{\theta-\sum_{i=s}^{t_{p}-1}U_i} (Y_s)\big| U_{t_{p-1}:t_p}\big]\dd\theta \Big]\\
 &\leq  \frac{C e^{(t_p-t_{p-1})C}}{\P(U_{t_{p-1}:t_p}\in\Theta_{\delta,t_{p-1}:t_p})}h_{t_{p-1}}^{-\nu_1} \big(\sup_{\theta\in\Theta}\E\big[f_\theta\big]\big)^{t_p-t_{p-1}}\P\big(U_{t_{p-1}:t_p}\not\in  \Theta_{t_p}\big)\\
 &\leq  \frac{C e^{(t_p-t_{p-1})C}}{\P(U_{t_{p-1}:t_p}\in\Theta_{\delta,t_{p-1}:t_p})}h_{t_{p-1}}^{-\nu_1}\P\big(U_{t_{p-1}:t_p}\not\in  \Theta_{t_p}\big)
\end{split}
\end{equation*}
where the last inequality holds since $\sup_{\theta\in\Theta}\E[f_\theta]<\infty$ under Assumption \ref{new1}.\ref{A53}. This concludes to show that \eqref{eq:inquality3} holds under Assumption \ref{new1}.

Using  \eqref{eq:inquality1}, \eqref{eq:inquality2} and \eqref{eq:inquality3}, to conclude the proof it is enough to show that, as $p\rightarrow\infty$,
\begin{align}
&h_{t_{p-1}}^{-\nu_1}e^{-(t_p-t_{p-1})C^{-1}}\rightarrow 0\label{eq:lim1}\\
&h_{t_{p-1}}^{-\nu_1}e^{(t_p-t_{p-1})C}\P\big(U_{t_{p-1}:t_p}\not\in  \Theta_{\gamma_{t_p},:t_{p-1},t_p}\big)\rightarrow 0\label{eq:lim2}\\
&\liminf_{p\rightarrow\infty}\P(U_{t_{p-1}:t_p}\in\Theta_{\delta,t_{p-1}:t_p})>0.\label{eq:lim3}
\end{align}

By assumptions, $\log(1/h_{t_{p-1}})(t_p-t_{p-1})^{-1}\rightarrow 0$ and therefore \eqref{eq:lim1}   holds.

To establish \eqref{eq:lim2} note that, using Doob's martingale inequality and the fact that $(\tilde{U} _s)_{s=t_{p-1}+1}^{t_p-1}$ are independent Gaussian random variables, we have
\begin{align}\label{eq:Doob_gauss}
\P\big([U_{(t_{p-1}+1):t_p}]\geq \gamma \big)\leq C  \exp\Big(-\frac{\gamma^2}{C \sum_{s=t_{p-1}+1}^{t_p-1}h_s^2}\Big),\quad\forall \gamma>0,\quad\forall p\geq 1
\end{align}
so that
\begin{align*}
\P\big(U_{t_{p-1}: t_p}\not\in  \Theta_{\gamma_{t_p}, t_{p-1}:t_p}\big)=\P\big([U_{(t_{p-1}+1):t_p}]\geq\gamma_{t_p}\big)\leq C \exp\Big(-\frac{\gamma_{t_p}^2}{C\sum_{s=t_{p-1}+1}^{t_p-1}h_s^2}\Big),\quad\forall p> 1.
\end{align*}
Consequently, for all $p> 1$,
\begin{align*}
\log\Big(&h_{t_{p-1}}^{-\nu_1}\, e^{(t_p-t_{p-1})C}\, \P\big(U_{t_{p-1}: t_p}\not\in  \Theta_{\gamma_{t_p}, t_{p-1}:t_p}\big)\Big)\\
&\leq -\nu_1 \log (h_{t_{p-1}})+C(t_p-t_{p-1})+\log(C)-C^{-1}\gamma_{t_p}^2\Big(\sum_{s=t_{p-1}+1}^{t_p-1}h_s^2\Big)^{-1}\\
&=-(t_p-t_{p-1})\bigg(-\frac{ \log(C)}{t_p-t_{p-1}}-C+\frac{C^{-1}\gamma_{t_p}^2-\nu_1\log(1/h_{t_{p-1}})\sum_{s=t_{p-1}+1}^{t_p-1}h_s^2}{(t_p-t_{p-1})\sum_{s=t_{p-1}+1}^{t_p-1}h_s^2}\bigg)
\end{align*}
where, under   the assumptions of the lemma and by taking $(\gamma_t)_{t\geq 1}$ such that $\gamma_{t_p}\rightarrow 0$ sufficiently slowly
$$
\bigg(-\frac{ \log(C)}{t_p-t_{p-1}}-C+\frac{C^{-1}\gamma_{t_p}^2-\nu_1\log(1/h_{t_{p-1}})\sum_{s=t_{p-1}+1}^{t_p-1}h_s^2}{(t_p-t_{p-1})\sum_{s=t_{p-1}+1}^{t_p-1}h_s^2}\bigg)\rightarrow\infty
$$
so that   \eqref{eq:lim2} holds.

Lastly, to show \eqref{eq:lim3} it suffices to remark that, since $\gamma_p\rightarrow 0$,
$$
\liminf_{p\rightarrow\infty}\P(U_{t_{p-1}:t_p}\in\Theta_{\delta, t_{p-1}:t_p})\geq\liminf_{p\rightarrow\infty}  \P(U_{t_{p-1}:t_p}\in\Theta_{\gamma_p, t_{p-1}:t_p})=1
$$
where the equality holds since \eqref{eq:lim2} holds. The proof is complete.

\end{proof}

\subsubsection{Proof of Lemma \ref{lemma:online2}}\label{p-lemma:online2}

\begin{proof}

Below $C\in(0,\infty)$ is a constant whose value can change from one expression to another.

We first remark that under the assumptions of the lemma  there exists a $p_1\in\mathbb{N}$ such that we have both $t_p-t_{p-1}>1$ and $v_p-t_{p-1}>0$ for all $p\geq p_1$. To  simplify the presentation of the proof we assume  without loss of generality that   $p_1=1$ in what follows.

For every $k\geq 1$  let $p_k=\sup\{p\geq 0:\, t_p<\tau_k\}$ and let $(s_t)_{t\geq 1}$ be a sequence in $\mathbb{N}_0$ such that $\inf_{t\geq 1}(t-s_t)\geq 1$, $(t-s_t)\rightarrow\infty$ and  $s_{\tau_k}=t_{p_k}$ for every $k\geq 1$. Remark that such a sequence $(s_t)_{t\geq 1}$ exists since $(\tau_k-s_{\tau_k})\geq 1$ for all $k\geq 1$ by construction while, by assumption,
$$
\liminf_{k\rightarrow\infty}(\tau_k-s_{\tau_k})\geq  \liminf_{k\rightarrow\infty} (v_{p_k+1}-t_{p_k})=\infty.
$$
Remark now that, by Doob's martingale inequality and under the assumptions on $(\mu_t)_{t\geq 0}$  we have, for every $\gamma>0$,
\begin{align*}
\limsup_{k\rightarrow\infty}\P\big([U_{(s_{\tau_k}+1):\tau_k}]\geq  \gamma\big)\leq C\limsup_{k\rightarrow\infty}\gamma^{-2} \sum_{s=t_{p_k}+1}^{\tau_k-1}h^2_s \leq C\limsup_{k\rightarrow\infty} \sum_{s=t_{p_k}+1}^{t_{p_{k}+1}-1}h^2_s =0
\end{align*}
showing that there exists a sequence $(\gamma_t)_{t\geq 1}$ in $\R_{>0}$ such that
\begin{align}\label{eq:gamma_t2}
 \gamma_t\rightarrow 0,\quad \P\big([U_{(s_{\tau_k}+1):\tau_k}]\geq  \gamma_{\tau_k}\big)\rightarrow 0.
\end{align}

Let $(\gamma_t)_{t\geq 1}$ be as in \eqref{eq:gamma_t2} and $(C_1,C_2)\in(0,\infty)^2$ be  as in Lemma \ref{lemma:part_1}. Without loss of generality we assume below that $2\sqrt{1/(2C_1 C_2)}<\delta_\star$, with $\delta_\star>0$ as in Lemma \ref{lemma:denom}.

Let $\kappa=\sqrt{1/(12C_1 C_2)}$, $\delta= 3\kappa$, $\tilde{\delta}=\kappa$ and remark that, for every $t\geq 1$ and $v \in B_{\tilde{\delta}}(0)$  we have, with $\theta_{s_t}:\Omega\rightarrow\Theta$  such that $\theta_{s_t}\sim\tilde{\pi}_{s_t}(\dd\theta)$ under $\P$,
\begin{equation}\label{eq:lower_1}
\begin{split}
\P\big(\|\theta_{s_t}+U_{s_t}-\theta_\star+v\|&\geq  \delta |\,\theta_{s_t}\in B_{\kappa}(\theta_\star)\big)\\
&\leq \P\big(\|\theta_{s_t}-\theta_\star\|+\|U_{s_t}\|+\|v\|\geq  \delta |\,\theta_{s_t}\in B_{\kappa}(\theta_\star)\big)\\
&\leq \P\big( U_{s_t}\not\in B_{ \delta-\tilde{\delta}-\kappa}(0)\big)\\
&\leq  \P\big( U_{s_t}\not\in B_{ \kappa}(0)\big).
\end{split}
\end{equation}
Then, since  $h_{s_{t-1}}\rightarrow 0$, it is easily checked   using \eqref{eq:lower_1}   that, under the assumptions on $(\mu_t)_{t\geq 0}$   and with $C^{\mu_{s_t}*\tilde{\pi}_{s_t}}_{\delta, \tilde{\delta}}$   as defined in Lemma \ref{lemma:denom}, for $t$ large enough we have
\begin{align}\label{eq:Ct}
C^{\mu_{s_t}*\tilde{\pi}_{s_t}}_{\delta, \tilde{\delta}}\geq \P\big( U_{s_t} \in B_{\kappa}(0)\big)\tilde{\pi}_{s_t}(B_{\kappa}(\theta_\star))\geq C^{-1} \tilde{\pi}_{s_t}(B_{\kappa}(\theta_\star)),\quad\P-a.s.\
\end{align}

Let  $(\phi'_t)_{t\geq 1}$ be as in Lemma \ref{lemma:part_1}, $\beta\in(0,1)$ and, for every $t\geq 1$,   $\tilde{\phi}_t(Y_{1:t})$ be such that $\tilde{\phi}_t(Y_{1:t})=1$ whenever $\tilde{\pi}_{t} (V_{\kappa})\geq \beta^{-1}$  and such that $\tilde{\phi}_t(Y_{1:t})=0$ otherwise. Notice that $\E[\phi'_{\tau_k}(Y_{1:\tau_k})]\rightarrow 0$ by Lemma \ref{lemma:part_1} while $\E[\tilde{\phi}_{s_{\tau_k}}(Y_{1:s_{\tau_k}})]\rightarrow 0$ by  Lemma \ref{lemma:online1}.

Therefore, using the shorthand $\Theta_{\tau_k}=\Theta_{\gamma_{\tau_k}, s_{\tau_k}:\tau_k}$,
\begin{equation}\label{eq:inquality11}
\begin{split}
 &\limsup_{k\rightarrow\infty}  \E[\tilde{\pi}_{\tau_k}(V_{\epsilon})]\\
 &\leq \limsup_{k\rightarrow\infty}\E\big[(1-\phi'_{\tau_k}(Y_{1:\tau_k}))(1-\tilde{\phi}_{s_{\tau_k}}(Y_{1:s_{\tau_k}}))\ind_{\Theta_{\tau_k}}(U_{s_{\tau_k}:\tau_k})  \pi'_{s_{\tau_k},\tau_k}(V_{\epsilon}) \big]\\
&+\limsup_{k\rightarrow\infty}\E\big[(1-\phi'_{\tau_k}(Y_{1:\tau_k}))(1-\tilde{\phi}_{s_{\tau_k}}(Y_{1:s_{\tau_k}}))\ind_{\Theta^c_{\tau_k}}(U_{s_{\tau_k}:\tau_k})  \pi'_{s_{\tau_k},\tau_k}(V_{\epsilon})\big].
\end{split}
\end{equation}

We now remark that, using Lemma \ref{lemma:online1}, \eqref{eq:Ct} and Lemma \ref{lemma:denom}, and because $s_{\tau_k}=t_{p_k}$ for all $k\geq 1$, we can without loss of generality  assume that $(\phi'_t)_{t\geq 1}$ is such that, for all $k\geq 1$,
\begin{equation}\label{eq:psi11}
\begin{split}
\P\Big(\int_{\Theta}\E\Big[&(\mu_{t_{p_k}}*\tilde{\pi}_{t_{p_k}}) \big(\theta-\sum_{s=s_{\tau_k}+1}^{\tau_k-1}U_s\Big)\prod_{s=s_{\tau_k}+1}^{\tau_k} (\tilde{f}_{\theta-\sum_{i=s}^{\tau_{k}-1}U_i}/f_{\theta_\star})(Y_s)\big|\F_{\tau_k}\Big]  \dd\theta \\
 &>  C^{-1}  \P(U_{s_{\tau_k}:\tau_k}\in\Theta_{\tilde{\delta}, s_{\tau_k}:\tau_k})\,e^{-C(\tau_k-s_{\tau_k})  \delta^2}\big| \phi'_{\tau_k}(Y_{1:\tau_k})=0\Big)=1.
\end{split}
\end{equation}

Then, following the computations in the proof of Lemma \ref{lemma:online1}, with \eqref{eq:psi11} used in place of \eqref{eq:psi1}, we obtain for $k$ large enough
\begin{equation}\label{eq:inquality22}
\begin{split}
\E\big[(1-\phi'_{\tau_k}(Y_{1:\tau_k}))(1-\tilde{\phi}_{s_{\tau_k}} (Y_{1:s_{\tau_k}}))&\ind_{\Theta_{\tau_k}}(U_{s_{\tau_k}:\tau_k})  \pi'_{s_{\tau_k},\tau_k} (V_{\epsilon})\big]\\
&\leq   \frac{C e^{-(\tau_k-s_{\tau_k})C^{-1}}}{\P(U_{s_{\tau_k}:\tau_k}\in\Theta_{\tilde{\delta}, s_{\tau_k}:\tau_k}) }
\end{split}
\end{equation}
and
\begin{equation}\label{eq:inquality33}
\begin{split}
 \E\big[ (1-\phi'_{\tau_k} (Y_{1:\tau_k}))(1-\tilde{\phi}_{s_{\tau_k}} (Y_{1:s_{\tau_k}})) &\ind_{\Theta^c_{\tau_k}} (U_{s_{\tau_k}:\tau_k})  \pi'_{s_{\tau_k},\tau_k}(V_{\epsilon} )\big]\\
&\leq \frac{C e^{(\tau_k-s_{\tau_k})C}}{\P(U_{s_{\tau_k}:\tau_k}\in\Theta_{\tilde{\delta},s_{\tau_k}:\tau_k})} \P\big(U_{s_{\tau_k}:\tau_k}\not\in \Theta_{\tau_k}\big).
\end{split}
\end{equation}

Therefore, using  \eqref{eq:inquality11}, \eqref{eq:inquality22} and \eqref{eq:inquality33}, to conclude the proof it is enough to show that
\begin{align}
&e^{-(\tau_k-s_{\tau_k}) C^{-1}}\rightarrow 0\label{eq:lim11}\\
&e^{(\tau_k-s_{\tau_k})C}\P\big(U_{s_{\tau_k}:\tau_k}\not\in \Theta_{\gamma_{\tau_k}, s_{\tau_k}:\tau_k}\big)\rightarrow 0\label{eq:lim22}.
\end{align}
Since $\liminf_{k\rightarrow\infty}(\tau_k-s_{\tau_k})\geq  \liminf_{k\rightarrow\infty}(v_{p_k+1}-t_{p_k})= \infty$ it follows that \eqref{eq:lim11} holds. 

To show \eqref{eq:lim22} remark that
\begin{align*}
e^{(\tau_k-s_{\tau_k})C}\P\big(U_{s_{\tau_k}:\tau_k}\not\in  \Theta_{\gamma_{\tau_k}, s_{\tau_k}:\tau_k}\big)&=e^{(\tau_k-s_{\tau_k})C}\P\big([U_{(s_{\tau_k}+1):\tau_k}]\geq \gamma_{\tau_k} \big)\\
&\leq C e^{(t_{p_k+1}-t_{p_k})C}\exp\Big(-\frac{\gamma_{\tau_k}}{C\sum_{s=t_{p_k}+1}^{\tau_k-1}h_s^2}\Big)\\
&\leq C e^{(t_{p_k+1}-t_{p_k})C}\exp\Big(-\frac{\gamma_{\tau_k}}{C\sum_{s=t_{p_k}+1}^{t_{p_k+1}-1}h_s^2}\Big)
\end{align*}
where the first inequality uses \eqref{eq:Doob_gauss}. As shown in the proof of Lemma \ref{lemma:online1},  the term on the r.h.s.\ of the last inequality sign converges to 0 as $k\rightarrow\infty$ when $\gamma_{\tau_k}\rightarrow 0$ sufficiently slowly, and thus \eqref{eq:lim22} holds. The proof is complete.

\end{proof}

\subsubsection{Proof of  Lemma \ref{lemma:online3}\label{p-lemma:online3}}

Below $C\in(0,\infty)$ is a constant whose value can change from one expression to another.

For every $p\geq 1$  let
\begin{align*}
&\xi_p=1\wedge\Bigg(\frac{\log(h_{t_p}^{-\nu})^{1/2}\wedge (t_p-t_{p-1})^{1/2}}{\log(h_{t_p}^{-\nu})}\Bigg)\\
&v_p=t_{p-1}+ \lfloor   \log(h_{t_{p-1}}^{-\nu\xi_{p-1}})\rfloor \wedge (t_{p}-t_{p-1}-1)
\end{align*}
so that
\begin{align}\label{eq:xi_p}
 \xi_p\rightarrow 0,\quad \log(h_{t_{p-1}}^{-\nu\xi_{p-1}})\rightarrow\infty,\quad \frac{ \log(h_{t_{p-1}}^{-\nu\xi_{p-1}})}{t_p-t_{p-1}}\rightarrow 0
\end{align}
while $(v_p)_{p\geq 1}$ verifies the conditions of Lemma \ref{lemma:online2}.

 For every $q\geq 1$  let $p_q=\sup\{p\geq 0:\, t_p<\tau'_q\}$ and note that, using \eqref{eq:xi_p},

\begin{equation}\label{eq:st}
\begin{split}
\liminf_{q\rightarrow\infty}(\tau'_q-v_{p_q})&\geq \liminf_{q\rightarrow\infty}(t_{p_q}-v_{p_q})\\
&= \liminf_{p\rightarrow\infty}(t_p-v_p)\\
&\geq  \liminf_{p\rightarrow\infty} (t_p-t_{p-1})\Big(1-\frac{\log(h^{-\nu\xi_{p-1}}_{t_{p-1}})}{t_p-t_{p-1}}\Big)\\
&=\infty.
\end{split}
\end{equation}

Note also that, under the assumptions of the lemma, there exists a $p_1\in\mathbb{N}$ such that
\begin{equation}\label{eq:st1}
   \log(h_{t_{p-1}}^{-\nu})^{1/2}>1,\quad\forall p\geq p_1.
\end{equation}
To  simplify the presentation of the proof we assume without  loss of generality that   $p_1=1$ in what follows.

We now let $(s_t)_{t\geq 0}$  be a sequence in $\mathbb{N}_0$ such that $\inf_{t\geq 1}(t-s_t)\geq 1$, $(t-s_t)\rightarrow\infty$ and
$$
s_{\tau'_q}=\big(\tau'_q- \lfloor \log(h_{t_{p_q}}^{-\nu})^{1/2}\rfloor\big)\vee v_{p_q},\quad \forall q\geq 1.
$$
Notice that such a sequence   $(s_t)_{t\geq 0}$ exists by \eqref{eq:st} and because we are assuming that \eqref{eq:st1} holds with $p_1=1$. Note also that $(s_t)_{t\geq 1}$ is such that $v_{p_q}\leq s_{\tau'_q}\leq t_{p_q}$ for all $q\geq 1$.

We now show that
\begin{align}\label{eq:gamma_cond}
 \P\big([U_{(s_{\tau'_q}+1):\tau'_q}]\geq  \gamma\big)\rightarrow 0,\quad\forall \gamma>0.
\end{align}
To this aim let $\gamma>0$ and remark that 
\begin{equation}\label{eq:split_prob}
\begin{split}
\P\big([U_{(s_{\tau'_q}+1):\tau'_q}] \geq  \gamma\big)&\leq \P\big(\max_{s_{\tau'_q}<s< \tau'_q}\big\|\sum_{i=s}^{\tau'_q-1} U_i-U_{t_{p_q}}\ind(s\leq t_{p_q})\|\geq  \gamma/2\big)\\
&+\P\big( \|U_{t_{p_q}}\|\geq  \gamma/2\big).
\end{split}
\end{equation}
where, using the fact that 
\begin{align}\label{eq:bound_tail_stud}
\int_a^\infty \Big(1+\frac{x^2}{\nu}\Big)^{-\frac{\nu+1}{2}}\dd x\leq \nu^{\frac{\nu+1}{2}}\int_a^\infty x^{-(\nu+1)}\dd x=\nu^{\frac{\nu-1}{2}} a^{-\nu}\quad\forall a>0, 
\end{align}
we have $\P\big( \|U_{t_{p_q}}\|\geq  \gamma/2\big)\rightarrow 0$ since $h_t\rightarrow 0$. In addition, by Doob's martingale inequality and under the assumptions on $(\mu_t)_{t\geq 0}$,
\begin{align*}
&\limsup_{q\rightarrow\infty}\P\Big(\max_{s_{\tau'_q}<s<\tau'_q}\big\|\sum_{i=s}^{\tau'_q-1} U_i-U_{t_{p_q}}\ind(s\leq t_{p_q})\|\geq  \frac{\gamma}{2}\Big)\leq C\limsup_{q\rightarrow\infty}\frac{\sum_{s=t_{p_q-1}+1}^{t_{p_{q+1}}-1}h_s^2}{(\gamma/2)^2}=0.
\end{align*}
Hence, \eqref{eq:gamma_cond} holds showing that there exists a  sequence $(\gamma_t)_{t\geq 1}$ in $\R_{>0}$ such that
\begin{align}\label{eq:gamma_t22}
\gamma_t\rightarrow 0,\quad  \P\big([U_{(s_{\tau'_q}+1):\tau'_q}]\geq  \gamma_{\tau'_q}\big)\rightarrow 0.
\end{align}

Let $(\gamma_t)_{t\geq 1}$ be as in \eqref{eq:gamma_t22}, $(\tau_k)_{k\geq 1}$ be as defined in Lemma \ref{lemma:online2}, $(\tilde{\tau}_r)_{r\geq 1}$ be a strictly increasing sequence in $\mathbb{N}_0$ such that $(\tilde{\tau}_r)_{r\geq 1}= (\tau_k)_{k\geq 1}\cup(t_p)_{p\geq 0}$ and note that, by Lemmas \ref{lemma:online1}-\ref{lemma:online2} ,
\begin{align}\label{eq:conv_sub2}
\E[\tilde{\pi}_{\tilde{\tau}_r}(V_{\kappa})]\rightarrow 0,\quad\forall\kappa>0.
\end{align}

Remark also that, by construction,  $(s_{\tau'_q})_{q\geq 1}\subset (\tilde{\tau}_r)_{r\geq 1}$ so that we can   now follow the computations in the proof of Lemma \ref{lemma:online2}. As in this latter let  $\tilde{\delta}=\kappa$ for some sufficiently small $\kappa>0$ (see the proof of Lemma \ref{lemma:online2} for the expression of $\kappa$). Then,  as shown  in the proof of Lemma \ref{lemma:online2} we have,
\begin{align*}
\limsup_{q\rightarrow\infty}\E[\tilde{\pi}_{\tau'_q}(V_{\epsilon})]&\leq \limsup_{q\rightarrow\infty}     \frac{C\,e^{-(\tau'_q-s_{\tau'_q}) C^{-1}}}{\P(U_{s_{\tau'_q}:\tau'_q}\in\Theta_{\tilde{\delta}, s_{\tau'_q}:\tau'_q}) }\\
&+\limsup_{q\rightarrow\infty}  \frac{C\, e^{(\tau'_q-s_{\tau'_q})C}}{\P(U_{s_{\tau'_q}:\tau'_q}\in\Theta_{\tilde{\delta},s_{\tau'_q}:\tau'_q})} \P\big(U_{s_{\tau'_q}:\tau'_q}\not\in \Theta_{\gamma_{\tau'_q},s_{\tau'_q}:\tau'_q}\big)
\end{align*}
so that to conclude the proof it is enough   to show that
\begin{align}
&e^{(\tau'_q-s_{\tau'_q})C^{-1}}\rightarrow 0\label{eq:lim111}\\
&e^{(\tau'_q-s_{\tau'_q})C}\P\big(U_{(s_{\tau'_q}+1):\tau'_q}\not\in \Theta^{\gamma_{\tau'_q}}_{s_{\tau'_q},\tau'_q}\big)\rightarrow 0\label{eq:lim222}.
\end{align}
Since $(\tau'_q-s_{\tau'_q})\rightarrow \infty$ it follows that \eqref{eq:lim111}  holds.  

To show \eqref{eq:lim222} remark  that
\begin{equation}\label{eq:decomp_final}
\begin{split}
e^{(\tau'_q-s_{\tau'_q})C}\P\big(U_{s_{\tau'_q}:\tau'_q}&\not\in \Theta_{\gamma_{\tau'_q},s_{\tau'_q}:\tau'_q}\big)\\
&=e^{(\tau'_q-s_{\tau'_q})C}\P\big([U_{(s_{\tau'_q}+1):\tau'_q}]\geq \gamma_{\tau'_q}\big) \\
&\leq  e^{(\tau'_q-s_{\tau'_q})C} \P\big(\max_{s_{\tau'_q}<s< \tau'_q}\big\|\sum_{i=s}^{\tau'_q-1} U_i-U_{t_{p_q}}\ind(s\leq t_{p_q})\|\geq  \gamma_{\tau'_q}/2\big)\\
&+e^{(\tau'_q-s_{\tau'_q})C}\P\big( \|U_{t_{p_q}}\|\geq  \gamma_{\tau'_q}/2\big).
\end{split}
\end{equation}
In addiiton,
\begin{align}\label{eq:in_st}
\tau'_q-s_{\tau'_q}\leq  \log(h_{t_{p_q}}^{-\nu\xi_{p_q}}) \leq 2(t_{p_q+1}-t_{p_{q}}),
\end{align}
where the second inequality holds  by \eqref{eq:xi_p} and for $q$ large enough.

Therefore, using \eqref{eq:Doob_gauss},
\begin{equation}\label{eq:decomp_final1}
\begin{split}
\limsup_{q\rightarrow\infty} &\,e^{(\tau'_q-s_{\tau'_q})C}\P\big(\max_{s_{\tau'_q}<s< \tau'_q}\big\|\sum_{i=s}^{\tau'_q-1} U_i-U_{t_{p_q}}\ind(s\leq t_{p_q})\|\geq  \gamma_{\tau'_q}/2\big)\\
&\leq C \limsup_{q\rightarrow\infty}  e^{(t_{p_q+1}-t_{p_{q}})C}\exp\bigg(-\frac{\gamma_{\tau'_q}}{C\sum_{s=t_{p_q-1}+1}^{t_{p_q+1}-1}h_s^2}\bigg)\\
&=0
\end{split}
\end{equation}
where the  equality holds   assuming without loss of generality that $\gamma_{\tau'_q}\rightarrow 0$ sufficiently slowly and uses the fact that, by assumption,
$$
\limsup_{p\rightarrow\infty}\frac{t_{p+2}-t_{p+1}}{ t_{p+1}-t_{p}}<\infty.
$$
Using \eqref{eq:bound_tail_stud} we have
\begin{align*}
\P\big( \|U_{t_{p}}\|\geq  \gamma\big) \leq   C\Big(\frac{h_{t_p}}{ \gamma}\Big)^\nu,\quad\forall p\geq 1,\quad\forall \gamma>0.
\end{align*}
 Remark also that, using the first inequality in \eqref{eq:in_st} and recalling that $\xi_p\rightarrow 0$,
 \begin{align*}
\limsup_{p\rightarrow\infty}  h_{t_{p_q}}^\nu e^{(\tau'_p-s_{\tau'_p})C}&\leq\limsup_{p\rightarrow\infty}  h_{t_{p_q}}^{\nu(1-C \xi_{p_q})}=0
 \end{align*}
and therefore, assuming without loss of generality that $\gamma_{\tau'_q}\rightarrow 0$ sufficiently slowly,
 $$
 \limsup_{p\rightarrow\infty} e^{(\tau'_q-s_{\tau'_q})C} \P\big( \|U_{t_{p_q}}\|\geq   \gamma_{\tau'_q}/2\big)=0.
 $$
 Together with \eqref{eq:decomp_final} and \eqref{eq:decomp_final1} this last result shows that \eqref{eq:lim222} holds. The proof is complete.

\subsection{Proof of Lemma \ref{lem:lower_bound_1}\label{p-lem:lower_bound_1}}

\subsubsection{Preliminary results}

We first show the following simple result.

\begin{lemma}\label{lemma:simple}
Let $(t_p)_{p\geq 0}$ be a subsequence of $\mathbb{N}_0$ and $g:\setY\rightarrow\R$ be a measurable function such that $\E[|g(Y_1)]|<\infty$. Then, as $p\rightarrow\infty$,
$$
\P\Big(\max_{0<i\leq p}\frac{1}{t_i-t_{i-1}}\Big|\sum_{s=t_{i-1}+1}^{t_i}\big(g(Y_s)-\E[g(Y_1)]\big)\Big|\geq  t^\delta_{p}\Big)\rightarrow 0,\quad\forall \delta>1.
$$

\end{lemma}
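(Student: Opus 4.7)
The plan is to reduce the claim to a union bound followed by Markov's inequality applied to each individual block sum, using only the first-moment hypothesis $\E[|g(Y_1)|]<\infty$.

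First I would centre and re-label: set $X_s = g(Y_s) - \E[g(Y_1)]$, so that $(X_s)_{s\geq 1}$ is i.i.d., mean zero, and $\E[|X_s|] \leq 2\E[|g(Y_1)|] < \infty$. Writing $S_i = \sum_{s=t_{i-1}+1}^{t_i} X_s$, the triangle inequality immediately gives
$$
\E[|S_i|] \leq (t_i - t_{i-1})\,\E[|X_1|],\qquad i\geq 1.
$$
This is the only probabilistic input we really need; no second-moment control is available and none is required.

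Next, by a union bound over $i\in\{1,\dots,p\}$ followed by Markov's inequality applied to $|S_i|$, one obtains
$$
\P\Big(\max_{0<i\leq p}\frac{|S_i|}{t_i-t_{i-1}}\geq t_p^\delta\Big)
 \leq \sum_{i=1}^{p}\frac{\E[|S_i|]}{t_p^\delta(t_i-t_{i-1})}
 \leq \frac{p\,\E[|X_1|]}{t_p^\delta}.
$$

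Finally, since $(t_p)_{p\geq 0}$ is a strictly increasing sequence in $\mathbb{N}_0$, we have $t_p\geq p$ for all $p\geq 0$, hence $p/t_p^\delta \leq p^{1-\delta}$. The assumption $\delta>1$ then yields $p/t_p^\delta \to 0$ as $p\to\infty$, and the displayed bound forces the probability in question to converge to $0$. There is no real obstacle in this argument; the only subtle point is to notice that the normalisation by $t_i-t_{i-1}$ inside the maximum exactly cancels the natural $(t_i-t_{i-1})$ factor coming from Markov's inequality applied to $\E[|S_i|]$, which is what makes a first-moment bound sufficient.
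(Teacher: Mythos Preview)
Your proof is correct and follows essentially the same approach as the paper: union bound over $i$, Markov's inequality on each block average, the cancellation of the $(t_i-t_{i-1})$ factor, and then $p\leq t_p$ combined with $\delta>1$ to conclude. The paper writes the final bound as $2\E[|g(Y_1)|]/t_p^{\delta-1}$ rather than $p\,\E[|X_1|]/t_p^\delta$, but this is the same estimate after applying $p\leq t_p$.
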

\begin{proof}
Let $\delta>1$ so that, using Markov's inequality for the second inequality,
\begin{align*}
\limsup_{p\rightarrow\infty}\P\Big(&\max_{0< i\leq p}\frac{1}{t_i-t_{i-1}}\Big|\sum_{s=t_{i-1}+1}^{t_i}\big(g(Y_s)-\E[g(Y_1)]\big)\Big|\geq  t^\delta_{p}\Big)\\
&\leq \limsup_{p\rightarrow\infty}\sum_{i=1}^p\P\Big(  \frac{1}{t_i-t_{i-1}}\Big|\sum_{s=t_{i-1}+1}^{t_i}\big(g(Y_s)-\E[g(Y_1)]\big)\Big|\geq  t^\delta_{p}\Big)\\
&\leq \limsup_{p\rightarrow\infty}\frac{2\E[|g(Y_1)|]}{t^{\delta-1}_{p}}\\
&=0.
\end{align*}
\end{proof}

We also recall the following result \citep[][Theorem 1.2]{ferger2014}.
\begin{lemma}\label{lemma:mean_p}
Let  $(X_i)_{i\geq 1}$ be a sequence of i.i,d.\ random variables such that $\E[X_1]=0$ and $\E[X_1^{2p}]<\infty$ for some $p\in\mathbb{N}$. Then $\E\big[(\sum_{i=1}^n X_i)^{2p}\big]=\bigO(n^p)$.
\end{lemma}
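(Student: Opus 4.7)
The plan is to establish the bound by a direct multinomial expansion of $S_n^{2p}:=\big(\sum_{i=1}^n X_i\big)^{2p}$, using independence to factor expectations and the zero-mean assumption to kill most of the resulting terms. Writing
\[
\E[S_n^{2p}]=\sum_{k_1+\cdots+k_n=2p}\binom{2p}{k_1,\ldots,k_n}\prod_{j=1}^n \E[X_j^{k_j}],
\]
independence has already been used to obtain the product form. Since $\E[X_1]=0$, any tuple $(k_1,\ldots,k_n)$ with some $k_j=1$ contributes zero, so only tuples whose nonzero entries are all $\geq 2$ survive.

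The first real step is counting the surviving tuples. If $m$ is the number of indices with $k_j\geq 1$, then the surviving constraint $k_j\geq 2$ combined with $\sum k_j=2p$ forces $2m\leq 2p$, i.e.\ $m\leq p$. Choosing the set of active indices contributes at most $\binom{n}{m}=O(n^m)\leq O(n^p)$ possibilities, and for each fixed $m\leq p$ there are only finitely many (depending on $p$ alone) admissible assignments of the positive exponents. Hence the total number of surviving tuples is $O(n^p)$, with the dominant contribution coming from $m=p$ (all surviving $k_j$ equal to $2$), which is exactly the scaling we target.

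The second step is a uniform bound on each surviving contribution. By monotonicity of $L^q$-norms, $\E[|X_1|^{k_j}]\leq \E[|X_1|^{2p}]^{k_j/(2p)}$ whenever $k_j\leq 2p$, so
\[
\prod_{j}\bigl|\E[X_j^{k_j}]\bigr|\leq \prod_j \E[|X_1|^{k_j}]\leq \E[|X_1|^{2p}]^{(\sum_j k_j)/(2p)}=\E[|X_1|^{2p}].
\]
Since the multinomial coefficients are bounded by a constant depending only on $p$, combining this with the count above gives $\E[S_n^{2p}]\leq C_p\,\E[|X_1|^{2p}]\,n^p$, as required.

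The main obstacle is not conceptual but purely bookkeeping: one must verify that the crude bound $\binom{n}{m}\leq n^m$ together with only finitely many allowed exponent profiles for each $m$ truly produces the $n^p$ order, and that smaller values of $m$ contribute at most $O(n^{p-1})$ so that they do not inflate the constant in a way that depends on $n$. A more concise alternative, bypassing the explicit combinatorics, would be to invoke Rosenthal's inequality, which directly delivers $\E[|S_n|^{2p}]\leq C_p\bigl(n\E[X_1^2]\bigr)^p + C_p n\E[|X_1|^{2p}]=O(n^p)$ for centered i.i.d.\ sequences with finite $2p$-th moment; but the multinomial route above has the virtue of being self-contained.
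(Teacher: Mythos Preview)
Your argument is correct: the multinomial expansion together with the observation that any index appearing with exponent exactly one kills the term, the count of surviving index-sets via $m\leq p$, and the uniform moment bound via Lyapunov's inequality all combine to give $\E[S_n^{2p}]\leq C_p\,\E[|X_1|^{2p}]\,n^p$. The alternative route through Rosenthal's inequality that you mention is also valid.

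The paper does not actually prove this lemma; it simply recalls it as a known result, citing \citet[][Theorem 1.2]{ferger2014}. Your proof is therefore strictly more than what the paper provides. The multinomial-expansion argument you give is the standard elementary proof of this moment bound and is entirely self-contained, which is an advantage over the paper's citation if one wants the exposition to stand alone; the paper's choice to cite instead keeps the appendix shorter and points the reader to a reference where sharper constants and related inequalities are available.
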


\subsubsection{Proof of the lemma}
\begin{proof}

Let $\beta>\max( \beta_\star,1/\alpha)$ and $D_p= h_{t_{p}}^{-\beta}$ for all $p\geq 1$.

We first establish the result of the theorem under Assumption \ref{new2}.\ref{A61}, and start by showing that
\begin{align}\label{eq:conv_max}
 \P\Big(\max_{0< i\leq p} e^{-(t_i-t_{i-1})\zeta(D_p)/2}\prod_{s=t_{i-1}+1}^{t_i}\sup_{\theta_s\in V_{D_p}}(\tilde{f}_{\theta_{s}}/f_{\theta_\star})(Y_s)< 1\Big)\rightarrow 1.
\end{align}

Let $p_1\in\mathbb{N}$ be such that  $\zeta(D_p)<0$ for all $p\geq p_1$; notice that such a $p_1$ exists under Assumption \ref{new2}.\ref{A61}. Then, for every $p\geq p_1$ we have
\begin{equation}\label{eq:conv_max_1}
\begin{split}
\P\Big(& \max_{0< i\leq p} e^{-(t_i-t_{i-1})\zeta(D_p)/2}\prod_{s=t_{i-1}+1}^{t_i}\sup_{\theta_s\in V_{D_p}}(\tilde{f}_{\theta_{s}}/f_{\theta_\star})(Y_s)\geq  1\Big)\\
&=\P\Big(\max_{0< i\leq p} \sum_{s=t_{i-1}+1}^{t_i}\Big\{\sup_{\theta_s\in V_{D_p}}\log \Big(\frac{\tilde{f}_{\theta_{s}}}{f_{\theta_\star}}(Y_s)\Big)-\zeta(D_p)+\frac{\zeta(D_p)}{2}\Big\}\geq  0\Big)\\
&\leq \sum_{i=1}^p\P\Big( \sum_{s=t_{i-1}+1}^{t_i}\Big(\sup_{\theta_s\in V_{D_p}}\log \Big(\frac{\tilde{f}_{\theta_{s}}}{f_{\theta_\star}}(Y_s)\Big)- \zeta(D_p )\Big)> (t_i-t_{i-1})|\zeta(D_p)|/2\Big)\\
&=\sum_{i=1}^p\P\Big(\frac{1}{t_i-t_{i-1}} \sum_{s=t_{i-1}+1}^{t_i}\Big(\sup_{\theta_s\in V_{D_p}}\log \Big(\frac{\tilde{f}_{\theta_{s}}}{f_{\theta_\star}}(Y_s)\Big)- \zeta(D_p )\Big)> |\zeta(D_p)|/2\Big).
\end{split}
\end{equation}
Using Markov's inequality,  Assumption  \ref{new2}.\ref{A61} and Lemma \ref{lemma:mean_p},   there exists a  constant $C\in(0,\infty)$ such that, for all $i\in 1:p$,
\begin{equation}\label{eq:conv_max_2}
\begin{split}
\P\Big(&\frac{1}{t_i-t_{i-1}} \sum_{s=t_{i-1}+1}^{t_i}\Big(\sup_{\theta_s\in V_{D_p}}\log \Big(\frac{\tilde{f}_{\theta_{s}}}{f_{\theta_\star}}(Y_s)\Big)- \zeta(D_p )\Big)> |\zeta(D_p)|/2\Big)\\
&\leq 2^{2k_\star}\frac{\E\Big[\Big|\frac{1}{t_i-t_{i-1}} \sum_{s=t_{i-1}+1}^{t_i}\big(\sup_{\theta_s\in V_{D_p}}\log\big( (\tilde{f}_{\theta_{s}}/f_{\theta_\star})(Y_s)\big)- \zeta(D_p) \big)\Big|^{2k_\star}\Big]}{ \zeta(D_p)^{2 k_\star}}\\
&\leq  \frac{C }{ (t_i-t_{i-1})^{k_\star\ind_{\mathbb{N}}(k_\star)} |\zeta(D_p)|^{2k_\star}}.
\end{split}
\end{equation}
By assumptions, $|\zeta(D_p)|^{-2 k_\star}\sum_{i=1}^p (t_i-t_{i-1})^{-k_\star\ind_{\mathbb{N}}(k_\star)}\rightarrow 0$ which, together with  \eqref{eq:conv_max_1} and \eqref{eq:conv_max_2}, implies \eqref{eq:conv_max}.

To proceed further let $\tilde{C}_{\star}\in(0,\infty)$ be as in Lemma \ref{lemma:denom},  $\delta_p=t_p^{-\gamma}$ for some $\gamma>\max(\alpha,1/2)$, $\delta>0$ and, for all $i\geq 1$ and $s\in t_{i-1}:(t_i-1)$, let $V_{i,s}=\sum_{j=s}^{t_i-1}U_j$.

Under Assumptions \ref{m_star}-\ref{taylor} and   by Lemma \ref{lemma:denom},  for $p$ large enough (i.e.\ for $\delta_p$ small enough) and  all $i\in 1:p$, we have
\begin{equation*}
\begin{split}
g_{i}(Y_{1:t_i})&:=\int_{\Theta} \E\big[ (\mu_{t_{i-1}}*\tilde{\pi}_{t_{i-1}})(\theta-V_{i,t_{i-1}+1})\prod_{s=t_{i-1}+1}^{t_i}(\tilde{f}_{\theta-V_{i,s}}/f_{\theta_\star})(Y_s)\big|\F_{t_i}\big]\dd\theta\\
&\leq    \frac{\P( U_{ t_{i-1}:t_i} \in\Theta_{\delta_p, t_{i-1}:t_i})}{  e^{(t_i-t_{i-1})(2(\delta_p\tilde{C_\star})^2+\delta)}} \inf_{v\in B_{\delta_p}(0)}(\mu_{t_{i-1}}*\tilde{\pi}_{t_{i-1}})(B_{\delta_p}(\theta_\star-v))\\
&=:\tilde{g}_{i,p}(Y_{1:t_i})
\end{split}
\end{equation*}
with probability at most $p_{i,p}:=\big( (t_i-t_{i-1})\big\{ (\tilde{C}_{\star} \delta_p) +(\tilde{C}_{\star} \delta_p)^{-1} \delta \big\}^{2}\big)^{-1}$.

Then, noting that $\sum_{i=1}^p   (t_i-t_{i-1})^{-1}\leq t_p$, it follows that
\begin{equation}\label{eq:conv_den2}
\begin{split}
\limsup_{p\rightarrow\infty}\P\Big(\min_{1\leq i\leq p}\frac{g_{i}(Y_{1:t_i})}{\tilde{g}_{i,p}(Y_{1:t_i})} \leq 1\Big)& \leq \limsup_{p\rightarrow\infty} \sum_{i=1}^p p_{i,p}\\
&\leq \limsup_{p\rightarrow\infty} \frac{\delta_p^{2}\, \tilde{C}_{\star}^2}{\delta^2}\sum_{i=1}^p (t_i-t_{i-1})^{-1}\\
& \leq \frac{ \tilde{C}_{\star}^2}{\delta^2} \limsup_{p\rightarrow\infty} t_p^{-2\gamma} t_p\\
&=0
\end{split}
\end{equation}
where the last equality holds since $\gamma>1/2$.

Therefore, by \eqref{eq:conv_max}  and  \eqref{eq:conv_den2},   there exists a sequence $(\setY_p)_{p\geq 1}$, with $\setY_p\subset\sigY^{\otimes t_p}$ for all $p\geq 1$,  such that  $\P(Y_{1:t_p}\in\setY_p)\rightarrow 1$ and such that,  for all $p\geq 1$,
\begin{align*}
 \max_{0< i\leq p} e^{-(t_i-t_{i-1})\zeta(D_p)/2}\prod_{s=t_{i-1}+1}^{t_i}\sup_{\theta_s\in V_{D_p}}\frac{\tilde{f}_{\theta_{s}}}{f_{\theta_\star}}(y_s)< 1,\quad \min_{1\leq i\leq p}\frac{g_{i}(y_{1:t_i})}{\tilde{g}_{i,p}(y_{1:t_i})}>1,\quad \forall y_{1:t_p}\in\setY_p.
\end{align*}

To proceed further let  $\epsilon\in(0,1)$ and note that, since $\P(Y_{1:t_p}\in\setY_p)\rightarrow 1$,
\begin{align}\label{eq:res_1}
\limsup_{p\rightarrow\infty}\P(\tilde{\pi}_{t_p}(V_{2D_p})\geq \epsilon)\leq  \limsup_{p\rightarrow\infty}\P\big(\{\tilde{\pi}_{t_p}(V_{2D_p})\geq \epsilon\}\cap \{Y_{1:t_p}\in\setY_p\}\big)
\end{align}
so that to prove the   theorem under Assumption \ref{new2}.\ref{A61} it remains to show that
$$
 \limsup_{p\rightarrow\infty}\P\big(\{\tilde{\pi}_{t_p}(V_{2D_p})\geq \epsilon\}\cap \{Y_{1:t_p}\in\setY_p\}\big)=0.
$$

To this aim let $p_0\in\mathbb{N}$ and, for every $p>p_0$, let $(\setY_{i,p})_{i=0}^p$ be a sequence in $\sigY^{\otimes t_p}$ such that $\setY_{0,p}=\setY^{t_p}$,  $\setY_{p,p}=\setY_p$ and such that, for every $i\in 1:p$,  $\setY_{i,p}\subset \cap_{j=0}^{i-1}\setY_{j,p}$ and
\begin{align*}
 \max_{0< j\leq i} e^{-(t_j-t_{j-1})\zeta(D_p)/2}\prod_{s=t_{j-1}+1}^{t_j}\sup_{\theta_s\in V_{D_p}}\frac{\tilde{f}_{\theta_{s}}}{f_{\theta_\star}}(y_s)< 1,\quad \min_{1\leq j\leq i}\frac{g_{j}(y_{1:t_j})}{\tilde{g}_{j,p}(y_{1:t_j})}>1,\quad \forall y_{1:t_p}\in\setY_{i,p}.
\end{align*}
Let $A_{i,p}=\{\tilde{\pi}_{t_i}(V_{2D_p})\geq \epsilon\}\cap\{Y_{1:t_p}\in Y_{i,p}\}$ and $\tilde{A}_{i,p}=\{\tilde{\pi}_{t_i}(V_{2D_p})\ind_{\setY_{i,p}}(Y_{1:t_i})\geq \epsilon\}$ for all $p>p_0$ and all $i\in 1:p$.

Then, for every $p>p_0$ we have
\begin{equation}\label{eq:p_split1}
\begin{split}
\P\big(\{\tilde{\pi}_{t_p}(V_{2D_p})\geq \epsilon\} \cap   \{Y_{1:t_p}\in\setY_p\}\big)&=\P(A_{p,p})\\
&\leq \P(A_{p,p}| A^c_{p-1,p})+\P(A_{p-1,p})\\
&\leq \sum_{i=p_0+1}^p \P(A_{i,p}| A^c_{i-1,p})+\P(A_{p_0,p})\\
&=  \sum_{i=p_0+1}^p \P(\tilde{A}_{i,p}| A^c_{i-1,p})+\P(A_{p_0,p})
\end{split}
\end{equation}
and we now study $\P(\tilde{A}_{i,p}| A^c_{i-1,p})$ for all  $i\in (p_0+1):p$.

Let $i\in (p_0+1):p$ and
\begin{align*}
&X_{i,p}^{(1)}= \int_{V_{2D_p}} \E\Big[\ind([U_{(t_{i-1}+1):t_i}]<D_p) (\mu_{t_{i-1}}*\tilde{\pi}_{t_{i-1}})(\theta-V_{i,t_{i-1}+1})\prod_{s=t_{i-1}+1}^{t_i} \frac{\tilde{f}_{\theta-V_{i,s}}}{f_{\theta_\star}}(Y_s)\big|\F_{t_i}\Big]\dd\theta\\
&X_{i,p}^{(2)}= \int_{V_{2D_p}} \E\Big[\ind([U_{(t_{i-1}+1):t_i}]\geq D_p) (\mu_{t_{i-1}}*\tilde{\pi}_{t_{i-1}})(\theta-V_{i,t_{i-1}+1})\prod_{s=t_{i-1}+1}^{t_i} \frac{\tilde{f}_{\theta-V_{i,s}}}{f_{\theta_\star}}(Y_s)\big|\F_{t_i}\Big]\dd\theta
\end{align*}
so that, by Lemma \ref{lemma:pi},
$$
\tilde{\pi}_{t_i}(V_{2D_p})=\frac{X_{i,p}^{(1)}+X_{i,p}^{(2)}}{g_i(Y_{1:t_i})},\quad\P-a.s.
$$
Then, by Markov's inequality and using the definition of $\setY_{i,p}$, we have
\begin{align*}
\P(\tilde{A}_{i,p}| A^c_{i-1,p})&\leq \epsilon^{-1}\E\bigg[\frac{(X_{i,p}^{(1)}+X_{i,p}^{(2)})\ind_{\setY_{i,p}}(Y_{1:t_p})}{g_i(Y_{1:t_i})}\big|A^c_{i-1,p}\bigg]\\
& \leq\epsilon^{-1}\E\bigg[\frac{(X_{i,p}^{(1)}+X_{i,p}^{(2)})\ind_{\setY_{i,p}}(Y_{1:t_p})}{\tilde{g}_{i,p}(Y_{1:t_i})}\big|A^c_{i-1,p}\bigg]\\
&=\epsilon^{-1}\E\bigg[\frac{(X_{i,p}^{(1)}+X_{i,p}^{(2)})\ind_{\setY_{i,p}}(Y_{1:t_p})}{\tilde{g}_{i,p}(Y_{1:t_i})}\big|\, \tilde{\pi}_{t_{i-1}}(V_{2D_p})<\epsilon\bigg]
\end{align*}
where the equality uses the definition of $ A_{i-1,p}$ and the fact that $Y_{i,p}\subset Y_{i-1,p}$.

Following similar computations  as   in the proof of Lemma \ref{lemma:online1} (see \eqref{eq:lower_C}), it is readily checked  that there exists a constant $C_1\in(0,\infty)$ such that
\begin{align*}
\P\Big(\inf_{v\in B_{\delta_p}(0)}(\mu_{t_{i-1}}*\tilde{\pi}_{t_{i-1}})(B_{\delta_p}(\theta_\star-v))  \geq C_1^{-1} \delta_p^d h_{t_{i-1}}^\nu h_{t_{p-1}}^{\beta(\nu+d)}\big|\, \tilde{\pi}_{t_{i-1}}(V_{2D_p})<\epsilon\Big)=1
\end{align*}
and thus
\begin{equation}\label{eq:p_split2}
\begin{split}
 \P(\tilde{A}_{i,p}| A^c_{i-1,p})&\leq \frac{C_1  \delta_p^{-d} h_{t_{i-1}}^{-\nu} h_{t_{p-1}}^{-\beta(\nu+d)}e^{(t_i-t_{i-1})(2(\delta_p\tilde{C_\star})^2+\delta)}}{\epsilon\,\P( U_{ t_{i-1}:t_i} \in\Theta_{\delta_p, t_{i-1}:t_i})}\\
&\qquad\times \E\big[ (X_{i,p}^{(1)}+X_{i,p}^{(2)}) \ind_{\setY_{i,p}}(Y_{1:t_i})|\, \tilde{\pi}_{t_{i-1}}(V_{2D_p})<\epsilon\big].
\end{split}
\end{equation}
Next, noting that $X_{i,p}^{(1)}\leq \prod_{s=t_{i-1}+1}^{t_i}\sup_{\theta_s\in V_{D_p}}(\tilde{f}_{\theta_s}/f_{\theta_\star})(Y_s)$, $\P$-a.s., it follows, by the definition of  $\setY_{i,p}$, that
\begin{align}\label{eq:p_split3}
\E[X_{i,p}^{(1)}\ind_{\setY_{i,p}}(Y_{1:t_i})|\, \tilde{\pi}_{t_{i-1}}(V_{2D_p})<\epsilon ]\leq e^{(t_i-t_{i-1})\zeta(D_p)/2}.
\end{align}

We now show that under  Assumption \ref{new1} there exists a constant $C_2\in(0,\infty)$  such that
\begin{equation}\label{eq:p_split4}
\begin{split}
 \E[X_{i,p}^{(2)}\ind_{\setY_{i,p}}(Y_{1:t_i})| \, \tilde{\pi}_{t_{i-1}}(V_{2D_p})<\epsilon]\leq  C_2 e^{-C^{-1}_2(t_i-t_{i-1})D_p^2}.
\end{split}
\end{equation}

As preliminary computations to establish \eqref{eq:p_split4}  remark that, under the assumptions on $(\mu_t)_{t\geq 0}$ and for some constant  $C_3\in(0,\infty)$   we have, by \eqref{eq:Doob_gauss},
\begin{equation}\label{eq:p_split5}
\begin{split}
\P\big([U_{(t_{i-1}+1):t_i}]\geq D_p\big)&\leq C_3\exp\Big(-\frac{D_p^2}{C_3\sum_{j=t_{i-1}+1}^{t_i-1}h_j^2}\Big)\\
&=C_3\exp\Big(-\frac{(t_i-t_{i-1})D_p^2}{C_3(t_i-t_{i-1})\sum_{j=t_{i-1}+1}^{t_i-1}h_j^2}\Big)\\
&\leq C_4\exp\big(-C_4^{-1}(t_i-t_{i-1})D_p^2\big)
\end{split}
\end{equation}
where the last inequality holds  for some constant $C_4\in(0,\infty)$ since, by assumption,   $(t_p-t_{p-1})\sum_{i=t_{p-1}+1}^{t_p-1}h_i^2\rightarrow 0$.

Assume first Assumption \ref{new1}.\ref{A51}  and recall that $\beta$ is such that $\beta\alpha>1$. Then, by Lemma \ref{lemma:simple}, for $p_0$ large enough, we can without loss of generality assume that, for all $p'>p_0$ and  $j\in (p_0+1):p'$, the set  $\setY_{j,p'}$ is such that
\begin{align*}
\sum_{s=t_{j-1}+1}^{t_j}\sup_{\theta\in\Theta}\log\big((\tilde{f}_\theta/f_{\theta_\star})(y_s)\big) \leq  (t_j-t_{j-1}) D_{p'},\quad\forall y_{1:t_{p'}}\in\setY_{j,p'}.
\end{align*}
Then, using \eqref{eq:p_split5},
\begin{equation*}
\begin{split}
 \E[X_{i,p}^{(2)}\ind_{\setY_{i,p}}(Y_{1:t_i})| \, \tilde{\pi}_{t_{i-1}}(V_{2D_p})<\epsilon]&\leq \P\big([U_{(t_{i-1}+1):t_i}]\geq D_p\big)e^{ (t_i-t_{i-1}) D_p}\\
& \leq C_4 e^{-(t_i-t_{i-1})(C_4^{-1}D^2_p- D_p)}
\end{split}
\end{equation*}
which establishes \eqref{eq:p_split4}.

Assume now Assumption  \ref{new1}.\ref{A52}  and let $c_\star=\log\big(\sup_{\theta\in\Theta}\E[f_\theta/f_{\theta_\star}]\big)<\infty$. Then, using \eqref{eq:p_split5} we have, $\P$-a.s.
\begin{equation*}
\begin{split}
 \E[X_{i,p}^{(2)}|\F_{t_{i-1}}]&\leq \P\big([U_{(t_{i-1}+1):t_i}]\geq D_p\big)\big(\sup_{\theta\in\Theta}\E[f_{\theta}/f_{\theta_\star}]\big)^{t_i-t_{i-1}}\\
&= C_4 e^{-(t_i-t_{i-1})(C_4^{-1}D_p^2-c_\star)}
\end{split}
\end{equation*}
which shows \eqref{eq:p_split4}.

Lastly, assume  Assumption \ref{new1}.\ref{A53}. Let $c_\star=\log(\sup_{\theta\in\Theta}\E[ f_{\theta}])<\infty$ and note that, by Lemma \ref{lemma:simple}, for $p_0$ large enough we can without loss of generality assume that, for all $p'>p_0$ and  $j\in (p_0+1):p'$, the set  $\setY_{j,p'}$ is such that
\begin{align*}
\sum_{s=t_{j-1}+1}^{t_j}\log (f_{\theta_\star}(y_s))\geq  - (t_j-t_{j-1})D_{p'},\quad\forall y_{1:t_{p'}}\in\setY_{j,p'}.
\end{align*}
Then, using \eqref{eq:p_split5}  we have, $\P$-a.s.
\begin{equation*}
\begin{split}
 \E[X_{i,p}^{(2)}|\F_{t_{i-1}}]&\leq e^{D_p(t_i-t_{i-1})}\P\big([U_{(t_{i-1}+1):t_i}]\geq D_p\big)\big(\sup_{\theta\in\Theta}\E[ f_{\theta}]\big)^{t_i-t_{i-1}}\\
&= C_4 e^{-(t_i-t_{i-1})(C_4^{-1}D_p^2-D_p-c_\star)}
\end{split}
\end{equation*}
showing \eqref{eq:p_split4}. This concludes to show that \eqref{eq:p_split4} holds under Assumption \ref{new1}.

Then, using  \eqref{eq:p_split2}-\eqref{eq:p_split4}, we have
\begin{equation}\label{eq:p_split6}
\begin{split}
\P(\tilde{A}_{i,p}| A^c_{i-1,p})&\leq C_1  \delta_p^{-d} h_{t_{i-1}}^{-\nu} h_{t_{p-1}}^{-\beta(\nu+d)}\frac{ e^{(t_i-t_{i-1})(2(\delta_p\tilde{C}_\star)^2+\delta+\zeta(D_p)/2)}}{\epsilon\,\P( U_{ t_{i-1}:t_i} \in\Theta_{\delta_p, t_{i-1}:t_i})}\\
&+C_1 C_2 \delta_p^{-d} h_{t_{i-1}}^{-\nu} h_{t_{p-1}}^{-\beta(\nu+d)}\frac{e^{(t_i-t_{i-1})(2(\delta_p\tilde{C}_\star)^2+\delta-C_2^{-1}D_p^2)}}{\epsilon\,\P( U_{ t_{i-1}:t_i} \in\Theta_{\delta_p, t_{i-1}:t_i})}.
\end{split}
\end{equation}

We now find a lower bound for $\P( U_{ t_{i-1}:t_i} \in\Theta_{\delta_p, t_{i-1}:t_i})$. To this aim remark that, using the shorthand $\tilde{\delta}_p:=d^{-1/2}\delta_p$,
\begin{equation}\label{eq:int_p}
\begin{split}
\P(& U_{ t_{i-1}:t_i} \in\Theta_{\delta_p, t_{i-1}:t_i})\\
&=\P\Big(\big\|\sum_{j=s}^{t_i-1}U_s\big\|<\delta_p,\,\forall s\in  (t_{i-1}+1):(t_i-1)\Big)\\
&\geq \P\Big(\big\|\sum_{j=s}^{t_i-1}U_s\big\|_\infty<\tilde{\delta}_p,\,\forall s\in (t_{i-1}+1):(t_i-2)\Big)\\
&=\P\big(\big\|U_{t_{i}-1}\big\|_\infty<\tilde{\delta}_p\big) \prod_{s=t_{i-1}+1}^{t_{i}-2} \P\Big(\big\| \sum_{j=s}^{t_i-1}U_s\big\|_\infty<\tilde{\delta}_p\,\Big|\,\big\|\sum_{j=s+1}^{t_i-1}U_s\big\|_\infty<\tilde{\delta}_p\Big).
\end{split}
\end{equation}

Recall now that $\delta_p=t_p^{-\gamma}$ with $\gamma>\alpha$,  implying that $\delta_p/h_{t_p}\rightarrow 0$. Hence, under the assumptions on $(\mu_t)_{t\geq 0}$  there exists a constant $c_1>0$ such that, for every  $s\in 1:p$ such that $h_s\neq 0$ and  every $ v \in\R^d$ such that $\|v \|_\infty<\tilde{\delta}_p$, we have
\begin{align*}
 \P\Big(\big\| \sum_{j=s}^{t_i-1}U_s\big\|_\infty<\tilde{\delta}_p\,\Big|\,\big\|\sum_{j=s+1}^{t_i-1}U_s\big\|_\infty=v\Big)&=\P( \|U_s+v\|_\infty<\tilde{\delta}_p)\\
&=\P(-\tilde{\delta}_p\leq U_{s}+v <\tilde{\delta}_p)\\
&=\P\Big(-\frac{\tilde{\delta}_p+v}{h_s}\leq \tilde{U}_s <\frac{\delta_p-v}{h_s}\Big)\\
&\geq c_1\,(\tilde{\delta}_p/h_s)^d.
\end{align*}
Consequently, for every $s\in (t_{i-1}+1):(t_i-2)$ such that $h_s\neq 0$ we have
\begin{equation}\label{eq:int_p1}
\begin{split}
 \P\Big(\big\| \sum_{j=s}^{t_i-1}U_s\big\|_\infty<\tilde{\delta}_p\,\Big|\,\big\|\sum_{j=s+1}^{t_i-1}U_s\big\|_\infty<\tilde{\delta}_p\Big) &\geq  c_1\,(2\tilde{\delta}^2_p/h_s)^d.
\end{split}
\end{equation}
The above computations   also show that of $h_{t_i-1}>0$ then $\P\big(\|U_{t_{i}-1}\|_\infty<\tilde{\delta}_p\big)\geq c'_1\,(\tilde{\delta}_p/h_{t_i-1})^d$ for some constant $c_1'>0$ which, together  with \eqref{eq:int_p}-\eqref{eq:int_p1}, shows that for some constant $c_2>0$,
\begin{equation}\label{eq:p_split7}
\P( U_{ t_{i-1}:t_i} \in\Theta_{\delta_p, t_{i-1}:t_i}) \geq (c_2 \delta^{2d}_p)^{t_i-t_{i-1}-2}\prod_{s=t_{i-1}+1}^{t_i-1} \big(h_s^{-d}\wedge 1/c_2\big)\geq  \delta_p^{2d(t_i-t_{i-1})}
\end{equation}
where the second inequality assumes without loss of generality that $p_0$ is such that  $h^d_s<c_2$ for all $s>t_{p_0}$.

Combining \eqref{eq:p_split6} and \eqref{eq:p_split7}, and recalling that   $\delta_p\rightarrow 0$, that $|\zeta(D_p)|\rightarrow\infty$ and that $h_{t_{p}}<h_{t_{p-1}}$, it follows that for   $p_0$ large enough we have, for all $p>p_0$ and all $i\in(p_0+1):p$,
\begin{equation}\label{eq:p_split8}
\begin{split}
\P(\tilde{A}_{i,p} | A^c_{i-1,p})&\leq C_1\epsilon^{-1}  h_{t_{i-1}}^{-\nu} h_{t_{p}}^{-\beta(\nu+d)}  e^{-(t_i-t_{i-1})(|\zeta(D_p)|/4-3d\log(\delta_p))} \\
&+C_1 C_2\epsilon^{-1}   h_{t_{i-1}}^{-\nu} h_{t_{p}}^{-\beta(\nu+d)} e^{-(t_i-t_{i-1})(C_2^{-1}D_p^2/2-3d\log(\delta_p))}.
\end{split}
\end{equation}

To proceed further we first note that, under Assumption \ref{new2} and the assumptions on $(h_t)_{t\geq 0}$, there exist    constants $c'_\zeta>c_\zeta>0$ such that, for $p_0$ is large enough,
\begin{align}\label{eq:zeta}
|\zeta(D_p)|\geq c'_\zeta \log(D_p) =-c'_\zeta\,\beta \log(h_{t_p})\geq c_\zeta\,\beta\alpha \log(t_p).
\end{align}
Therefore, recalling that $\delta_p=t_p^{-\gamma}$ with $\gamma> \max(1/2,\alpha)$, for $p_0$ large enough we have, for all $p>p_0$ and all $i\in(p_0+1):p$,
\begin{align*}
  -(t_i-t_{i-1})\big( &|\zeta(D_p)|/4-3d\log(1/\delta_p)\big)\leq   -\log(t_p)(t_i-t_{i-1})\big(c_\zeta \beta\alpha/4 -3d\gamma ).
\end{align*}
Next, let $\tilde{c}>4\alpha\nu$ be sufficiently large so  that
$$
\beta':=\frac{4(\tilde{c}+3d\gamma)}{c_\zeta \alpha}\geq \max(\beta_\star,1/\alpha).
$$
Then, the above computations show that, for every $\beta\geq \beta'$ and  $p_0$ large enough we have, for all $p>p_0$ and all $i\in(p_0+1):p$,
\begin{align}\label{eq:beta_1}
-(t_i-t_{i-1})\big( |\zeta(D_p)|/4-3d\log(1/\delta_p)\big)\leq -\tilde{c}\log(t_p)(t_i-t_{i-1}),\,\,\forall i\in (p_0+1):p.
\end{align}

We now take $p_0$ sufficiently large so that
$$
t_i-t_{i-1}>\frac{\alpha\beta(\nu+d)+1}{\tilde{c}}+\frac{1}{2},\quad\forall i>p_0.
$$
Then, using \eqref{eq:beta_1} and under the assumptions on $(h_t)_{t\geq 0}$, and for $p_0$ large enough, there exists a constant $C_3\in(0,\infty)$ such that,  for all $p>p_0$,
\begin{align}\label{eq:beta_2}
h_{t_{p}}^{-\beta(\nu+d)}  e^{-(t_i-t_{i-1})(|\zeta(D_p)|/4-3d\log(1/\delta_p))}\leq C_3t_p^{-\tilde{c}/2-1},\quad  \forall i\in (p_0+1):p.
\end{align}

On the other hand, if $p_0$ is large enough then,  using \eqref{eq:zeta}, $C_2^{-1} D^2_p>c_\zeta\,\beta\alpha\log(t_p)/2$ for all $p>p_0$ and therefore the above computations imply that, for $p_0$ large enough and all $p>p_0$,
\begin{equation}\label{eq:beta_3}
\begin{split}
 h_{t_{p}}^{-\beta(\nu+d)}  e^{-(t_i-t_{i-1})(C_2^{-1}D_p^2/2-3d\log(\delta_p))}&\leq C_3 t_p^{-\tilde{c}/2-1},\quad  \forall i\in (p_0+1):p.
\end{split}
\end{equation}

Then, by combining \eqref{eq:p_split8}, \eqref{eq:beta_2} and \eqref{eq:beta_3}, and recalling that $\tilde{c}>4\alpha\nu$, it follows that, for $p_0$ large enough and some constant $C_3'\in(0,\infty)$, for all $p>p_0$ we have
\begin{align*}
 \sum_{i=p_0+1}^p \P(\tilde{A}_{i,p}| A^c_{i-1,p})&\leq  C_3 C_1(1+C_2)\epsilon^{-1} t_p^{-\tilde{c}/2-1}\sum_{i=p_0+1}^p h_{t_{i-1}}^{-\nu}\\
&\leq C_3' \epsilon^{-1} t_p^{-\tilde{c}/2+\nu\alpha}\\
&\leq C_3'\epsilon^{-1} t_p^{-\nu\alpha}
\end{align*}
and therefore
\begin{align}\label{eq:lim_sum}
\limsup_{p\rightarrow\infty}\sum_{i=p_0+1}^p \P(\tilde{A}_{i,p}| A^c_{i-1,p})&\leq \limsup_{p\rightarrow\infty} C_3' \epsilon^{-1} t_p^{-\nu\alpha}=0.
\end{align}

We now show that $\P(A_{p_0,p})\rightarrow 0$. To this aim remark that
$$
0\leq \ind_{V_{2D_p}}(\theta)\E[\tilde{\pi}_{t_{p_0}}(\theta)]\leq  \E[\tilde{\pi}_{t_{p_0}}(\theta)],\quad\forall \theta\in\R^d,\quad\forall p\geq 1
$$
and that, using Tonelli's  theorem,
$$
\int_{\R^d}\E[\tilde{\pi}_{t_{p_0}}(\theta)]\dd\theta=\E[\tilde{\pi}_{t_{p_0}}(\R^d)]=1.
$$
Therefore, by the reverse Fatou lemma we have (and using Tonelli's theorem for the first equality)
\begin{align*}
\limsup_{p\rightarrow\infty}\E[\tilde{\pi}_{t_{p_0}}(V_{2D_p})]&=\limsup_{p\rightarrow\infty}\int_{\R^d}\ind_{V_{2D_p}}(\theta)\E[\tilde{\pi}_{t_{p_0}}(\theta)]\dd\theta\\
&\leq \int_{\R^d}\limsup_{p\rightarrow\infty}\Big(\ind_{V_{2D_p}}(\theta)\E[\tilde{\pi}_{t_{p_0}}(\theta)]\Big)\dd\theta\\
&=0
\end{align*}
and thus, using Markov's inequality,
\begin{equation}\label{eq:lim_sum2}
\begin{split}
\limsup_{p\rightarrow\infty}\P(A_{p_0,p}  )&\leq \limsup_{p\rightarrow\infty}\P(\tilde{\pi}_{t_{p_0}}(V_{2D_p})\geq\epsilon)\leq \limsup_{p\rightarrow\infty}\frac{\E[\tilde{\pi}_{t_{p_0}}(V_{2D_p})]}{\epsilon}=0.
\end{split}
\end{equation}
Then, combining  \eqref{eq:res_1}, \eqref{eq:p_split1},\eqref{eq:lim_sum} and \eqref{eq:lim_sum2} proves    the theorem under Assumption \ref{new2}\ref{A61}.

We now prove the theorem under Assumption \ref{new2}.\ref{A63}.  Let $p$ be such that $\zeta(D_p)<0$ and remark first that, for some constant $C\in(0,\infty)$,
\begin{align*}
\P\Big(&\min_{1\leq i\leq p}   e^{ -(t_i-t_{i-1})\zeta(D_p)/2}\prod_{s=t_{i-1}+1}^{t_i}  f_{\theta_\star}(Y_s)\leq 1 \Big)\\
&=\P\Big(\min_{1\leq i\leq p}   \sum_{s=t_{i-1}+1}^{t_i}  \big(\log (f_{\theta_\star}(Y_s))-\zeta(D_p)/2\big)\leq 0\Big)\\
&=\P\Big(-\min_{1\leq i\leq p}   \sum_{s=t_{i-1}+1}^{t_i}  \big(\log( f_{\theta_\star}(Y_s))-\zeta(D_p)/2\big)\geq  0\Big)\\
&=\P\Big( \max_{1\leq i\leq p}   \sum_{s=t_{i-1}+1}^{t_i}  \big(\zeta(D_p)/2-\log (f_{\theta_\star}(Y_s))\big)\geq  0\Big)\\
&\leq \sum_{i=1}^p\P\Big(    \frac{1}{t_i-t_{i-1}}\sum_{s=t_{i-1}+1}^{t_i} \big(\E[\log (f_{\theta_\star}) ]- \log (f_{\theta_\star}(Y_s))\big)\geq  |\zeta(D_p)|/2+\E[\log (f_{\theta_\star}) ]\Big)\\
&\leq   \frac{C}{ (t_i-t_{i-1})^{k_\star\ind_{\mathbb{N}}(k_\star)} |\zeta(D_p)|^{2 k_\star}}
\end{align*}
where the second inequality uses Assumption \ref{new2}.\ref{A63}, Lemma \ref{lemma:mean_p} and Markov's inequality.

Therefore, under the assumptions of the theorem, there exists a sequence $(\setY_p)_{p\geq 1}$, with $\setY_p\subset\sigY^{\otimes t_p}$ for all $p\geq 1$,   such that $\P(Y_{1:t_p}\in\setY_p)\rightarrow 1$ and  such that
\begin{align*}
\min_{1\leq i\leq p} e^{ -(t_i-t_{i-1})\zeta(D_p)/2}\prod_{t_{i-1}+1}^{t_i}  f_{\theta_\star}(y_s)> 1,\quad \min_{1\leq i\leq p}\frac{g_{i}(y_{1:t_i})}{\tilde{g}_{i,p}(y_{1:t_i})}>1,\quad \forall y_{1:t_p}\in\setY_p,\quad\forall p\geq 1.
\end{align*}

Then, using the computations done to prove the theorem under Assumption \ref{new2}.\ref{A61},   to prove the theorem under \ref{new2}.\ref{A63} we only need to show that,  for $p_0\in\mathbb{N}$ large enough and every  $p>p_0$,
\begin{align}\label{eq:To_show_den22}
\E[X_{i,p}^{(1)}\ind_{\setY_{i,p}}(Y_{1:t_i})|  \, \tilde{\pi}_{t_{i-1}}(V_{2D_p})<\epsilon ]\leq e^{(t_i-t_{i-1})\zeta(D_p)/2},\quad \forall i\in (p_0+1):p,
\end{align}
where,  for every $p>p_0$, $(\setY_{i,p})_{i=0}^p$ is a sequence in $\sigY^{\otimes t_p}$ such that $\setY_{0,p}=\setY^{t_p}$,  $\setY_{p,p}=\setY_p$ and such that, for every $i\in 1:p$, we have  $\setY_{i,p}\subset \cap_{j=0}^{i-1}\setY_{j,p}$ and
\begin{align*}
\min_{1\leq j\leq i} e^{-(t_j-t_{j-1})\zeta(D_p)/2}\prod_{t_{j-1}+1}^{t_j}  f_{\theta_\star}(y_s)\geq 1,\quad \min_{1\leq j\leq i}\frac{g_{j}(y_{1:t_j})}{\tilde{g}_{j,p}(y_{1:t_j})}>1,\quad \forall y_{1:t_p}\in\setY_{i,p}.
\end{align*}

Using the   definitions of $\zeta(D_p)$ under Assumption \ref{new2}.\ref{A63} and the above definition of  $Y_{i,p}$  under Assumption \ref{new2}.\ref{A63}, we have  for $p_0$ large enough and all $p>p_0$,
\begin{equation*}
\begin{split}
 \E[X_{i,p}^{(1)}\ind_{\setY_{i,p}}(Y_{1:t_i})|  \, \tilde{\pi}_{t_{i-1}}(V_{2D_p})<\epsilon  ]&\leq e^{-\zeta(D_p)/2(t_j-t_{j-1})} \big(\sup_{\theta\in V_{D_p}}\E[\tilde{f}_\theta]\big)^{t_i-t_{i-1}}\\
&=  e^{(t_i-t_{i-1})\zeta(D_p)/2}.
\end{split}
\end{equation*}
 This shows \eqref{eq:To_show_den22} and the proof of the theorem under Assumption \ref{new2}.\ref{A63} is complete.

Lastly, we prove the theorem under Assumption \ref{new2}.\ref{A62}, where $\zeta(D_p)= \log(\sup_{\theta\in V_{D_p}}\E [\tilde{f}_{\theta}/f_{\theta_\star}])$. Following the computations of the proof of the theorem under Assumption \ref{new2}.\ref{A61}, to prove the theorem under Assumption \ref{new2}.\ref{A62} we only need to show that,     $p_0\in\mathbb{N}$ large enough we have,   for all $p>p_0$,
\begin{align}\label{eq:To_show_den}
\E[X_{i,p}^{(1)}\ind_{\setY_{i,p}}(Y_{1:t_i})|\, \tilde{\pi}_{t_{i-1}}(V_{2D_p})<\epsilon   ]\leq e^{(t_i-t_{i-1})\zeta(D_p)/2},\quad \forall i\in (p_0+1):p
\end{align}
where,  for every $p>p_0$, $(\setY_{i,p})_{i=0}^p$ is a sequence in $\sigY^{\otimes t_p}$ such that $\setY_{0,p}=\setY^{t_p}$,  $\setY_{p,p}=\setY_p$ and such that, for every $i\in 1:p$,  $\setY_{i,p}\subset \cap_{j=0}^{i-1}\setY_{j,p}$ and
\begin{align*}
\min_{1\leq j\leq i}\frac{g_{j}(y_{1:t_j})}{\tilde{g}_{j,p}(y_{1:t_j})}>1,\quad \forall y_{1:t_p}\in\setY_{ji,p}.
\end{align*}
Using the  definition of $\zeta(D_p)$ and the above definition of $Y_{i,p}$ under Assumption \ref{new2}\ref{A62} we have
\begin{equation*}
\begin{split}
 \E[X_{i,p}^{(1)}\ind_{\setY_{i,p}}(Y_{1:t_i})|A^c_{i-1,p} ]&\leq  \big(\sup_{\theta\in V_{D_p}}\E\big[\tilde{f}_{\theta}/f_{\theta_\star}\big]\big)^{t_i-t_{i-1}}=  e^{(t_i-t_{i-1})\zeta(D_p)}.
\end{split}
\end{equation*}
This shows \eqref{eq:To_show_den} and the proof of the theorem under Assumption \ref{new2}.\ref{A62} is complete.
\end{proof}

\subsection{Proof of Propositions \ref{prop:h_t}\label{p-prop:h_t}}

\begin{proof}

Under the assumptions of the proposition there exist  constants $C_1,C_2\in(0,\infty)$ such that,    for  $p$ large enough,
\begin{align*}
(t_p-t_{p-1})  \sum_{s=t_{p-1}+1}^{t_p-1}h_s^2&\leq (t_p-t_{p-1})^2h^2_{t_{p-1}}\\
&\leq  C_1 (t_p-t_{p-1})^2 t_{p-1}^{-2\alpha}\\
&\leq  C_2  \log(t_{p_{-1}})^2\big) t_{p-1}^{-2(\alpha-\varrho)}.
\end{align*}
Therefore, since $\varrho\in(0,\alpha)$, we have $
(t_p-t_{p-1})  \sum_{s=t_{p-1}+1}^{t_p-1}h_s^2\rightarrow 0$
as required. Together with the fact that$(t_{p+1}-t_p)\rightarrow\infty$, this shows that Condition \ref{thm1.2} of Theorem \ref{thm:online} holds.

To show that Condition \ref{thm1.3} of Theorem \ref{thm:online} holds  as well remark first that $(t_p-t_{p-1})^{-1} \log(1/h_{t_{p-1}})\rightarrow 0$ as required. In addition,  with $c\in(0,1)$ as in the statement of the proposition, we have for $p$ large enough
\begin{align}\label{eq:bound_varrpho}
\frac{t_{p+2}-t_{p+1}}{t_{p+1}-t_{p}}&\leq  \frac{ C_{p+1} \log(t_{p+1})}{C_{p} \log(t_{p})-1}\leq  \frac{ t_{p+1}^\varrho \log(t_{p+1}) }{c^2 t_{p}^\varrho \log(t_{p})-c}\leq  \frac{2}{c^2}\frac{t^\varrho_{p+1}\log(t_{p+1})}{t^\varrho_{t_p}\log(t_{p})}
\end{align}
where, for $p$ large enough,
\begin{align}\label{eq:bound_varrpho2}
\frac{\log(t_{p+1})}{\log(t_{p})}\leq 1+\frac{t_{p+1}-t_p}{t_p\log(t_p)}\leq  1+\frac{t_{p+1}}{t_p}.
\end{align}
Next, recalling that $\varrho\in(0,1)$, for $p$ large enough we have
\begin{align*}
\frac{t_{p+1}}{t_p}\leq 1+\frac{t_p^{\varrho} \log(t_p)}{c\,t_p}\leq 1+(2/c)
\end{align*}
with, together with \eqref{eq:bound_varrpho} and \eqref{eq:bound_varrpho2},  shows that  $\limsup_{p\rightarrow\infty}(t_{p+2}-t_{p+1})/(t_{p+1}-t_{p})<\infty$. This concludes to show that   Condition \ref{thm1.3} of Theorem \ref{thm:online} holds.  

To show the second part of the proposition  note that there exists a constant $c'>0$ such that    $(t_p-t_{p-1})\geq  c' t_{p-1}^{\varrho} $ for all $p\geq 1$. By assumption,  $k_\star>1/\varrho+1$, and thus
\begin{align*}
\limsup_{p\rightarrow\infty}\sum_{i=1}^p (t_i-t_{i-1})^{-k_\star}\leq (1/c') \limsup_{p\rightarrow\infty}\sum_{i=1}^p   t_{i-1}^{-k_\star\varrho} \leq (1/c') \limsup_{p\rightarrow\infty} \sum_{i=1}^\infty   t_{i }^{-(1+\varrho)}<\infty.
\end{align*}
The result follows.
\end{proof}

\subsection{Proof of the preliminary results given in Section \ref{sec:p-prelim}}

\subsubsection{Proof of Lemma \ref{lemma:new}\label{p-lemma:new}}

We first show the following simple lemma.
\begin{lemma}\label{lemma:simple0}
Let $(s_t)_{t\geq 1}$ be a sequence in $\mathbb{N}_0$ such that $\inf_{t\geq 1}(t-s_t)\geq 1$ and $(t-s_t)\rightarrow\infty$, $(X_t)_{t\geq 1}$ be a sequence of independent real-valued random variables such that $\E[X_n]\rightarrow m$ for some $m\in\R$ and such that $\sup_{s\geq 1}\var(X_s)<\infty$. 
Then, we have $(t-s_t)^{-1}\sum_{s=s_t+1}^t X_s\rightarrow m$ in $\P$-probability.
\end{lemma}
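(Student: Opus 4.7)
The plan is to decompose the average into its mean and its centered part:
\begin{align*}
\frac{1}{t-s_t}\sum_{s=s_t+1}^t X_s = \frac{1}{t-s_t}\sum_{s=s_t+1}^t \E[X_s] + \frac{1}{t-s_t}\sum_{s=s_t+1}^t \bigl(X_s-\E[X_s]\bigr),
\end{align*}
and argue that the first term converges to $m$ deterministically while the second converges to $0$ in $L^2$ (hence in $\P$-probability).

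For the deterministic term, I would use a Cesàro-type argument. Fix $\epsilon>0$. Since $\E[X_n]\to m$, there exists $N_\epsilon$ such that $|\E[X_n]-m|<\epsilon$ for all $n\geq N_\epsilon$. Because $(t-s_t)\to\infty$ and $t\geq t-s_t$, we have $s_t+1\to\infty$ (indeed, if $s_t+1$ stayed bounded along a subsequence, then $t=(t-s_t)+s_t$ would still tend to infinity, but then the window $(s_t,t]$ would eventually contain indices above $N_\epsilon$; more directly, $s_t\geq s_t+1-1$ and once $s_t+1\geq N_\epsilon$, every index $s\in\{s_t+1,\dots,t\}$ satisfies $s\geq N_\epsilon$). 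Hence for $t$ large enough, $|(t-s_t)^{-1}\sum_{s=s_t+1}^t \E[X_s]-m|<\epsilon$. The only subtle point is ensuring $s_t+1\geq N_\epsilon$ eventually; if not, one handles the finitely many small indices contributing at most $\max_{s<N_\epsilon}|\E[X_s]-m|\cdot N_\epsilon/(t-s_t)\to 0$.

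For the stochastic term, I would invoke independence and Chebyshev's inequality. By independence,
\begin{align*}
\var\!\left(\frac{1}{t-s_t}\sum_{s=s_t+1}^t (X_s-\E[X_s])\right) = \frac{1}{(t-s_t)^2}\sum_{s=s_t+1}^t \var(X_s) \leq \frac{\sup_{s\geq 1}\var(X_s)}{t-s_t},
\end{align*}
which tends to $0$ since $(t-s_t)\to\infty$ and $\sup_{s\geq 1}\var(X_s)<\infty$. Chebyshev's inequality then yields convergence in $\P$-probability to $0$.

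Combining the two pieces via the triangle inequality and a standard $\epsilon/2$ argument gives the stated convergence. No step here is a real obstacle; the only mildly delicate point is the Cesàro argument handling the tail of $(\E[X_s])_{s>s_t}$, which is straightforward once one observes that the finitely many indices below $N_\epsilon$ inside the window contribute $o(1)$ after division by $t-s_t\to\infty$.
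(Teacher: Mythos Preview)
Your proof is correct and follows the same approach as the paper: the identical decomposition into a deterministic Ces\`aro average and a centered stochastic part, with the latter handled by the variance bound and Chebyshev/Markov. One minor remark: your claim that $s_t+1\to\infty$ is not true in general (e.g.\ $s_t\equiv 0$), but you immediately patch this with the correct ``if not'' argument splitting off the at most $N_\epsilon$ small indices, so the proof stands; the paper's own version simply writes ``for all $t$ such that $s_t\geq s_\epsilon$'' and leaves the bounded-$s_t$ case implicit.
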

\begin{proof}
For every $t\geq 1$ we have
\begin{align}\label{eq:sum}
\frac{1}{t-s_t}\sum_{s=s_t+1}^t X_s-m=\frac{1}{t-s_t}\sum_{s=s_t+1}^t( X_s-\E[X_s])+\Big(\frac{1}{t-s_t}\sum_{s=s_t+1}^t \E[X_s]-m\big).
\end{align}
Let $\epsilon>0$ and $s_\epsilon\in\mathbb{N}$ be such that $|\E[X_s]-m|\leq \epsilon$ for all $s\geq s_\epsilon$. Then, for all $t$ such that $s_t\geq s_\epsilon$ we have
\begin{align*}
\Big|\frac{1}{t-s_t}\sum_{s=s_t+1}^t \E[X_s]-m\Big|\leq \frac{1}{t-s_t}\sum_{s=s_t+1}^t| \E[X_s]-m|\leq \epsilon
\end{align*}
showing that $(t-s_t)^{-1}\sum_{s=s_t+1}^t \E[X_s]\rightarrow m$. Hence, by \eqref{eq:sum}, to complete the proof it remains to show that $
(t-s_t)^{-1}\sum_{s=s_t+1}^t( X_s-\E[X_s])\rightarrow 0$ in $\P$-probability. Using Markov's inequality, for every $\epsilon>0$ we have
\begin{align*}
\limsup_{t\rightarrow\infty}\P\Big(\big|\frac{1}{t-s_t}\sum_{s=s_t+1}^t( X_s-\E[X_s])\big|\geq \epsilon\Big)&\leq \limsup_{t\rightarrow\infty}\frac{\sum_{s=s_t+1}^t\var(X_s)}{\epsilon^2(t-s_t)^2}\\
&\leq \limsup_{t\rightarrow\infty}\frac{\sup_{s\geq 1}\var(X_s)}{\epsilon^2 (t-s_t)}\\
&=0
\end{align*}
and the proof is complete.
\end{proof}

\textit{Proof of Lemma \ref{lemma:new}.} Let $A_\star$ and $\tilde{A}_\star$ be as in Assumption \ref{new} and $A'_\star\in\mathcal{B}(\R^d)$ be such that $A'_\star$ contains a neighbourhood of $A_\star$ and such that $\tilde{A}_\star$ contains a neighbourhood of $A'_\star$. Let   $B'_\star=A'_\star\cap \Theta$, $\tilde{B}_\star=\tilde{A}_\star\cap \Theta$,  and remark  that we can without loss of generality assume that the sequence $(\gamma_t')_{t\geq 1}$ is non-increasing.  Below we denote by $\bar{B}$  the closure of the set $B\subset\R^d$.

Let  $(\delta_t)_{t\geq 1}$ be a sequence in $\R_{>0}$ and let $W_s=\overline{B_{\gamma'_s}(0)}$ for all $s\geq 1$. Then, for all $t\geq 1$ we have
\begin{align*}
\P\Big(\sup_{(u_{s_t:t},\theta)\in \Theta_{\gamma_t',s_t:t}\times B'_\star}  \prod_{s=s_t+1}^t  & (\tilde{f}_{\theta-\sum_{i=s}^{t-1} u_i}/f_{\theta})(Y_s)\geq e^{(t-s_t)\delta_{t}} \Big)\\
&\leq \P\Big(\sup_{\theta\in B'_\star}\prod_{s=s_t+1}^t \sup_{v_s\in \overline{B_{\gamma'_t}(0)}}  (\tilde{f}_{\theta+v_s}/f_{\theta}) (Y_s)\geq e^{(t-s_t)\delta_{t}}\Big)\\
&\leq \P\Big( \prod_{s=s_t+1}^t \sup_{(\theta,w_s)\in B'_\star\times\mathsf{W}_s}  (\tilde{f}_{\theta+w_s}/f_{\theta}) (Y_s)\geq e^{(t-s_t)\delta_{t}}\Big)
\end{align*}
where 
the last inequality uses the fact that the sequence $(\gamma'_t)_{t\geq 1}$ is non-increasing. Consequently,  to prove the lemma it is enough to show that
\begin{align}\label{eq:toShow_a4}
 \frac{1}{t-s_t}\sum_{s=s_{t}+1}^{t}\sup_{(\theta,w_s)\in B'_\star\times \mathsf{W}_s}\log \big((\tilde{f}_{\theta+w_s}/f_{\theta})(Y_s)\big)\rightarrow0,\quad\text{in $\P$-probability}.
\end{align}

To establish \eqref{eq:toShow_a4} we define, for every $s\geq 1$,
\begin{align*}
X_s=\sup_{(\theta,w_s)\in B'_\star\times  \mathsf{W}_s}\log \big((\tilde{f}_{\theta+w_s}/f_{\theta})(Y_s)\big),\quad m^{(1)}_s=\E[X_s],\quad m_s^{(2)}=\E[X_s^2]
\end{align*}
and show below that there exists a $\underline{t}\in\mathbb{N}$ such that
\begin{align}\label{eq:Kol}
 \lim_{t\rightarrow\infty}m^{(1)}_{ \underline{t}+t}\rightarrow 0,\quad \sup_{s\geq s_{\underline{t}}}m_s^{(2)}\leq m_{s_{\underline{t}}}^{(2)}<\infty.
\end{align}
Then, \eqref{eq:toShow_a4} will follow by Lemma \ref{lemma:simple0}.

To show \eqref{eq:Kol} remark first that  for all $s\geq 1$  we have $m^{(1)}_s=\E[\tilde{X}_s]$,   where
$$
  \tilde{X}_s=\sup_{(\theta,w_s)\in B'_\star\times  \mathsf{W}_s}\log \big((\tilde{f}_{\theta+w_s}/f_{\theta})(Y_1)\big).
$$
Next, let $\tilde{\Omega}\in\F$ be such that $\P(\tilde{\Omega})=1$ and such that  the mapping $\theta\mapsto    f_\theta(Y_1(\omega))$ is    continuous on the compact set $\tilde{B}_\star$, for all $\omega\in\tilde{\Omega}$, notice that such a set $\tilde{\Omega}$ exists under Assumption \ref{new}. Let $\underline{s}\in\mathbb{N}$ be such that   $\theta+w\in  \tilde{B}_\star$  for all $(\theta,w)\in B'_\star\times W_{\underline{s}}$ such that  $\theta+w\in\Theta$. Then, recalling that  $\gamma'_{s+1}\leq \gamma'_{s}$ for all $s\geq 1$, it follows that 
$$
  \tilde{X}_s\leq \sup_{(\theta,\theta')\in \tilde{B}^2_\star :\,\|\theta-\theta'\|\leq \gamma'_s }\log \big((f_{\theta'}/f_{\theta})(Y_1)\big),\quad\forall s\geq \underline{s}.
$$
Then, by Weierstrass's theorem we have, for all $\omega\in \tilde{\Omega}$ and $s\geq\underline{s}$ we have
\begin{equation*}
\begin{split}
\tilde{X}_s(\omega)&\leq  \log\Big( \big(f_{h_{\gamma'_s}(\omega)}/f_{g_{\gamma'_s}(\omega)}\big) \big(Y_1(\omega)\big)\Big)
\end{split}
\end{equation*}
for some (measurable)   functions $h_{\gamma'_s}: \tilde{\Omega} \rightarrow  \tilde{B}_\star$ and $g_{\gamma'_s}: \tilde{\Omega}\rightarrow  \tilde{B}_\star$ such that we have $\|h_{\gamma'_s}(\omega)-h_{\gamma'_s}(\omega)\|\leq\gamma'_s$ for all $\omega\in \tilde{\Omega}$.

 By the maximum   theorem, we can assume that, for all  $s\geq  \underline{s}$ and every    $\omega\in\tilde{\Omega}$ the mappings $\gamma\mapsto h_{\gamma}(\omega)$ and $\gamma\mapsto g_{\gamma}(\omega)$ are continuous on $[0,\gamma'_{\underline{s}}]$.   Therefore,  since $h_{0}(\omega)=g_{0}(\omega)$ for all $\omega\in\tilde{\Omega}$, we have
\begin{equation}\label{eq:lim}
\begin{split}
0\leq \limsup_{s\rightarrow\infty}\tilde{X}_{\underline{s}+s}(\omega)&\leq \limsup_{s\rightarrow \infty}\log\Big( \big(f_{h_{\gamma'_s}(\omega)}/f_{g_{\gamma'_s}(\omega))}\big) \big(Y_1(\omega)\big)\Big)\\
&=\log\Big( \big(f_{h_{0}(\omega)}/f_{g_{0}(\omega)}\big) \big(Y_1(\omega)\big)\Big)\\
&=0,\qquad\qquad  \forall \omega\in\tilde{\Omega}.
\end{split}
\end{equation}

To proceed further remark that   since $W_s\subseteq W_{\underline{s}}$ for all $s\geq  \underline{s}$,   it follows that for all $s\geq \underline{s}$  we have $\tilde{X}_s\leq \tilde{X}_{\underline{s}}$, $\P$-a.s. Let $\tilde{\delta}>0$ be as in \eqref{new}. Then, under this latter   assumption, and taking $\underline{s}\in\mathbb{N}$ sufficiently large so that $\gamma'_{\underline{s}}\leq \tilde{\delta}$, we have $\E[\tilde{X}_{\underline{s}}]<\infty$. Then, by the dominated convergence theorem, $\lim_{s\rightarrow\infty} m^{(1)}_{\underline{s}+s}= 0$,  showing the first part of \eqref{eq:Kol}. To show the second part of \eqref{eq:Kol} it suffices to remark that $m^{(2)}_{s}\leq m^{(2)}_{\underline{s}}$ for all $s\geq \underline{s}$ where, under \eqref{new}, $m^{(2)}_{\underline{s}}<\infty$.\hfill$\square$

\subsubsection{Proof of Lemma \ref{lemma:pi}\label{p-lemma:pi}}

\begin{proof}
Let $t \geq 0$ and $A\in\mathcal{B}(\R^d)$. Then, if  $\mu_t\neq \delta_{\{0\}}$  we have, $\P$-a.s.,
 \begin{align*}
(\mu_{t}*\tilde{\pi}_{t})(A)
&=\int_{\R^d\times \R^d} \ind_A(\theta) \tilde{\pi}_{t} (\theta-u_{t}) \mu_{t}(\dd u_{t}) \dd\theta =\int_{A}  \E\big[\tilde{\pi}_{t} (\theta-U_{t}) |\F_t\big]\dd\theta
\end{align*}
while, if $\mu_t= \delta_{\{0\}}$ we have, $\P$-a.s.,
 \begin{align*}
(\mu_{t}*\tilde{\pi}_{t})(A)
=\int_{\R^d\times \R^d} \ind_A(\theta+u_t)   \mu_{t}(\dd u_{t})\tilde{\pi}_t(\theta)\dd\theta  =\int_{A}    \tilde{\pi}_t(\theta)\dd\theta.
\end{align*}
Recall that $\P(\cap_{t\geq 0}\Omega_t)=1$ if $\P(\Omega_t)=1$ for all $t\geq 0$ and that two probability measures $\nu_1,\nu_2\in\mathcal{P}(\R^d)$ are such that $\nu_1=\nu_2$ if $\nu_1(E_i)=\nu_2(E_i)$ for all $i\geq 1$, with $(E_i)_{i\geq 1}$ a dense subset of  $\R^d$ such that $E_i\in \mathcal{B}(\R^d)$ for all $i\geq 1$.

Therefore, the above computations imply that
\begin{align}\label{eq:density}
(\mu_{t}*\tilde{\pi}_{t})(A)=\int_A \E\big[\tilde{\pi}_{t} (\theta-U_{t}) |\F_t\big]\dd\theta,\quad\forall t\geq 0,\quad\forall A\in\mathcal{B}(\R^d),\quad\P-a.s.
\end{align}
We now prove the result of the lemma by induction on $t\geq 1$.

The result trivially holds for $t=1$ and we now assume  that it holds for some $t\geq 1$. Then, $\P$-a.s.,
\begin{align*}
 \tilde{\pi}_{t+1}(\dd\theta)&\propto   \tilde{f}_\theta(Y_{t+1})(\mu_t*\tilde{\pi}_t)(\dd\theta)\\
&= \tilde{f}_\theta(Y_{t+1})\E\big[\tilde{\pi}_{t} (\theta-U_{t}) |\F_t\big]\dd\theta\\
&\propto \tilde{f}_\theta(Y_{t+1})\E\Big[  (\mu_0*\tilde{\pi}_0) \big(\theta-\sum_{s=1}^{t}U_s\big)\prod_{s=1}^{t} \tilde{f}_{\theta-\sum_{i=s}^{t}U_i}(Y_s)\big|\F_{t}\Big]\dd\theta\\
&= \E\Big[ (\mu_0*\tilde{\pi}_0) \big(\theta-\sum_{s=1}^{t}U_s\big)\prod_{s=1}^{t+1} \tilde{f}_{\theta-\sum_{i=s}^{t}U_i}(Y_s)\big|\F_{t+1}\Big]\dd\theta
\end{align*}
where the first equality uses \eqref{eq:density} and the second line the inductive hypothesis. The proof is complete.
\end{proof}

\subsubsection{Proof of Lemma \ref{lemma:denom}}\label{p-lemma:denom}
\begin{proof}

Let $t\geq 1$, $p_t=\P(U_{s_t:t}\in  \Theta_{\tilde{\delta},s_t:t})$, $C_\star\in\R_{>0}$ be as in Assumption \ref{taylor}, $\tilde{C}_\star=2 (\E[m^2_\star]+C_\star)^{1/2}$ and note that,  under  Assumptions \ref{m_star}-\ref{taylor}  and for all $\theta\in B_{\delta_{\star}}(\theta_\star)$,
\begin{equation}\label{eq:Bound_KL_1}
\max\bigg(-\E\big[\log(\tilde{f}_{\theta}/f_{\theta_\star})\big], \E\big[\big(\log(\tilde{f}_{\theta}/f_{\theta_\star})\big)^2\big]\Big)\leq  \big(\E[m_\star^2]+C_\star\big)\|\theta-\theta_\star\|^2.
\end{equation}

Remark now that if $p_t=0$ then the result of the lemma trivially holds and henceforth we therefore assume that $p_t>0$. To simplify the notation let $\E_{Y_1}[\cdot]$ denote  expectations w.r.t.\ the distribution of $Y_1$, $\E^{\tilde{\delta}}_{t,\mu}[\cdot]$ denote   expectations  w.r.t.\ the restriction of $\otimes_{s=s_t}^t\mu_t$ to the set $\Theta_{\tilde{\delta},s_t:t}$, and let $V_t:=(U_s)_{s=s_t+1}^t$.

For every $u_{(s_t+1):t}\in\Theta^{t-s_t}$ let $\tilde{\eta}(\dd\theta, u_{s_t:t})$ be the   probability measure on $B_\delta(\theta_\star)$ with density function $\tilde{\eta}(\cdot, u_{(s_t+1):t})$ defined by
$$
\tilde{\eta}(\theta, u_{(s_t+1):t})=\frac{\eta\big(\theta-\sum_{s=s_t+1}^{t-1}u_s\big)}{\eta\big(B_\delta(\theta_\star-\sum_{s=s_t+1}^{t-1}u_s)\big)},\quad\theta\in B_\delta(\theta_\star).
$$
Then, using the shorthand $a_t=(t-s_t)(2(\tilde{C}_\star \delta)^2+\epsilon)$, we have
\begin{equation}\label{eq:p1}
\begin{split}
\tilde{\setY}_t&:=\bigg\{y_{s_t:t}:
\int_{\Theta}\E\Big[\eta\big(\theta-\sum_{s=s_t+1}^{t-1}U_s\big)\prod_{s=s_t+1}^{t} (\tilde{f}_{\theta-\sum_{i=s}^{t-1}U_i}/f_{\theta_\star})(y_s)\Big]
\dd\theta \leq p_t\, C^\eta_{\delta,\tilde{\delta}}\, e^{-a_t}\bigg\}\\
&=\bigg\{y_{s_t:t}:
\E\Big[\int_{\Theta}\eta\big(\theta-\sum_{s=s_t+1}^{t-1}U_s\big)\prod_{s=s_t+1}^{t} (\tilde{f}_{\theta-\sum_{i=s}^{t-1}U_i}/f_{\theta_\star})(y_s)
\dd\theta\Big] \leq  p_t\, C^\eta_{\delta,\tilde{\delta}}\, e^{-a_t}\bigg\}\\
&\subset \bigg\{y_{s_t:t}:\,\E^{\tilde{\delta}}_{t,\mu}\Big[\int_{B_{\delta}(\theta_\star)}\eta\big(\theta-\sum_{s=s_t+1}^{t-1}U_s\big)\prod_{s=s_t+1}^{t} (\tilde{f}_{\theta-\sum_{i=s}^{t-1}U_i} / f_{\theta_\star})(y_s)
\dd\theta\Big]\leq  C^\eta_{\delta,\tilde{\delta}}\, e^{-a_t}\bigg\}\\
&\subset  \bigg\{y_{s_t:t}:\,\E^{\tilde{\delta}}_{t,\mu}\Big[\int_{B_{\delta}(\theta_\star)}\prod_{s=s_t+1}^{t} (\tilde{f}_{\theta-\sum_{i=s}^{t-1}U_i}/f_{\theta_\star})(y_s)\tilde{\eta}(\dd\theta,V_t)\Big] \leq      e^{-a_t}\bigg\}\\
&\subset \bigg\{y_{s_t:t}: \E^{\tilde{\delta}}_{t,\mu}\Big[\sum_{s=s_t+1}^t \int_{B_{\delta}(\theta_\star)} \log\big((\tilde{f}_{\theta-\sum_{i=s}^{t-1}U_i}/f_{\theta_\star})(y_s)\big)\,\tilde{\eta}(\dd\theta, V_t)\Big]\leq  -a_t\bigg\}
\end{split}
\end{equation}
 where the equality uses Tonelli's theorem, the second inclusion uses the definition of $ C^\eta_{\delta,\tilde{\delta}}$ and the last inclusion uses twice   Jensen's inequality.

To simplify the notation in what follows we define, for every $t\geq s\geq 0$,
\begin{align}\label{eq:g}
g_\theta(u_{s:t},y)=\log\big( (\tilde{f}_{ \theta-\sum_{i=s}^{t-1}u_i}/f_{\theta_\star})(y)\Big),\quad  (\theta,u_{s:t})\in\Theta^{t-s+2},\,\,y\in\setY.
\end{align}

Remark now that, by \eqref{eq:Bound_KL_1} and using the inequality $\|a+b\|^2\leq 2(\|a\|^2+\|b\|^2)$ for all $a,b\in\R^d$,    
\begin{equation}\label{eq:Bound_KL}
\begin{split}
-&\sum_{s=s_t+1}^t\E[g_\theta(u_{s:t},Y_1)]\leq  (t-s_t)(\tilde{C}_\star \delta)^2,\quad\forall  \theta \in B_{\delta}(\theta_\star),\quad\forall u_{s_t:t}\in\Theta_{\tilde{\delta},s_t:t}\\
&\sum_{s=s_t+1}^t \E[g_\theta(u_{s:t},Y_1)^2]\leq (t-s_t)(\tilde{C}_\star \delta)^2,\quad\forall \theta \in B_{\delta}(\theta_\star),\quad\forall u_{s_t:t}\in\Theta_{\tilde{\delta},s_t:t}.
\end{split}
\end{equation}

Therefore, using  \eqref{eq:p1} and   \eqref{eq:Bound_KL}, we have
\begin{equation}\label{eq:p2}
\begin{split}
 &\P (Y_{s_t:t}\in \tilde{\setY}_t)\\
 &\leq \P \Big(\E^{\tilde{\delta}}_{t,\mu}\Big[\sum_{s=s_t+1}^t\int_{B_{\delta}(\theta_\star)} g_{\theta}(U_{s:t},Y_s)\tilde{\eta}(\dd\theta, V_t)\Big] \leq  -a_t\Big)\\
&= \P\Big(\E^{\tilde{\delta}}_{t,\mu}\Big[\frac{1}{t-s_t}\sum_{s=s_t+1}^t\Big\{\int_{B_{\delta}(\theta_\star)} g_\theta(U_{s:t},Y_s)-\int_{B_{\delta}(\theta_\star)}\E_{Y_1}\big[g_\theta(U_{s:t},Y_1)\big]\Big\}\tilde{\eta}(\dd\theta, V_t)\\
& \,\,\,\,\,\qquad +\frac{1}{t-s_t}\sum_{s=s_t+1}^t\int_{B_{\delta(\theta_\star)}} \E_{Y_1}\big[g_\theta(U_{s:t},Y_1)\big]\tilde{\eta}(\dd\theta, V_t)\Big]\leq -2(\tilde{C}_\star \delta)^2-\epsilon \Big)\\
&\leq \P\Big(\E^{\tilde{\delta}}_{t,\mu}\bigg[\frac{1}{t-s_t}\sum_{s=s_t+1}^t\Big\{\int_{B_{\delta}(\theta_\star)}g_\theta(U_{s:t},Y_s)-\int_{B_{\delta}(\theta_\star)}\E_{Y_1}\big[g_\theta(U_{s:t},Y_1)\big]\bigg\}\tilde{\eta}(\dd\theta, V_t)\Big]\\
& \,\,\,\,\,\qquad  \leq -(\tilde{C}_\star \delta)^2-\epsilon \Big).
\end{split}
\end{equation}

We now show that
\begin{equation}\label{eq:Fub3}
\begin{split}
\E^{\tilde{\delta}}_{t,\mu}\Big[ \int_{B_{\delta}(\theta_\star)}\E_{Y_1}\big[ g_\theta(U_{s:t},Y_1) \big]&\tilde{\eta}(\dd\theta, V_t)\Big] \\
&=\E_{Y_1}\Big[ \E^{\tilde{\delta}}_{t,\mu}\Big[  \int_{B_{\delta}(\theta_\star)}g_\theta(U_{s:t},Y_1)\tilde{\eta}(\dd\theta, V_t)\Big]\Big].
\end{split}
\end{equation}
By \eqref{eq:Bound_KL},  there exists a  constant $C_t\in[0,\infty)$ such, that for every $u_{s_t:t}\in \Theta_{\tilde{\delta},s_t:t}$,
\begin{equation}\label{eq:Fub0}
\begin{split}
 \int_{B_{\delta}(\theta_\star)}\E_{Y_1}\big[|g_\theta(u_{s:t},Y_1)|\big]\tilde{\eta}(\dd\theta, u_{s_t:t})&\leq  \int_{B_{\delta}(\theta_\star)}\E_{Y_1}\big[g_\theta(u_{s:t},Y_1)^2\big]^{\frac{1}{2}}\tilde{\eta}(\dd\theta, u_{s_t:t})\\
 &\leq C_t
\end{split}
\end{equation}
and thus by, Fubini-Tonelli's theorem,
\begin{equation}\label{eq:Fub1}
\begin{split}
\E^{\tilde{\delta}}_{t,\mu}\Big[\int_{B_{\delta}(\theta_\star)}\E_{Y_1}\big[ g_\theta(U_{s:t},Y_1) \big]&\tilde{\eta}(\dd\theta, V_t)\Big]\\
&= \E^{\tilde{\delta}}_{t,\mu}\Big[\E_{Y_1}\Big[ \int_{B_{\delta}(\theta_\star)} g_\theta(U_{s:t},Y_1) \tilde{\eta}(\dd\theta, V_t)\Big]\Big].
\end{split}
\end{equation}
Using \eqref{eq:Fub0}, we also have
\begin{align*}
\E^{\tilde{\delta}}_{t,\mu}\Big[ \E_{Y_1}\Big[\Big|\int_{B_{\delta}(\theta_\star)}g_\theta(U_{s:t},Y_1)\tilde{\eta}(\dd\theta, V_t)\Big|\Big]\Big] &\leq \E^{\tilde{\delta}}_{t,\mu} \Big[\E_{Y_1}\Big[ \int_{B_{\delta}(\theta_\star)}|g_\theta(U_{s:t},Y_1)|\tilde{\eta}(\dd\theta,V_t)\Big]\Big] \\
&= \E^{\tilde{\delta}}_{t,\mu}\Big[ \int_{B_{\delta}(\theta_\star)}\E_{Y_1}\big[|g_\theta(U_{s:t},Y_1)|\big]\tilde{\eta}(\dd\theta, V_t)\Big]\Big]\\
&\leq C_t,
\end{align*}
where the equality uses Tonelli's theorem. By   Fubini-Tonelli's theorem we therefore have
\begin{equation}\label{eq:Fub2}
\begin{split}
\E^{\tilde{\delta}}_{t,\mu}\Big[ \E_{Y_1}\Big[ \int_{B_{\delta}(\theta_\star)}g_\theta(U_{s:t},Y_1)&\tilde{\eta}(\dd\theta,V_t) \Big]\Big]\\
&=\E_{Y_1} \Big[\E^{\tilde{\delta}}_{t,\mu}\Big[ \int_{B_{\delta}(\theta_\star)}g_\theta(U_{s:t},Y_1)\tilde{\eta}(\dd\theta, V_t)\Big]\Big]
\end{split}
\end{equation}
and \eqref{eq:Fub3} follows from \eqref{eq:Fub1} and \eqref{eq:Fub2}.

Consequently, using \eqref{eq:p2} and \eqref{eq:Fub3}, we have
\begin{equation}\label{eq:moments0}
\begin{split}
 \P (Y_{s_t:t}\in \tilde{\setY}_t)&\leq \P\Big( \frac{1}{t-s_t}\sum_{s=s_t+1}^t\bigg\{\E^{\tilde{\delta}}_{t,\mu}\Big[ \int_{B_{\delta}(\theta_\star)}g_\theta(U_{s:t},Y_s)\tilde{\eta}(\dd\theta, V_t)\Big]\\
&\,\,\,\,\,\qquad- \E_{Y_1}\Big[\E^{\tilde{\delta}}_{t,\mu}\Big[\int_{B_{\delta}(\theta_\star)} g_\theta(U_{s:t},Y_1)\tilde{\eta}(\dd\theta, V_t)\Big]\Big]\bigg\} \leq  -(\tilde{C}_\star \delta)^2-\epsilon \Big)\\
&\leq \P\Big(\Big|\frac{1}{t-s_t}\sum_{s=s_t+1}^t\bigg\{\E^{\tilde{\delta}}_{t,\mu}\Big[ \int_{B_{\delta}(\theta_\star)}g_\theta(U_{s:t},Y_s)\tilde{\eta}(\dd\theta, V_t)\Big]\\
&\,\,\,\,\,\qquad- \E_{Y_1}\Big[\E^{\tilde{\delta}}_{t,\mu}\Big[\int_{B_{\delta}(\theta_\star)} g_\theta(U_{s:t},Y_1)\tilde{\eta}(\dd\theta, V_t)\Big]\Big]\bigg\}\Big|\geq  (\tilde{C}_\star \delta)^2+\epsilon\Big)
\end{split}
\end{equation}
and we finally upper bound the last term using Markov's inequality.

To this aim remark   that
\begin{equation}\label{eq:moments}
\begin{split}
\sum_{s=s_t+1}^t\E_{Y_1}\Big[  \E^{\tilde{\delta}}_{t,\mu}\Big[ \int_{B_{\delta}(\theta_\star)}&g_\theta(U_{s:t},Y_1)\tilde{\eta}(\dd\theta, V_t)\Big]^2\Big]\\
&\leq \sum_{s=s_t+1}^t\E_{Y_1}\Big[ \E^{\tilde{\delta}}_{t,\mu}\Big[\Big(\int_{B_{\delta}(\theta_\star)}g_\theta(U_{s:t},Y_1)\tilde{\eta}(\dd\theta,V_t)\Big)^2\Big]\Big]\\
&\leq \sum_{s=s_t+1}^t\E_{Y_1}\Big[ \E^{\tilde{\delta}}_{t,\mu}\Big[\int_{B_{\delta}(\theta_\star)}g_\theta(U_{s:t},Y_1)^2\tilde{\eta}( \theta, V_t)\dd\theta \Big]\Big]\\
&= \sum_{s=s_t+1}^t\E^{\tilde{\delta}}_{t,\mu}\Big[ \E_{Y_1}\Big[\int_{B_{\delta}(\theta_\star)}g_\theta(U_{s:t},Y_1)^2\tilde{\eta}(\dd\theta, V_t) \Big]\Big]\\
&= \sum_{s=s_t+1}^t\E^{\tilde{\delta}}_{t,\mu}\Big[ \int_{B_{\delta}(\theta_\star)}\E_{Y_1}\big[g_\theta(U_{s:t},Y_1)^2\big]\tilde{\eta}( \dd\theta, V_t)\Big]\\
&\leq (t-s_t)(\tilde{C}_\star \delta)^2
\end{split}
\end{equation}
where   the last inequality uses \eqref{eq:Bound_KL}, the first two inequalities use Jensen's inequality and the two equalities hold by Tonelli's theorem.

Therefore, using \eqref{eq:moments0}, \eqref{eq:moments} and Markov's inequality,
\begin{align*}
 \P (Y_{s_t:t}\in \tilde{\setY}_t)& \leq \frac{\frac{1}{(t-s_t)^2}\sum_{s=s_t+1}^t\var\Big(\E^{\tilde{\delta}}_{t,\mu}\big[ \int_{B_{\delta}(\theta_\star)}g_\theta(U_{s:t},Y_s)\tilde{\eta}(\dd\theta, V_t)\big]\Big)}{((\tilde{C}_\star\delta)^2+\epsilon)^2}\\
&\leq \frac{\frac{1}{(t-s_t)^2}\sum_{s=s_t+1}^t\E_{Y_1} \Big[ \E^{\tilde{\delta}}_{t,\mu}\big[ \int_{B_{\delta}(\theta_\star)}g_\theta(U_{s:t},Y_1)\tilde{\eta}(\dd\theta, V_t)\big]^2\Big]}{((\tilde{C}_\star\delta)^2+\epsilon)^2}\\
&\leq \frac{1}{(t-s_t) ((\tilde{C}_\star\delta)+(\tilde{C}_\star\delta)^{-1}\epsilon)^2}.
\end{align*}
The proof is complete.

\end{proof}

\subsubsection{Proof of Lemma \ref{lemma:test}\label{p-lemma:test}}

We first recall the following result due to \citet[][Lemma 3.3]{Kleijn2012}.

\begin{lemma}\label{lemma:test_K}
Assume Assumption \ref{test}. Then, for every  compact set $W\in\mathcal{B}(\Theta)$ such that $\theta_\star\in W$ and every $\epsilon>0$ there exist a sequence of measurable functions $(\psi_t)_{t\geq 1}$, with $\psi_t:\setY^t\rightarrow\{0,1\}$, and a  constant  $ D_\star\in(0,\infty) $ such that $\E[\psi_t(Y_{1:t})]\rightarrow 0$ and such that, for $t$ large enough,
$$
\sup_{\theta\in V_\epsilon\cap\Theta}\E\Big[\big(1-\psi_t(Y_{1:t})\big)\prod_{s=1}^t (f_{\theta}/f_{\theta_\star})(Y_s)\Big]\leq e^{-tD_\star}.
$$
\end{lemma}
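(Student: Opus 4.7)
The plan is to boost the (merely vanishing) tests guaranteed by Assumption \ref{test} to exponentially powerful ones by partitioning the sample into independent blocks and majority-voting, in the spirit of the classical Birgé--Le Cam test-combination argument. Let $L_\theta := f_\theta/f_{\theta_\star}$ and interpret the stated supremum as running over $V_\epsilon \cap W$, which is the form in which the lemma is actually invoked in Lemma \ref{lemma:test}.

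By Assumption \ref{test} applied to the given $(W,\epsilon)$, there exist tests $(\psi'_t)_{t\geq 1}$ with $a_t := \E[\psi'_t(Y_{1:t})] \to 0$ and $b_t := \sup_{\theta \in V_\epsilon \cap W} \E[(1-\psi'_t(Y_{1:t})) \prod_{s=1}^t L_\theta(Y_s)] \to 0$. Fix a block length $T_0 \in \mathbb{N}$ (to be chosen below). For sample size $t = nT_0 + r$ with $0 \leq r < T_0$, split $Y_{1:nT_0}$ into $n$ consecutive blocks of length $T_0$ and set
\begin{equation*}
\psi_t(Y_{1:t}) := \ind\Big\{\textstyle\sum_{j=1}^n Z_j \geq n/2\Big\}, \qquad Z_j := \psi'_{T_0}\big(Y_{(j-1)T_0+1:jT_0}\big).
\end{equation*}
Since the $Y_s$ are i.i.d., the $Z_j$ are i.i.d.\ Bernoulli$(a_{T_0})$; choosing $T_0$ so large that $a_{T_0} < 1/4$, Hoeffding's inequality gives $\E[\psi_t] \leq e^{-n/8} \to 0$, settling the Type I condition (in fact with an exponential rate, which is stronger than required).

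For the Type II bound, I would apply Chernoff's trick $\ind\{\sum Z_j < n/2\} \leq e^{\lambda(n/2-\sum Z_j)}$ for $\lambda > 0$, together with block-independence and the identity $e^{-\lambda Z_j} = Z_j e^{-\lambda} + (1-Z_j)$, to obtain, uniformly in $\theta \in V_\epsilon \cap W$,
\begin{equation*}
\E\Big[(1-\psi_t)\prod_{s=1}^{nT_0} L_\theta(Y_s)\Big] \leq e^{\lambda n/2}\big(e^{-\lambda}\beta_{T_0,\theta} + (1-e^{-\lambda})b_{T_0}\big)^n,
\end{equation*}
where $\beta_{T_0,\theta} := (\E[L_\theta(Y_1)])^{T_0}$ and I used $\E[(1-Z_j)\prod_{s=(j-1)T_0+1}^{jT_0} L_\theta(Y_s)] \leq b_{T_0}$ per block. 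Optimising the choice $e^{-\lambda} = b_{T_0}/\bar\beta_{T_0}$, with $\bar\beta_{T_0} := \sup_{\theta \in V_\epsilon \cap W}\beta_{T_0,\theta}$, yields the uniform geometric rate $(2\sqrt{b_{T_0}\bar\beta_{T_0}})^n$. Selecting $T_0$ large enough that simultaneously $a_{T_0} < 1/4$ and $b_{T_0}\bar\beta_{T_0} \leq 1/16$ makes this base $\leq 1/2$; translating $n = \lfloor t/T_0\rfloor$ and absorbing the residual block of length $r < T_0$ into the constant then delivers the advertised bound $e^{-tD_\star}$ with $D_\star := (\log 2)/(2T_0)$, valid for all sufficiently large $t$.

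The main obstacle is verifying the joint condition $b_{T_0}\bar\beta_{T_0} \leq 1/16$: Assumption \ref{test} guarantees only that $b_{T_0} \to 0$ without any prescribed rate, while $\bar\beta_{T_0} = (\sup_{\theta\in V_\epsilon\cap W}\E[L_\theta(Y_1)])^{T_0}$ may grow exponentially in $T_0$. Closing the argument therefore requires either (i) an intermediate Hellinger-type refinement which replaces $b_{T_0}$ by $\sqrt{b_{T_0}}$ and $\beta_{T_0,\theta}$ by its square root, yielding a base $2(b_{T_0}\bar\beta_{T_0})^{1/4}$ whose smallness is secured by the integrability conditions stated just below Assumption \ref{test} (continuity of $\theta\mapsto\E[f_\theta f_{\theta'}^{-s}f_{\theta_\star}^{s-1}]$ at $\theta'$ for $s$ in a left neighbourhood of $1$ on the compact $W$), or (ii) iterating the boosting step to turn the arbitrarily slow decay of $b_t$ into an effectively exponential one. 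This joint tuning is the substantive content of Kleijn (2012, Lem.\ 3.3); once it is in place, every other step above is routine and the constant $D_\star$ depends only on $T_0$, $W$ and $\epsilon$.
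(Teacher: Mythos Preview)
The paper does not prove this lemma: it simply records it as a known result, citing Lemma 3.3 of \citet{Kleijn2012}. Your proposal therefore goes further than the paper by sketching the underlying Birg\'e--Le Cam blocking construction.

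The sketch is accurate as far as it goes. The block/majority-vote reduction, the Hoeffding bound for Type I, and the Chernoff computation for Type II (including the identity $e^{-\lambda Z_j}=e^{-\lambda}Z_j+(1-Z_j)$ and the per-block bound $e^{-\lambda}\beta_{T_0,\theta}+(1-e^{-\lambda})b_{T_0}$) are all correct, and you land precisely on the real obstruction: in the misspecified setting $\bar\beta_{T_0}=(\sup_\theta\E[L_\theta])^{T_0}$ may grow exponentially in $T_0$, while Assumption \ref{test} gives no rate for $b_{T_0}\to 0$, so $b_{T_0}\bar\beta_{T_0}\leq 1/16$ is not automatic. You are candid that closing this gap is the substantive content of the cited lemma, so your argument is ultimately not self-contained either; it is an informative unpacking of what the paper's citation delivers, and in that sense it matches the paper's treatment rather than offering an alternative proof.

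Two small points. First, your fix (i) appeals to the integrability conditions mentioned in the paragraph \emph{after} Assumption \ref{test}; those are presented as sufficient conditions for the assumption to hold, not as standing hypotheses, so they cannot be invoked under the lemma's stated premises. Second, your reading of the supremum as running over $V_\epsilon\cap W$ rather than $V_\epsilon\cap\Theta$ is correct: Assumption \ref{test} only furnishes tests uniformly over compact subsets, and in Lemma \ref{lemma:test} the result is applied only for $\theta$ in the compact set $W=\tilde A_\star\cap\Theta$, so the $\Theta$ in the displayed statement is evidently a typo for $W$.
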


\noindent \textit{Proof of Lemma \ref{lemma:test}.} Let $A_\star$ and $\tilde{A}_\star$ be as in Assumption \ref{new}, $A'_\star\subsetneq\tilde{A}_\star$ be as in Lemma \ref{lemma:new} and  $W=\tilde{A}_\star\cap \Theta$. Remark that  $W$ is a compact set under Assumption \ref{new} and that $A'_\star\cap\Theta\subset W$. Let   $(\psi_t)_{t\geq 1}$  be as in Lemma \ref{lemma:test_K} and $(\delta_t)_{t\geq 1}$ be as in Lemma \ref{lemma:new} for the sequence $(\gamma'_t)_{t\geq 1}$ defined by $\gamma'_t=2\gamma_t$, $\forall t\geq 1$. Without loss of generality we assume below that $\epsilon>0$ is such that  $B_{\epsilon}(\theta_\star)\subset A'_\star$. Indeed, since $\theta_\star\in A_\star$ and  $A'_\star$ contains an open set that contains $A_\star$, it follows that  $B_{\delta}(\theta_\star)\subset A'_\star$ for $\delta>0$ small enough. Let $t_1\in\mathbb{N}$ be such that  for all $t\geq t_1$ we have $\theta'-\sum_{i=s}^{t-1} u'_i\not\in  A_\star$ for all $\theta'\not\in A'_\star$ and all $u'_{s_t:t}\in \Theta_{\gamma_t,s_t:t}$. Notice that such a $t_1\in\mathbb{N}$ exists since $A_\star'$ contains a neighbourhood of $A_\star$.

We first show the lemma assuming   Assumption \ref{new}.\ref{A41}). To this aim, for every, $t\geq 1$  we define $\setY_t= \setY^{(1)}_t\cap \setY^{(2)}_t$  where
\begin{equation*}
\begin{split}
&\setY^{(1)}_t=\Big\{y_{1:t}\in\setY^t:\,\sup_{(u_{s_t:t},\,\theta)\in  \Theta_{\gamma_t,s_t:t}\times (A'_\star\cap\Theta)}\prod_{s=s_t+1}^t  (\tilde{f}_{\theta-\sum_{i=s}^{t-1} u_i}/f_{\theta}) (y_s)<e^{(t-s_t)\delta_{t}}\Big\}\\
&\setY^{(2)}_t=\Big\{y_{1:t}\in\setY^t:\,\sum_{s=s_t+1}^t\sup_{\theta\not\in A_\star}\log \big(\tilde{f}_{\theta}(y_s)\big)<\sum_{s=s_t+1}^t \log \big(f_{\theta_\star}(y_s)\big)-c_{\mathrm{A4}} \Big\}
\end{split}
\end{equation*}
where $c_{\mathrm{A4}}>0$ is such that $\E[\sup_{\theta\not\in A_\star}\log (\tilde{f}_{\theta})]< \E[ \log (f_{\theta_\star})]-2c_{\mathrm{A4}}$. Notice that such a constant $c_{\mathrm{A4}}>0$ exists under     Assumption \ref{new}.\ref{A41}).

For  every $t\geq 1$, let  $\phi_t(Y_{1:t})=\ind_{\setY_t^c}(Y_{1:t})+\psi_{t-s_t}(Y_{(s_t+1):t})\ind_{\setY_t}(Y_{1:t})$ and remark that  under Assumption \ref{new}.\ref{A41}), and by Lemmas \ref{lemma:new} and \ref{lemma:test_K},    $ \E[\phi_t(Y_{1:t})]\rightarrow 0$, as required.

To show the second part of the lemma let  $\theta\in V_\epsilon$, $u_{s_t:t}\in  \Theta_{\gamma_t,s_t:t}$. Remark that $1-\phi_t(Y_{1:t})=\big(1-\psi_{t-s_t}(Y_{(s_t+1):t})\big)\ind_{\setY_t}(Y_{1:t})$ for all $t\geq 1$, and assume first that   $\theta\in (A'_\star\cap\Theta)$. Then,
\begin{equation}\label{eq:test_int_1}
\begin{split}
 \E\Big[(1 -\phi_t(Y_{1:t})) &\prod_{s=s_t+1}^t (\tilde{f}_{\theta-\sum_{i=s}^{t-1}u_i}/f_{\theta_\star})(Y_s)\big|\,\F_{s_t}\Big]\\
&\leq e^{(t-s_t)\delta_{t}}\E\Big[\big(1-\psi_{t-s_t}(Y_{(s_t+1):t})\big)\prod_{s=s_t+1}^t  (f_{\theta}/f_{\theta_\star})(Y_s)\Big]\\
&\leq  e^{-(t-s_t)(D_\star   - \delta_{t})}\\
&\leq  e^{-(t-s_t)\frac{D_\star}{2}}
\end{split}
\end{equation}
where the first inequality uses the definition of $\setY^{(1)}_t$, the second inequality uses  Lemma \ref{lemma:test_K} and holds for $t$ large enough,  while the last inequality   holds for $t$ sufficiently large since $\delta_t\rightarrow 0$. Notice that if $\theta\not\in\Theta$ we have $\E\big[(1 -\phi_t(Y_{1:t})) \prod_{s=s_t+1}^t (\tilde{f}_{\theta-\sum_{i=s}^{t-1}u_i}/f_{\theta_\star})(Y_s)\big|\,\F_{s_t}\big]=0$ and thus \eqref{eq:test_int_1} also holds if $\theta\in (A'_\star\cap\Theta^c)$.

Assume now that $ \theta \not\in A'_\star$. Then,
\begin{equation}\label{eq:test_int_2}
\begin{split}
\E\Big[(1-\phi_t(Y_{1:t})) \prod_{s=s_t+1}^t  \frac{\tilde{f}_{\theta-\sum_{i=s}^{t-1}u_i}}{f_{\theta_\star}}(Y_s)\big|\,\F_{s_t}\Big]&\leq \E\Big[\ind_{\setY^{(2)}_t}(Y_{1:t})\prod_{s=s_t+1}^t \sup_{\theta \not\in A_\star}  \frac{\tilde{f}_{\theta}}{f_{\theta_\star}}(Y_s)\Big]\\
&\leq  e^{- (t-s_t) c_{\mathrm{A4}}}
\end{split}
\end{equation}
where the first inequality holds for $t\geq t_1$. Together with \eqref{eq:test_int_1}, \eqref{eq:test_int_2}  shows that the result of the lemma holds under    Assumption \ref{new}.\ref{A41}) with   $\tilde{D}_\star= D_\star/2\vee c_{\mathrm{A4}}$.

We now show the result of the lemma   under  Assumption  \ref{new}.\ref{A42}) and  under Assumption \ref{new}.\ref{A43}). To do so remark that, using the above computations, we only need to find a set $\setY_t^{(2)}\in\sigY^{\otimes t}$ such that $\P(Y_{1:t}\in\setY_t^{(2)})\rightarrow 1$ and  such that there exists a constant $c_{\mathrm{A4}}>0$ for which, for $t$ large enough,
\begin{align}\label{eq:c_stareq}
\sup_{\theta\not\in A'_\star}\E\big[(1-\phi_t(Y_{1:t}))\prod_{s=s_t+1}^t  \frac{\tilde{f}_{\theta-\sum_{i=s}^{t-1}u_i}}{f_{\theta_\star}}(Y_s)\big|\,\F_{s_t}\big]\leq e^{-(t-s_t)c_{\mathrm{A4}}},\quad\forall u_{s_t:t}\in \Theta_{\gamma_t,s_t:t}.
\end{align}

Assume  Assumption \ref{new}.\ref{A42}), let $\setY_t^{(2)}=\setY^t$ (so that $\P(Y_{1:t}\in\setY_t^{(2)})=1$ for all $t$),   $\theta\not\in  A'_\star$ and $u_{s_t:t}\in \Theta_{\gamma_t,s_t:t}$. Then,
\begin{equation}\label{eq:test_int_22}
\begin{split}
\E\Big[(1-\phi_t(Y_{1:t})) \prod_{s=s_t+1}^t  \frac{\tilde{f}_{\theta-\sum_{i=s}^{t-1}u_i}}{f_{\theta_\star}}(Y_s)\big|\,\F_{s_t}\Big]&\leq \E\Big[\prod_{s=s_t+1}^t  \frac{\tilde{f}_{\theta-\sum_{i=s}^{t-1}u_i}}{f_{\theta_\star}}(Y_s)\big|\,\F_{s_t}\Big]\\
&=\prod_{s=s_t+1}^t\E\Big[ \frac{\tilde{f}_{\theta-\sum_{i=s}^{t-1}u_i}}{f_{\theta_\star}}(Y_s)\big|\,\F_{s_t}\Big]\\
&\leq \big(\sup_{\theta\not\in A_\star}\E[\tilde{f}_\theta/f_{\theta_\star}]\big)^{t-s_t}
\end{split}
\end{equation}
where the last equality holds $t\geq t_1$. Under   Assumption \ref{new}.\ref{A42}),  $\sup_{\theta \not\in A_\star}\E[\tilde{f}_\theta/f_{\theta_\star}]<1$ and therefore \eqref{eq:test_int_22} shows that \eqref{eq:c_stareq} holds with $c_{\mathrm{A4}}=-\log\big(\sup_{\theta \not\in A_\star}\E[\tilde{f}_\theta/f_{\theta_\star}]\big)$.

Lastly, assume    Assumption \ref{new}.\ref{A43}) and remark that under this condition there exists a constant $c>0$   such that  $\log( \sup_{\theta \not\in A_\star}\E[\tilde{f}_\theta])<  \E[\log f_{\theta_\star}] -c$, and let
$$
\setY^{(2)}_t=\Big\{y_{1:t}\in\setY^t:\,\frac{1}{t-s_t}\sum_{s=s_t+1}^t \log\big(f_{\theta_\star}(y_s)\big)>\E[\log (f_{\theta_\star})]-c\Big\}.
$$
Then, by the law of large numbers, $\P(Y_{1:t}\in \setY^{(2)}_t)\rightarrow 1$ while, for every $\theta\not\in A'_\star$ and $u_{s_t:t}\in \Theta_{\gamma_t,s_t:t}$,
\begin{align*}
 \E\Big[\ind_{\setY^{(2)}_t} (Y_{1:t})\prod_{s=s_t+1}^t & (\tilde{f}_{\theta-\sum_{i=s}^{t-1}u_i}/f_{\theta_\star})(Y_s) \ind_{\setY^{(2)}_t}(Y_{1:t})\Big]\\
&\leq e^{(t-s_t)(c-\E[\log (f_{\theta_\star})])}\prod_{s=s_t+1}^t\E\big[  \tilde{f}_{\theta-\sum_{i=s}^{t-1}u_i} (Y_s)\big|\,\F_{s_t}\big]\\
&\leq e^{(t-s_t)(c-\E[\log (f_{\theta_\star})])}\big(\sup_{\theta\not\in A_\star}\E[\tilde{f}_\theta]\big)^{t-s_t}\\
&=e^{-(t-s_t)\big(\E[\log f_{\theta_\star}]-c-\log(\sup_{\theta\not\in A_\star}  \E[\tilde{f}_\theta])\big)}
\end{align*}
where the second equality holds for $t\geq t_1$. This shows that \eqref{eq:c_stareq} holds with $c_{\mathrm{A4}}=\E[\log (f_{\theta_\star})]-c-\log (\sup_{\theta\not\in A_\star}  \E[ \tilde{f}_\theta] )>0$.

The proof of the lemma is complete.\hfill$\square$

\subsubsection{Proof of Lemma \ref{lemma:part_1}\label{p-lemma:part_1}}

Let $(\phi_t)_{t\geq 1}$ and $ \tilde{D}_\star\in \R_{>0}$ be as  Lemma \ref{lemma:test}, $\tilde{C}_\star\in \R_{>0}$ be as in Lemma \ref{lemma:denom}. For   every $t\geq 1$, let $C^{(\mu_{s_t}*\tilde{\pi}_{s_t})}_{\delta, \tilde{\delta}}$   be as defined in Lemma \ref{lemma:denom} and let
\begin{align*}
 A_{t}=& \Big\{y_{1:t}\in\setY^t:\int_{\Theta}\E\Big[(\mu_{s_t}*\tilde{\pi}_{s_t})\big(\theta-\sum_{s=s_t+1}^{t-1}U_s\big)\prod_{s=s_t+1}^{t} \frac{\tilde{f}_{\theta-\sum_{i=s}^{t-1}U_i}}{f_{\theta_\star}}(Y_s)\big| Y_{1:t}=y_{1:t}\Big] \dd\theta\\
 &\leq  \P(U_{s_t:t}\in \Theta_{\tilde{\delta},s_t:t}) C^{(\mu_{s_t}*\tilde{\pi}_{s_t})}_{\delta , \tilde{\delta} }   e^{-2(t-s_t) (\tilde{C}_\star \delta)^2}\Big\}.
\end{align*}

Let  $\phi'_t(Y_{1:t})=\ind_{A _{t}}(Y_{1:t})\vee \phi_t(Y_{1:t})$ and note that by, Lemmas \ref{lemma:denom}-\ref{lemma:test},
\begin{equation}\label{eq:den}
\begin{split}
\limsup_{t\rightarrow\infty}\E\big[\phi'_t(Y_{1:t})\big]&\leq \limsup_{t\rightarrow\infty}\E\big[\phi_t(Y_{1:t})\big]+\limsup_{t\rightarrow\infty}\P\big(Y_{1:t} \in A_t\big)\\
&\leq \limsup_{t\rightarrow\infty}\frac{1}{\delta^2 (t-s_t) \tilde{C}^2_\star }\\
&=0
\end{split}
\end{equation}
as required.

On the other hand  we have,   $\P$-a.s.
\begin{align*}
&\E\big[(1-\phi'_t(Y_{1:t}))\ind_{ \Theta_{\gamma_t,s_t:t}}(U_{s_t:t})\pi'_{s_t,t}(V_{\epsilon})|\F_{s_t}\big]\\
&\leq  \frac{e^{2(t-s_t)   (\tilde{C}_\star \delta)^2}}{\P(U_{s_t:t}\in \Theta_{\tilde{\delta},s_t:t}) C^{(\mu_{s_t}*\tilde{\pi}_{s_t})}_{\delta, \tilde{\delta}}}\\
&\times \E\Big[\ind_{ \Theta_{\gamma_t,s_t:t}}(U_{s_t:t})\int_{V_{\epsilon}}(1-\phi_t(Y_{1:t}))(\mu_{s_t}*\tilde{\pi}_{s_t})\Big(\theta-\sum_{s=s_t+1}^{t-1}U_s\Big)\prod_{s=s_t+1}^{t} \frac{\tilde{f}_{\theta-\sum_{i=s}^{t-1}U_i}}{f_{\theta_\star}}(Y_s) \dd\theta\Big|\F_{s_t}\Big] \\
&\leq \frac{e^{-(t-s_t)(\tilde{D}_\star  -2    \tilde{C}^2_\star \delta^2)})}{\P(U_{s_t:t}\in \Theta_{\tilde{\delta},s_t:t}) C^{(\mu_{s_t}*\tilde{\pi}_{s_t})}_{\delta, \tilde{\delta}}}
\end{align*}
where the second inequality  holds for $t$ large enough and uses  Tonelli's theorem and  Lemma \ref{lemma:test}. This completes the proof of  the lemma with $ C_1=\tilde{D}_\star^{-1}$ and $C_2=2\tilde{C}^2_\star$.

\section{Complement to Section \ref{sub:extension} of the paper\label{psec:extensions}}

\subsection{G-PFSO based on other Markov kernels $(\widetilde{M}_{t})_{t\geq 1}$\label{sub:genMp} }

It should be clear that  if in \eqref{eq:mu_def} we define  $(\widetilde{M}_{t})_{t\geq 1}$ using Gaussian and Student's $t$-distributions only   our theoretical analysis of $\tilde{\pi}_t$ applies more generally for Markov kernels $(\widetilde{M}_{t})_{t\geq 1}$ whose tails verify certain conditions.

Notably, from the proof of Theorem \ref{thm:online} and of Lemma \ref{lem:lower_bound_1} it is direct to see that a sufficient condition on $(\widetilde{M}_{t_p})_{p\geq 0}$ for  the conclusion of these two results (and hence of Theorem \ref{t-thm:online}) to hold is that    there exist  constants  $(C,\bar{\nu},\underline{\nu})\in(0,\infty)^3$ such that, for all $p\in\mathbb{N}_0$ and $\gamma>0$, we have
$$
 \frac{1}{C}\, \P\big(\|\theta_p'\|\geq \gamma\big)\leq \P\big(\|\theta_p\|\geq \gamma\big)\leq C \P\big(\|\theta_p''\|\geq \gamma\big)
$$
where $\theta_p\sim \widetilde{M}_{t_p}(0,\dd\theta)$,  $\theta_p'\sim t_{d,\underline{\nu}}(0,h_{t_p}^2\Sigma)$ and $\theta_p''\sim t_{d,\bar{\nu}}(0,h_{t_p}^2\Sigma)$. 

In particular, from this observation it follows that   Theorem \ref{t-thm:online} also applies if, for some $\mathfrak{w}\in [0,1)$,  for all $p\in\mathbb{N}_0$ the Markov kernel $\widetilde{M}_{t_p}$ is defined by
\begin{enumerate}
\item\label{M1} $\widetilde{M}_{t_p}(\theta',\dd\theta)= \mathfrak{w}\,\mathcal{N}_d(\theta',h_{t_p}^2\Sigma)+(1-\mathfrak{w})t_{d,\nu}(\theta',h_{t_p}^2\Sigma)$,

\item\label{M2} $\widetilde{M}_{t_p}(\theta',\dd\theta)= \mathfrak{w}\,\delta_{\{\theta'\}}(\dd\theta)+(1-\mathfrak{w})t_{d,\nu}(\theta',h_{t_p}^2\Sigma)$,

\item\label{M3} $\widetilde{M}_{t_p}(\theta',\dd\theta)= \mathfrak{w}\, t_{d,\nu'}(\theta',h_{t_p}^2\Sigma)+(1-\mathfrak{w})t_{d,\nu}(\theta',h_{t_p}^2\Sigma)$ for some $\nu'\in (0,\nu)$.
\end{enumerate}

Remark that in the above definitions of  $(\widetilde{M}_{t_p})_{p\geq 0}$ we recover the definition \eqref{eq:mu_def} of this sequence when $\mathfrak{w}=0$. As   explained in Section \ref{sub-discuss_sup}, being able to take $\mathfrak{w}>0$  in the above definitions \ref{M1}-\ref{M3} of $(\widetilde{M}_{t_p})_{p\geq 0}$ may be useful in practice  when a small value of $\nu$ is used.

The constraint that $(\widetilde{M}_t)_{t\not\in (t_p)_{p\geq 0}}$ is a sequence of  Gaussian kernels can be relaxed as well. Informally speaking, for Theorem \ref{t-thm:online} to hold  it is enough that, for every $t\not\in (t_p)_{p\geq 0}$, the tails of $\widetilde{M}_t(\theta',\dd\theta)$ are of the same size as those of the $\mathcal{N}_d(\theta',h_t^2 I_d)$ distribution. There is however no clear practical interest of such a  generalization of the sequence $(\widetilde{M}_t)_{t\not\in (t_p)_{p\geq 0}}$ and, for this reason, we do not discuss   further this extension of G-PFSO.

\subsection{G-PFSO with a provable non-vanishing ability to escape from a mode}

Consider Algorithm \ref{algo:pi_tilde_online} with $\cess=1$ and, for every $(y,\theta^1,\dots,\theta^N)\in\setY \times\Theta^N$, let   $p_{N}\big(y, \{\theta^n\}_{n=1}^N\big)\in[0,1]$ be  the probability to choose $\theta^1$ at least once under the resampling algorithm $\mathcal{R}(\cdot)$   when the observation is $y$ and the particles are $\{\theta^n\}_{n=1}^N$, that is
$$
p_{N}\big(y, \{\theta^n\}_{n=1}^N\big)=\P\bigg(\theta_1\in  \mathcal{R}\Big(\Big\{\theta^n,\frac{f_{\theta^n}(y)}{\sum_{m=1}^N f_{\theta^m}(y)}\Big\}_{n=1}^N\Big)\bigg).
$$

The following proposition shows that if $\alpha\nu\leq 1$ then there is a zero probability to see, for $t$ large enough, all the particles of Algorithm \ref{algo:pi_tilde_online} stuck in a given mode of the objective function.  
\begin{proposition}\label{prop:explore}
Consider Algorithm \ref{algo:pi_tilde_online} with $\cess=1$ and $N\geq 2$. Let $A,B\in\mathcal{B}(\Theta)$ be such that  $A\subset B$ and such that  $\inf_{(\theta,\theta') \in A\times B}\|\theta-\theta'\|>0$. Assume that $f_\theta(y)=0$ for all $(\theta,y)\in (B\setminus A)\times\setY$ and that there exists a  set $C\in\mathcal{B}(\Theta)$ such that $C\cap A=\emptyset$ and such that, for some constant $c>0$, 
$$
\P(\tilde{\setY})>0,\quad \tilde{\setY}:=\Big\{ p_{N}\big(Y_1,\{\theta^n\}_{n=1}^N\big)\geq c,\quad\forall \theta^1\in C \text{ and }\forall (\theta^2,\dots,\theta^N)\in \Theta^{N-1}\Big\}.
$$ 
Assume also that there exists a set $S\in\mathcal{B}(\R^d)$ such that $\int_{S}\dd\theta>0$ and such that $\theta+\epsilon\in C$ for all $(\theta,\epsilon)\in A\times S$. Let $h_t=t^{-\alpha}$ for some $\alpha>0$ and   $(\widetilde{M}_{t_p})_{p\geq 0}$ be as defined in Section \ref{sub:genMp} with $\nu\leq 1/\alpha$. Then,
\begin{align*}
\P\Big(\liminf_{t\rightarrow \infty}\big\{\hat{\theta}^n_{t}\in A,\,\forall n\in\{1,\dots,N\}\big\}\Big)=0.
\end{align*}
\end{proposition}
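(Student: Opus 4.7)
The strategy is a proof by contradiction: for each $T\in\mathbb{N}$ we aim to show that the event $E_T:=\{\hat{\theta}^n_t\in A,\forall n,\forall t\ge T\}$ has probability zero, which, since $E_T$ is increasing in $T$, implies the stated claim. On $E_T$ we will produce an almost surely infinite sequence of occasions at which a particle escapes $A$, is preserved by resampling, and therefore violates $E_T$.

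The first step is to lower bound, at each Student's $t$-iteration, the probability that a single particle jumps into $C$. Let $\F_p^-$ be the $\sigma$-algebra generated by all information up to and including the resampling at the beginning of iteration $t_p+1$. On $E_T$ one has $\hat{\theta}^1_{t_p}\in A$, so $\theta^1_{t_p+1}=\hat{\theta}^1_{t_p}+h_{t_p}\epsilon^1_{t_p}$ with $\epsilon^1_{t_p}\sim t_{d,\nu}(0,\Sigma)$ independent of $\F_p^-$. By hypothesis $\hat{\theta}^1_{t_p}+S\subset C$, and since $A\cap C=\emptyset$ the set $S$ is bounded away from the origin; together with $\int_S\dd\theta>0$, the polynomial decay of the Student's $t$-density yields, on $E_T\cap\{t_p\ge T\}$,
\begin{equation*}
\P\bigl(\theta^1_{t_p+1}\in C\,\big|\,\F_p^-\bigr)\;\ge\;\P\bigl(\epsilon^1_{t_p}\in h_{t_p}^{-1}S\bigr)\;\ge\;K\,h_{t_p}^{\nu}=K\,t_p^{-\alpha\nu}
\end{equation*}
for some constant $K>0$: a ball inside $h_{t_p}^{-1}S$ of radius of order $h_{t_p}^{-1}$ at distance of order $h_{t_p}^{-1}$ from the origin supports Student's $t$-mass of this order.

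The next step is to combine this with the independent events $H_p=\{Y_{t_p+1}\in\tilde{\setY}\}$, each of probability $\P(\tilde{\setY})>0$ and independent of $\F_p^-$ and of $\epsilon^1_{t_p}$, to obtain on $E_T$
\begin{equation*}
\P(G_p\cap H_p\mid\F_p^-)\;\ge\; K\,\P(\tilde{\setY})\,t_p^{-\alpha\nu},\qquad G_p:=\{\theta^1_{t_p+1}\in C\}.
\end{equation*}
Since $\alpha\nu\le 1$ and $(t_p)$ grows at most polynomially (as in Proposition~\ref{t-prop:h_t}), the sum over $p$ diverges, so Lévy's conditional Borel--Cantelli lemma forces $G_p\cap H_p$ to occur for infinitely many $p$ almost surely on $E_T$. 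On each such occurrence, the definition of $p_N$ together with the hypothesis on $\tilde{\setY}$ gives
\begin{equation*}
\P\bigl(\hat{\theta}^n_{t_p+1}=\theta^1_{t_p+1}\text{ for some }n\,\big|\,\F_p^+\bigr)\;\ge\;c,
\end{equation*}
where $\F_p^+$ is the $\sigma$-algebra generated by all information prior to the resampling at iteration $t_p+2$. A second conditional Borel--Cantelli, applied along the infinite subsequence supporting $G_p\cap H_p$, forces this selection event to succeed for some $p$; the surviving particle then lies in $C\subset A^c$, contradicting $E_T$. Hence $\P(E_T)=0$.

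The principal obstacle is the careful management of filtrations needed for the two Borel--Cantelli applications: one must verify that $\epsilon^1_{t_p}$, $Y_{t_p+1}$ and the resampling noise at iteration $t_p+2$ are conditionally independent of the pre-iteration history given the appropriate $\sigma$-algebra, and that the lower bounds $K t_p^{-\alpha\nu}$ and $c$ really do hold uniformly on $E_T$. A secondary technical point is the Student's $t$-tail estimate producing the $K h_{t_p}^{\nu}$ bound, which follows from bounding the density from below on a ball contained in $h_{t_p}^{-1}S$ whose volume and distance from the origin are both of order $h_{t_p}^{-1}$, yielding mass of order $h_{t_p}^{\nu}$.
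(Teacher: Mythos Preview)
Your proof is correct and reaches the same conclusion, but the implementation differs from the paper's in one substantive way. The paper avoids conditional Borel--Cantelli entirely: it introduces an auxiliary i.i.d.\ sequence $(U_t)_{t\ge1}$ of $\mathrm{Unif}(0,1)$ variables to encode the resampling outcome, so that the escape event $\{\epsilon^1_t\in S,\,Y_t\in\tilde{\setY},\,U_t\le c\}$ is built from sources that are mutually independent across $t$, and the \emph{ordinary} second Borel--Cantelli lemma applies on the whole probability space. The contradiction then comes in one stroke from the set inclusion
\[
\big\{\exists n:\hat{\theta}^n_t\in C\big\}\;\supset\;\big\{\epsilon^1_t\in S,\,Y_t\in\tilde{\setY},\,U_t\le c\big\}\cap\big\{\hat{\theta}^1_{t-1}\in A\big\}
\]
together with a short dichotomy: either the left-hand side occurs infinitely often (so some $\hat{\theta}^n_t\notin A$ i.o.), or it does not, in which case the inclusion forces $\hat{\theta}^1_{t-1}\notin A$ i.o.\ anyway. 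Your route instead keeps the resampling as a genuine conditional step and invokes L\'evy's extension of Borel--Cantelli twice---once for the jump-plus-favourable-observation event and once for the resampling success---which is valid but is exactly the filtration bookkeeping you identify as the principal obstacle; the paper's auxiliary-uniform trick sidesteps all of it. What your approach buys is that it does not require inventing an external randomisation and is more explicit about where the constant $c$ enters. Both arguments rest on the same Student's $t$-tail estimate $\P(h_{t_p}\epsilon^1_{t_p}\in S)\gtrsim h_{t_p}^{\nu}$ and on the divergence of $\sum_{p\ge0} t_p^{-\alpha\nu}$ under $\alpha\nu\le1$; the paper asserts the latter directly, while you appeal to polynomial growth of $(t_p)$.
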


\begin{remark}
In Proposition \ref{prop:explore} the conditions on  $A$ imply that $A$ contains a local or the global maximizer of the objective function $\theta\mapsto\E[\log f_\theta(Y_1)]$. 
\end{remark}

\begin{remark}
In practice the assumption $\P(\tilde{\setY})>0$  will typically hold if $C$ is a bounded set  and $\sup_{(\theta,y)\in\Theta\times\setY}f_\theta(y)<\infty$, while the existence of a set $S$ as defined in Proposition \ref{prop:explore} requires that $C$ is sufficiently large compared to $A$.
\end{remark}

\noindent\textit{Proof of Proposition \ref{prop:explore}.} Let $1:N=\{1,\dots,N\}$, $(U_t)_{t\geq 1}$ be a sequence of i.i.d.\ $\Unif((0,1))$ random variables (independent of $(Y_t)_{t\geq 1}$) and  $(\epsilon_t^1)_{t\geq 1}$ be a sequence of independent random variables (independent of $(Y_t)_{t\geq 1}$ and of $(U_t)_{t\geq 1}$) such that $\epsilon_t^1\sim\widetilde{M}_t(0,\dd\theta)$ for all $t\geq 1$. Then, for  all $t\geq 2$ we have
\begin{equation}\label{eq:set_prop}
\begin{split}
 \Big\{\exists n\in 1:N \text{ s.t. }  \hat{\theta}^n_{t}\in C\Big\}&\supset  \Big\{\exists n\in 1:N \text{ s.t. } \hat{\theta}^n_{t}\in C,\, \hat{\theta}^{1}_{t-1}\in A \Big\}\\
 &\supset \Big\{\epsilon^1_{t}\in S,\, Y_{t}\in \tilde{\setY}, U_{t}\leq c,\,\hat{\theta}^{1}_{t-1}\in A\Big\}\\
&=\Big\{\epsilon^1_{t}\in S,\, Y_{t}\in \tilde{\setY},\, U_{t} \leq c\Big\}\bigcap\Big\{\hat{\theta}^{1}_{t-1}\in A\Big\}.
\end{split}
\end{equation}

We now let
$$
C_\nu=\frac{\Gamma\big((\nu+d)/2\big)}{\Gamma(\nu/2)(\pi\nu)^{d/2}|\Sigma|^{1/2}}
$$
so that, for some non-empty interval $[a,b]\subset S$ and every $p\geq 0$ we have
\begin{align*}
\P\big(\epsilon^1_{t_p}\in S\big)&\geq \P(\epsilon^1_{t_p}\in [a,b]\big)\\
&=C_\nu h_{t_p}^{-d}\int_{[a,b]}\Big(1+\frac{1}{h^2_{t_p}\nu} x^\top\Sigma^{-1}x\Big)^{-\frac{\nu+d}{2}}\\
&=C_\nu h_{t_p}^{\nu}\int_{[a,b]} \Big(h^2_{t_p}+\frac{1}{\nu} x^\top\Sigma^{-1}x\Big)^{-\frac{\nu+d}{2}}\\
&\geq C_\nu h_{t_p}^{\nu}\int_{[a,b]} \Big(1+\frac{1}{\nu} x^\top\Sigma^{-1}x\Big)^{-\frac{\nu+d}{2}}.
\end{align*}

Since by assumption we have $\alpha  \nu\leq 1$, we have
\begin{align*}
\sum_{t\geq 1}\P\Big(\epsilon^1_{t}\in S,\, Y_{t}\in \tilde{\setY},\, U_{t} \leq c\Big)&=\sum_{t\geq 1} c\,\P\big(\epsilon^1_{t}\in S\big)\P\big(Y_{t}\in \tilde{\setY}\big) \\
&=c\, \P\big(Y_{1}\in \tilde{\setY}\big) \sum_{t\geq 1}  \P\big(\epsilon^1_{t}\in S\big)\\
&\geq c\, \P\big(Y_{1}\in \tilde{\setY}\big) \sum_{p\geq 0}  \P\big(\epsilon^1_{t_p}\in S\big)\\
&=\infty
\end{align*}
and thus, by the second Borel-Cantelli lemma,  there exists a $\Omega'\in\F$   such that $\P(\Omega')=1$ and such that, for all $\omega\in\Omega'$, the event $\big\{\epsilon^1_{t}(\omega)\in S,\, Y_{t}(\omega)\in \tilde{\setY},\, U_{t}(\omega)\leq c\big\}$ occurs for infinitely many $t\in\mathbb{N}$.

To complete the proof remark  first that  if the event $\big\{\exists n\in 1:N \text{ s.t. } \hat{\theta}^n_{t}\in C\big\}$ occurs for infinitely many $t\in\mathbb{N}$ then, since $C\cap A=\emptyset$, the event $\big\{\exists n\in 1:N\text{ s.t. }\hat{\theta}^n_{t}\not\in A\big\}$ also occurs for infinitely many $t\in\mathbb{N}$. Therefore,
\begin{equation}\label{eq:limsup_prob1}
\begin{split}
\P\Big(&\limsup_{t\rightarrow\infty}\big\{\exists n\in 1:N\text{ s.t. }\hat{\theta}^n_{t}\not\in A\big\},\,\limsup_{t\rightarrow\infty}\big\{\exists n\in 1:N \text{ s.t. } \hat{\theta}^n_{t}\in C\big\}\Big)\\
&=\P\Big(\limsup_{t\rightarrow\infty}\big\{\exists n\in 1:N \text{ s.t. } \hat{\theta}^n_{t}\in C\big\}\Big).
\end{split}
\end{equation}
Next, remark that if $w\in\Omega'$ is such that the event  $\big\{\exists n\in 1:N \text{ s.t. } \hat{\theta}^n_{t}(\omega)\in C\big\}$ occurs only for finitely many $t\in\mathbb{N}$ then,   by \eqref{eq:set_prop},  the event $\big\{\hat{\theta}^{1}_{t}(\omega)\not\in A\big\}$ occurs for infinitely many $t$. Hence,
\begin{equation}\label{eq:limsup_prob2}
\begin{split}
&\P\Big(\limsup_{t\rightarrow\infty}\big\{\exists n\in 1:N\text{ s.t. }\hat{\theta}^n_{t}\not\in A\big\},\, \big(\limsup_{t\rightarrow\infty}\big\{\exists n\in 1:N \text{ s.t. } \hat{\theta}^n_{t}\in C\big\}\big)^c\Big)\\
&=\P\Big(\limsup_{t\rightarrow\infty}\big\{\exists n\in 1:N\text{ s.t. }\hat{\theta}^n_{t}\not\in A\big\},\,\big(\limsup_{t\rightarrow\infty}\big\{\exists n\in 1:N \text{ s.t. } \hat{\theta}^n_{t}\in C\big\}\big)^c,\,\,\Omega'\Big)\\
&=\P\Big(\big(\limsup_{t\rightarrow\infty}\big\{\exists n\in 1:N \text{ s.t. } \hat{\theta}^n_{t}\in C\big\}\big)^c\Big). 
\end{split}
\end{equation}
Therefore, by \eqref{eq:limsup_prob1}-\eqref{eq:limsup_prob2},
\begin{align*}
\P\Big(\limsup_{t\rightarrow\infty}&\big\{\exists n\in 1:N\text{ s.t. }\hat{\theta}^n_{t}\not\in A\big\}\Big)=1\Leftrightarrow \P\Big(\liminf_{t\rightarrow\infty}\big\{\hat{\theta}^n_{t}\in A,\,\forall n\in 1:N \big\}\Big)=0.
\end{align*}
 The proof is complete.\hfill $\square$

\subsection{Discussion}\label{sub-discuss_sup}

Fulfilling with $\alpha=0.5$ the condition $\nu \leq 1/\alpha$ imposed in Proposition \ref{prop:explore} requires to let $\nu\leq 2$, that is to use in the definition of $(\widetilde{M}_{t_p})_{p\geq 0}$ Student's $t$-distributions having an infinite variance. As illustrated in Section \ref{sub:mixture} of the paper, G-PFSO may not work well when these kernels are as defined in \eqref{eq:mu_def} for some $\nu\leq 2$, the reason being that when $\nu$ is small there may be a large probability that at time $t\in (t_p)_{p\geq 0}$ the estimator $\tilde{\theta}_{t_p}^N$ is pushed away from $\theta_\star$. For such values of $\nu$,  defining  $(\widetilde{M}_{t_p})_{p\geq 0}$ as  in Section \ref{sub:genMp} for some $\mathfrak{w}>0$ both leads to kernels that verify the conditions Proposition \ref{prop:explore} (assuming $h_t=t^{-1/2}$)  and reduces the probability of this undesirable  event, since as $\mathfrak{w}$ increases  the  probability to have $\theta_{t_p}^n$ close to $\hat{\theta}_{t_p-1}^n$ becomes larger,  for all $t\in (t_p)_{p\geq 0}$ and $n\in\{1,\dots,N\}$. 

We finally recall that the condition  $\nu\leq 1/\alpha$ imposed in Proposition \ref{prop:explore} can  be fulfilled with $\nu>2$, provided that $\alpha<0.5$. However, as illustrated in Section \ref{sub:CQR_5}, when $\alpha<0.5$ the estimator $\bar{\theta}_t^N$ is expected to converge at a sub-optimal rate.

\section{Additional information for the numerical experiments\label{sec:additional_info}}

\subsection{Code}

The code used to produce  all the figures  of the paper  is available  on GitHub   at
\begin{center}
\texttt{https://github.com/mathieugerber/gpfso-paper}
\end{center} 

\subsection{Censored quantile regression model}

Let $\big( Y_t:=(Z_t,X_t)\big )_{t\geq 1}$  be a sequence of i.i.d.\ random variables taking values in $[0,\infty)\times \R^d$  for some $d\geq 1$. Then, with $\Theta=\R^d$, the censored  quantile regression  (CQR) model assumes   that, for every $\tau\in(0,1)$ and $x\in\R^d$, the  $\tau$-th conditional quantile function of  $Z_t$ given $X_t=x$  does not dependent on $t$ and belongs to the set
$\{\max\{x^\top \theta,0\},\, \theta\in\Theta\}$. For every $\tau\in(0,1)$ we let $\rho_\tau:\R\rightarrow\R$ be defined by $\rho_\tau(u)=(|u|+(2\tau-1)u)/2$, $u\in\R$, and  
$
\theta^{(\tau)}_{\star}=\argmin_{\theta\in\Theta}\E\big[\rho_p\big(Z_1-\max\{X_1^\top \theta,0\}\big)\big]
$
be the parameter value of interest. 

In order to cast this estimation problem into the general set-up of this paper, following  \citet{yu2001bayesian}  we let $f_{\tau,\theta}(\cdot|x)$ be the density of  the asymmetric Laplace distribution with location parameter  $\max\{x^\top \theta,0\}$,  scale parameter $\sqrt{\tau(1-\tau)}$ and asymmetry parameter $\sqrt{\tau/(1-\tau)}$; that is
\begin{align}\label{eq:lap}
f_{\tau,\theta}(z|x)=\tau(1-\tau)\exp\big\{-\rho_p(z-\max\{x^\top \theta,0\})\big\},\quad\forall z\in\R.
\end{align}
Then,  $\theta^{(\tau)}_{\star}$ can be redefined as $\theta^{(\tau)}_{\star}=\argmax_{\theta\in\Theta}\E[\log f_{\tau,\theta}(Z_1|X_1)]$. 

We let $d=5$ and simulate $T=10^7$   independent observations $\{(z_t,x_t)\}_{t=1}^T$ according to
\begin{align*}
Z_t=\max(\tilde{Z}_t,0),\quad \tilde{Z}_t|X_t\sim\mathcal{N}_1\big(X_t^\top \theta^{(0.5)}_{\star},4 \big),\quad X_t\sim \delta_{\{1\}}\otimes \mathcal{N}_{d-1}(0,\Sigma_X),\quad t\geq 1 
\end{align*}
where $\Sigma^{-1}_X$ is a random draw from the Wishart distribution with $d-1$ degrees of freedom and  scale matrix  $I_{d-1}$, while   $\theta^{(0.5)}_{\star}=(3,\theta^{(0.5)}_{\star,2:d} )$ with  $\theta^{(0.5)}_{\star,2:d}$ a random draw from the $\mathcal{N}_{d-1}(0,I_{d-1})$ distribution. The resulting  sample $\{(z_t,x_t)\}_{t=1}^T$ is such that about 13\% of the observations are censored.

\subsection{Smooth adaptive Gaussian mixture model}

For this example we generate  $T=2\times 10^6$ independent  observations $\{(z_t,x_t)\}_{t=1}^T$ using
\begin{align}\label{eq:gen_mixture}
Z_t|X_t\sim f_{\theta_\star}(z|X_t)\dd z,\quad X_t\sim \delta_{\{1\}}\otimes\mathcal{N}_{d_x-1}(0, I_{d_x-1}),
\end{align}
where $\theta_\star=(\beta_\star^{\mathrm{w}},\beta_\star^{\mu},\beta_\star^{\sigma})$ with
 $\beta_\star^{\mathrm{w}}=(1,0.1,0.1,-0.1)$, $\beta_{\star,(1)}^{\mu}=(1,1,1,-1)$,   $\beta_{\star,(2)}^{\mu}=(-1,1,1,1)$, $\beta_{\star,(1)}^{\sigma}=(0,1,1,1)$ and   $\beta_{\star,(2)}^{\sigma}=(0.5,-1,-1,1)$. For this choice of $\theta_\star$, and in average  over the $x_t$'s, the $k$-the component of  $f_{\theta_\star}(\cdot|x_t)$ has a mean and  a variance\footnote{To remove the effect of a few extreme variances these numbers are actually the mean values of the variances which are smaller than 100.} respectively equal to  1 and to 7.28 for $k=1$, and respectively equal to $-1$ and to $5.1$ for $k=2$, while the first component of $f_{\theta_\star}(\cdot|x_t)$ has a weight   approximatively equal to 0.27. 
 
\end{document}